\newcommand{\smax}{{\mathtt{smax}}}
\newcommand{\msk}{{\mathtt{msk}}}
\newcommand{\tf}{{\mathtt{TF}}}
\newcommand{\ltf}{{\mathtt{LinTF}}}
\let\hat\widehat
\let\tilde\widetilde
\newmdenv[
  backgroundcolor=gray!10,
  skipabove=1em,
  skipbelow=1em,
  leftline=true,
  topline=false,
  bottomline=false,
  rightline=false,
  linecolor=gray!88,
  linewidth=4pt
]{githubquote}
\def\polylog{{\mathrm{polylog}}}
\newif\ificml
\title{In-Context Linear Regression Demystified: Training Dynamics and Mechanistic Interpretability of Multi-Head Softmax Attention}
\author{Jianliang He$^*$ \qquad Xintian Pan$^\dag$ \qquad Siyu Chen$^*$ \qquad Zhuoran Yang$^*$
\vspace{1pt}
\and
{\small\textit{$^*$Department of Statistics and Data Science, Yale University}} 
\and
{\small\textit{$^\dag$Kuang Yaming Honors School, Nanjing University}}
\vspace{1pt}
\and
{
    \small\texttt{\{jianliang.he, siyu.chen.sc3226, zhuoran.yang\}@yale.edu}\quad \texttt{xintianpan@smail.nju.edu.cn}
}
}
\date{}
\begin{document}
\maketitle

\begin{abstract}
We study how multi-head softmax attention models are trained to perform in-context learning on linear data. 
Through extensive empirical experiments and rigorous theoretical analysis, we demystify the emergence of elegant attention patterns: a diagonal and homogeneous pattern in the key-query (KQ) weights, and a last-entry-only and zero-sum pattern in the output-value (OV) weights. 
Remarkably, these patterns consistently appear from gradient-based training starting from random initialization. 
Our analysis reveals that such emergent structures enable multi-head attention to approximately implement a debiased gradient descent predictor --- one that outperforms single-head attention and nearly achieves Bayesian optimality up to proportional factor. 
Furthermore, compared to linear transformers, the softmax attention readily generalizes to sequences longer than those seen during training. 
We also extend our study to scenarios with non-isotropic covariates and multi-task linear regression. 
In the former, multi-head attention learns to implement a form of pre-conditioned gradient descent. 
In the latter, we uncover an intriguing regime where the interplay between head number and task number triggers a superposition phenomenon that efficiently resolves multi-task in-context learning. 
Our results reveal that in-context learning ability emerges from the trained transformer as an aggregated effect of its architecture and the underlying data distribution, paving the way for deeper understanding and broader applications of in-context learning.\footnote{Our code is available at \url{https://github.com/Y-Agent/ICL_linear}.}
\end{abstract}

\begin{figure*}[ht!]
    \centering
    \begin{minipage}{0.42\textwidth}
        \centering
        \includegraphics[width=\linewidth]{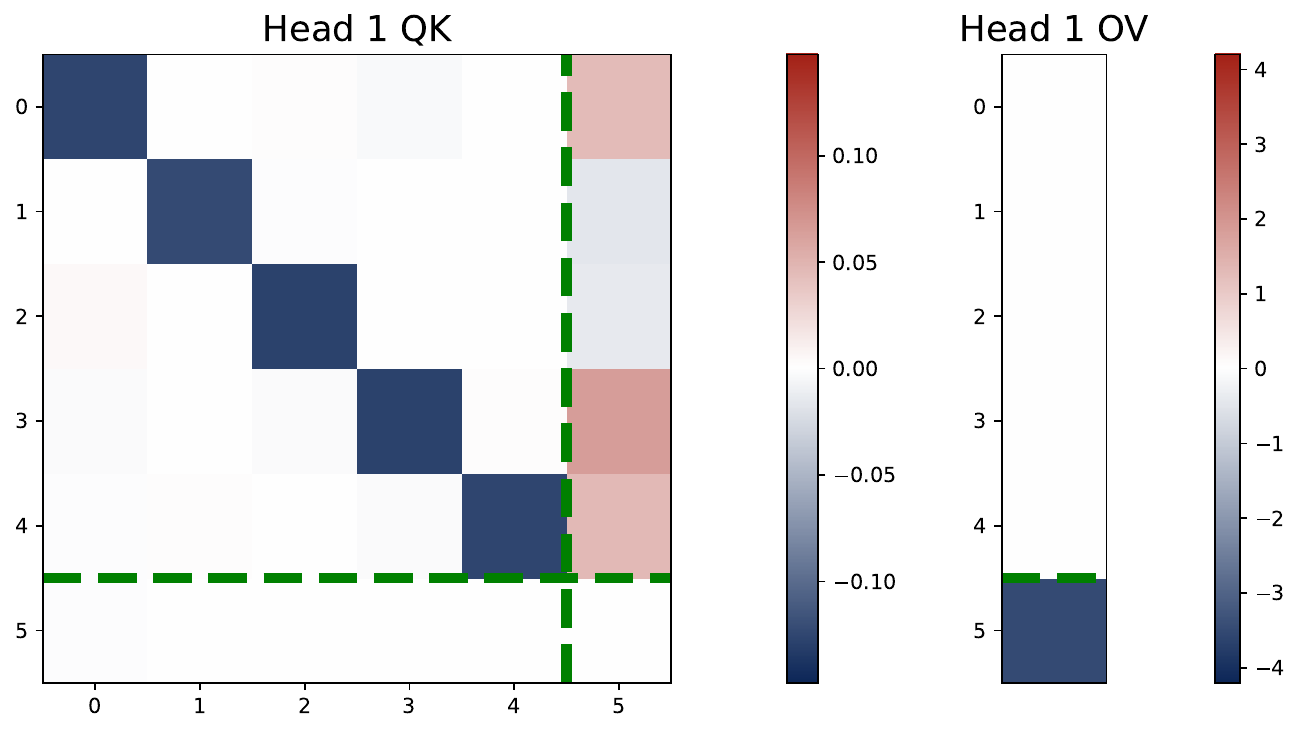}
    \end{minipage}\quad
    \begin{minipage}{0.41\textwidth}
        \centering
        \includegraphics[width=\linewidth]{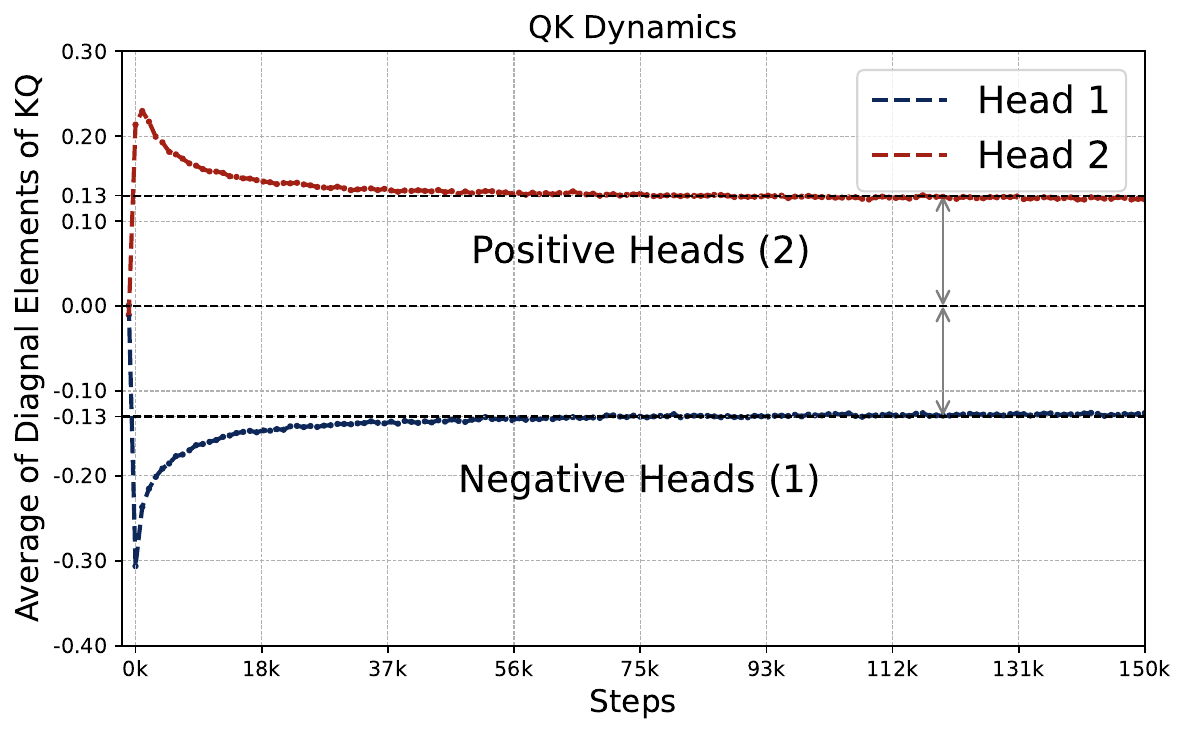}
    \end{minipage}
    \begin{minipage}{0.42\textwidth}
        \centering
        \includegraphics[width=\linewidth]{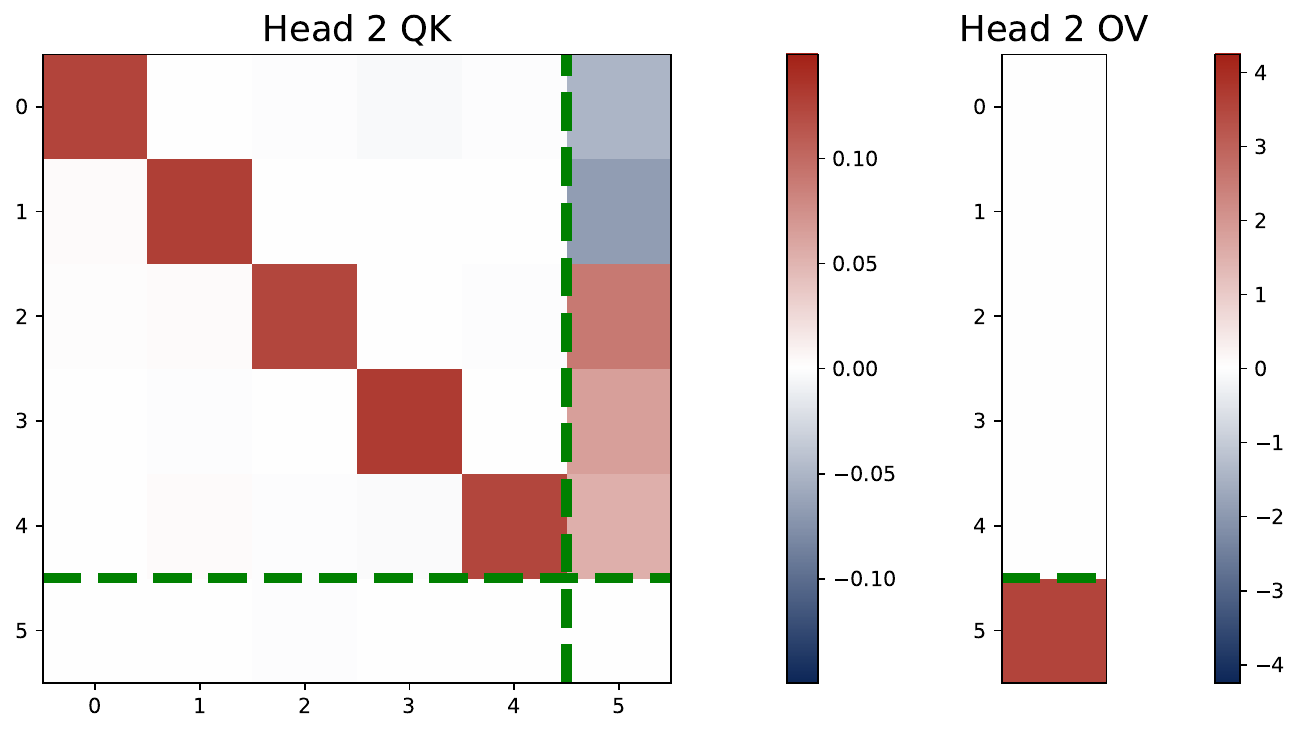}
        \subcaption{Learned Weights of 2-Head Attention.}
        \label{subfig:2head_heatmap}
    \end{minipage}\quad
    \begin{minipage}{0.41\textwidth}
        \centering
        \includegraphics[width=\linewidth]{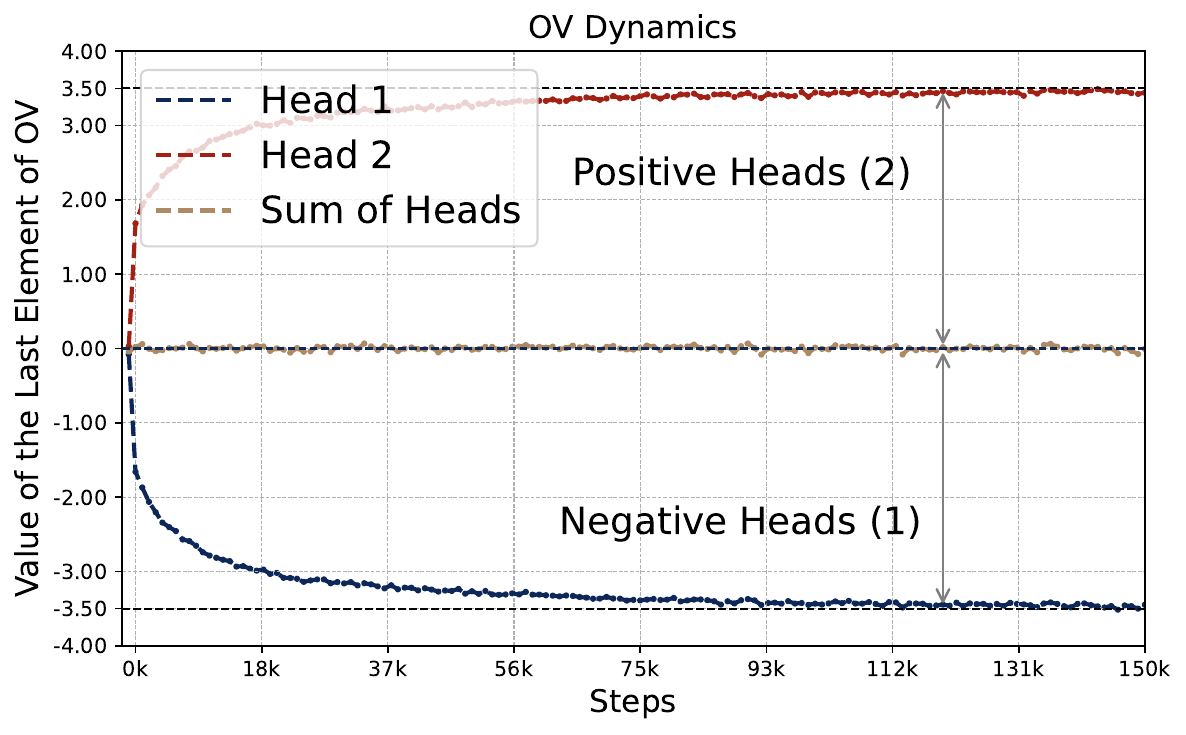}
         \subcaption{Training Dynamic of Attention Weights.}
         \label{subfig:dynamic}
    \end{minipage}
    \caption{Visualization of main results. 
    Figure (a) presents the heatmaps of two-head attention matrices trained over $d=5$, $L=40$. The trained model learns one positive and one negative head. 
    KQ and OV share the same sign for each head, while the parameter scaling is homogeneous across heads. 
    Figure (b) plots the dynamics of average diagonal values in KQ matrices and the last entry of OV vectors for a two-head attention model along the training, zooming into the opposite pattern.
    }
    \label{fig:vis_all}
\end{figure*} 

\newpage

\begin{githubquote}
{\bf TL; DR:}
We investigate how single-layer multi-head softmax attention models learn to perform in-context linear regression, revealing emergent structural patterns that enable efficient in-context learning. Key findings include:
\begin{itemize}[leftmargin=*]
\item {\bf Emergent Attention Patterns:} The trained models exhibit \luolan{diagonal key-query (KQ)} and \luolan{last-entry output-value (OV)} patterns in the attention parameters. See Figure \ref{fig:vis_all}-(a).  This means that each attention head effectively implements \luolan{kernel-based regression predictor} by encoding input relationships through KQ and OV structures.  
 
\item {\bf Homogeneous KQ Scaling, Positive-Negative Heads, and Zero-Sum OV:} The diagonal elements of KQ across all heads stabilize at nearly {\luolan{identical magnitudes}} (\emph{homogeneous KQ scaling}), while OV terms \luolan{sum to zero} approximately  (\emph{zero-sum OV}). 
Moreover, attention heads naturally \luolan{separate into positive and negative groups}, depending on the value of KQ circuits. This pattern of attention weights appears early during training, and is preserved during the course of training. See Figure 
 \ref{fig:loss_comparison_empirical}. 
 
\item {\bf Superiority of Multi-Head Attention:} 
When the number of heads is at least two, the learned transformer model, while implementing a sum of kernel regressor, is close to a \luolan{debiased gradient descent predictor}. The prediction risk of multi-head attention is strictly lower than that of a single-head model. See Figure \ref{fig:lin_smax_comparison}.

\item {\bf Softmax vs. Linear Attention:} Softmax-based attention models naturally \luolan{adapt to varying sequence lengths} without explicit rescaling. 
In contrast, linear attention requires an explicit normalization factor depending on the length, and thus struggles in generalizing to longer sequence lengths in test time.

\item {\bf Extensions to Non-Isotropic Setting:}  When covariates are sampled from a non-isotropic Gaussian distribution, multi-head attention learns to implement a pre-conditioned gradient descent predictor. The weights of \luolan{QK circuits learn the underlying covariance structure} of the covariate distribution. 
See Figure \ref{fig:noniso_vis_all}. 

\item {\bf Extension to Multi-Task Setting:}  In a multi-task setting where the response variable is multivariate, we find that the transformer model dynamically \luolan{allocates attention heads to different tasks}. When the number of heads exceeds the number of tasks, the model learns distinct task-specific attention patterns, effectively implementing multiple pre-conditioned GD updates in parallel. See Figure \ref{fig:multitask_head3}.   When the number of heads is limited, a \luolan{superposition phenomenon emerges}, where individual heads encode multiple tasks simultaneously. See Figure \ref{fig:multitask_head4}. This suggests that the trained transformers efficiently distribute their representational capacity across tasks under constrained model capacity.
\end{itemize}
\end{githubquote}

\newpage
{
\tableofcontents
}
\newpage
 
\section{Introduction}

Large language models (LLMs) built on transformer architectures \citep{vaswani2017attention} have revolutionized artificial intelligence research. 
A key capability of these models is their ability to perform in-context learning (ICL) \citep{dong2022survey, brown2020language}, which refers to learning and adapting to new tasks or concepts simply by being provided with a few examples or instructions within the input context, without the need for explicit retraining or fine-tuning. Unlike traditional approaches that rely on extensive fine-tuning, ICL enables LLMs to infer patterns directly from input sequences and generalize to unseen examples in a single forward pass \citep{huang2022context}. This emergent behavior is largely attributed to the self-attention mechanisms in transformers, which allow the models to dynamically capture intricate relationships within the data.
Recently, there has been growing interest in exploring transformer-based ICL in structured learning settings, such as linear regression, using synthetic data \citep{bai2024transformers, akyurek2022learning, ahn2023transformers, fu2023transformers, mahankali2023one}. These studies not only shed light on the inner workings of transformers but also deepen our understanding of their statistical properties and optimization dynamics, offering a foundation for more interpretable and efficient AI systems.
 
Prior work has extensively studied how transformers perform in-context learning for linear regression, exploring various attention architectures and learning strategies. It has been shown that single-head linear attention effectively implements preconditioned gradient descent on linear data \citep{zhang2024trained, von2023transformers, akyurek2022learning}. This analysis has been extended to single-head softmax transformers \citep{li2024training, huang2023context} as well as multi-head attention models \citep{chen2024training, chen2024transformers, deora2023optimization, kim2024transformers}. Beyond linear regression, recent studies have investigated how transformers learn feature representations in more complex settings, extending in-context learning to nonlinear and high-dimensional data distributions \citep{huang2023context, yang2024context, kim2024transformers, chen2024unveiling}. 

Despite significant progress in understanding transformer-based in-context learning (ICL), critical gaps persist in characterizing the training dynamics of multi-head softmax transformers for these tasks. 
While prior work has yielded insights under constrained theoretical regimes—including the neural tangent kernel (NTK) framework \citep{deora2023optimization}, mean-field approximations \citep{kim2024transformers}, and specialized initialization protocols \citep{chen2024training}—fundamental questions about their broader behavior remain unanswered: 
\begin{itemize}
	\setlength{\itemsep}{-1pt}
    \item [{(a)}] How do the weights of randomly initialized multi-head softmax transformers evolve during training on linear ICL tasks? 
    \item [{(b)}] What is the final learned transformer model and what are its statistical properties under standard training regimes? 
    \item [{(c)}] Does multi-head attention confer measurable advantages over single-head architectures for linear ICL tasks?
    \item [{(d)}] Does softmax attention outperform linear attention in linear ICL---particularly given the latter’s proven equivalence to gradient-based optimization \citep[e.g.,][]{von2023transformers}?
    \item [{(e)}]  How do the insights from linear ICL with isotropic covariates extend to scenarios involving non-isotropic covariates or multiple linear regression tasks? 
\end{itemize}
We answer these questions through both empirical and theoretical analysis of training a multi-head softmax transformer on linear regression task. 
We characterize the training dynamics, demystify the emergence of distinct weight patterns in learned models, and elucidate the underlying mechanisms.

\subsection{Our Contributions}

We summarize our contributions as follows.

\begin{enumerate} 
	\setlength{\itemsep}{-1pt}
    \item [(a)] We empirically analyze the training dynamics of a multi-head softmax transformer on a single linear regression task. 
    We find that in each head, the key-query (KQ) and output-value (OV) matrices exhibit a universal structure that implements a \luolan{kernel regressor}, where the kernel is determined by the query and the training covariates.
    
    \item [(b)]
    We identify two global patterns emerging:
(\romannumeral1) The KQ weight matrices develop a \luolan{diagonal structure} for the block associated with the covariates and the query.
(\romannumeral2) The diagonal entries of the KQ weights and the effective OV weights \luolan{share the same sign}.
These patterns lead to the automatic grouping of positive and negative heads, allowing the model to effectively capture both positive and negative components of the data. See \Cref{fig:vis_all}.

\item [(c)] In the trained model, the \luolan{diagonal KQ weights across all heads are nearly homogeneous} in magnitude, and the \luolan{average effective OV weight is approximately zero}. We provide theoretical insights into how these patterns emerge through training dynamics.

\item [(d)] We discover that the positive and negative heads jointly approximate \luolan{one-step debiased gradient descent}, leading to multi-head transformers outperforming single-head models on linear ICL tasks. Moreover, we prove that the learned model is nearly Bayesian optimal up to a proportional factor. 
 \item [(e)] Furthermore, we show that softmax attention outperforms linear attention in linear ICL tasks, as it learns an algorithm that successfully \luolan{generalizes  to longer sequences} at test time.   
\item [(f)] We extend our analysis to  non-isotropic covariates and  multi-task settings. In the  non-isotropic case, we find that the learned model s the learned model implements a \luolan{pre-conditioned version of debiased gradient descent}. In the multi-task setting, the model behavior depends on the task-to-head ratio. Interestingly, when the number of tasks exceeds the number of heads but remains below twice that number, we observe a \luolan{superposition} phenomenon, where individual heads encode multiple tasks simultaneously.
\end{enumerate}

\subsection{Related Work}
\paragraph{In-Context Learning (ICL).}
LLMs exhibit strong reasoning abilities, with their ICL capability playing a crucial role in their performance.
Unlike fine-tuned models customized for specific tasks, LLMs demonstrate comparable capabilities by learning from informative prompts \citep{liu2021makes, min2021metaicl, nie2022improving}.
The theoretical understanding of ICL remains an active area of research.
One line of research interprets ICL as a form of Bayesian inference embedded within transformer architectures \citep{xie2021explanation,zhang2023and,jeon2024information,hu2024unveiling}.
Another line of work focuses on understanding how transformers internally emulate specific algorithms to address ICL tasks \citep{garg2022can,nichani2024transformers,chen2024training,fu2024can,sheen2024implicit,li2024one}.
Among these works, some focus on ICL for classification problems or learning with a finite dictionary \citep{sheen2024implicit, huang2023context, yang2024context, nichani2024transformers}, while another line of work examines how the attention model performs in-context linear regression \citep{von2023transformers, zhang2024trained, chen2024training, chen2024transformers,zhang2025training}.
Meanwhile, multiple works demonstrate the remarkable ability of the model in feature learning \citep{kim2024transformers, yang2024context, huang2023context} and decision-making \citep{sinii2023context,lin2023transformers,he2024words}.
Beyond these works, many researchers seek to uncover the internal ICL procedure of transformers, typically in more complex algorithms \citep{fu2023transformers, giannou2024well, cheng2023transformers, lin2024dual}.
Other works investigate the training process with multiple tasks \citep{kim2024task, tripuraneni2021provable}, which are broadly related.

\paragraph{In-Context Linear Regression.}
To better understand how transformers acquire ICL abilities, researchers study the linear regression task, examining both the model's expressive power and training dynamics.
The pioneering work of \citet{garg2022can} empirically investigates the performance of the transformer on linear regression, demonstrating that transformers achieve near Bayes-optimal results.
\citet{von2023transformers} studies a simplified linear transformer and reveals that it learns to implement a gradient-based inference algorithm.
From a theoretical perspective, \citet{zhang2024trained,ahn2023transformers} show that one-layer linear attention provably learns to perform preconditioned gradient descent, using training dynamics and loss landscape analysis, respectively.
Furthermore, \citet{chen2024training} provides the first theoretical insight into standard softmax-based attention, showing that under certain initialization schemes, the trained model converges to a kernel regressor.
In addition, \citet{bai2024transformers} explores the expressive power of transformers to implement various linear regression algorithms.
Recent studies also examine how transformers handle variants of linear regression, including two-stage least squares for addressing data with endogeneity \citep{liang2024transformers}, adaptive algorithms for sparse linear regression \citep{chen2024transformers}, EM-based learning of mixture of linear models \citep{jin2024provable}, and multi-step gradient descent within the loop transformer architecture \citep{gatmiry2024can}.
Besides, \cite{aksenov2024linear,sun2025context} are also broadly related to our work, in which they investigate the role of the nonlinear softmax activation within the context of regression tasks.

\paragraph{Comparison with Related Work.}
We provide a detailed discussion of the differences between our work and that of \cite{chen2024training} and \cite{cui2024superiority}, which are among the most closely related studies. 
\cite{chen2024training} considers multi-head attention in the context of multi-task linear regression, where the number of heads matches the number of tasks. Under a specialized initialization, they show that each head independently learns to solve a distinct task---effectively reducing to a single-head per task setup, which corresponds to the single-head case in our framework. 
In contrast, our setup allows multiple heads to be flexibly allocated to a single task, enabling a more expressive and complex model architecture. 
As a result, while \cite{chen2024training} shows that the single-head model learns a nonparametric, kernel-type predictor with scaling of KQ parameters as $1/\sqrt{d}$, we demonstrate that the multi-head model instead learns a parametric gradient descent predictor with KQ converging to $0^+$. 
This not only recovers the known results for linear attention \citep{zhang2024trained} but also reveals that multi-head softmax attention can outperform the single-head one by effectively encoding the linear architecture through an explicit approximation. 

\cite{cui2024superiority} also identifies the diagonal KQ patterns with potentially positive and negative values in two-head softmax attention, and observe identical performance when the number of heads exceeds two. 
We go further by quantitatively characterizing the learned model. 
Specifically, we reveal detailed sign-matching, homogeneous KQ magnitudes, and zero-sum OV patterns for head counts beyond $H=2$, and show that multi-head softmax attention learns to implement a gradient descent predictor. 
From a theoretical perspective, \cite{cui2024superiority} adopts full-model parameterization and conducts a loss landscape analysis. 
In contrast, we begin by establishing an approximate loss and then develop a comprehensive explanation based on training dynamics, function approximation, and optimality analysis. 
Our results go beyond the core argument in \cite{cui2024superiority} regarding the superiority of multi-head over single-head attention: we not only compare the testing loss, but also explicitly demonstrate that single-head attention learns a nonparametric kernel regressor, while multi-head attention learns a more powerful parametric gradient descent predictor.

\paragraph{Notation.} 
For some $n\in\mathbb{N}^+$, let $[n]=\{i\in\ZZ:1\leq i\leq n\}$. 
For vector $\upsilon\in\RR^d$, denote by $\odot$ the element-wise Hadamard product and $\upsilon^{\odot k}=(\upsilon_1^k,\dots,\upsilon_d^k)$. 
Let $\|\cdot\|_p$ denote the $\ell_p$-norm and softmax operator is defined as $\smax(\upsilon)=(\smax(\upsilon)_i)_{i\in[d]}$ where $\smax(\upsilon)_i=\exp(\upsilon_i)/\sum_j\exp(\upsilon_j)$. 
Let ${\rm sign}(\cdot)$ denote the sign function that returns $1$, $-1$ or $0$ based on the sign of input $x\in\RR$.
Let $\one_d$ and $\zero_d$ denote the all-one and all-zero vector of size $d$ and denote $I_d$ as the $d$-by-$d$ identity matrix.
For vector $\nu\in\RR^d$ and indices set $\cI\subseteq[d]$, we define $\nu_\cI=(\nu_i)_{i\in\cI}\in\RR^{|\cI|}$.
For two functions $f(x) \geq 0$ and $g(x) \geq 0$ defined on  $x\in\RR^+$, we write $f(x) \lesssim g(x)$  or $f(x)$ as $ O(g(x))$ if there exists two constants $c> 0$  such that $f(x) \leq c \cdot g(x)$, we write $f(x) \asymp g(x)$ or $f(x)=\Theta(g(x))$ if $f(x) \lesssim g(x)$ and $g(x) \lesssim f(x)$.
Besides, we use $\tilde{O}(g(x))$ to hide the logarithmic terms  $\polylog(x)$.
\section{Preliminaries}\label{sec:setup}
\begin{figure*}[t]
    \centering
    \begin{minipage}{0.9\textwidth}
        \centering
        \includegraphics[width=\linewidth]{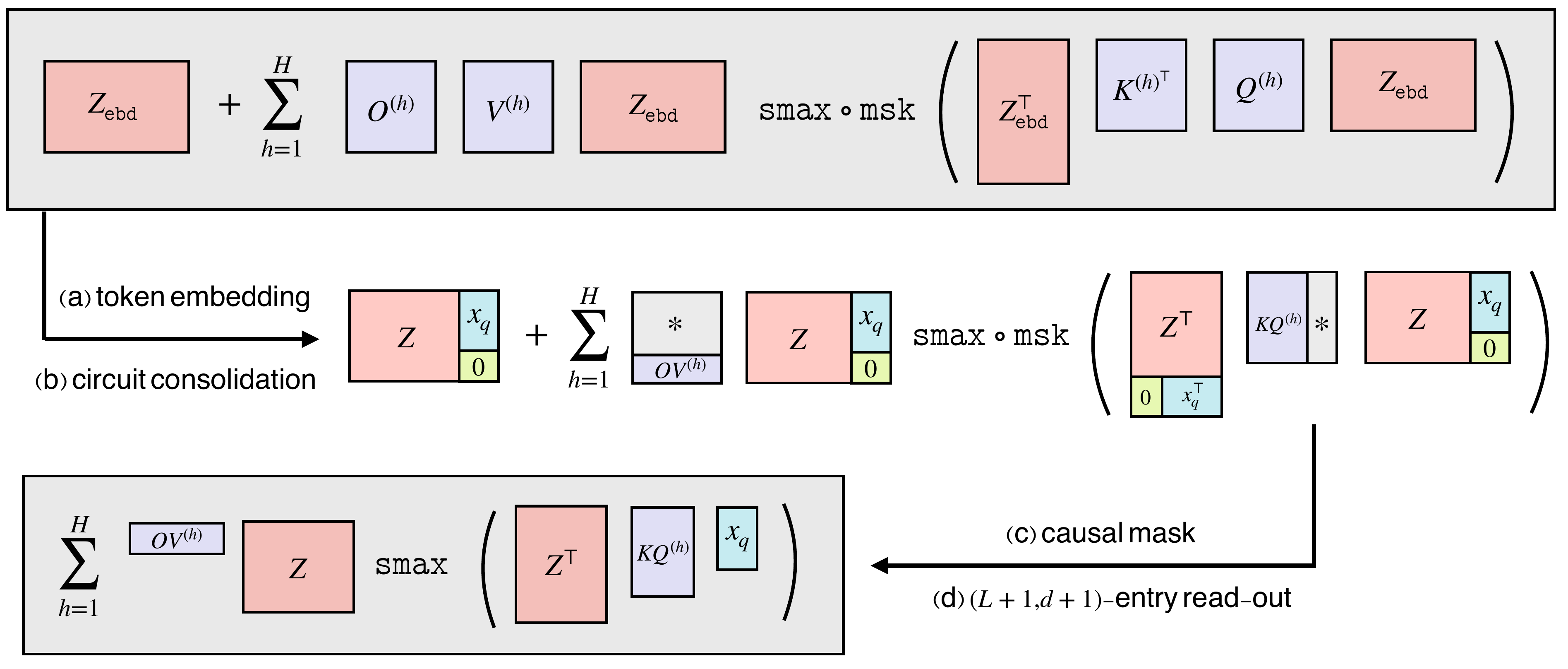}
            \vspace{1pt}
    \end{minipage}
    \caption{Illustration of the derivation from the full multi-head attention architecture in \eqref{eq:std_TF} to the simplified model. In the graph, we show how the predictor $\hat y_q$ is generated by the transformer, based on the token embeddings, consolidated KQ and OV circuits, and the read-out function. 
  We use $*$ to denote the ineffective parameters due to the read-out function or the zero value in the query token. In particular, the token embedding is given in \eqref{eq:embed} and the read-out function extracts the $(L+1, d+1)$-th entry of the transformer output. Thus, $\hat y_q$ only depends on the KQ and OV matrices through some specific submatrices, as shown in the figure.}
    \label{fig:tf_illustration}
\end{figure*}
We consider the setting of training transformers over the task of in-context linear regression, following the framework widely considered in literature \citep[e.g.,][]{garg2022can,ahn2023transformers}.

\paragraph{Data Distribution and Embedding.}
Let $x_\ell\in\RR^d$ denote the $\ell$-th covariate drawn i.i.d from a distribution $\mathsf{P}_x$ and let $x_q\iid \mathsf{P}_x$ represent a new test input. 
The regression parameter $\beta\in\RR^d$ is sampled from $\mathsf{P}_\beta$, and the corresponding response is generated as $y_\ell=\beta^\top x_\ell+\epsilon_\ell$, where the noise $\epsilon_\ell$ i.i.d sampled from $\cN(0,\sigma^2)$. 
To perform ICL, we embed the dataset $\{(x_\ell,y_\ell)\}_{\ell\in[L]}$ along with the test input $x_q\sim\mathsf{P}_x$ into a sequence $Z_\mathsf{ebd}\in\RR^{(d+1)\times(L+1)}$, formatted as follows:
\begin{equation}
    Z_\mathsf{ebd}=\begin{bmatrix}Z & z_q\end{bmatrix} = \begin{bmatrix}x_1 & x_2 & \dots & x_L & x_q \\y_1 & y_2 & \dots &y_L & 0\end{bmatrix}.
    \label{eq:embed}
\end{equation}
We also define $X=[x_1, x_2, \dots, x_L]^\top\in\RR^{L\times d}$, $y=[y_1, y_2, \dots, y_L]^\top\in\RR^{L}$ and $z_\ell=(x_\ell^\top,y_\ell)^\top\in\RR^{d+1}$.
In this paper, we focus mainly on the isotropic case where $\mathsf{P}_x=\cN(0,I_d)$ and $\mathsf{P}_\beta=\cN(0,I_d/d)$, and the anisotropic cases, i.e., $\mathsf{P}_x=\cN(0,\Sigma)$ for some general $\Sigma\in\RR^{d\times d}$, are discussed in \S \ref{sec:discuss2}.

\paragraph{One-Layer Multi-Head Softmax Attention.} The attention model is a sequence-to-sequence model that takes $Z_\mathsf{ebd}$ as input and outputs a sequence of the same shape. 
We focus on the one-layer softmax attention with $H$ heads, parametrized by $\theta=\{O^{(h)},V^{(h)},K^{(h)},Q^{(h)}\}_{h\in[H]}\subseteq\RR^{(d+1)\times(d+1)}$
\begin{equation}
\tf_\theta(Z_\mathsf{ebd})=Z_\mathsf{ebd}+\sum_{h=1}^HO^{(h)}V^{(h)}Z_\mathsf{ebd}\cdot\smax\circ\msk\big(Z_\mathsf{ebd}^\top K^{(h)^\top}Q^{(h)}Z_\mathsf{ebd}\big)\in\RR^{(d+1)\times(L+1)},
	\label{eq:std_TF}
\end{equation}
where $\smax(\cdot)$ denotes the column-wise softmax operation and $\msk:\RR^{(d+1)\times (L+1)}\mapsto\RR^{(d+1)\times (L+1)}$ denotes the element-wise causal mask and its $(i,j)$-th entry is $\msk(\cdot)_{ij}={\rm Id}\cdot  \ind(i<j)-\infty\cdot\ind(i\geq j)$.

\paragraph{Interpretation of Attention Model.}
Note that \eqref{eq:std_TF} is a standard multi-head attention layer with a residual link. 
This function can be regarded as a mapping from a sequence of $L+1$ vectors in $\RR^{d+1} $, i.e., $\{z_{\ell} \}_{\ell \in [L]} \cup \{ z_q\}$,  to a new sequence of $L+1$ vectors in $\RR^{d+1}$. 
In particular, at each token position $j$, the query, key, and value in the $h$-th head are given by $Q^{(h)} z_j$, $K^{(h)} z_j $, and $V^{(h)} z_j$, respectively. 
To get the output at position $j$, we compute the similarity between the query $Q^{(h)} z_j$ and all the previous keys to get $\{\langle K^{(h)}z_i,Q^{(h)}z_j\rangle \}_{i < j}  $. 
These similarity scores are passed through the softmax function $\smax(\cdot)$ to form a probability distribution over all previous token positions $[j-1]$. 
This distribution weights the values $\{ V^{(h)} z_i \}_{i< j}$ to compute the attention output. 
The outputs of all $H$ heads are aggregated by the output matrices $\{O^{(h)}\}_{h\in [H]}$. Combining with the input $Z_{\mathsf{ebd}}$ from the residual link, we get the output in \eqref{eq:std_TF}.

Note that when computing the output at each token position $j$, we only use the key $Q^{(h)} z_j$ to look at the queries and values before position $j$.
Thus, the softmax output dimension varies with $j$, forming a probability distribution over $[j-1]$ for each $j \in [L+1]$. 
In the model \eqref{eq:std_TF}, this is enforced by the causal mask $\msk(\cdot)$ before the softmax function. 
With the causal mask, only the first $j-1$ entries in the output vector of the softmax function are nonzero. 
Compared to the standard decoder-only transformer model \citep[e.g.,][]{vaswani2017attention}, 
we omit layer normalization and positional embeddings. Layer normalization is unnecessary because a single-layer 
architecture does not suffer from the issue of vanishing or exploding gradients.
In addition, the
regression problem is permutation invariant in $\{(x_\ell, y_\ell)\}_{\ell \in [L]}$,
making positional information unnecessary. 

\paragraph{Read-Out and Transformer Predictor.} Based on the output matrix of the transformer given in \eqref{eq:embed}, we extract the prediction $\hat{y}_q$ from its $(d+1,L+1)$-th entry. 
We let 
$$
\hat{y}_q:=\hat{y}_q(x_q;\{(x_\ell,y_\ell)\}_{\ell\in[L]})=\tf_\theta(Z_\mathsf{ebd})_{d+1,L+1}\in\RR
$$
denote the final output of the transformer, whose goal is to estimate $\EE[y_q\given x_q]=\beta^\top x_q$. 
Note that the $(d+1, L+1)$-th entry of $Z_{\mathsf{ebd}}$ is zero. 
At position $L+1$, the query is $Q^{(h)} z_q$ and it attends to columns in $Z$.  The softmax function outputs a probability distribution over $[L]$. 
Besides, the output $\hat y_q$ is one-dimensional, and thus it only depends on the last row of $O^{(h)}$, denoted as $O^{(h)}_{d+1, :} $.  
Thus, we can simplify the expression in \eqref{eq:std_TF} and write $\hat y_q$ as 
\begin{align}\label{eq:simplified_tf}
\hat y_q = \sum_{h=1}^HO^{(h)}_{d+1, :} V^{(h)}Z \cdot\smax \big(Z ^\top K^{(h)^\top}Q^{(h)} x_q \big)\in\RR.
\end{align}

\paragraph{KQ and OV Circuits.} The computation in \eqref{eq:std_TF} involves two components: computing the similarity between queries and keys, and using the outputs of softmax function to generate the final output. 
These two kinds of computation depend on matrix products $K^{(h)^\top}Q^{(h)}$ and $O^{(h)}V^{(h)}$ respectively. 
Following \cite{elhage2021mathematical}, we refer to these matrices as the KQ and OV circuits and simplify the notation as $KQ^{(h)}=K^{(h)^\top}Q^{(h)}$ and $OV^{(h)}=O^{(h)}V^{(h)}$.
Both $KQ^{(h)}$ and $OV^{(h)} $ are in $\RR^{(d+1)\times (d+1)}$. 
In specific, 
KQ circuits characterize to what extent a query token attends to a key token, and OV circuits determine how a token contributes to the output when attended to.

Notice that the last entry of $z_q$ is zero and we use the last row of $O^{(h)}$ to get $\hat y_q$ in \eqref{eq:simplified_tf}. Thus, in terms of computing $\hat y_q$, we can write the KQ and OV circuits as 
\begin{align}\label{eq:QK_OV_circuits}
KQ^{(h)} =\begin{bmatrix}  KQ^{(h)}_{11} & \ast \\ KQ^{(h)}_{21}  & \ast\end{bmatrix}\in\RR^{(d+1)\times(d+1)},\quad OV^{(h)}=\begin{bmatrix}*&*\\OV^{(h)}_{21} & OV^{(h)}_{22}  \end{bmatrix}\in\RR^{(d+1)\times(d+1)}.
\end{align}
Here we use ``$\ast$'' to denote the entries that do not affect $\hat y_q$.
In the above two-by-two block matrix format, the top-left blocks are $d$-by-$d$ matrices. 
Therefore, $KQ^{(h)}_{11}$  and $KQ^{(h)}_{21}$ 
 characterize how $x_q$ attends to $\{ x_{\ell}\}_{\ell\in [L]}$ and $\{ y_{\ell}\}_{\ell\in [L]}$, respectively.   
 Similarly, $OV^{(h)}_{21}$ and $OV^{(h)}_{22}$ determine how the first $d$ entries and the last entry of $Z \cdot\smax (Z ^\top K^{(h)^\top}Q^{(h)} x_q)$ respectively contribute to the output. 
See Figure~\ref{fig:tf_illustration} for a complete depiction of the transformer architecture, where we slightly abuse the notation by simply writing the last row of the OV circuit as $OV^{(h)}$.

\paragraph{Training Setup.}  
To investigate how transformers learn to solve the linear regression problem in context during pretraining, we examine both the training dynamics and loss landscape. Specifically, we train the model by minimizing the mean-squared error, whose population version is 
\begin{equation}
	\cL(\theta)=\cE(\hat{y}_q)=\EE[(y_q-\hat{y}_q)^2]=\EE[(y_q-\tf_\theta(Z_\mathsf{ebd})_{d+1,L+1})^2].
	\label{eq:train_loss}
\end{equation}
Here the expectation is taken with respect to the randomness of both $\beta\sim\mathsf{P}_\beta$ and $(x_\ell,y_\ell)\iid\mathsf{P}_x\otimes\mathsf{P}_{y|x}(\cdot;\beta)$ for all $\ell\in[L]\cup\{q\}$. 
During training, we optimize $\cL(\theta)$ using mini-batch stochastic gradient methods. 
That is, starting from a random initialization $\theta_0$, for all $t \geq 1$, we sample a batch of $n_{\mathsf{batch}}$ i.i.d. sequences by first sampling $n_{\mathsf{batch}}$ regression parameters i.i.d. from $\mathsf{P}_\beta$ and then sampling $Z_{\mathsf{ebd}}$ from each individual regression parameter. 
Then we use this batch of data to compute an estimate of $\nabla \cL(\theta_t)$, denoted by $\hat \nabla \cL(\theta_t)$ and update $\theta_t$ using a given updating method, e.g., gradient descent, or Adam \citep{kingma2014adam}. That is 
$$
\theta_{t+1}\leftarrow \mathsf{UpdateMethod}\bigl(\theta_t, \hat \nabla \cL(\theta_t) \bigr).  
$$
The gradient $\nabla \cL(\theta)$ is computed for all parameter matrices in $\theta$. 
Although the ineffective parts in KQ and OV circuits do not affect the computation of $\hat y_q$, as shown in \eqref{eq:QK_OV_circuits}, 
these parts are still updated by the optimization algorithm and affect the gradient computation.
\section{Empirical Insights into In-Context Linear Regression}\label{sec:empirical}

\begin{figure*}[t!]
    \centering
    \begin{minipage}{0.46\textwidth}
        \includegraphics[width=\linewidth]{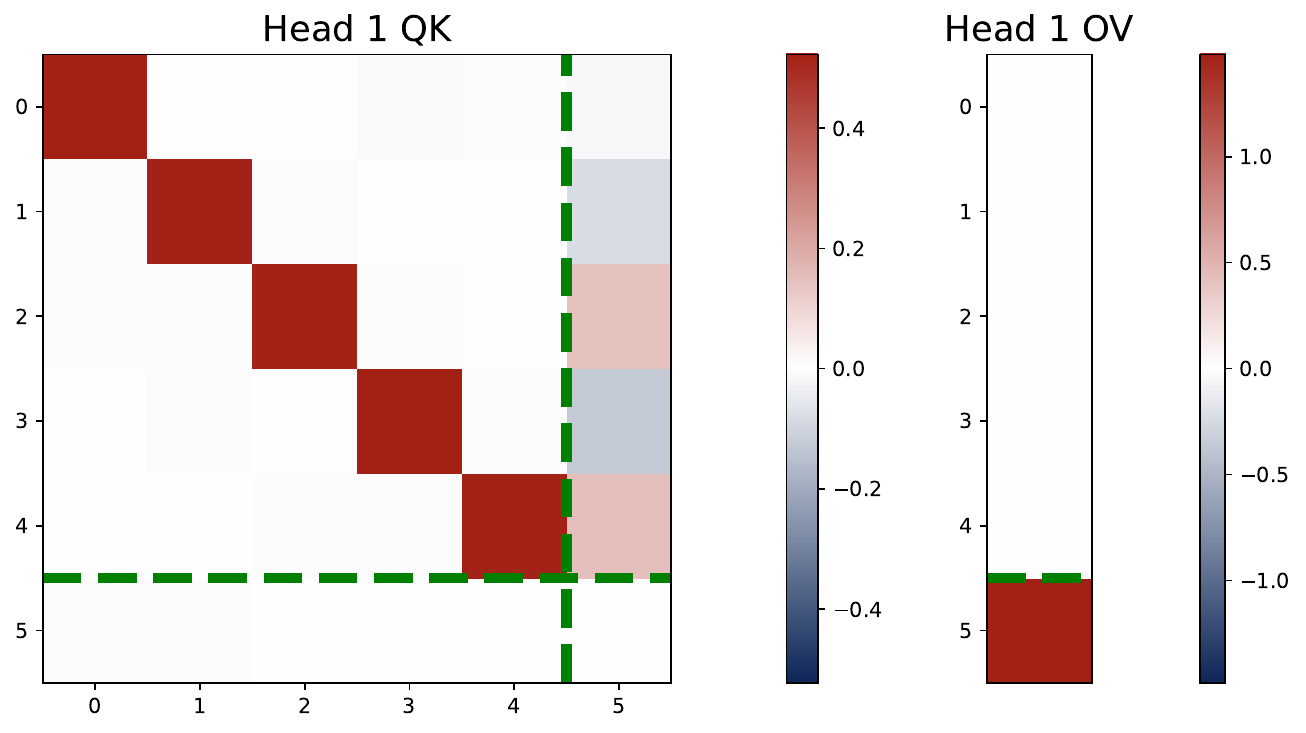}
        \subcaption{Learned Weights of Single-head Attention.}
        \label{subfig:1head_heatmap}
    \end{minipage}
    \begin{minipage}{0.262\textwidth}
        \includegraphics[width=\linewidth]{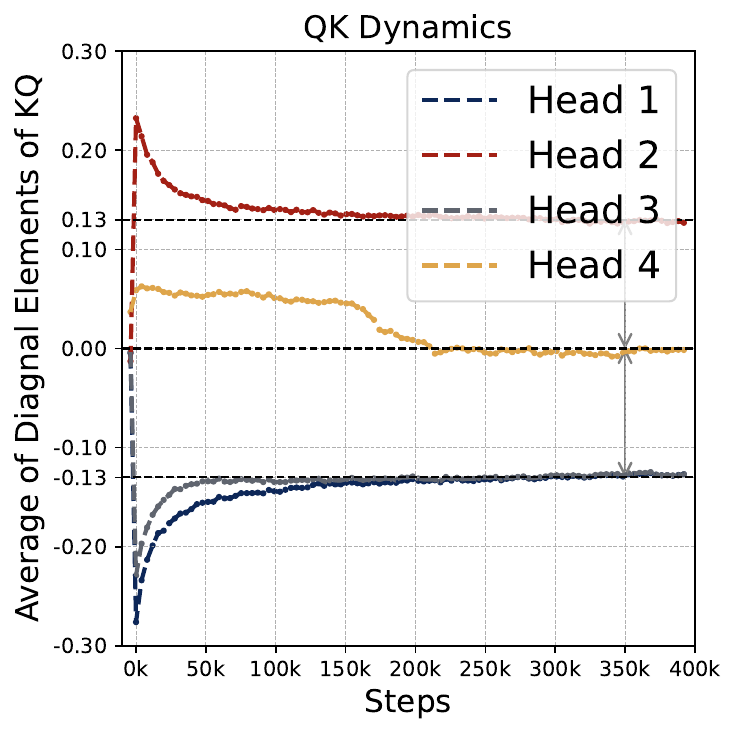}
        \subcaption{4-Head KQ Dynamics.}
        \label{subfig:4head_kq_dyn_main}
    \end{minipage}
    \begin{minipage}{0.262\textwidth}
        \includegraphics[width=\linewidth]{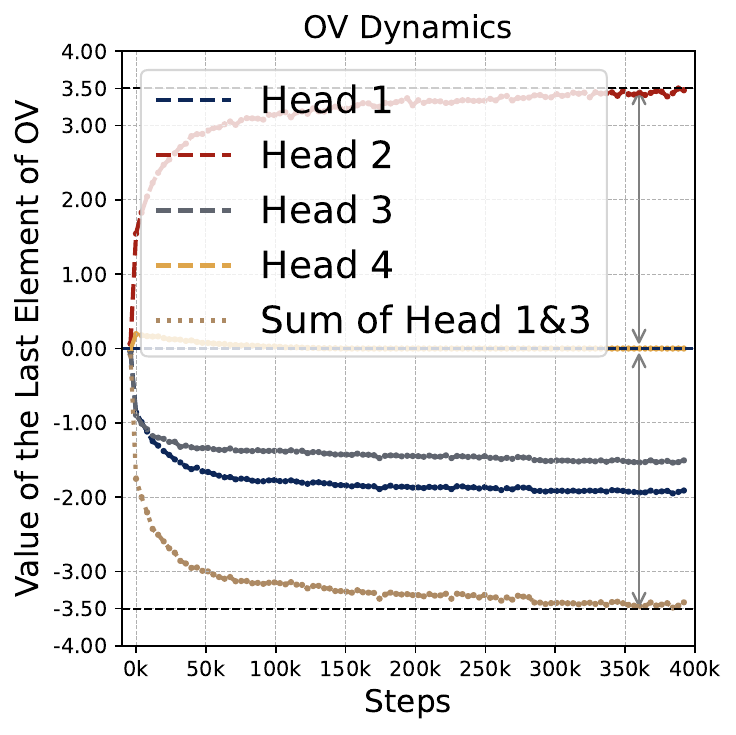}
        \subcaption{4-Head OV Dynamics.}
        \label{subfig:4head_ov_dyn_main}
    \end{minipage}
    \caption{
    The learned pattern of a single-head attention and training dynamics of the four-head model. Figure (a) illustrates the heatmap of single-head attention trained under the same setting as in Figure \ref{subfig:2head_heatmap}. In this case, we observe the global pattern of the KQ and OV circuits. Moreover, we have $\omega^{(1)} \approx 0.52\approx 1/\sqrt{d}$. 
    Figure (b) and (c) plot the dynamics of average diagonal values in KQ matrices and the last entry of OV vectors for a four-head attention model during training. 
    In this trained model, 
    Head 2 is the positive head, Head 1 and Head 3 are coupled to act as the negative head, and Head 4 is the dummy head that does not contribute to the final output.
    }  
\end{figure*} 

Although previous studies have explored how transformers perform ICL under the linear regression framework, much of this research has been limited to experimental analyses \citep{garg2022can}, linear transformers \citep{von2023transformers,zhang2024trained}, or single-head attention \citep[e.g.,][]{huang2023context,chen2024training}. 
This leaves a gap in understanding multi-head softmax attention models, which are used in practice.
To this end, we first conduct experiments to empirically investigate how multi-head softmax attention models learn to solve ICL with linear data.

\paragraph{Experiment Setups.} 
For all experiments in the main paper, we set $L = 40$, $d = 5$, $\sigma^2 = 0.1$ for data generation and employ the Adam optimizer with a learning rate $\eta=10^{-3}$, a batch size $n_{\mathsf{batch}}=256$, and update for $T=5\times10^5$ training steps. 
The weight matrices are initialized using the default random initialization in PyTorch.
Each experiment takes about 15 minutes on a single NVIDIA A100 GPU. 
See \S \ref{ap:add_empirical} for additional results.  
Furthermore, we found that the results are not sensitive to the choice of optimization algorithm, hyperparameters such as batch size and learning rate, or the configuration of the ICL data-generating process.
\vspace{10pt}

In the following, we summarize the main empirical findings from the experiments.

\paragraph{Global Pattern of KQ and OV Circuits.}
First, we present the global pattern of the KQ and OV circuits\footnote{We use the term \emph{circuits} to refer to the KQ and OV matrices, i.e., $KQ^{(h)}$ and $OV^{(h)}$ defined in \S\ref{sec:setup}, which are the key components of the attention mechanism. This follows from the same terminology as in \citet{olsson2022context}.} in the learned attention model across different numbers of heads. 

\vspace{5pt}
 \begin{tcolorbox}[frame empty, left = 0.1mm, right = 0.1mm, top = 0.1mm, bottom = 1.5mm]
	\textbf{Observation 1.} For any number of heads with $H \geq 1$, in the trained one-layer multi-head attention  model, the KQ and OV circuits take the following form: for all $h\in[H]$, we have
	\begin{equation}
	KQ^{(h)}=\begin{bmatrix}\omega^{(h)} I_d & \ast\\ \zero_d^\top & \ast\end{bmatrix}\in\RR^{(d+1)\times(d+1)},\quad OV^{(h)}=\begin{bmatrix}*&*\\\zero_d^\top & \mu^{(h)}\end{bmatrix}\in\RR^{(d+1)\times(d+1)}.
\label{eq:simple_model}
\end{equation}
Moreover, KQ and OV share the same signs within each head, i.e., 
$\mathrm{sign}(\omega^{(h)}) =  \mathrm{sign}(\mu^{(h)})$. 
We refer to this property as \luolan{sign-matching}.
In some cases, \luolan{dummy heads} may emerge, where $\omega^{(h)}\approx0$ and $\mu^{(h)}\approx0$. 
Notably, these patterns develop early in training and remain consistent throughout the optimization process.
\end{tcolorbox}

We illustrate the KQ and OV circuit patterns for both one-head and two-head attention models in Figures 
\ref{subfig:1head_heatmap} and \ref{subfig:2head_heatmap}, respectively.
For the OV circuits, we only plot the transpose of the last rows, i.e., $OV^{(h)}_{21}$ and $OV^{(h)}_{22}$. 
These figures show that the trained multi-head softmax attention models with various numbers of heads exhibit a consistent pattern:
\begin{itemize}
\setlength{\itemsep}{-1pt}
    \item[(a)]  For each KQ circuit, the top-left $d$-by-$d$ submatrix is diagonal and proportional to an identity matrix, i.e., $KQ^{(h)}_{11} = \omega^{(h)} \cdot I_d$ for some $\omega^{(h)} \in \RR$. Moreover, the first $d$ entries of the last row are all approximately zero, i.e., $KQ^{(h)}_{21} = {\bf 0}^{\top}$. 
    \item[(b)]  The last column of each OV circuit, as a vector in $\RR^{d+1}$, admits a last-entry-only pattern. That is, only the last entry is non-zero, which is represented by $\mu^{(h)}\in\RR$.
    \item[(c)]  We observe that $\mu^{(h)} $ and $\omega^{(h)}$ always have the same sign. Thus, each head can be categorized into either a positive or negative head, depending on the sign of  $\omega^{(h)}$. Positive and negative heads are defined as $\cH_{+} = \{ h \colon \omega^{(h) }> 0  \} $ and  $\cH_{-} = \{ h \colon \omega^{(h) }< 0\} $ respectively.
\end{itemize} 
The pattern of KQ and OV circuits shows that the weight matrices of the learned transformer essentially are governed by $2H$ numbers $\{ \omega^{(h)}\}_{h\in[H]}$ and $\{\mu^{(h)} \}_{h\in[H]}$, denoted by $\mu = (\mu^{(1)}, \dots, \mu^{(H)})^\top$ and $\omega = (\omega^{(1)}, \dots, \omega^{(H)})^\top$.
Under such a structure, the transformer predictor takes the form:
\begin{align}
	\hat{y}_q&=\sum_{h=1}^H\mu^{(h)}\cdot\bigl\langle y,\smax(\omega^{(h)}\cdot Xx_q)\bigr\rangle=\sum_{h=1}^H\mu^{(h)}\cdot\sum_{\ell=1}^L\frac{y_\ell\cdot\exp(\omega^{(h)}\cdot x_\ell^\top x_q)}{\sum_{\ell=1}^L\exp(\omega^{(h)}\cdot x_\ell^\top x_q)}\in\RR.
	\label{eq:simple_pred}
\end{align}
Thus, each head acts as a separate kernel regressor, and thus the attention model can be interpreted as the \luolan{sum of kernel regressors} (see \S \ref{sec:transformer_kernel} for detailed discussions). 
Here, $1/\omega^{(h)} $ plays the role of bandwidth of the kernel.
Each term in the sum in \eqref{eq:simple_pred} is a weighted sum of responses $\{ y_{\ell} \}_{\ell\in [L]}$ in the ICL samples, and the weights are computed based on the similarity between the test input $x_q$ and the sample covariate $x_{\ell}$'s. 
The notion of similarity is governed by the parameter $\omega^{(h)}$'s.

The single-head case has been studied in \cite{chen2024training}, which shows that the single-head model functions as a kernel estimator to solve linear regression, where $\omega^{(1)} = 1/\sqrt{d}$ and $\mu^{(1)} \asymp \sqrt{d}$ when $L$ is large.
We replicate this finding in Figure \ref{subfig:1head_heatmap}, where $\omega^{(1)} \approx 0.52 \approx 1/\sqrt{d}$. 
Additionally, we show that \luolan{such a pattern is shared by multi-head attention models with $H \geq2$}.

Moreover, when $H > 2$, as we show in Figure \ref{fig:heatmap_dyn_4head}, this pattern persists. 
Additionally, we see a {\color{luolan} dummy head} (Head 4 in this case), i.e., a head whose KQ and OV matrices are close to zero. 
This means that this head does not contribute to the prediction. 
Here, essentially the four-head attention model reduces to a three-head one. 
However, the appearance of dummy heads is not guaranteed, even with a large number of heads, as it results from the randomness of the optimization algorithm. 
As we will show later, dummy heads do not affect the statistical properties of the learned attention model and thus can be ignored in statistical analysis. 

Finally, in Figure \ref{fig:2head_dyn} and \ref{fig:heatmap_dyn_4head}, we plot the KQ and OV circuits of two-head and four-head attention models, respectively, along the training trajectory. 
We observe that the patterns in \eqref{eq:simple_model} emerge early during training and persist throughout the training process.

\paragraph{Learned Values of $(\omega,\mu)$.} 
Now we know the learned attention model is essentially parametrized by $(\omega,\mu)\in\RR^{2H}$. 
Next, we now examine these parameters to extract the following insights.

\vspace{5pt}
 \begin{tcolorbox}[frame empty, left = 0.1mm, right = 0.1mm, top = 0.1mm, bottom = 1.5mm]
 \textbf{Observation 2.}
    When $H \geq 2$,  $(\omega,\mu)$ of the learned attention model satisfies that  
	\vspace{-5pt}
    \begin{itemize}
	\setlength{\itemsep}{-1pt}
		\item [(\romannumeral1)] {\luolan{Homogeneous KQ scaling}}: The scaling of the top-left diagonal submatrix of each $KQ^{(h)}$ is nearly identical across all positive and negative heads, i.e., $|\omega^{(h)}|  \approx \gamma$ for all $h\in \cH_{+}\cup \cH_{-}$.
        \item [(\romannumeral2)] {\luolan{Zero-sum OV}}: The sum of $\{ \mu^{(h)} \}_{h\in [H]}$ is approximately zero, i.e., $\langle\mu,\one_H\rangle\approx 0$.
    \end{itemize}
\end{tcolorbox}

By examining Figures \ref{subfig:2head_heatmap} more closely,  we observe that 
the two attention heads in the two-head attention model converge to opposite heads, i.e., $\omega^{(1)} \approx -\omega^{(2)}$ and $\mu^{(1)} \approx -\mu^{(2)}$. 
Moreover, as we show in Figures \ref{subfig:4head_kq_dyn_main} and \ref{subfig:4head_ov_dyn_main}, similar patterns emerge in the four-head attention model. In particular, Head 4 is a dummy head, i.e., $\omega^{(4)} \approx  \mu^{(4)}  \approx 0$. Head 1 and Head 3 are negative heads and Head 2 is a positive head. We have $\omega^{(1)} \approx \omega^{(3) } \approx -\omega^{(2)}$ and $\mu^{(1)} +  \mu^{(2)} + \mu^{(3)} \approx 0$.
Thus, we conclude
\begin{itemize}
\setlength{\itemsep}{-1pt}
\item [(a)] For non-dummy heads, the scaling of the non-zero entries of the KQ circuits is nearly identical across heads, i.e., $|\omega^{(h)}|  \approx \gamma$ for all $h \in \cH_{+}\cup \cH_{-}$ , where $\gamma$ is a positive constant.

\item [(b)]
For the OV circuits, it holds that $\langle\mu,\one_H\rangle\approx 0$ such that
$\sum_{h\in \cH_{+}} \mu^{(h)} \approx -\sum _{h\in \cH_{-}}\mu^{(h)}$.
\end{itemize}
In contrast, in single-head attention model, the attention head is either positive or negative. 
The multi-head attention models have both positive and negative heads, with special patterns in $(\omega,\mu)$. 
 
  \paragraph{Multi-Head Attention Emulates a Version of GD.}
We have shown that multi-head attention models learn a sum of kernel regressors (Observation 1) and that their coefficients exhibit interesting patterns (Observation 2).
However, these findings do not answer the following questions: 
\begin{center}
    \emph{
    (a) How does the statistical error of multi-head attention models scale with the number of heads? \\[2pt]
    (b) What algorithm does multi-head attention models implicitly implement?}
\end{center}
The following observation answers these questions by examining the values of $(\omega,\mu)$ more closely.

\vspace{5pt}
 \begin{tcolorbox}[frame empty, left = 0.1mm, right = 0.1mm, top = 0.1mm, bottom = 1.5mm]
	\textbf{Observation 3.} In terms of the ICL prediction error, the two-head attention model \luolan{outperforms} single-head model. 
    Moreover, multi-head models with $H \geq 2$ exhibit similar performance, closely \luolan{approximating} that of the vanilla gradient descent (GD) predictor $\hat{y}^{\sf vgd}_q=L^{-1} \cdot \sum_{\ell = 1}^L x_\ell^\top x_q \cdot y_{\ell}$.   
    Furthermore, all softmax attention models can \luolan{generalize in length} during the test time. 
    In addition, the statistical error of the multi-head attention model is comparable to the optimal Bayes estimator up to a proportionality factor. 
\end{tcolorbox}

\begin{figure*}[t!]
    \centering
  \begin{minipage}[t]{0.38\textwidth}
    \includegraphics[width=\linewidth]{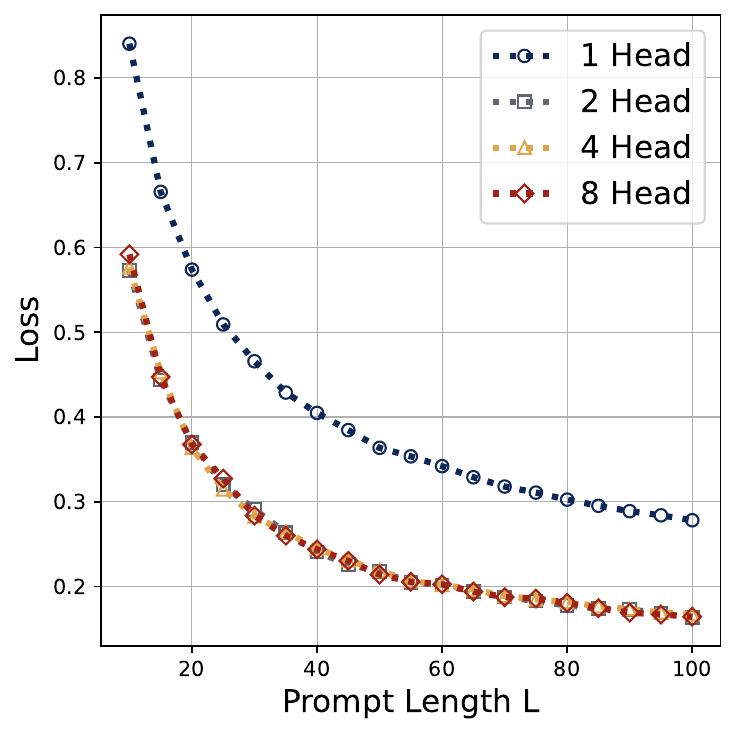}
    \subcaption{ICL Prediction Errors of Attention Models with Varying Head Numbers.}
    \label{subfig:head_comparison}
  \end{minipage}\quad
  \begin{minipage}[t]{0.38\textwidth}
    \includegraphics[width=\linewidth]{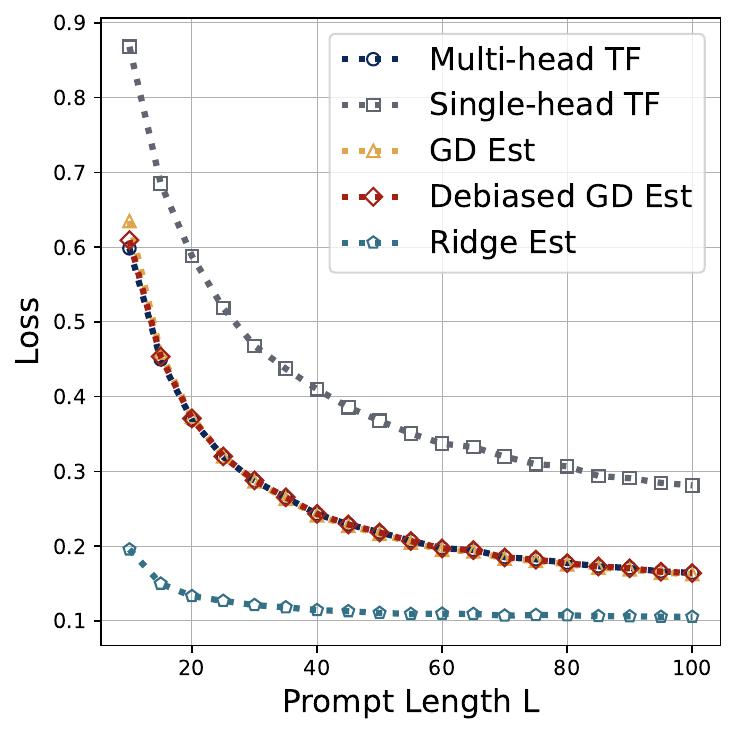}
    \subcaption{ICL Prediction Errors  of Attention Models and Canonical Estimators.}
    \label{subfig:est_comparison}
  \end{minipage}
  
  \caption{Comparison of attention models with different numbers of heads and canonical estimators in terms of the ICL prediction error. Models are trained over $L=40$ and evaluated over different lengths. Figure (a) shows that the transformer models generalize in length. Moreover, the single-head model is inferior to multi-head models, while all the multi-head models have an identical  error curve, which shows that all multi-head attention models learn nearly identical predictors. Figure (b) shows that the ICL prediction error of multi-head attention coincides with those of the vanilla GD as well as debiased GD algorithms. Moreover, multi-head attention is inferior to the Bayes estimator, i.e., optimally tuned ridge estimator, but the error is comparable up to a constant factor.}    
  \label{fig:loss_comparison_empirical}
\end{figure*}

Comparing Figures \ref{subfig:2head_heatmap}  with  \ref{subfig:1head_heatmap}, 
we observe that in the two-head attention model, the magnitude of KQ circuits ($\sim 0.13$) is smaller than that of the single-head attention ($\sim 0.52$),  and the magnitude of OV circuits ($\sim 3.50$) is larger than that of the single-head attention model ($\sim 1.42$). 
Hence, the two-head attention model learns a different predictor than the single-head attention model.

More interestingly, revisiting the patterns learned in
Figures~\ref{subfig:dynamic}, \ref{subfig:4head_kq_dyn_main}, 
and \ref{subfig:4head_ov_dyn_main}
we see that the two-head and four-head attention model learns the same predictor. 
To see this, note that Figures \ref{subfig:dynamic} and  \ref{subfig:4head_kq_dyn_main} have the same scaling in the KQ circuits ($\sim0.13$), i.e., $\omega^{(1)}_{\mathsf{H2}} \approx  \omega^{(1)}_{\mathsf{H4}} \approx  \omega^{(3)}_{\mathsf{H4}}$ and $\omega^{(2)}_{\mathsf{H2}} \approx \omega^{(2)}_{\mathsf{H4}}$. 
  Here we use subscripts $\mathsf{H2}$ and $\mathsf{H4}$ to denote the two-head and four-head attention models, respectively. 
Similarly, the magnitude of OV circuits aligns across positive and negative groups ($\sim3.5$), i.e.,  $\mu^{(1)}_{\mathsf{H2}} \approx   \mu^{(1)}_{\mathsf{H4}} +  \mu^{(3)}_{\mathsf{H4}}$ and $\mu^{(2) }_{\mathsf{H2}} \approx \mu^{(2)}_{\mathsf{H4}}$.
This implies that the four-head attention model ultimately converges to the same predictor as the two-head model. 
More broadly, this phenomenon holds across all multi-head attention models with $H \geq 2$ which consistently \luolan{learn nearly identical predictors}.

Furthermore, to study the statistical errors of these learned attention models, we compare their ICL loss in the test time, defined in \eqref{eq:train_loss}. 
In particular, we consider length generalization where $Z_{\mathsf{ebd}}$ in \eqref{eq:embed} involves a different length $L$ that might be different from the training data with $L=40$. 
We plot the ICL losses of various models in Figure~\ref{subfig:head_comparison}, which reveals the following findings:
\begin{itemize} 
\setlength{\itemsep}{-1pt}
    \item [(a)] Multi-head attention outperforms single-head attention while maintaining nearly identical performance across different the  number of heads used.
    \item [(b)] All softmax attention models generalize in length and the ICL loss decreases as $L$ increases. 
\end{itemize}

Moreover, we plot the ICL losses of the trained attention models and the classical statistical estimators in Figure~\ref{subfig:head_comparison}.
This figure demonstrates that multi-head attention models closely
track the performance of both the vanilla GD predictor and its debiased variant, defined in \eqref{eq:dgd}. This suggests that multi-head attention model \luolan{approximately implements a version of GD algorithm}.
Moreover, the ICL loss of the attention models is comparable to the Bayes estimator, i.e., optimally tuned ridge estimator,  up to a proportionality factor.

\paragraph{Training Dynamics.} Finally, 
by examining the training dynamics of the parameters of KQ and OV circuits of different heads, we have the following observation. 

\vspace{5pt}
 \begin{tcolorbox}[frame empty, left = 0.1mm, right = 0.1mm, top = 0.1mm, bottom = 1.5mm]
	\textbf{Observation 4.} The training dynamics of KQ and OV circuits across different heads follow similar trajectories under random initializations. In particular, the patterns found by Observations 1 and 2 appear early in the optimization trajectory, and are preserved during training. Moreover, the magnitude of $\omega^{(h)}$ in positive or negative heads first increases and then decreases to converge, while the $\sum_{h\in \mathcal{H}_{+} } \mu^{(h)}$ monotonically increase until convergence.
\end{tcolorbox}

\begin{figure*}[t]
    \centering
    \begin{minipage}{0.95\textwidth}
        \centering
        \includegraphics[width=\linewidth]{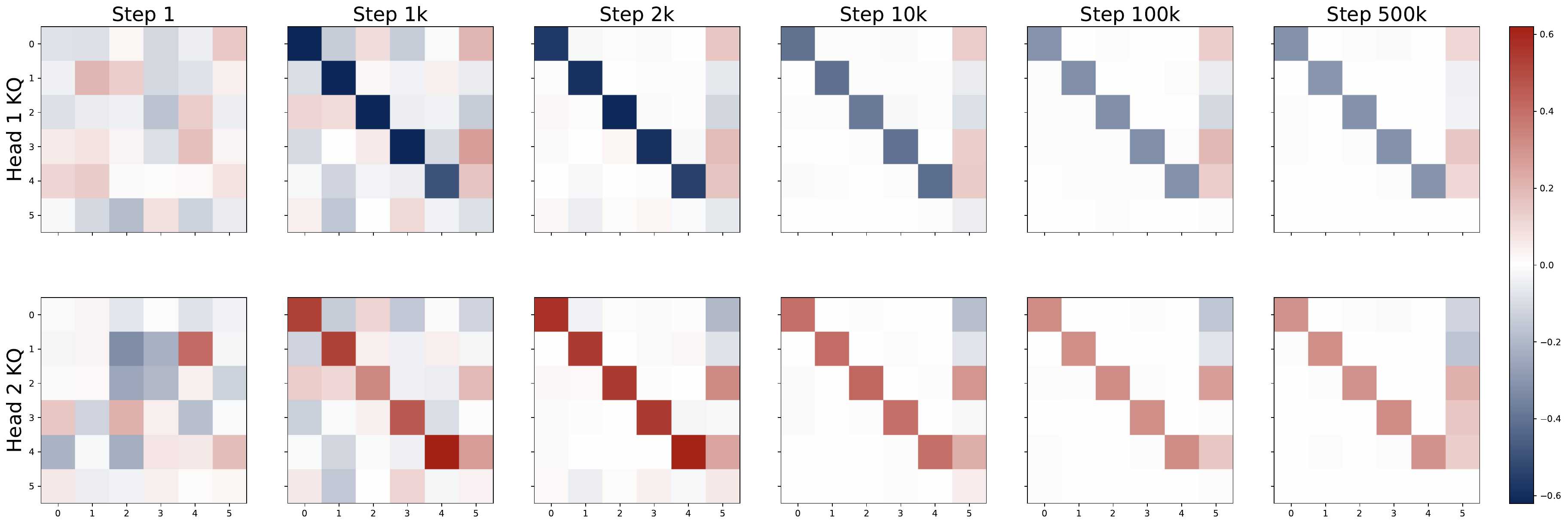}
            \vspace{1pt}
    \end{minipage}
    \begin{minipage}{0.47\textwidth}
        \centering
        \includegraphics[width=\linewidth]{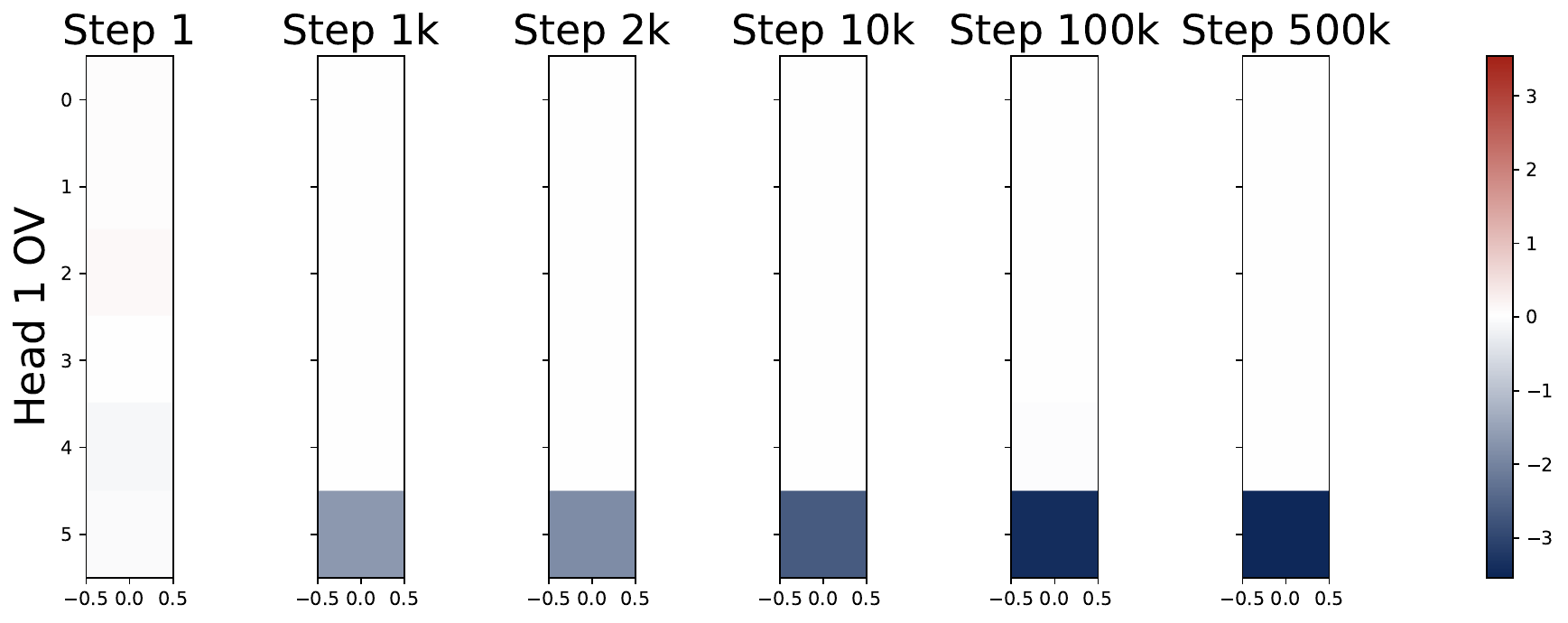}
    \end{minipage}\quad
    \begin{minipage}{0.47\textwidth}
        \centering
        \includegraphics[width=\linewidth]{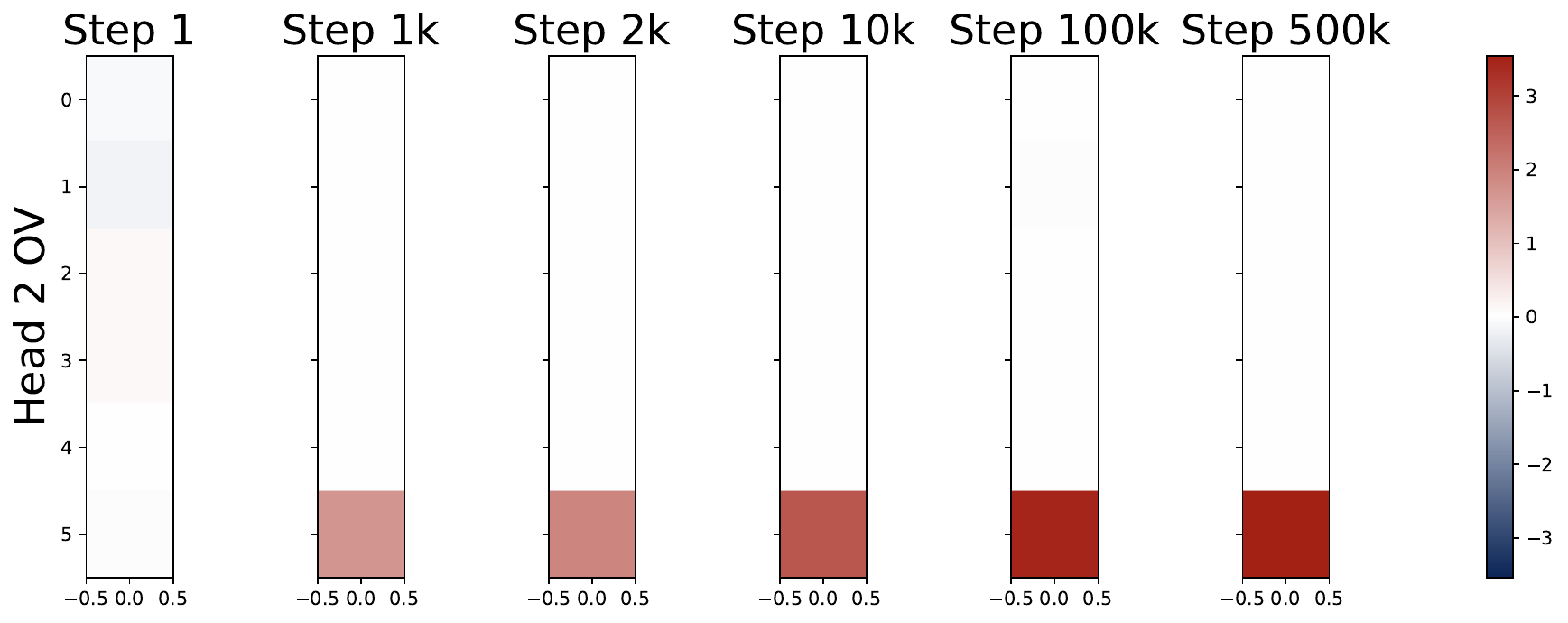}
    \end{minipage}
    \caption{Heatmaps of the KQ matrices and OV vectors along training epochs with $H=2$, $d=5$, and $L=40$, trained over $T=5\times10^5$ steps with random initialization. Note KQ matrices (top) and OV vectors (bottom) form the diagonal and the last-entry-only patterns during the early stages of training and then optimize within the simplified regime described in \eqref{eq:simple_model}.
    These heatmaps show that the pattern of KQ and OV circuits evolve early during the optimization process, starting from random initialization. The pattern is preserved during training. Moreover, the magnitude of $\omega^{(h)}$ first increases and then decreases to stabilize, while the magnitude of $\mu^{(h)} $ monotonically increases until convergence. Here we have one positive head (Head 2) and one negative head (Head 1)
    }
    \label{fig:2head_dyn}
\end{figure*}

Despite random initializations, the parameter evolution follows a highly consistent trajectory throughout training. 
As shown in Figure \ref{fig:2head_dyn}, the attention model quickly develops the pattern in \eqref{eq:simple_model}, identified in  Observation 1, during the early stages of training. 
The attention model continues to optimize the loss function with this pattern preserved throughout the training process.
Moreover, by looking at the dynamics of $(\omega,\mu)$, we observe that the two properties identified by Observation 2 are also preserved during the training. 
Furthermore, the training dynamics of the KQ circuits are not monotonic. The magnitudes of $\omega^{(h)}$'s first increase and then decrease to stabilize.  
Whereas the magnitude of OV circuits, i.e., $\sum_{h\in \cH_{+}} \mu^{(h)}$, increases steadily throughout training.

\vspace{5pt}
\begin{githubquote}
    {\bf Empirical Takeaways.} We conclude by  summarizing the empirical findings as follows: 
\begin{itemize}[leftmargin=*]
\setlength{\itemsep}{-1pt}
    \item All heads in a trained multi-head attention model follow a universal pattern in the KQ and OV circuits, with each head effectively acting as a kernel regressor.
    \item For multi-head models, the attention heads can be categorized into three groups -- positive, negative, and dummy heads. Positive and negative heads share similar KQ and OV patterns but with opposite signs, while dummy heads have nearly zero KQ and OV matrices.
    \item Multi-head attention models are functionally nearly identical, regardless of the number of heads. They closely emulate the vanilla GD predictor and outperform single-head models.
    \item Training dynamics remain highly consistent across random initializations. The attention model quickly develops the special patterns of the KQ and OV circuits during the early stages of training and maintains them while optimizing the loss function throughout.
\end{itemize}
\end{githubquote}

\section{Mechanistic Interpretation: Training Dynamics and Expression}

In this section, we present the theoretical justifications of the empirical observations presented in \S \ref{sec:empirical}. 
Specifically, in \S \ref{sec:training_dynamic}, from an \emph{optimization} perspective, we analyze the training dynamics to explain how the attention patterns emerge (see Observations 1 \& 2) and why the parameters evolve the observed evolution (see Observation 4).
To understand how multi-head attentions outperform single-head ones and why their performances closely approximate the GD predictor (see Observation 3), we study the expression of learned models and investigate how softmax attention performs function approximation from a \emph{statistical} view (see \S \ref{sec:optimality}).

\subsection{Simplification and Reparametrization of Attention Model}\label{sec:simplification}

As shown in Observation 4 in \S \ref{sec:empirical}, the attention model develops a diagonal-only pattern in KQ circuits and a last-entry only pattern in OV circuits during training.
This structure emerges early in the training and then continues optimizing within this regime (see Figure \ref{fig:2head_dyn}).
Thus, it is sufficient to consider the parameterization in \eqref{eq:simple_model} to interpret the core aspects of model training.
With this reparameterization, the evolution of the model model can be analyzed by tracking $(\omega, \mu)$, and the transformer predictor follows \eqref{eq:simple_pred}.
Motivated by the analysis in \cite{chen2024training}, we present a refined argument for approximating the population loss in \eqref{eq:train_loss} under the reparameterization in \eqref{eq:simple_model}. As we will see later, gradient flow based on this approximate loss reveals the empirical observations from a theoretical perspective. 

\begin{proposition}[Informal]\label{prop:informal_approx_loss}
	Consider an $H$-head attention model parameterized by \eqref{eq:simple_model} with dimension $d\in\ZZ^+$ and sample size $L\in\ZZ^+$. 
    Suppose that $d>\log L$ and parameters $(\omega,\mu)\subseteq\RR^{2H}$ satisfies that $\|\omega\|_\infty\lesssim\sqrt{\log L/d}$ and $\|\mu\|_\infty\lesssim L^{1/5} $, then it holds that
	$$
	\cL(\omega,\mu)=1+\sigma^2-2\mu^\top\omega+\mu^\top\left(\omega\omega^\top+(1+\sigma^2)\cdot L^{-1}\cdot\exp(d\omega\omega^\top)\right)\mu+O(dH^2\cdot L^{-1/5}),
	$$
    where $\exp(\cdot)$ is applied element-wisely.
\end{proposition}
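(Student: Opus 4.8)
The plan is to expand the population loss as
\[
\cL(\omega,\mu)=\EE[y_q^2]-2\,\EE[y_q\hat y_q]+\EE[\hat y_q^2],
\]
with $\hat y_q$ the reparametrized predictor in \eqref{eq:simple_pred}, and to evaluate each term. The first is exact: since $y_q=\beta^\top x_q+\epsilon_q$ with $\beta\sim\cN(0,I_d/d)$, $x_q\sim\cN(0,I_d)$, and $\epsilon_q$ independent of the covariates, $\EE[y_q^2]=\mathrm{tr}(\EE[\beta\beta^\top])+\sigma^2=1+\sigma^2$. For the other two terms I would first integrate out $\beta$ and the label noise. Writing the covariate-only softmax weights $p^{(h)}_\ell:=\smax(\omega^{(h)}Xx_q)_\ell$, the noise contributions vanish and $\EE_\beta[(\beta^\top x_i)(\beta^\top x_j)]=x_i^\top x_j/d$, so, with the remaining expectation taken over the covariates,
\[
\EE[y_q\hat y_q]=\sum_{h=1}^H\frac{\mu^{(h)}}{d}\,\EE\Big[\sum_{\ell=1}^L(x_q^\top x_\ell)\,p^{(h)}_\ell\Big],
\]
\[
\EE[\hat y_q^2]=\sum_{h,h'=1}^H\mu^{(h)}\mu^{(h')}\,\EE\Big[\sum_{\ell\neq\ell'}\frac{x_\ell^\top x_{\ell'}}{d}\,p^{(h)}_\ell p^{(h')}_{\ell'}+\sum_{\ell}\big(\frac{\|x_\ell\|^2}{d}+\sigma^2\big)\,p^{(h)}_\ell p^{(h')}_{\ell}\Big].
\]
Thus everything reduces to a handful of expectations of the softmax weights against low-degree polynomials of the covariates.

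\textbf{Conditioning and annealing.} The key step is to condition on $x_q$, set $u:=x_q/\|x_q\|$ and $g_\ell:=x_\ell^\top u$, which are i.i.d.\ $\cN(0,1)$ given $x_q$, and write $p^{(h)}_\ell=e^{a_hg_\ell}/S_h$ with $a_h:=\omega^{(h)}\|x_q\|$ and $S_h:=\sum_{m=1}^L e^{a_hg_m}$, so the softmax depends on the covariates only through the $g_\ell$'s. Decomposing $x_\ell=g_\ell u+x_\ell^{\perp}$, the perpendicular parts either average to zero (the $(x_\ell^{\perp})^\top x_{\ell'}^{\perp}$ pieces with $\ell\neq\ell'$, independent of the weights) or concentrate up to lower-order corrections (the diagonal $\|x_\ell\|^2=\|x_\ell^{\perp}\|^2+g_\ell^2$, with $\|x_\ell^{\perp}\|^2\approx d-1$ and the $g_\ell^2$ part, correlated with the weights, only feeding the error). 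The remaining work is an \emph{annealing} estimate: replace each random normalizer $S_h$ by its conditional mean $\bar S_h:=\EE[S_h\mid x_q]=L\,e^{a_h^2/2}$, controlling the error via a leave-one-out split $S_h=e^{a_hg_\ell}+S_h^{(\ell)}$ with $S_h^{(\ell)}$ independent of $g_\ell$, the concentration of $S_h$ about $\bar S_h$ (its conditional relative variance is $(e^{a_h^2}-1)/L=o(1)$, precisely because $\|\omega\|_\infty\lesssim\sqrt{\log L/d}$ keeps $a_h^2\lesssim\log L$ so that all exponential moments stay polynomial in $L$), and the concentration of $\|x_q\|^2$ about $d$.

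\textbf{Assembling the main term.} After annealing, the surviving expectations are elementary Gaussian moments --- $\EE[e^{cg}]=e^{c^2/2}$, $\EE[g\,e^{cg}]=c\,e^{c^2/2}$, $\EE[g^2e^{cg}]=(1+c^2)e^{c^2/2}$ for $c\in\{a_h,a_{h'},a_h+a_{h'}\}$ --- and substituting them gives, up to the annealing error: the cross sum, conditionally on $x_q$, anneals to $\omega^{(h)}\|x_q\|^2$ and hence averages to $\omega^{(h)}d$, so $\EE[y_q\hat y_q]\approx\mu^\top\omega$; the off-diagonal sum $\EE[\sum_{\ell\neq\ell'}(x_\ell^\top x_{\ell'}/d)\,p^{(h)}_\ell p^{(h')}_{\ell'}]$ anneals to $\omega^{(h)}\omega^{(h')}$, contributing $\mu^\top\omega\omega^\top\mu$; and the diagonal collision sum $\sum_\ell p^{(h)}_\ell p^{(h')}_\ell$ anneals to $L^{-1}e^{a_ha_{h'}}$ with $a_ha_{h'}\to\omega^{(h)}\omega^{(h')}d$, so with the $(\|x_\ell\|^2/d+\sigma^2)\approx 1+\sigma^2$ prefactor this part contributes $(1+\sigma^2)L^{-1}\mu^\top\exp(d\,\omega\omega^\top)\mu$ with entrywise $\exp$. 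Adding the three pieces with the signs of the expansion reproduces exactly
\[
1+\sigma^2-2\mu^\top\omega+\mu^\top\!\big(\omega\omega^\top+(1+\sigma^2)L^{-1}\exp(d\,\omega\omega^\top)\big)\mu .
\]

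\textbf{Error accounting and main obstacle.} It remains to collect the errors. Each of the $O(H^2)$ summands carries a multiplicative error from (i) replacing $S_h$ by $\bar S_h$, (ii) replacing $\|x_q\|^2$ by $d$ inside the exponentials, and (iii) the $O(1/L)$ gap between the restricted sum over $\ell\neq\ell'$ and the full double sum; being conservative with the moments of $\|x_q\|^2$ costs a polynomial factor in $d$, and each error is itself a negative power of $L$. Multiplying by $|\mu^{(h)}\mu^{(h')}|\lesssim L^{2/5}$ (from $\|\mu\|_\infty\lesssim L^{1/5}$) and using $\|\omega\|_\infty\lesssim\sqrt{\log L/d}$ to keep $e^{d\omega^{(h)}\omega^{(h')}}$ under control, all of these collapse into the stated $O(dH^2L^{-1/5})$. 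I expect the annealing estimate --- uniformly bounding the replacement of the random softmax normalizer by its mean, including on the rare large-deviation events of $S_h$ and $\|x_q\|^2$, while losing at most a polynomial factor in $d$ --- to be the main technical obstacle; the rest is careful but routine Gaussian bookkeeping, refining the argument of \citet{chen2024training}.
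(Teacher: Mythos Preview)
Your proposal is correct and takes a genuinely different route from the paper. The paper's proof proceeds by applying Stein's Lemma: first to compute $\EE[X^\top p^{(h)}\mid x_q]=\omega^{(h)}x_q\,(1-\EE[\|p^{(h)}\|_2^2\mid x_q])$ exactly, and then twice (packaged as a separate lemma) to expand $\EE[p^{(h')\top}XX^\top p^{(h)}\mid x_q]$ into $d\,\EE[p^{(h)\top}p^{(h')}]$, $\omega^{(h)}\omega^{(h')}\|x_q\|_2^2$, and an explicit collection of ``high-order'' terms in $p^{(h)},p^{(h')}$. It then invokes moment estimates from \citet{chen2024training} to approximate $\EE[p^{(h)\top}p^{(h')}]$ by $L^{-1}\exp(d\,\omega^{(h)}\omega^{(h')})$ and to bound the high-order residuals, finishing by marginalizing $x_q$ via chi-square concentration. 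Your projection decomposition $x_\ell=g_\ell u+x_\ell^{\perp}$ bypasses Stein's Lemma entirely: because the softmax weights depend only on $g_\ell$, the perpendicular components decouple and integrate out elementarily, and the whole calculation reduces directly to one-dimensional Gaussian moments once the normalizer is annealed. What the paper's approach buys is an exact intermediate decomposition that cleanly separates the leading term from a structured list of residuals, each of which plugs into an existing lemma; what your approach buys is a more elementary and self-contained argument that avoids derivative bookkeeping and makes the mechanism (concentration of $S_h$ about $L\,e^{a_h^2/2}$) transparent. Both routes ultimately hinge on the same concentration input you correctly flag as the main obstacle, and both require the $\|\omega\|_\infty\lesssim\sqrt{\log L/d}$ hypothesis for exactly the reason you identify---to keep $e^{a_h^2}$ polynomial in $L$.
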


The formal statement and proof of Proposition \ref{prop:informal_approx_loss} are deferred to \S \ref{ap:informal_approx_loss}, where the result extends beyond the case $d>\log L$ and allows a more flexible trade-off between the scaling and the resulting approximation error.
Proposition \ref{prop:informal_approx_loss} establishes that, under certain scaling assumptions, the true loss can be well approximated by a simplified formulation in terms of $(\omega,\mu)$ with the approximation error vanishing as the ICL sample size $L$ increases.
In addition, we provide experimental validations for Proposition \ref{prop:informal_approx_loss}. 
As shown in Figure \ref{fig:approx_loss}, the approximation can effectively capture the true loss landscape (see Figure \ref{subfig:loss}, \ref{subfig:1dloss}) and performs particularly well when $\omega$ or $\mu$ is small (see Figure \ref{subfig:approx_actual}).
See \S \ref{ap:add_loss_approx} for more detailed explanations and additional experimental validations of Proposition \ref{prop:informal_approx_loss}.

Furthermore, although Proposition \ref{prop:informal_approx_loss} proves that the approximation of $\cL$ is accurate when the magnitude of $\omega$ and $\mu$ are small, we note that this region covers the entire trajectory of gradient flow when $L$ is large. 
In particular, as we will show in \S\ref{ap:gradient_flow_2head}, consider the gradient flow of the approximate loss given by Proposition \ref{prop:informal_approx_loss} under the proportional regime with $d/L\rightarrow\xi$ and $d,L\rightarrow\infty$ for some $\xi>0$, the largest magnitude of $\omega^{(h)}$ is $O(\sqrt{\log d/d})$, which matches $O(\sqrt{\log L/d})$. 
Thus, the approximation in Proposition \ref{prop:informal_approx_loss} remains valid.

Hence, to analyze the parameter evolution of the transformer, it suffices to consider the gradient dynamics of the approximate loss, which is given by 
\begin{tcolorbox}[frame empty, left = 0.1mm, right = 0.1mm, top = 0.1mm, bottom = 1mm]
	\begin{equation}
	\tilde\cL(\omega,\mu)= 1+\sigma^2-2\mu^\top\omega+\mu^\top\big(\omega\omega^\top+(1+\sigma^2)\cdot L^{-1}\cdot\exp(d\omega\omega^\top)\big)\mu.
	\label{eq:approx_loss}
\end{equation}
\end{tcolorbox}

\begin{figure*}[t!]
    \centering
    \begin{minipage}{0.31\textwidth}
        \includegraphics[width=\linewidth]{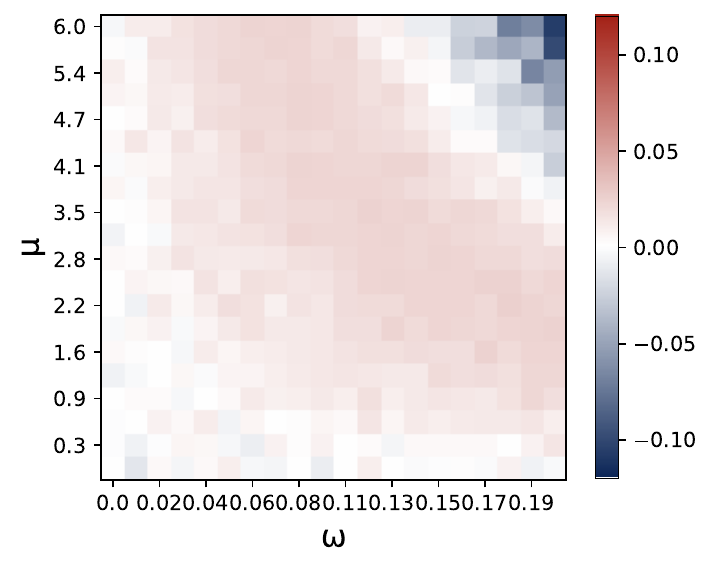}
        \subcaption{Loss Difference $\cL-\tilde\cL$.}
        \label{subfig:approx_actual}
    \end{minipage}
    \begin{minipage}{0.31\textwidth}
        \includegraphics[width=\linewidth]{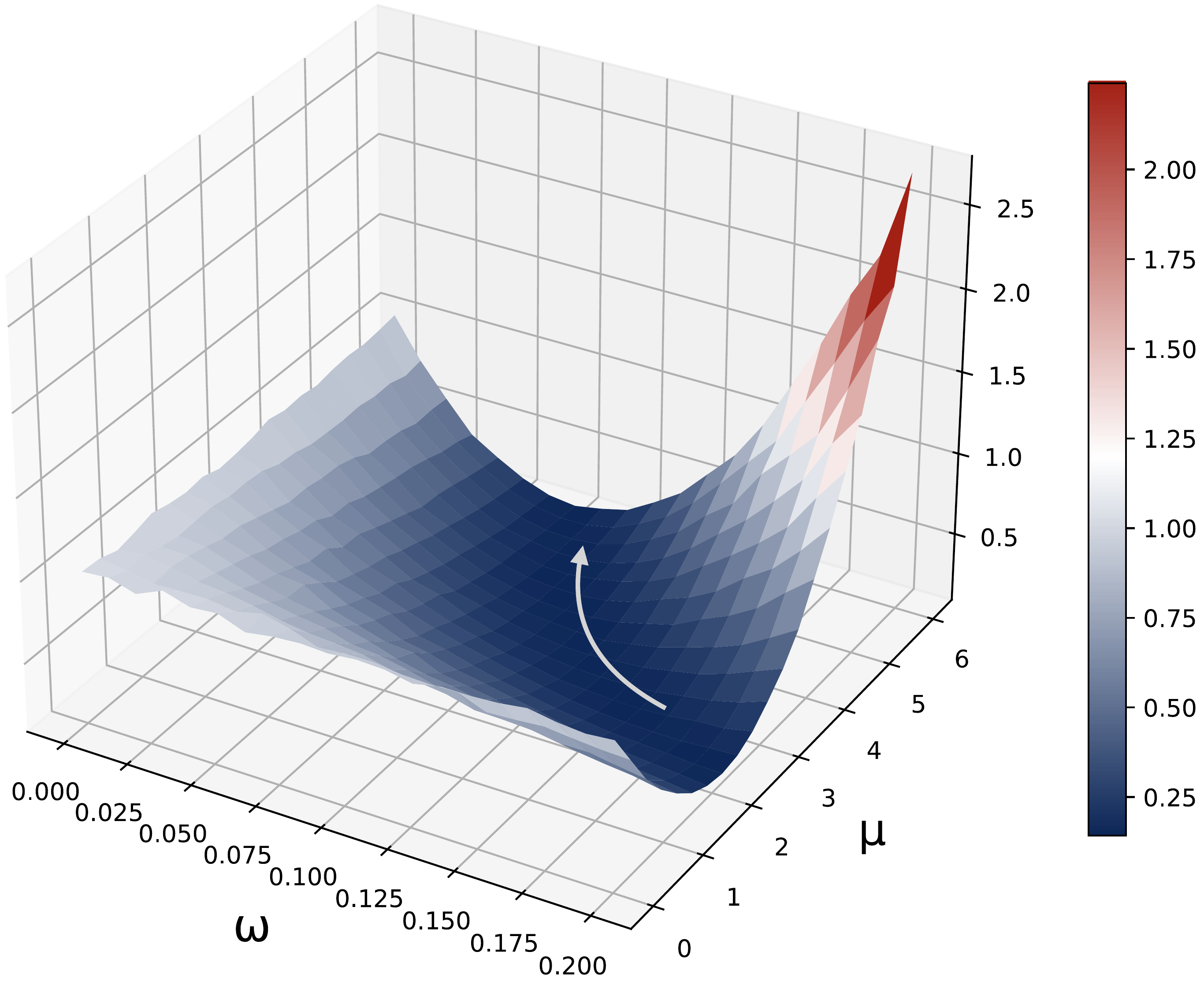}
        \subcaption{Landscape of True Loss.}
        \label{subfig:loss}
    \end{minipage}
    \begin{minipage}{0.32\textwidth}
        \includegraphics[width=\linewidth]{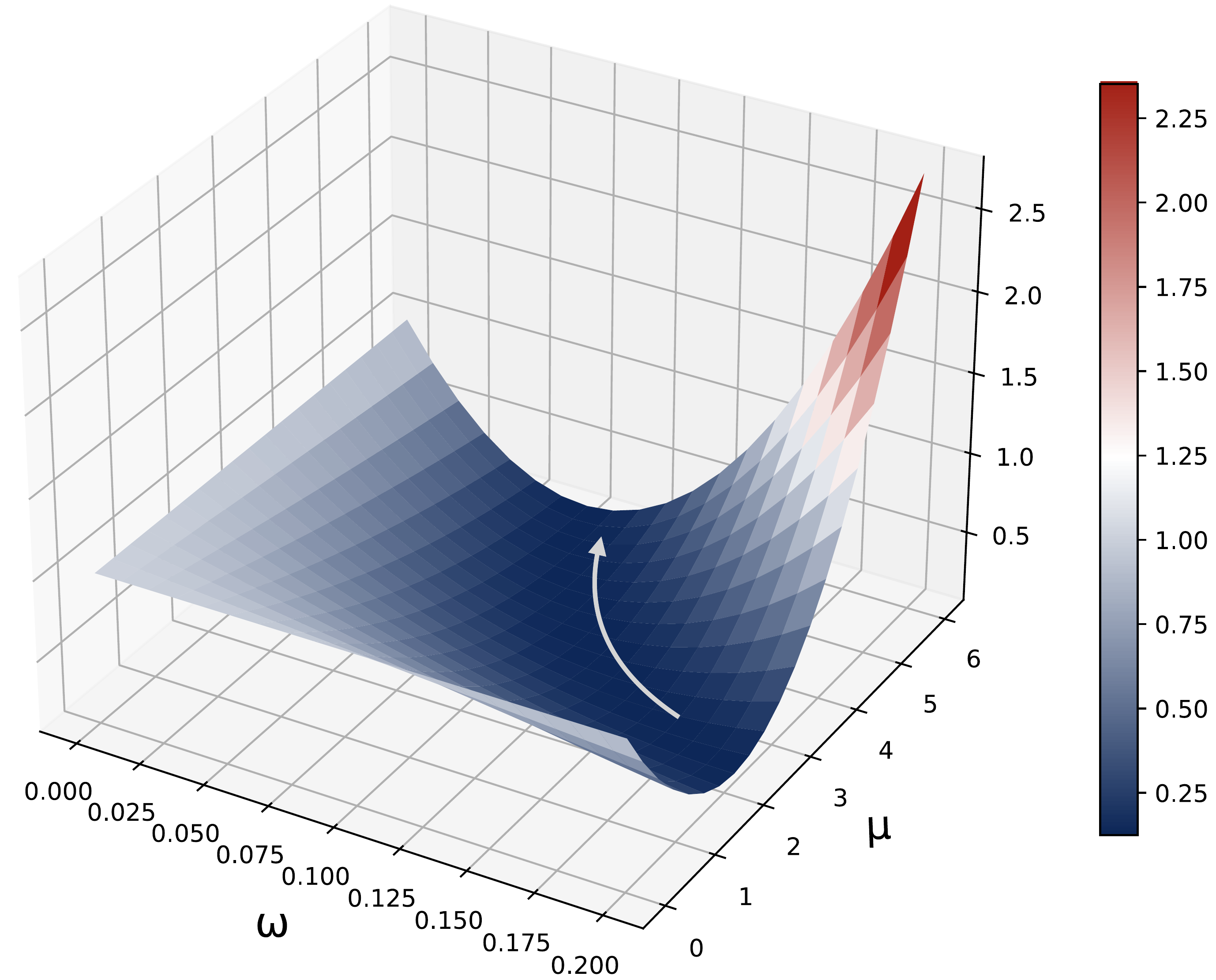}
        \subcaption{Landscape of Approximate Loss.}
        \label{subfig:1dloss}
    \end{minipage}
    \caption{Experimental validation for Proposition \ref{prop:informal_approx_loss} with $2$-head attention model under $(\omega,-\omega)$ and $(\mu,-\mu)$ with $L=40$, $d=5$ and $\sigma^2=0.1$. Figure (a) plots the heatmap of difference between the actual and approximate loss $\cL(\mu,\omega)-\tilde\cL(\mu,\omega)$. Figure (b) and (c) illustrate the landscapes of actual and approximate loss. The gray arrow highlights the ``river valley" loss landscape where the loss decreases gradually in that direction towards the minimum point. 
    Combining all three figures, we have $ \mathcal{\tilde L}$ is a good approximation of $\mathcal{L}$, especially when $\omega$ and $\mu$ are not too large.}    
    \label{fig:approx_loss}
\end{figure*}

\subsection{Training Dynamics, Emerged Patterns and Solution Manifold}\label{sec:training_dynamic}
In this section, we analyze the evolution of multi-head attention weights during training. Starting from random initialization, we show that the parameters consistently converge to a similar pattern.
To explain this phenomenon, we provide a theoretical interpretation based on the training dynamics of the approximate loss in \eqref{eq:approx_loss}.
Our analysis focuses on the training process using gradient descent (GD). 
For a specific learning rate $\eta>0$, the parameter update at step $t\in\NN$ is given by
$$
\mu_{t+1}\leftarrow\mu_t-\eta\cdot\nabla_{\mu}\tilde{\cL}(\mu_t,\omega_t),\qquad \omega_{t+1}\leftarrow\omega_t-\eta\cdot\nabla_\omega\tilde{\cL}(\mu_t,\omega_t).
$$
Following this, the training dynamic undergoes two main stages: 
\begin{itemize}[leftmargin=*]
	\setlength{\itemsep}{-1pt}
	\item In \textbf{Stage \MakeUppercase{\romannumeral1}}, two patterns of the attention weights, \emph{sign matching} and \emph{zero-sum OV},
    emerge during the gradient-based optimization.
    In particular, we apply Taylor expansion to the gradient.
    The emergence of these patterns 
      is a result of the low-order terms of the Taylor expansion, which are dominant due to the small initialization. 
	\item In \textbf{Stage \MakeUppercase{\romannumeral2}}, the pattern of \emph{homogeneous KQ scaling} appear,
    which means that the magnitude of all the $\omega^{(h)}$'s is almost the same. 
This pattern emerges due to high-order terms of the Taylor expansion of gradient, which are dominant in this stage. Moreover, we prove that the $H$ heads can be split into three groups: \emph{positive}, \emph{negative} and \emph{dummy heads}, depending on the signs of the KQ parameters. 
We also prove that the limiting values of $\{\mu^{(h)}\}_{h\in[H]}$ form a solution manifold. 
\end{itemize}

\subsubsection{Stage I: Emergence of Sign-Matching and Zero-Sum OV Patterns} 
We start from a small initialization and assume $\|\omega_0\|_\infty\leq1/\sqrt{d}$ and $\|\mu_0\|_\infty\leq c\sqrt{d}$ hold for all $t\in\NN$,  where  $c\in(0,1)$ is a small constant.
As shown in Figure \ref{subfig:dynamic}, at the very beginning  of the training process, the model quickly develops the \emph{sign matching} and \emph{zero-sum OV} patterns as below:
\begin{tcolorbox}[frame empty, left = 0.1mm, right = 0.1mm, top = 0.1mm, bottom = 1.5mm]
	\begin{equation}
	\langle\mu,\one_H\rangle=0,\text{~~and~~}{\rm sign}(\omega^{(h)})={\rm sign}(\mu^{(h)}),\qquad\forall h\in[H].\label{eq:obs_1}
	\end{equation}	
\end{tcolorbox}
To understand the underlying mechanism, we examine the gradient update. 
By applying the Taylor expansion to $\exp(d\omega\omega^\top)$ at $\omega=\zero_H$,   the gradient update of $\mu$ at step $t$ follows
\begin{align*}
	\mu_{t+1}\leftarrow&\mu_t+2\eta\cdot\underbrace{\left(1-\big(1+(1+\sigma^2)\cdot d{L}^{-1}\big)\cdot\langle\mu_t,\omega_t\rangle\right)\cdot\luolan{\omega_t}}_{\displaystyle\text{Sign-Matching Term}}\notag\\
	&\qquad -2\eta\cdot(1+\sigma^2)\cdot L^{-1}\cdot\Big(\underbrace{\luolan{\langle\mu_t,\one_H\rangle}\cdot\one_H}_{\displaystyle\text{Zero-Sum OV Term}}+\underbrace{\sum_{k=2}^\infty \frac{d^k}{k!}\cdot\langle\mu_t,\omega_t^{\odot k}\rangle\cdot\omega_t^{\odot k}}_{\displaystyle\text{High-Order Terms}}\Big).
\end{align*}
Here $\one_H$ is a all-one vector in $\RR^{H}$ and $\omega_t^{\odot k}$ is $k$-th Hadamard product of $\omega_t$. 
Similarly, by Taylor expansion, the updating scheme of $\omega$ is approximately given by 
\begin{align*}
	\omega_{t+1}\leftarrow&\omega_t+2\eta\cdot\underbrace{\left(1-\big(1+(1+\sigma^2)\cdot d{L}^{-1}\big)\cdot\langle\mu_t,\omega_t\rangle\right)\cdot\luolan{\mu_t}}_{\displaystyle\text{Sign-Matching Term}}\notag\\
	&\qquad -2\eta\cdot(1+\sigma^2)\cdot L^{-1}\cdot\underbrace{\sum_{k=2}^\infty \frac{d^k}{(k-1)!}\cdot\langle\mu_t,\omega_t^{\odot k}\rangle\cdot\mu_t\odot\omega_t^{\odot k-1}}_{\displaystyle\text{High-Order Terms}}.
\end{align*}
In the early stage of training, the higher-order terms are negligible due to small initializations, and thus the sign-matching terms and the zero-sum OV terms dominate. 
In the following, we show how these terms give rise to the emergence of these two patterns.
\vspace{-5pt}
\paragraph{(\romannumeral1). Zero-Sum OV Pattern.} For the zero-sum OV term, it is straightforward to see that it encourages $\langle\mu_t,\one_H\rangle=0$. Note that the zero-sum OV term is proportional to an all-one vector. In other words, each entry of $\mu_{t}$ will be added by the same number, unless $ \langle \mu_{t}, \one_H\rangle=0$. For the dynamics to converge, we must have $ \langle \mu_{t}, \one_H\rangle=0$, and thus the pattern of zero-sum OV emerges. 

\vspace{-5pt}
\paragraph{(\romannumeral2). Sign-Matching Pattern.} For sign matching terms, $\langle\omega_t,\mu_t\rangle$ remains small in the initial stage such that $(1+(1+\sigma^2)\cdot d{L}^{-1})\cdot\langle\mu_t,\omega_t\rangle<1$. Therefore, $\mu_t$ and $\omega_t$ are updated in each other's direction, gradually aligning to share the same sign. That is, ignoring the high-order terms and when zero-sum OV is achieved, we approximately have 
$$
\begin{bmatrix}
\mu_{t+1} \\
\omega_{t+1}
\end{bmatrix}
=
\begin{bmatrix}
\tianqin{\mu_{t}} \\
\luolan{\omega_{t}}
\end{bmatrix}
+
2\eta\cdot\left(1-\big(1+(1+\sigma^2)\cdot d{L}^{-1}\big)\cdot\langle\mu_t,\omega_t\rangle\right)\cdot
\begin{bmatrix}
\luolan{\omega_{t}} \\
\tianqin{\mu_{t}}
\end{bmatrix}.
$$
\begin{wrapfigure}{r}{0.4\textwidth}  
	\centering  
    \vspace{-5mm} 
	\includegraphics[width=0.38\textwidth,trim={0 0 0cm 0},clip]{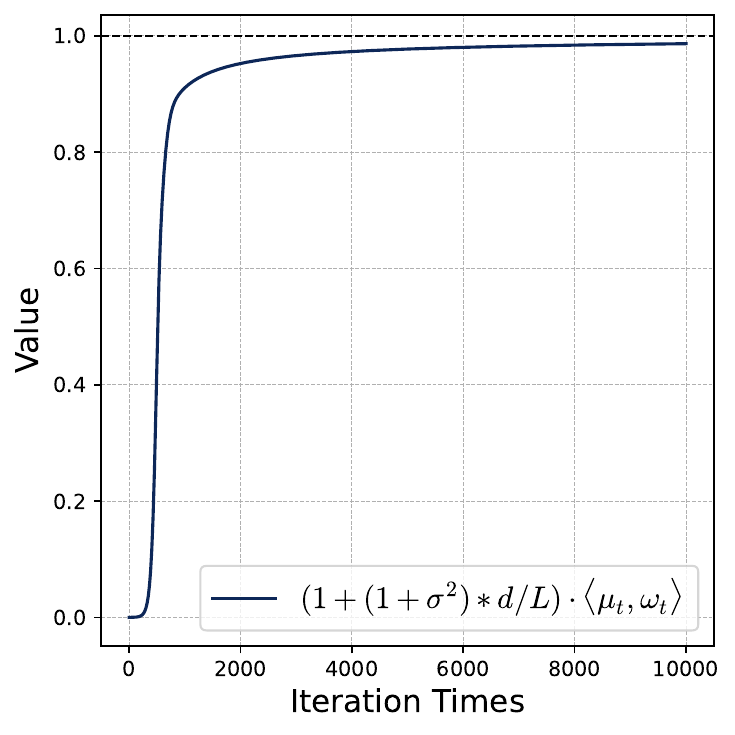}  
	\vspace{-1.5mm} 
	\caption{\small Evolution of $\big(1+(1+\sigma^2)\cdot d{L}^{-1}\big)\cdot\langle\mu_t,\omega_t\rangle$ along the gradient descent dynamics. The value rapidly converges to one and remains approximately stable as $t$ increases.}
	\vspace{-10mm}   
\end{wrapfigure}
For $h\in [H]$, suppose that the signs of $\mu^{(h)}_t $ and $  \omega_{t}^{(h)}
$ are different. 
If $\mu^{(h)}_t $ is positive, then $\omega^{(h)}_{t}$ is increased and becomes $\omega^{(h)}_{t+1}$. Moreover, in this case, $\omega_{t}^{(h)}$ is negative and thus it decreases  $\mu_{t+1}^{(h)}$. 
Thus, after each update, $\mu^{(h)}_t $ and $  \omega_{t}^{(h)}
$ would tend to move to the same sign. This explains why the sign-matching pattern emerges. 

\paragraph{(\romannumeral3). Local Optimality Condition.}
Notice that the sign-matching terms contribute to the gradient update until the following optimality condition is established:
\begin{equation}\label{eq:trans_opt}
\big(1+(1+\sigma^2)\cdot d{L}^{-1}\big)\cdot\langle\mu_t,\omega_t\rangle\approx 1. 
\end{equation}
After this condition is met, as we continue the gradient-descent algorithm, the value of $\langle \mu_t ,\omega_t \rangle $ does not change much as $t$ increases. In Figure \eqref{fig:lin_smax_comparison}, we plot the value of $(1+(1+\sigma^2)\cdot d{L}^{-1} )\cdot\langle\mu_t,\omega_t\rangle$ with respect to $t$, computed based on gradient descent with respect to the approximate loss $\tilde \cL. $
Here, the plot is generated with $L = 40$, $d = 5$, $\sigma^2 = 0.1$, $H = 2$, and learning rate $\eta = 0.01$.

In summary, in this stage, 
we ignore the contribution of the high-order terms to the gradient. 
Low-order terms yields the patterns of sign-matching and zero-sum OV.  
After \eqref{eq:obs_1} and \eqref{eq:trans_opt} are established, the gradient dynamics enters the second stage where the high-order terms dominate. 

\subsubsection{Stage II: Emergence of Homogeneous KQ Scaling} 
As shown in Figures \ref{subfig:4head_kq_dyn_main} and \ref{subfig:4head_ov_dyn_main}, during the gradient dynamics, the $H$ attention heads gradually split into three groups --- positive, negative, and dummy heads. 
For dummy heads, all parameters degenerate to zero, indicating that these heads do not contribute to the final predictor and output zero.
For positive and negative heads, denoted by $\cH_{+} = \{ h \colon \omega^{(h) }> 0  \} $ and  $\cH_{-} = \{ h \colon \omega^{(h) }< 0\} $, we empirically observe that their KQ parameters share the same scaling with different signs:

\begin{tcolorbox}[frame empty, left = 0.1mm, right = 0.1mm, top = 0.1mm, bottom = 1mm]
	\begin{equation}
	|\omega^{(h)}|=\gamma\cdot \ind(h\in\cH_+\cup\cH_-)\text{~for some~}\gamma>0,\qquad\forall h\in[H].
	\label{eq:obs_2}
	\end{equation}	
\end{tcolorbox}
To understand how the phenomenon of homogeneous KQ scaling develops, which significantly simplifies the form of predictor given by multi-head attention as shown in \eqref{eq:simple_pred}, we revisit the population loss and focus on the regime where \eqref{eq:obs_1} and \eqref{eq:trans_opt} are well-established after Stage \MakeUppercase{\romannumeral1}.
Note that the approximate loss function $\tilde \cL$ admits the following Taylor expansion: 
\begin{align*}
	\tilde\cL(\omega_t,\mu_t)
	&= \sigma^2+(1-\langle\mu_t,\omega_t\rangle)^2+(1+\sigma^2)\cdot L^{-1}\cdot\langle\mu_t,\one_H\rangle^2+(1+\sigma^2)\cdot L^{-1}\cdot\sum_{k=1}^\infty \frac{d^k}{k!}\cdot\langle\mu_t,\omega_t^{\odot k}\rangle^2.
\end{align*}
Thanks to the pattern of zero-sum OV and the condition in \eqref{eq:trans_opt}, we can regard $(1-\langle\mu_t,\omega_t\rangle)^2$ as a constant and approximate $\langle\mu_t,\one_H \rangle $ by zero. 
Thus, we only need to consider contributions of the high-order terms to the loss function, i.e., $\sum_{k=1}^\infty  d^k \cdot\langle\mu_t,\omega_t^{\odot k}\rangle^2 / 
 (k!) $.  
Note that since ${\rm sign}(\omega^{(h)})={\rm sign}(\mu^{(h)})$ for all $h\in [H]$, when $k$ is odd, we have 
$
\langle\mu_t,\omega_t^{\odot k}\rangle  = \langle |\mu_t |, |\omega_t |^{\odot k}\rangle,
$
where the absolute value is applied elementwisely. 
 Thus, by 
 separating into even and odd terms, we have 
\begin{align} \label{eq:ineq_loss_approx}
	\sum_{k=1}^\infty \frac{d^k}{k!}\cdot\langle\mu_t,\omega_t^{\odot k}\rangle^2&=\sum_{k\in\{2q:q\in\ZZ^+\}}\frac{d^k}{k!}\cdot\langle\mu_t,|\omega_t|^{\odot k}\rangle^2+\sum_{k\in\{2q-1:q\in\ZZ^+\}}\frac{d^k}{k!}\cdot\langle|\mu_t|,|\omega_t|^{\odot k}\rangle^2 \notag \\
	&\geq\sum_{k\in\{2q-1:q\in\ZZ^+\}}\frac{d^k}{k!}\cdot\|\mu_t\|_1^{2(1-k)}\cdot\langle|\mu_t|,|\omega_t|\rangle^{2k}.
\end{align}
Here, we use nonnegativity of the even terms and the following  inequality: 
$$ 
\langle |\mu_t|, |\omega_t| \rangle ^k = \bigg ( \sum_{h=1}^H |\mu_t^{(h)} | \cdot | \omega_t^{(h)} | \bigg)^k  \leq  \bigg ( \sum_{h=1}^H |\mu_t^{(h)} |   \bigg)^{k-1} \cdot   \bigg ( \sum_{h=1}^H |\mu_t^{(h)} | \cdot | \omega_t^{(h)} | ^k \bigg)    = \| \mu_t \|_1 ^{k-1} \cdot \langle |\mu_t| , |\omega_t| ^{\odot k}\rangle,
$$
 which holds for all $k\geq 1$. 
Notably, since the sign matching property is established, $\langle |\mu_t|, |\omega_t|\rangle = \langle \mu_t, \omega_t\rangle $, which is a fixed constant under \eqref{eq:trans_opt}. In other words, when the conditions of sign matching and \eqref{eq:trans_opt} hold, \eqref{eq:ineq_loss_approx} gives us the minimal loss value when conditioned on the $\ell_1$-norm of $\mu_t$.
 
Note the inequality in \eqref{eq:ineq_loss_approx} becomes an equality when the following two conditions are satisfied. 
First, the even terms are all equal to zero. Thanks to the zero-sum OV, this  is the case where \eqref{eq:obs_2} holds, since $\langle\mu_t,|\omega_t|^{\odot k}\rangle=\gamma^k\cdot\langle\mu_t,\one_H\rangle=0$. 
Second, the inequality for the odd terms is actually an equality, which holds if and only if  $|\mu_t|\odot|\omega_t|^{\odot k}\propto|\mu_t|$ for all $k\in\{2q-1:q\in\ZZ^+\}$. In other words, for all $h \in \cH_{+} \cup \cH_{-}$, i.e., $\mu_t^{(h)} \neq 0$, the corresponding $|\omega^{(h)}|$ is a constant that is independent of $h$. 
Therefore, when the pattern of homogeneous KQ scaling emerges, the lower bound in \eqref{eq:ineq_loss_approx} is attained exactly. 
Thus, this pattern serves a necessary condition for optimizing the loss function and thus naturally emerges during gradient-based optimization. 

\paragraph{Solution Manifold.} Suppose patterns in \eqref{eq:obs_1} and \eqref{eq:obs_2} emerge. 
In the following, we show that the limiting values of $\omega _t $ and $\mu_t $ can be characterized by a solution manifold. 
To this end, with slight abuse of notation, we let $\gamma > 0$ denote the magnitude of the nonzero KQ parameters of the limiting model. That is, \eqref{eq:obs_2} holds with a scaling parameter $\gamma $.   
We let $\omega$ and $\mu $ denote the limiting values of $\mu_t $ and $\omega_t$, respectively. 
Leveraging \eqref{eq:obs_1}, by direct computation,  we have 
\begin{align*}
\langle \mu , |\omega | ^{\odot k} \rangle = \begin{cases}
    0 \qquad &  k ~~\textrm{is an even integer},\\
    \gamma^k \cdot \| \mu \|_1 \qquad &  k ~~\textrm{is an odd integer}. 
\end{cases}
\end{align*}
Plugging this identity into the Taylor expansion of the approximate loss function, we have
\begin{align*}
	\tilde\cL(\omega,\mu)
	&=\sigma^2+(1-\gamma\cdot\|\mu\|_1)^2+(1+\sigma^2)\cdot L^{-1}\cdot\sinh(d\gamma^2)\cdot\|\mu\|_1,
\end{align*}
where we apply the Taylor expansion of the hyperbolic sine function.
Thus, when $\gamma $ is fixed, $\tilde \cL(\omega, \mu)$ depends on $\mu$ only through $\|\mu_t\|_1$, and it is quadratic with respect to $\|\mu_t\|_1$.  
Therefore, for any fixed scale $\gamma>0$, the optimal $\mu$ that minimizes $\tilde \cL(\omega, \mu)$  satisfies that 
\begin{align}
	\|\mu\|_1=\left(\gamma^2+(1+\sigma^2)\cdot L^{-1}\cdot\sinh(d\gamma^2)\right)^{-1}\cdot\gamma:=2\mu_\gamma.
	\label{eq:conv_loss}
\end{align}
Here we define $\mu_{\gamma} $ as one half of the optimal value of $ \| \mu\|_1 $ in \eqref{eq:conv_loss}. 
Based on \eqref{eq:conv_loss} and the property that $\langle\mu_t,\one_H\rangle=0$, we know that the limiting $\omega$ and $\mu$ are characterized 
by a solution manifold, given by $\mathscr{S}^*=\{\mathscr{S}_\gamma\}_{\gamma>0}$, where
\begin{tcolorbox}[frame empty, left = 0.1mm, right = 0.1mm, top = -2mm, bottom = 1mm]
	\begin{align}
	\mathscr{S}_\gamma=\Big\{(\omega,\mu)\subseteq\RR^H :\omega^{(h)}= \gamma\cdot{\rm sign}(\mu^{(h)}),\quad
     \sum_{h\in\cH_+}\mu^{(h)}=-\sum_{h\in\cH_-}\mu^{(h)}=\mu_\gamma\Big\}.
	\label{eq:solution_manifold}
	\end{align}
\end{tcolorbox}
That is, in the limiting model, transformer heads are split into positive, negative, and dummy groups. 
Moreover, all of the nonzero $\omega^{(h)}$'s have the same magnitude $\gamma $. 
The sum of $\mu^{(h)}$ over the positive heads is equal to $\mu_{\gamma}$ while the sum  of $\mu^{(h)} $ over negative heads is $-\mu_{\gamma}$, i.e., $\sum_{h\in\cH_+}\mu^{(h)}=-\sum_{h\in\cH_-}\mu^{(h)}=\mu_\gamma$.	 
We remark that the solution manifold in $\mathscr{S}_\gamma$ also characterizes the emergence of dummy heads, as ${\rm sign}(\mu^{(h)})=0$ when $\mu^{(h)}=0$, enforcing $\omega^{(h)}=0$ accordingly.
Thus, for any $H \geq 2$, the $H$-head attention model learns to {\color{luolan} represent the same ICL predictor}, which is {\color{luolan}  a sum of two kernel regressors}.   
Besides, from the Taylor expansion of $\sinh (\cdot)$, we observe that 
\begin{align} \label{eq:order_mu_gamma}
    \mu_{\gamma} \asymp 1 \big / \bigl(2 \gamma \cdot \bigl(1 + (1+\sigma^2 ) \cdot d / L \bigl) \bigr). 
\end{align}  
Hence, 
the condition in \eqref{eq:trans_opt} is approximately satisfied when $\gamma$ is close to zero. Since we expect that \eqref{eq:trans_opt} should be preserved in the later stage of gradient-based dynamics, we expect that the value of $\gamma$ learned from training should be small. 
As we show in both the experiments in \S\ref{sec:empirical}, this is indeed the case. Moreover, as we will show below, a small $\gamma $ enables the learned transformer model to approximately implement a version of gradient descent estimator.

In practice, the particular value of $\gamma$ learned by the optimization algorithm depends on particular choices of learning rate, training steps, and batch size (if SGD or Adam is applied). This is due to the ``river valley" loss landscape near the global minimum (see Figure \ref{subfig:loss} and \ref{subfig:1dloss}).
This phenomenon can be attributed to \emph{edge of stability} \citep{cohen2021gradient}, a behavior commonly observed in neural network training.
In this regime, gradient descent progresses non-monotonically, oscillating between the ``valley walls" of the loss surface and failing to fully converge to the minimum.

Although the learned $\gamma$  can vary slightly, $\mu$ always lies within its corresponding solution manifold $\mathscr{S}_\gamma$.
The optimal value of $\gamma$ and the resulting transformer-based ICL predictor are derived later in \S \ref{sec:optimality}.
Furthermore, to explain the evolution of the parameters, we provide a more detailed analysis of the gradient flow of training a two-head attention based on the approximate loss in \S \ref{ap:gradient_flow_2head} under a well-specified initialization, where the full gradient flow dynamics can be characterized explicitly.

\subsection{Statistical Properties of Learned Multi-Head Attention}\label{sec:optimality}

In this section, we examine the statistical properties of the learned attention models.
Our experiments show that, regardless of the number of heads, attention models consistently learn the diagonal-only and last-entry-only patterns in \eqref{eq:simple_model} while using different predictors.
For multi-head attention with $H\geq2$, heads are grouped into positive, negative and dummy heads as discussed in \S \ref{sec:training_dynamic}. 
Positive and negative heads are coupled to approximate \emph{gradient descent} (GD), consistent with the findings from linear transformers \citep{mahankali2023one,ahn2023transformers,zhang2024trained}, but with an additional \emph{debiasing term}.
As we will show below, this is a result of the fact that the limiting values of attention parameters fall in the solution manifold $\mathscr{S} _\gamma$ with a small $\gamma$.
Moreover, when $L$ is much larger than $d$,  debiased GD coincides with vanilla GD, which means that multi-head attention learns the same algorithm as the linear attention in this regime. 

\paragraph{Multi-Head Attention Predictor.} 
In the following, we examine the predictor implemented by the learned transformer, and prove that it is close to a version of  gradient descent predictor. 
Consider an arbitrary scale $\gamma>0$, then for any parameters on the solution manifold $(\omega,\mu)\subseteq\mathscr{S} _\gamma$, regardless of the number of heads $H$, the attention predictor takes the form of \begin{align}\label{eq:transformer_nonparam_main}
	\hat{y}_q&= \mu_{\gamma}\cdot\biggl(  \sum_{\ell=1}^L\frac{y_\ell\cdot\exp(\gamma\cdot x_\ell^\top x_q)}{\sum_{\ell=1}^L\exp(\gamma\cdot x_\ell^\top x_q)} -  \sum_{\ell=1}^L\frac{y_\ell\cdot\exp(-\gamma\cdot x_\ell^\top x_q)}{\sum_{\ell=1}^L\exp(-\gamma\cdot x_\ell^\top x_q)} \bigg),
	\end{align}
    where $\mu_{\gamma} $ is defined in \eqref{eq:conv_loss}. 
In other words, all multi-head attention model with $H\geq2$ implements an \emph{equivalent predictor as a two-head attention model} with effective parameters $\omega_{\sf eff}=(\gamma,-\gamma)$ and $\mu_{\sf eff}= (\mu_{\gamma}, - \mu_{\gamma})$. 
For a small $\gamma$, applying the first-order approximation to both the numerators and denominators in \eqref{eq:transformer_nonparam_main}, we have 
\begin{align}
	\hat{y}_q&\approx \frac{2\mu_\gamma\gamma}{L}\cdot\sum_{\ell=1}^L y_\ell \cdot \bar x_\ell^\top x_q \approx \frac{1} {1 + (1+\sigma^2 ) \cdot d / L } \cdot \frac{1}{L} \cdot   \sum_{\ell=1}^L y_\ell \cdot \bar x_\ell^\top x_q,
        \label{eq:dgd_approx}
	\end{align}
where we denote $\bar x_\ell=x_\ell-{L}^{-1}\cdot \sum_{\ell=1}^Lx_\ell$. 
See Remark \ref{remark:approx} for detailed derivation of the first approximation, and the second approximation follows from \eqref{eq:order_mu_gamma}. 
Therefore, multi-head attention emulates the one-step \emph{debiased gradient descent} 
for optimizing the empirical loss $(2L)^{-1}\cdot \sum_{\ell=1}^L(\beta^\top \bar x_\ell-y_\ell)^2$ with \emph{debiased} covariates $\{ \bar x_\ell\}_{\ell\in [L]}$.
In particular, starting from zero initialization, the one-step debiased GD predictor with a learning rate $\eta$ is defined as 
	\begin{align}
	 \hat {y}_q^{\sf gd}(\eta) = \frac{\eta}{L}\cdot\sum_{\ell=1}^L y_\ell \cdot \bar x_\ell^\top x_q. 
        \label{eq:dgd}
	\end{align}
It is known that when the covariates are isotropic, 
a single-head linear attention model learns the vanilla GD estimator with $\sum_{\ell = 1}^L y_{\ell} \cdot x_{\ell}^\top x_{q}$ \citep{mahankali2023one,ahn2023transformers,zhang2024trained}. In comparison, the GD predictor in \eqref{eq:dgd} uses debiased covariates.

\paragraph{Debiased GD versus Vanilla GD.} By \eqref{eq:dgd_approx}, the transformer predictor is approximately equal to ${y}_q^{\sf gd}(\eta)$ with $\eta = 2 \mu_{\gamma} \gamma$, which is close to $1 /  (1 + (1+\sigma^2 ) \cdot d / L \bigl) ) $. 
Thus, in the \emph{low-dimensional regime} where $L  $ goes to infinity and $d / L $ goes to zero, we have $\bar x_{\ell} \approx x_{\ell}$ and $2 \mu_{\gamma} \gamma \approx 1$. That is, the learned {\color{luolan} multi-head attention model coincides with the vanilla GD predictor}. 
In other words, multi-head attention learns the same predictor as the linear attention model in the low-dimensional regime.
In addition, this insight explains the empirical observation shown in Figure \ref{subfig:est_comparison}, where the prediction errors of the multi-head attention models, debiased GD, and vanilla GD coincide.


\paragraph{Single-Head Attention Predictor.} To better understand the predictor learned by multi-head attention, we revisit the single-head attention model.   Different from multi-head models, single-head attention has less expressivity, and the learned predictor takes the form of 
	\begin{align} \label{eq:single_head_kernel}
	\hat{y}_q&=\mu\cdot \left\langle y,\smax(\omega\cdot X x_q)\right\rangle=\mu\cdot\sum_{\ell=1}^L\frac{y_\ell\cdot\exp(\omega\cdot x_\ell^\top x_q)}{\sum_{\ell=1}^L\exp(\omega\cdot x_\ell^\top x_q)}.
	\end{align}
By minimizing the approximate loss, it is shown in \cite{chen2024training} that the minimizer is given by $\omega^*=1/\sqrt{d}$ and $\mu^*=\sqrt{d}\cdot(1+e\cdot(1+\sigma^2)\cdot d{L}^{-1})^{-1}$.
	We remark that the single-head attention precisely implements a Nadaraya-Watson estimator with a Gaussian RBF kernel, when the covariates are sampled from a sphere, i.e., $\supp(\mathsf{P}_x)=\mathbb{S}^{d-1}$. Moreover, the optimal parameter $\omega^*$ corresponds to a bandwidth with value $d^{1/4}$. 
    Comparing \eqref{eq:single_head_kernel} and \eqref{eq:dgd_approx}, we see why having one additional attention head significantly improves the statistical rate in in-context linear regression as shown in Figure \ref{fig:loss_comparison_empirical}---{\color{luolan}one-head attention corresponds to a nonparametric predictor while a multi-head attention yields a parametric predictor}.

    \paragraph{Approximation, Optimality, and Bayes Risk.}

    Recall that Figure \ref{subfig:est_comparison} reveals that the ICL prediction error of the attention models is comparable to the Bayes optimal estimator.     
    In the following, we establish this property, together with other statistical properties.  
    Let $\theta = (\omega, \mu)\in \RR^{2H} $ denote the parameters of the KQ and OV circuits of a multi-head attention model and let $\eta > 0$ be a learning rate. We let $\hat y_q(\theta)$  denote the predictor given by the multi-head attention model with parameter $\theta$, which is defined in \eqref{eq:simple_pred}, and let $\hat y_q^{\sf gd}(\eta)$ denote the debiased 
    GD predictor with learning rate $\eta>0$, which is defined in \eqref{eq:dgd}.  We study the following theoretical questions: 
    \begin{itemize}
    \setlength{\itemsep}{-1pt}
        \item [(a)] \emph{What is the approximation error when multi-head attention approximates debiased GD?} 
        \item [(b)] \emph{What is the optimal parameter $\theta^*$? Does $\theta^*$ belong to the solution manifold $ \mathscr{S}^*$?} 
        \item [(c)]\emph{How does the ICL prediction error of the multi-head attention model compare to Bayes risk?}
    \end{itemize}
To study the first two questions, we consider a larger parameter space $\bar{\mathscr{S}}\supseteq\mathscr{S}^*$, defined as 	
		\begin{align}\label{eq:larger_param_space}
		\bar{\mathscr{S}}=&\{(\omega,\mu):~\forall\gamma>0,~\omega^{(h)}=\gamma\cdot{\rm sign}(\omega^{(h)})\mathrm{~for~all~}h\in[H],~\min\{|\cH_+|,|\cH_-|\}>1\}.
		\end{align}
   Next, we show that for any debiased GD predictor $\hat y_q^{\sf gd}(\eta)$, there exists a $\theta\in\bar{\mathscr{S}}$ such that $\hat{y}_q(\theta) $ is close to  $\hat y_q^{\sf gd}(\eta)$. 
    In addition,  we consider the optimization problem $\min_{\theta\in\bar{\mathscr{S}}}\tilde\cL(\omega, \mu)$, where $\tilde \cL$ is the approximate loss defined in \eqref{eq:approx_loss}. 
 The following theorem characterizes the minimizer and shows that its ICL prediction error is comparable to the Bayes risk up to a proportionality factor.

\begin{theorem} 
\label{thm:optimality}	 
Let $\sum_{h\in\cH_+}\mu^{(h)}=\mu_+$ and $\sum_{h\in\cH_-}\mu^{(h)}=\mu_-$.  
We have the following three results. 

\begin{itemize}
\item [(i)] {\rm\bf (Approximation)} 
Let $\eta > 0$ be a constant learning rate and let $\delta \in (0,1)$ be a given failure probability. 
For any scaling $\gamma > 0$, we define $\breve\mu = \eta / (2 \gamma)$. 
Consider a multi-head attention with no dummy head and $\theta  \in \bar{\mathscr{S}}$. 
Moreover, we set $\mu_+=-\mu_-=\breve\mu$. 
Then, when $L$ is sufficiently large such that  $L\gtrsim\log(1/\delta)$ and  $\gamma $ is sufficiently small such that $\gamma\lesssim(\sqrt{d}\cdot\log(L/\delta))^{-1}$, with probability at least $1-\delta$, we have 
        $$|\hat{y}_q(\theta)-\hat y_q^{\sf gd}(\eta)|\leq\tilde{O}\bigl( \sqrt{1+\sigma^2}\cdot \gamma \cdot d  \bigr),$$
where $\tilde O(\cdot ) $ omits logarithmic factors. 
In particular, suppose we drive $\gamma$ to zero while keeping $\breve\mu = \eta / (2\gamma)$, the resulting $\hat{y}_q(\theta)$ coincides with $\hat{y}_q^{\sf gd}(\eta)$. 

\item [(ii)]
{\rm\bf (Optimality)} Consider the problem of minimizing $\tilde \cL(\omega, \mu)$ over $\bar{\mathscr{S}}$. The minimum is attained in $\mathscr{S}^*$. 
In particular, the minimizer can be obtained by taking $\gamma \rightarrow 0^{+}$ in \eqref{eq:solution_manifold}. 
As a result, the optimal attention model that minimizes $\tilde \cL (\omega, \mu)$ coincides with the debiased GD estimator  $\hat{y}_q ^{\sf gd}(\eta^*)$, where we define $\eta^* = (1 + (1+\sigma^2 ) \cdot d / L)^{-1}$. 
 
\item[(iii)] {\rm\bf (Bayes Risk)} 	 We consider the high-dimensional and asymptotic regime where $L\rightarrow\infty$ and  $d/L\rightarrow\xi$,
where $\xi\in(0,\infty)$ is a constant. 
Suppose the noise level $\sigma^2 > 0$ and $\xi $ is sufficiently small such that $\sigma^2+\xi^{-1}>1$. 
Then we have 
		$$
		\frac{\cE(\hat{y}_q^{\sf gd}(\eta^*))}{\mathsf{BayesRisk}_{\xi,\sigma^2}}\leq1+
		\sigma^{-2} \cdot \big\{(1+\xi\sigma^2)\cdot(1+\sigma^2+\xi^{-1})\big\}^{-1},
		$$
		where $\mathsf{BayesRisk}_{\xi,\sigma^2}$ denotes the limiting Bayes risk.
 
\end{itemize}
	\end{theorem}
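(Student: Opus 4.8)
The plan is to handle the three parts separately, building on the reparameterized predictor \eqref{eq:transformer_nonparam_main}, the approximate loss $\tilde\cL$ in \eqref{eq:approx_loss}, the solution manifold \eqref{eq:solution_manifold}, and the enlarged space $\bar{\mathscr{S}}$ from \eqref{eq:larger_param_space}.

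\textbf{Part (i) (Approximation).} I would first note that for $\theta\in\bar{\mathscr{S}}$ with no dummy head every $\omega^{(h)}\in\{\gamma,-\gamma\}$, so grouping heads by sign and imposing $\mu_+=-\mu_-=\breve\mu$ collapses \eqref{eq:simple_pred} to $\hat{y}_q(\theta)=\breve\mu\,\big(\langle y,\smax(\gamma Xx_q)\rangle-\langle y,\smax(-\gamma Xx_q)\rangle\big)$. Then I would Taylor-expand each softmax weight $g_\ell(\gamma):=\smax(\gamma Xx_q)_\ell$ in $\gamma$ about $0$. Since the two summands use opposite bandwidths $\pm\gamma$, only the odd part survives in the difference: $g_\ell(\gamma)-g_\ell(-\gamma)=2g_\ell'(0)\,\gamma+O\big(\gamma^3\sup_{|\gamma'|\le\gamma}|g_\ell'''(\gamma')|\big)$, and $g_\ell'(0)=L^{-1}\bar x_\ell^\top x_q$ with $\bar x_\ell=x_\ell-L^{-1}\sum_j x_j$ — the centering is produced exactly by the softmax normalization. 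The linear term then gives $\breve\mu\cdot\tfrac{2\gamma}{L}\sum_\ell y_\ell\bar x_\ell^\top x_q=\tfrac{\eta}{L}\sum_\ell y_\ell\bar x_\ell^\top x_q=\hat{y}_q^{\sf gd}(\eta)$ exactly, since $\breve\mu=\eta/(2\gamma)$. The remainder is at most $\breve\mu\sum_\ell|y_\ell|\cdot\gamma^3 L^{-1}(\max_j|x_j^\top x_q|)^3$, because the third derivative of a softmax coordinate is $O\big(L^{-1}(\max_j|x_j^\top x_q|)^3\big)$ once $\gamma\max_j|x_j^\top x_q|\lesssim1$; standard Gaussian concentration with a union bound over $\ell\in[L]$ then gives $\max_\ell|x_\ell^\top x_q|\lesssim\sqrt{d\log(L/\delta)}$ (with $\|x_q\|^2\asymp d$) and $\max_\ell|y_\ell|\lesssim\sqrt{(1+\sigma^2)\log(L/\delta)}$ (with $\|\beta\|^2\asymp1$), so the remainder is $\tilde O\big(\sqrt{1+\sigma^2}\,\gamma^2 d^{3/2}\big)$, which is $\tilde O\big(\sqrt{1+\sigma^2}\,\gamma d\big)$ once $\gamma\lesssim(\sqrt d\,\log(L/\delta))^{-1}$. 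Letting $\gamma\to0$ with $\breve\mu\gamma=\eta/2$ held fixed kills the remainder and yields exact equality.

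\textbf{Part (ii) (Optimality).} For this part I would use that on $\bar{\mathscr{S}}$ every nonzero $\omega^{(h)}$ has magnitude $\gamma$; a direct Hadamard/sign computation on the two-block structure of $\omega\omega^\top$ and $\exp(d\omega\omega^\top)$ (any dummy head is forced to carry $\mu^{(h)}=0$ at a minimizer) reduces \eqref{eq:approx_loss}, with $\mu_\pm:=\sum_{h\in\cH_\pm}\mu^{(h)}$, to
\[
\tilde\cL=1+\sigma^2-2\gamma(\mu_+-\mu_-)+\gamma^2(\mu_+-\mu_-)^2+(1+\sigma^2)L^{-1}\big(\cosh(d\gamma^2)(\mu_++\mu_-)^2+\sinh(d\gamma^2)(\mu_+-\mu_-)^2\big).
\]
Minimizing in stages: over $\mu_++\mu_-$ forces $\mu_++\mu_-=0$ (the zero-sum OV pattern); over $\mu_+-\mu_-$ yields the value $2\mu_\gamma$ of \eqref{eq:conv_loss}; and over $\gamma$ the only remaining dependence is through $\sinh(d\gamma^2)/\gamma^2=d+d^3\gamma^4/6+\cdots\ge d$, minimized only in the limit $\gamma\to0^+$. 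Hence the infimum over $\bar{\mathscr{S}}$ is approached along the family $\mathscr{S}_\gamma$ as $\gamma\to0^+$; although $\mu_\gamma$ diverges, by \eqref{eq:order_mu_gamma} we have $2\mu_\gamma\gamma\to(1+(1+\sigma^2)d/L)^{-1}=\eta^*$, so Part (i) identifies the limiting predictor as $\hat{y}_q^{\sf gd}(\eta^*)$.

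\textbf{Part (iii) (Bayes risk).} Here I would begin from the bias–variance decomposition: since $x_q\sim\cN(0,I_d)$ is independent of the context, $\cE(\hat{y}_q^{\sf gd}(\eta^*))=\sigma^2+\EE\|\beta-w\|^2$ with $w=\eta^*\hat\Sigma\beta+\tfrac{\eta^*}{L}\bar X^\top\epsilon$, where $\hat\Sigma=\tfrac1L\bar X^\top\bar X$ is the centered sample covariance and $\bar X$ the row-centered design; conditioning on $X$ and integrating over $\beta$ and $\epsilon$ gives $\cE=\sigma^2+\tfrac1d\EE\,\mathrm{tr}\big((I-\eta^*\hat\Sigma)^2\big)+\tfrac{(\eta^*)^2\sigma^2}{L}\EE\,\mathrm{tr}(\hat\Sigma)$. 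In the regime $d/L\to\xi$, the Marchenko–Pastur law gives $\tfrac1d\mathrm{tr}(\hat\Sigma)\to1$ and $\tfrac1d\mathrm{tr}(\hat\Sigma^2)\to1+\xi$ (centering is a lower-order rank-one correction), so $\cE\to\sigma^2+1-2\eta^*+(\eta^*)^2\big(1+\xi(1+\sigma^2)\big)=\sigma^2+1-\eta^*$ using $1/\eta^*=1+\xi(1+\sigma^2)$. For the Bayes side, the Bayes-optimal predictor is $x_q^\top\EE[\beta\mid X,y]$ with posterior mean the ridge estimator of penalty $d\sigma^2$, so $\mathsf{BayesRisk}=\sigma^2+\EE\,\mathrm{tr}\big((dI+\sigma^{-2}X^\top X)^{-1}\big)$, which in the limit equals $\sigma^2+P$ where $P\in(0,1)$ is the positive root of $\xi P^2+(1+\xi\sigma^2-\xi)P-\xi\sigma^2=0$ (from the Marchenko–Pastur self-consistent equation; the hypothesis $\sigma^2+\xi^{-1}>1$ keeps $1+\xi\sigma^2-\xi>0$, so the root is well behaved). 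A short algebraic step then closes the argument: rewrite $P=\xi\sigma^2/(\xi P+1+\xi\sigma^2-\xi)$ and use $P\le1$ (posterior variance cannot exceed the prior variance $\EE\|\beta\|^2=1$) to get $P\ge\xi\sigma^2/(1+\xi\sigma^2)$, whence $\cE-\mathsf{BayesRisk}=\tfrac{\xi(1+\sigma^2)}{1+\xi(1+\sigma^2)}-P\le\tfrac{\xi}{(1+\xi\sigma^2)(1+\xi(1+\sigma^2))}$; dividing by $\mathsf{BayesRisk}\ge\sigma^2$ (valid since $\cE\ge\mathsf{BayesRisk}$) and writing $1+\xi(1+\sigma^2)=\xi(1+\sigma^2+\xi^{-1})$ gives the stated bound $1+\sigma^{-2}\{(1+\xi\sigma^2)(1+\sigma^2+\xi^{-1})\}^{-1}$.

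\textbf{Main obstacle.} The crux is Part (iii): making the proportional asymptotics rigorous — justifying the Marchenko–Pastur moment limits and the self-consistent formula for the limiting Bayes risk, verifying that the centering perturbation and finite-$L$ fluctuations are negligible, and covering the overparameterized case $\xi>1$, where $X^\top X$ is singular but $dI$ regularizes it. A secondary nuisance is the coupling in Part (i): each $x_\ell$ enters both $y_\ell$ and its own kernel weight, so the high-probability remainder bound must be organized as a union bound conditioned on $x_q$ together with the concentration of $\|\beta\|$ and $\|x_q\|$; this is routine but needs care.
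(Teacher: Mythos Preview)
Your proposal is correct and largely parallels the paper's proof, with Parts (i) and (ii) being essentially the same arguments in slightly different packaging: where you Taylor-expand the full softmax coordinate $g_\ell(\gamma)$ and bound the cubic remainder, the paper algebraically splits $\breve\mu^{-1}(\hat y_q(\theta)-\hat y_q^{\sf gd})$ into three pieces governed by $\phi_1(\gamma,x)=2\sinh(\gamma x)-2\gamma x$, $\phi_2(\gamma,x)=e^{\gamma x}-1-\gamma x$, and $\psi(\gamma,x)=e^{\gamma x}-1$, and uses the same concentration inputs (plus a binomial bound on $L^{-1}\sum_\ell\ind(x_\ell^\top x_q>0)$ to control the softmax denominator, which your global expansion sidesteps). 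For Part (ii) the paper reduces to an effective three-head model and diagonalizes $A(\omega_a)=\omega_a\omega_a^\top+(1+\sigma^2)L^{-1}(\exp(d\omega_a\omega_a^\top)-\one\one^\top)$ in the $(\omega_a,\one)$ basis, arriving at the same profiled loss $1-(1+(1+\sigma^2)L^{-1}\sinh(d\gamma^2)/\gamma^2)^{-1}$ that your $(s,t)$ parameterization gives directly.

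Part (iii) is where your route differs meaningfully. The paper avoids random-matrix asymptotics for the GD risk: it computes $\cE(\hat y_q^{\sf vgd}(\eta))$ \emph{exactly} for finite $d,L$ via Gaussian fourth-moment identities, then shows $\cE(\hat y_q^{\sf gd}(\eta))-\cE(\hat y_q^{\sf vgd}(\eta))=O(1/L)$ by another exact moment calculation, and only invokes RMT (the explicit Stieltjes transform of the MP law) for the Bayes risk. Your approach instead leans on MP moments $\tfrac1d\tr\hat\Sigma\to1$, $\tfrac1d\tr\hat\Sigma^2\to1+\xi$ for the GD side as well, which is correct but, as you flag, needs the justification you list as the main obstacle; the paper's exact computation removes that obstacle entirely for the GD half. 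For the final inequality, the paper uses $\sqrt{4\sigma^2+(\sigma^2+\xi^{-1}-1)^2}\ge\sigma^2+\xi^{-1}-1$ (this is where the hypothesis $\sigma^2+\xi^{-1}>1$ enters), whereas your route via the fixed-point form $P=\xi\sigma^2/(\xi P+1+\xi\sigma^2-\xi)$ together with $P\le1$ is a clean alternative that lands on the same bound.
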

The proof of Theorem \ref{thm:optimality} is deferred to \S \ref{ap:main_thoery}. 
In part (\romannumeral1), we show that the softmax multi-head attention has sufficient expressive power to represent any debiased GD predictor. Moreover, such an attention model effectively is a sum of two symmetric regressors, which correspond to positive and negative heads.
The product of the KQ and OV parameters is equal to $\eta /2$. More importantly, the approximation error decays with $\gamma$, i.e., when $\gamma$ is small, the multi-head attention model is a good approximation of  debiased GD predictor. This is observed empirically, as shown in Figure \ref{subfig:est_comparison}.

Part (\romannumeral2) analyzes the global minimizer of the approximate loss 
$\tilde \cL$ over $\bar{\mathscr{S}}$. 
It shows that within $\bar{\mathscr{S}}$, $\tilde \cL$ is minimized by the predictor in \eqref{eq:transformer_nonparam_main} with $\gamma \rightarrow 0^{+}$. Let $\theta_{\gamma}$ denote any parameter $\mathscr{S}_{\gamma} $ defined in \eqref{eq:solution_manifold}. As we will show in the proof, $\tilde \cL(\theta_{\gamma}) $ is monotonically increasing in $\gamma$. 
Recall that we observe empirically that the learned multi-head attention model corresponds to $\hat y_q (\theta_{\gamma})$ with a small $\gamma$. This implies that the {\color{luolan} gradient-based optimization approximately finds the global optimizer of $\tilde \cL$ over $\bar{\mathscr{S}}$}. 
Admittedly, we do not have optimality guarantees with respect to the ICL prediction error $\cL$. But as we have shown in  Proposition \ref{prop:informal_approx_loss}, these two loss functions are very close in areas that cover  $\theta_{\gamma}$ when $\gamma$ is small. This provides some evidence for the claim that gradient-based optimization approximately finds the optimal transformer model with respect to the loss function $\cL$. 

Moreover, part (\romannumeral2) also partially explains the dynamics of the KQ and OV parameters in the later stage (see Observation 4). In particular, the magnitude of KQ parameters decreases and the magnitude of the OV parameters increases. This is because $\tilde \cL(\theta_{\gamma}) $ is a monotonically decreasing function of $\gamma$. 
The later stage of the training dynamics essentially corresponds to decreasing $\gamma$. 

Finally, for part (\romannumeral3),
note that $\hat{y}_q^{\sf gd}(\eta^*)$ coincides with $\hat y_q (\theta_{\gamma})$ where $\gamma \rightarrow 0^{+}$. 
Here the effective learning rate $\eta^*$ can be obtained by taking the limit of $2 \gamma \cdot \mu_{\gamma} $ using \eqref{eq:order_mu_gamma}. Thus, this result shows that the risk of the best multi-head attention predictor within $\bar{\mathscr{S}}$ is comparable to the Bayes risk in the high-dimensional regime when observations are noisy. This explains Figure \ref{subfig:est_comparison}. 

\paragraph{Summary: Bridging Experimental Observations and Theory.}
In this section, we provide theoretical explanations for the empirical observations stated in \S \ref{sec:empirical}. 
First, to explain the emergence of key attention patterns---sign-matching, zero-sum OV, and homogeneous KQ scaling (Observations 1 and 2)---We identify the key components of gradient and loss landscape that drive pattern formation (see \S\ref{sec:training_dynamic}). 
Moreover, part (i) of Theorem \ref{thm:optimality} explains why the performance of the multi-head attention closely aligns with the debiased GD predictor. 
Part (iii) of Theorem \ref{thm:optimality} links the performance of multi-head attention to the Bayes risk. These two results explain Observation 3. Furthermore, part (ii) of Theorem \ref{thm:optimality} partially explains Observation  4 by making sense of the later stage of the training dynamics. Our results cannot explain why the magnitude of the KQ parameters first increases in the early stage of training (Observation 4). 
To fully demystify this, in \S\ref{ap:gradient_flow_2head} we will analyze the gradient flow of the approximate loss for a two-head attention model, where the full dynamics can be exactly analyzed.

\section{Discussions and Extensions} \label{sec:discuss_extend}

The previous sections demonstrated that when training a multi-head attention model on linear ICL data, the learned weight matrices exhibit significant patterns. These structured weights enable the model to implement a debiased GD estimator. 
As we theoretically established, these patterns emerge from the interaction between the transformer architecture and the underlying data distribution. 
In this section, we further investigate how different components contribute to this phenomenon. 
Specifically, we examine the {\color{luolan} role of the softmax function} in the attention model and the {\color{luolan} impact of isotropic covariate distribution}. Additionally, we extend to the {\color{luolan} multi-task in-context regression} setting and study how the interplay between the number of tasks and the number of heads affects model behavior in our empirical study.

\subsection{Comparison: Linear versus Softmax Transformer}\label{sec:discuss1}
In this section, we study the connection between linear and softmax transformers.
Proposed by \cite{von2023transformers}, linear transformer simplifies the architecture by removing the nonlinear activation and the causal mask. The mathematical definition is given by 
\begin{align}\label{eq:lin_attn}
\ltf_\theta(Z_\mathsf{ebd}) = Z_\mathsf{ebd}&+ \frac{1}{L}\sum_{h=1}^H O^{(h)} V^{(h)} Z_\mathsf{ebd} \cdot Z_\mathsf{ebd}^\top K^{(h)^\top} Q^{(h)} Z_\mathsf{ebd}.
\end{align}
    
Recent research has focused on understanding the inner mechanisms of transformers through the simplified linear transformer\citep[e.g.,][]{von2023transformers}. 
For the linear ICL task, \cite{zhang2024trained} show that linear attention can be trained to implement the one-step GD estimator with just one head. 
This raises the following question: 
\begin{center}
\emph{In linear ICL tasks, does softmax attention offer advantages over linear attention?}
\end{center}
We argue that the softmax attention models are more advantageous than linear attention due to their enhanced expressive power. 
In fact, {\color{luolan} any $H$-head linear attention can be approximately implemented by a multi-head softmax attention with $2H$ heads}, when the token embeddings are centralized. 
Specifically, consider a $2H$-head softmax attention model parameterized as
\begin{align*}
\tf_\theta(Z_\mathsf{ebd}) = Z_\mathsf{ebd} &+ \sum_{h=1}^H \sum_{j \in \{0, 1\}} \frac{(-1)^j}{2\gamma} \cdot O^{(h)} V^{(h)} Z_\mathsf{ebd}\cdot \smax\big((-1)^j \gamma \cdot Z_\mathsf{ebd}^\top K^{(h)^\top} Q^{(h)} Z_\mathsf{ebd}\big),
\end{align*}
where we use the same parameters as in $\ltf_\theta(\cdot)$ and set $\gamma$ to a small rescaling constant. 
We define $\bar Z_\mathsf{ebd}=L^{-1}\cdot Z_\mathsf{ebd} \one_L$, which averages the token embedding across the $L$ token positions.  
Using a similar approximation when $\gamma$ is close to zero, we have 
\begin{align}
\tf_\theta(Z_\mathsf{ebd}) \approx \ltf_\theta(Z_\mathsf{ebd}) -  \sum_{h=1}^H O^{(h)} V^{(h)} \bar Z_\mathsf{ebd} \cdot \bar Z_\mathsf{ebd}^\top K^{(h)^\top} Q^{(h)} Z_\mathsf{ebd}.
\label{eq:general_TF_approx}
\end{align}
From \eqref{eq:general_TF_approx}, the output of softmax attention closely matches its linear counterpart when the second term is small, e.g., token embeddings are centralized such that $\bar Z_\mathsf{ebd}\approx\zero_d$, which is the case in in-context linear regression.

\paragraph{Length Generalization.} 
\begin{wrapfigure}{r}{0.4\textwidth}  
	\centering  
    \vspace{-5mm} 
	\includegraphics[width=0.38\textwidth,trim={0 0 0cm 0},clip]{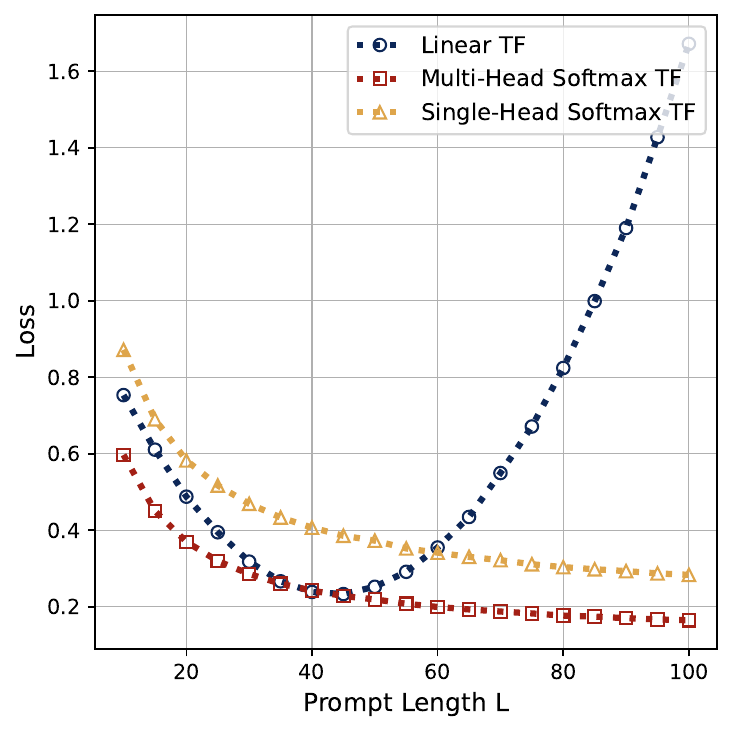}  
	\caption{\small  ICL prediction errors of linear attention, single-head and multi-head softmax attention trained on $L=40$ but evaluated over different lengths. Note that softmax attention can generalize in length because the prediction error decays with more examples. 
    In contrast, linear attention cannot generalize in length and the prediction error starts to increase when the number of examples exceeds $L$.}
	\label{fig:lin_smax_comparison}
	\vspace{-5mm}   
	\end{wrapfigure}
While softmax attentions use more heads, they benefit from {\color{luolan}dynamic normalization}. Notice that the linear attention in \eqref{eq:lin_attn} requires a fixed normalization factor $1/L$ which depends on the sequence length $L$. In contrast, the parameters of softmax attention do not involve $L$ explicitly, but it effectively produces a $1/L$ normalization factor when $\gamma$ is sufficiently small, as shown in \eqref{eq:general_TF_approx}. 
The property of dynamic normalization enables softmax attention trained on linear ICL data to naturally {\color{luolan} generalize to different sequence lengths}. During testing, a trained softmax model can process inputs with many more demonstration examples than it saw during training, and the ICL prediction error decreases as more examples are provided. In contrast, a trained linear attention model cannot generalize to longer sequences easily because its weights explicitly contain the normalization factor $1/L$. When the sequence length changes, linear attention would require different model parameters.

We conduct an experiment to test the length generalization ability of both model types. We train these models with $L = 40$ demonstration examples and test them with $L'$ examples, where $L'$ ranges from $10$ to $100$. The ICL prediction errors in Figure \ref{fig:lin_smax_comparison} show that both single- and multi-head softmax attention models generalize effectively to longer inputs, while the linear attention fails to do so.
In particular, the prediction errors of the attention models further decrease when the number of demonstration examples $L'$ increases beyond $L = 40$. In contrast,  the prediction error of linear attention starts to increase after $L' > 40$.

\subsection{Ablation Study: Alternative Activations Beyond Softmax}

Building on our comparison between linear and softmax transformers, we now further examine the role played by the softmax activation function itself.
Consider a normalized activation $\sigma:\RR^d\mapsto\RR^d$ defined by letting $\sigma(\nu)_i=f(\nu_i)/\sum_{j=1}^df(\nu_j)$  for all $i\in[d]$, where $\nu \in \RR^d$ is the input vector, $\nu_i$ is the $i$-th entry of $\nu$, and  $\sigma(\nu)_i$ is the $i$-th entry of the output $\sigma(\nu)$. 
The function $f:\RR\mapsto\RR$ can be any suitable univariate function, with softmax being the special case where $f(\cdot)=\exp(\cdot)$.

In \S\ref{sec:optimality}, we identified specific patterns in the KQ and OV circuits of trained softmax attention models. A key property promotinh these patterns is the exponential function underlying softmax. These patterns allow the learned transformer to implement a sum of kernel regressors where the exponential function serves as the kernel. Using the first-order approximation $\exp(x)\approx1+x$, we proved that the model approximately implements the debiased GD. This raises natural questions:
\begin{center}
\emph{
(a) Suppose we use a different nonlinear activation in the attention, do we expect similar patterns in the attention weights? 
(b) Does this transformer also implement debiased GD?}
\end{center}
To answer these questions, we conduct additional experiments on the two-head attention model---using the same setup as \S\ref{sec:empirical}---by replacing softmax with other normalized activations $f$ that satisfy the first order approximation $f(x) \approx 1 + C_{f}\cdot x$ for some constant $C_f > 0$.
We test
$f_1 (x)=1+C x$, $f_2 (x)=(1+Cx)^2$ and $f_3 (x)=1+\tanh(x)$ with $C \in \{0.5, 0.8,1\}$. 
Their first-order coefficients $C_f$ are given by 
$C_{f_1} = C$, $C_{f_2 } = 2 C$ and $C_{f_3} = 1$ respectively.

\begin{figure}[!ht]
    \centering
    \begin{minipage}{0.31\textwidth}
        \centering
        \includegraphics[width=\linewidth]{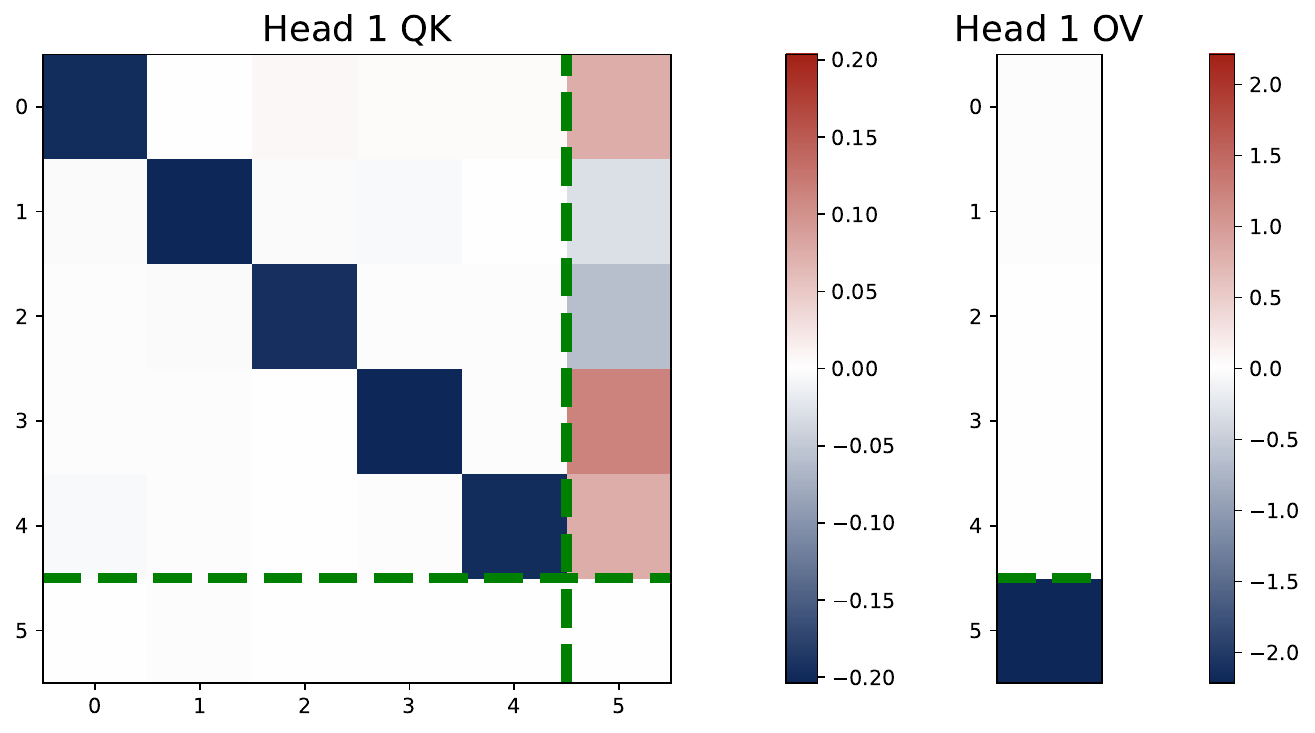}
    \end{minipage}~
    \begin{minipage}{0.31\textwidth}
        \centering
        \includegraphics[width=\linewidth]{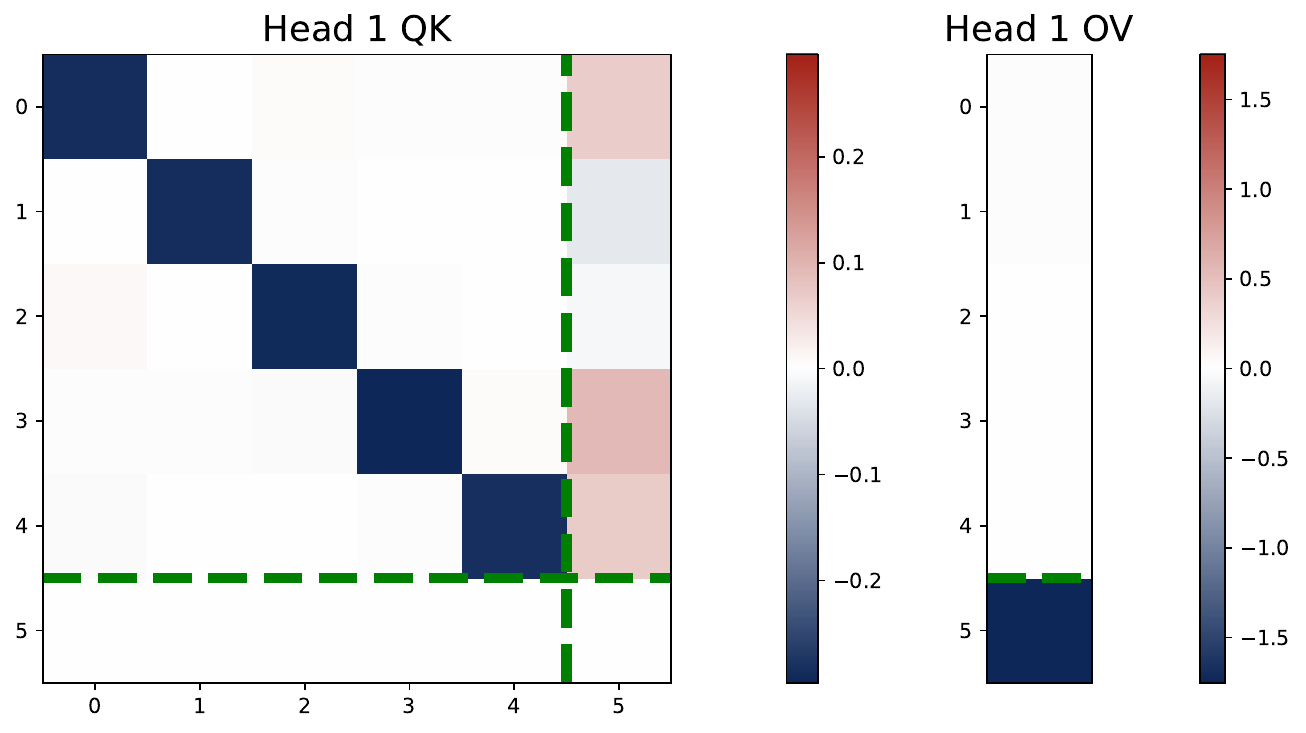}
    \end{minipage}~
    \begin{minipage}{0.31\textwidth}
        \centering
        \includegraphics[width=\linewidth]{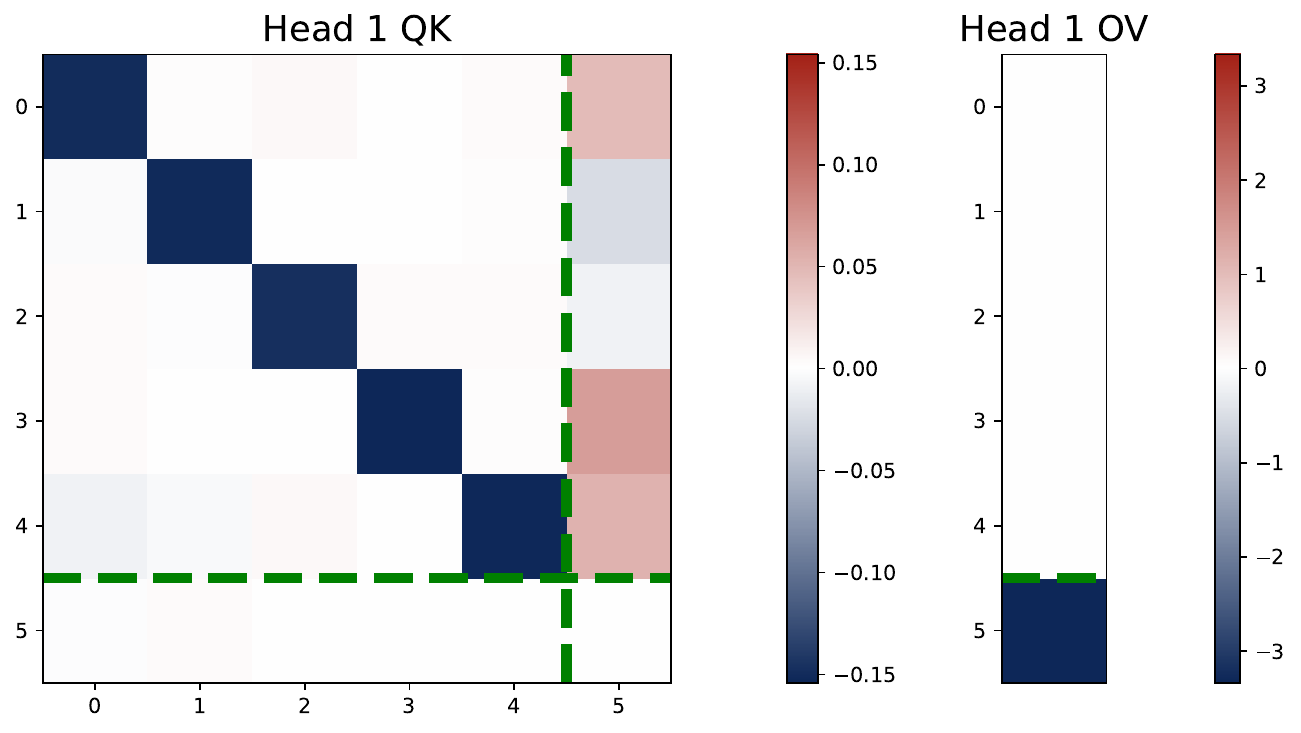}
    \end{minipage}
    \begin{minipage}{0.31\textwidth}
        \centering
        \includegraphics[width=\linewidth]{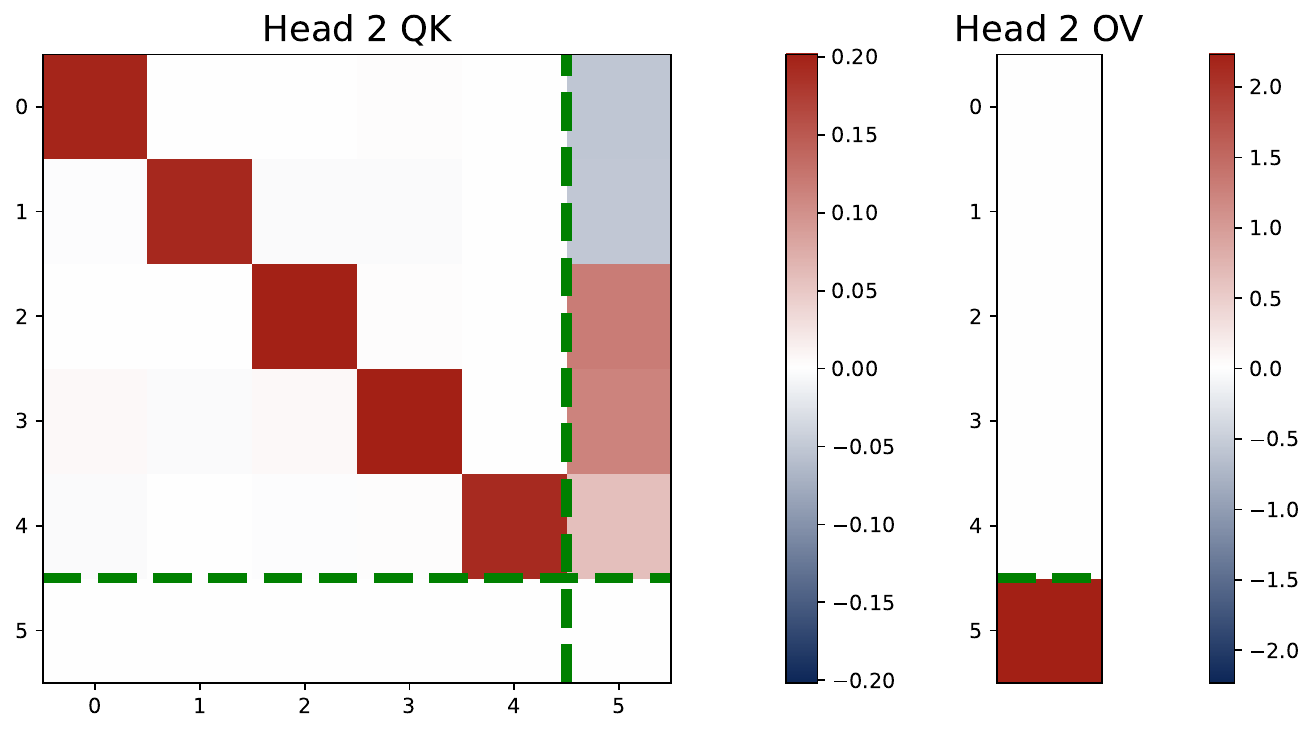}
        \subcaption{Activation $f(x)=1+x$.}
    \end{minipage}~
    \begin{minipage}{0.31\textwidth}
        \centering
        \includegraphics[width=\linewidth]{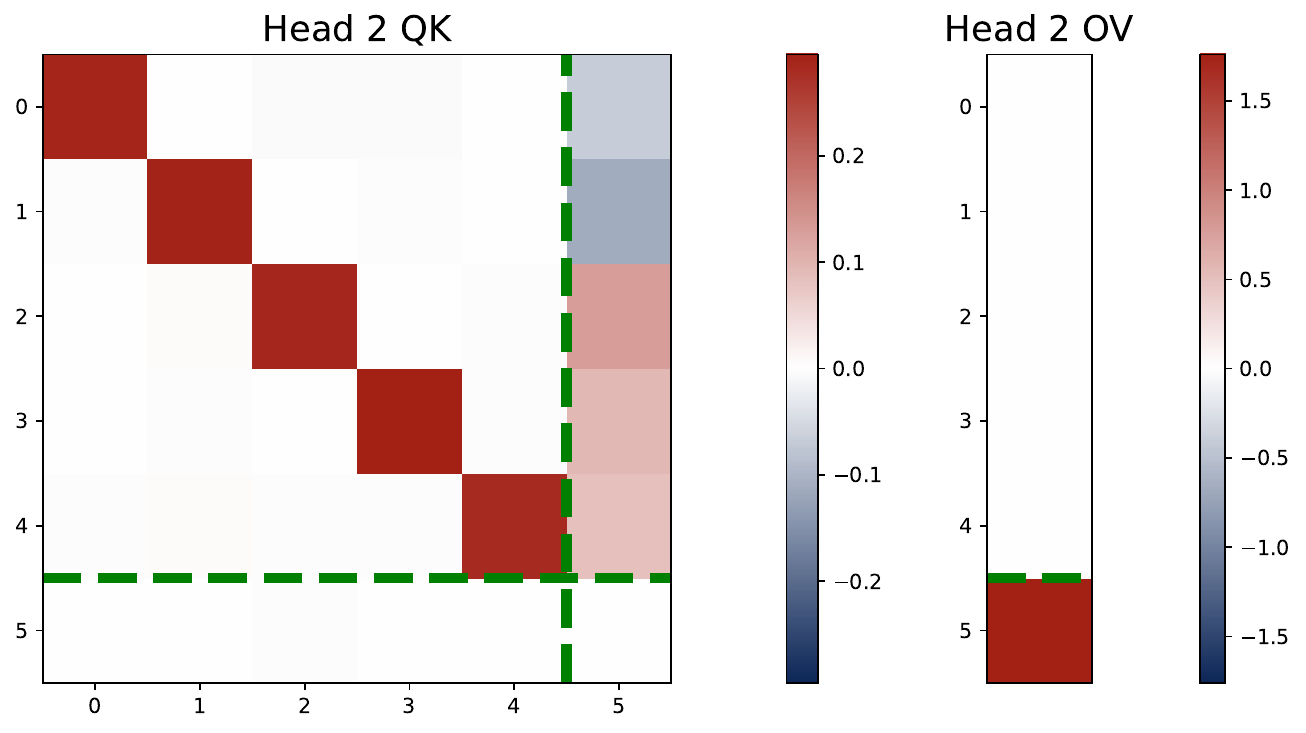}
        \subcaption{Activation $f(x)=(1+0.5x)^2$.}
    \end{minipage}~
    \begin{minipage}{0.31\textwidth}
        \centering
        \includegraphics[width=\linewidth]{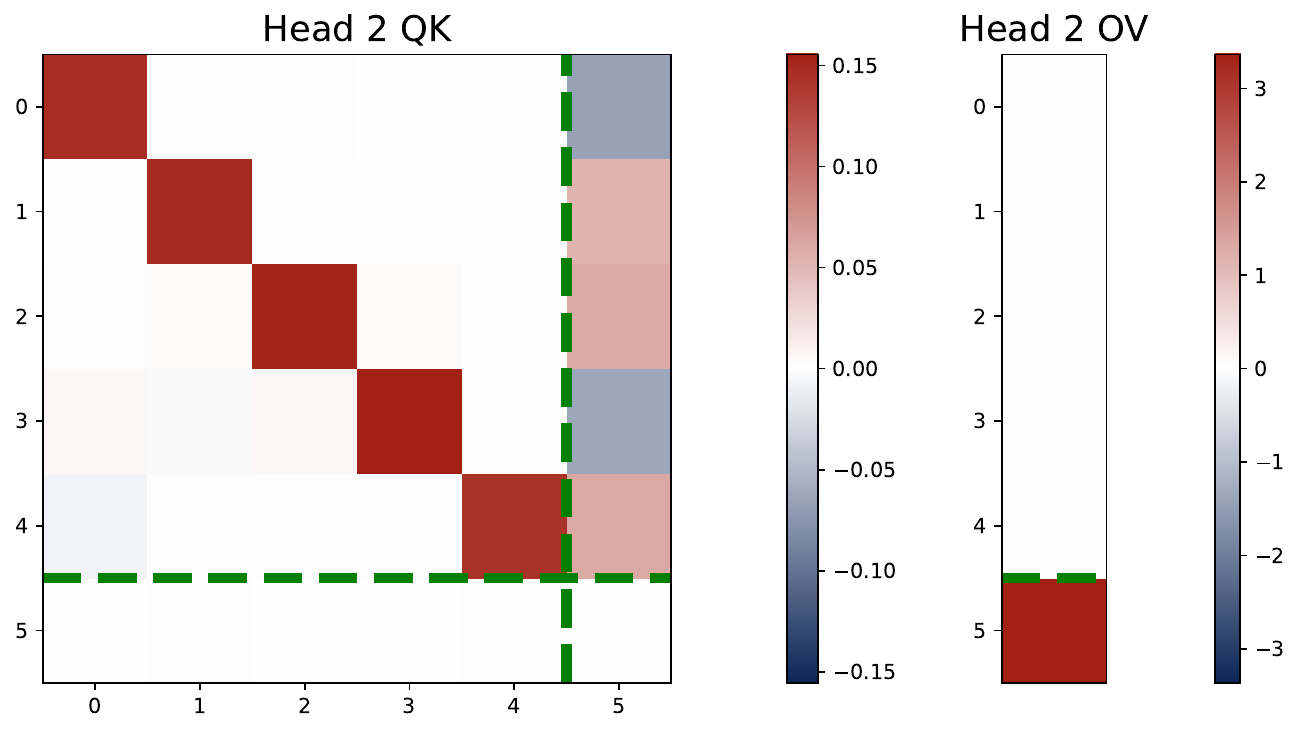}
        \subcaption{Activation $f(x)=1+\tanh(x)$.}
    \end{minipage}
    \caption{Heatmap of KQ matrices and OV vectors of trained two-head attention using alternative activation functions.  The trained models consistently exhibit the same patterns shared by softmax attention, though the parameters of different models converge to different values.}
    \label{fig:ablation_heatmap}
\end{figure}

\begin{figure}[!ht]
    \centering
    \begin{minipage}{0.31\textwidth}
        \centering
        \includegraphics[width=\linewidth]{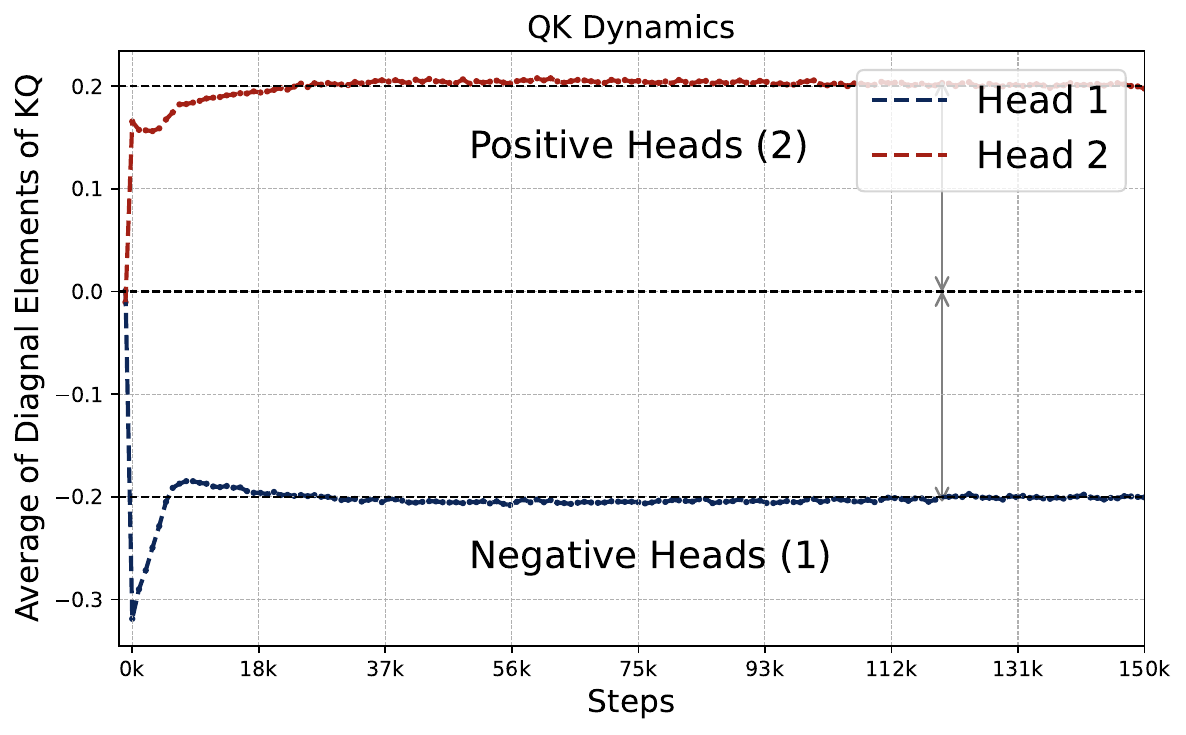}
    \end{minipage}~
    \begin{minipage}{0.31\textwidth}
        \centering
        \includegraphics[width=\linewidth]{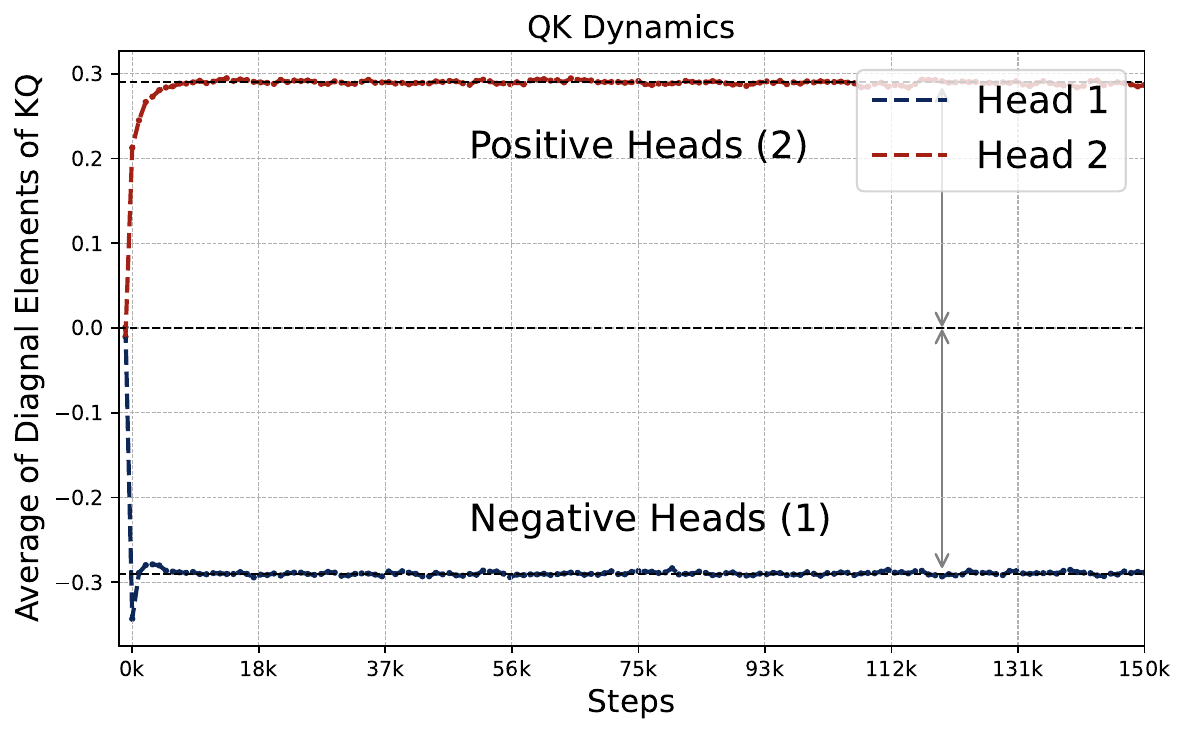}
    \end{minipage}~
    \begin{minipage}{0.31\textwidth}
        \centering
        \includegraphics[width=\linewidth]{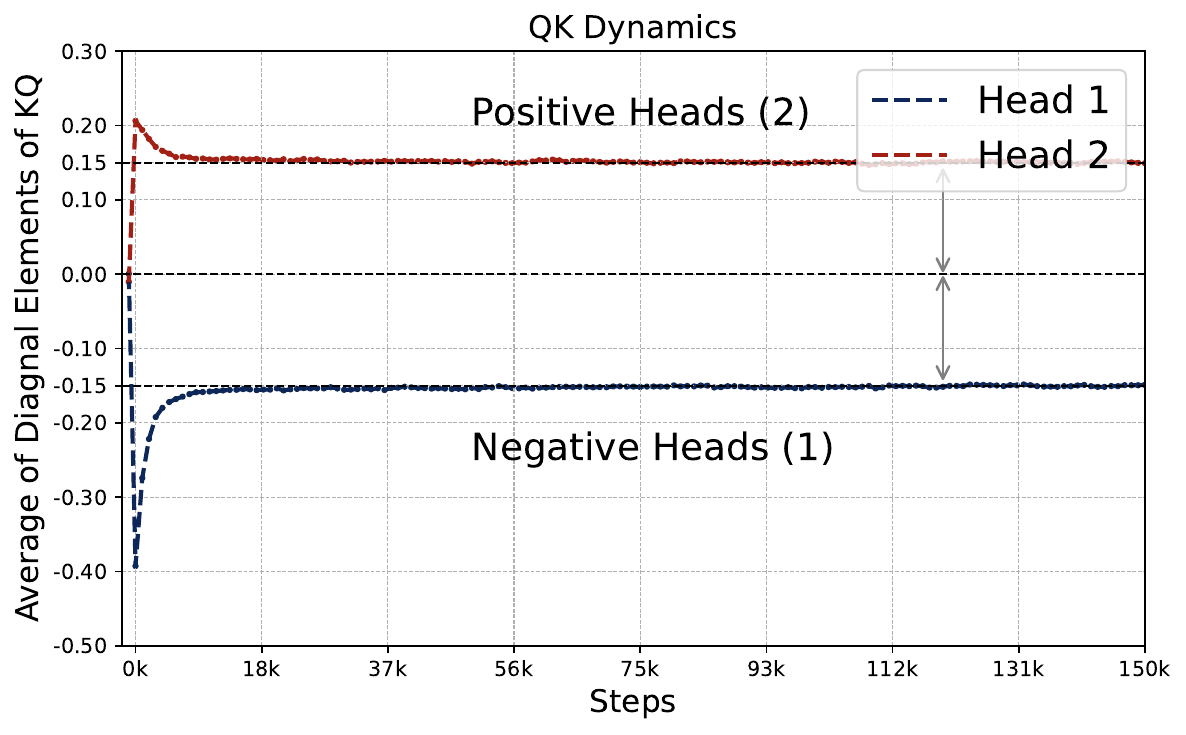}
    \end{minipage}
    \begin{minipage}{0.31\textwidth}
        \centering
        \includegraphics[width=\linewidth]{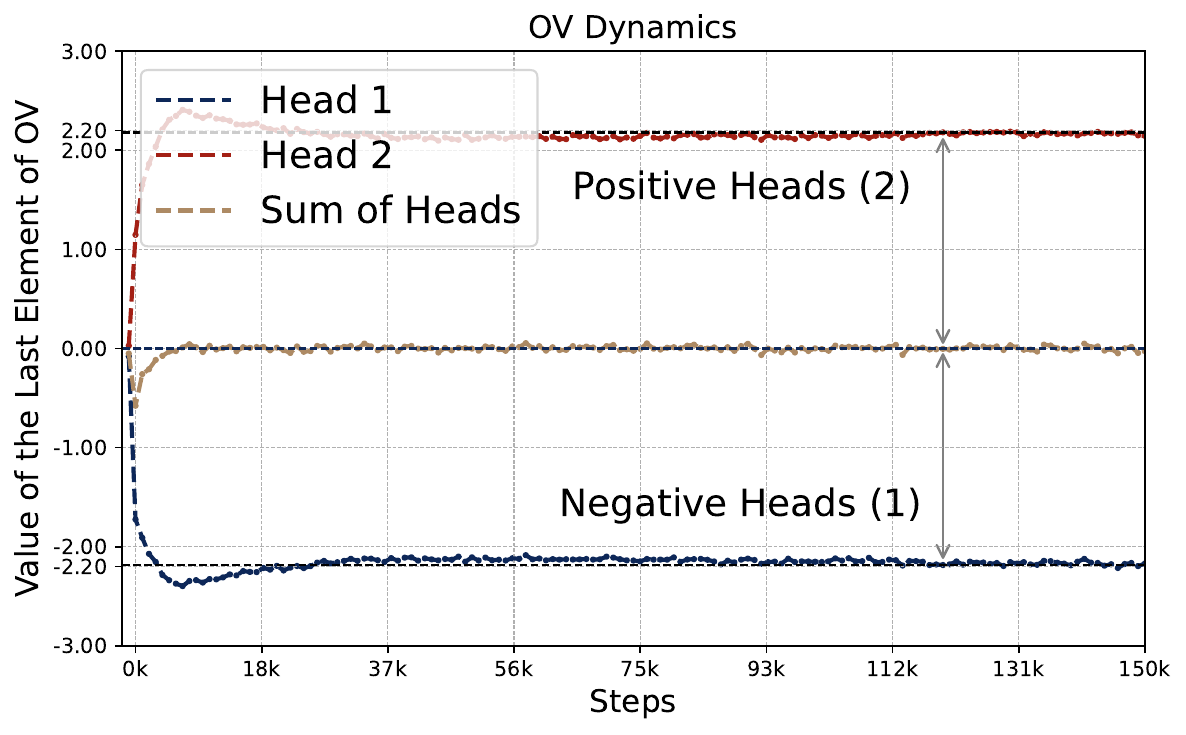}
         \subcaption{Activation $f(x)=1+x$.}
    \end{minipage}~
    \begin{minipage}{0.31\textwidth}
        \centering
        \includegraphics[width=\linewidth]{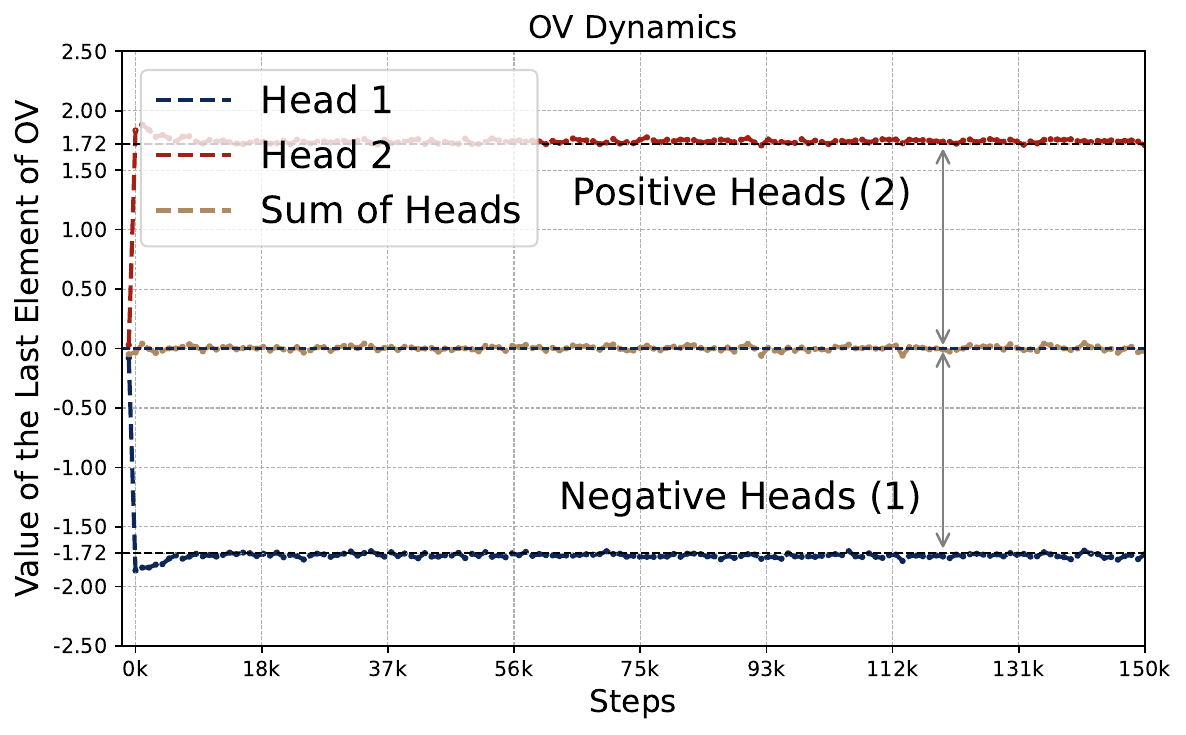}
        \subcaption{Activation $f(x)=(1+0.5x)^2$.}
    \end{minipage}~
    \begin{minipage}{0.31\textwidth}
        \centering
        \includegraphics[width=\linewidth]{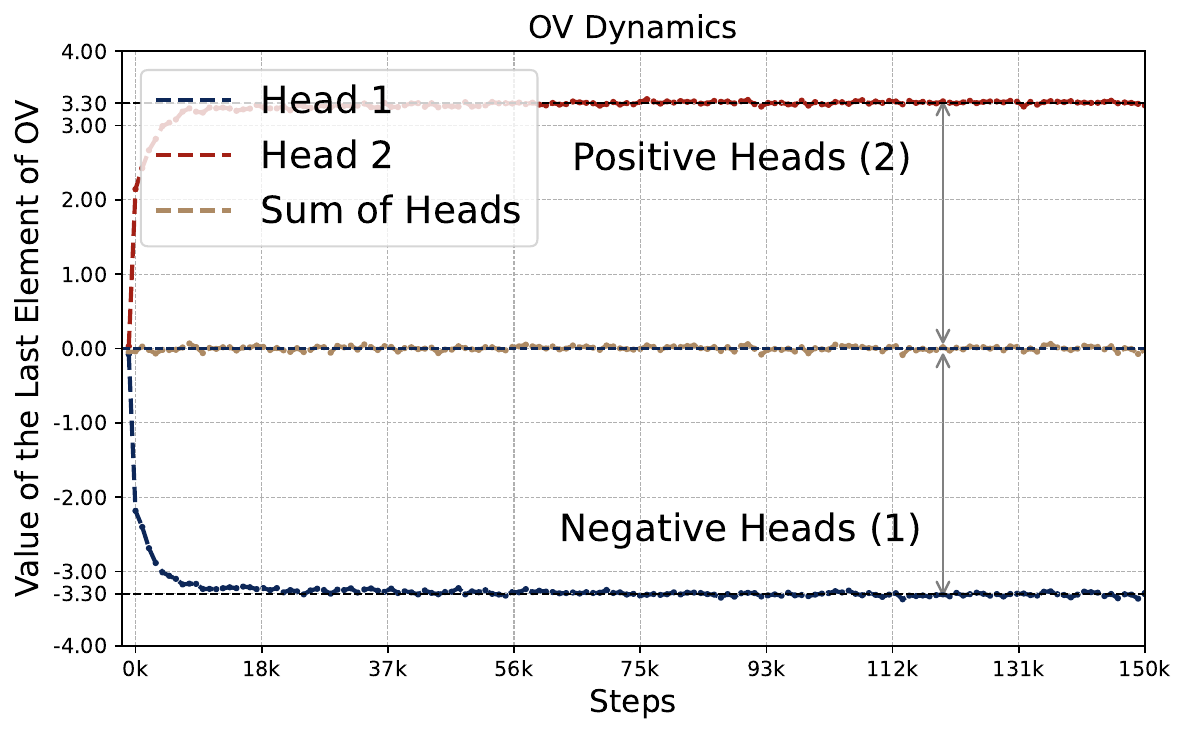}
        \subcaption{Activation $f(x)=1+\tanh(x)$.}
    \end{minipage}
    \caption{Comparison of the training dynamics of KQ and OV parameters for a two-head attention model with various activation functions. 
    In particular, Figure (c) illustrates the dynamics of the case $f(\cdot) = 1+\tanh(\cdot)$, which are closer to those of the vanilla softmax attention model.}
    \label{fig:ablation_dynamics}
\end{figure}

\begin{table}[!ht]
 \renewcommand{\arraystretch}{1.1} 
\centering
\begin{tabular}{c c c ccc ccc}
    \toprule
    \multirow{2}{*}{\makecell{Activation\\ function}} 
    & \multirow{2}{*}{$\exp(x)$}
    & \multirow{2}{*}{$1 + \tanh(x)$}
    & \multicolumn{3}{c}{$1+Cx$} 
    & \multicolumn{3}{c}{$(1 + Cx)^2$} \\
    \cmidrule(lr){4-6} \cmidrule(lr){7-9}
    & & & \(C=0.5\) & 0.8 & 1 
    & \(C=0.5\) & 0.8 & 1 \\
    \midrule
    \(|\omega^{(1)}|\)  & 0.1267  & 0.1504  &  0.3677    &  0.2411   & 0.1979    & 0.2882    &  0.1810   &  0.1452   \\
    \(|\mu^{(1)}|\)    & 3.5006  & 3.3362  &   2.3963  &  2.2819   & 2.2221    &  1.7561   &  1.7487   &   1.7448  \\
    \midrule
    \(\eta_{\sf eff}\)       & 0.8871  & 1.0035  & 0.8810 &  0.8802    & 0.8794    &  1.0122   & 1.0128    & 1.0132    \\
    \bottomrule
\end{tabular}
\caption{Comparison of learned parameters with different activationss for two-head attention model. The table reports the learned $|\omega^{(1)}|$, $|\mu^{(1)}|$, and the effective learning rate $\eta_{\sf eff} = 2C_f \cdot|\omega^{(1)}|\cdot|\mu^{(1)}|$, where $C_f$ denotes the coefficient in the first-order Taylor expansion of each activation function $f$. The exponential activation corresponds to the standard softmax attention. Despite different scaling patterns in the individual parameters, the effective learning rates $\eta_{\sf eff}$ remain remarkably consistent across all activation functions, demonstrating that trained models implement similar debiased GD predictors regardless of activation choice.}
\label{tab:eff_learning_rate_activation}
\end{table}
\begin{figure}[!ht]
    \centering
    \begin{minipage}{0.31\textwidth}
        \centering
        \includegraphics[width=\linewidth]{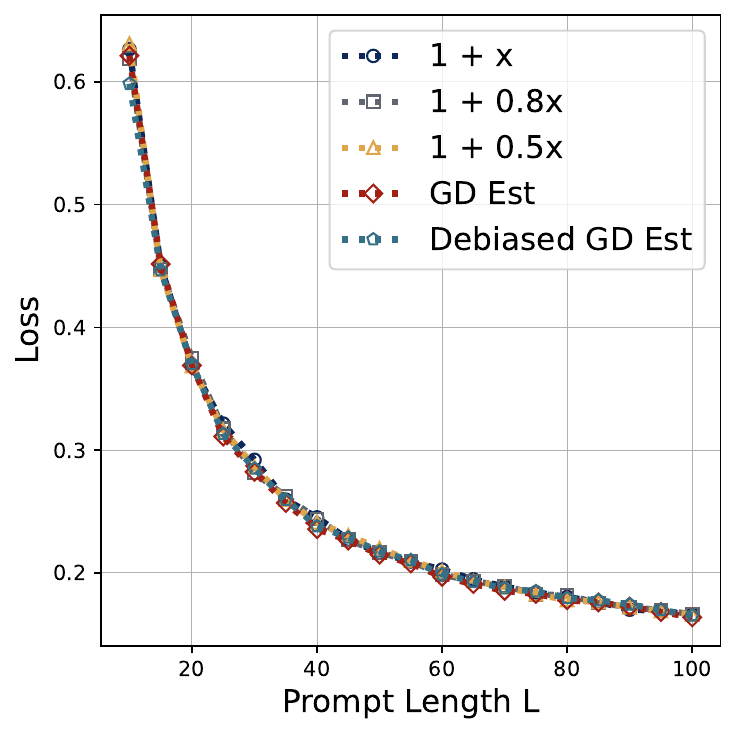}
    \end{minipage}~
    \begin{minipage}{0.31\textwidth}
        \centering
        \includegraphics[width=\linewidth]{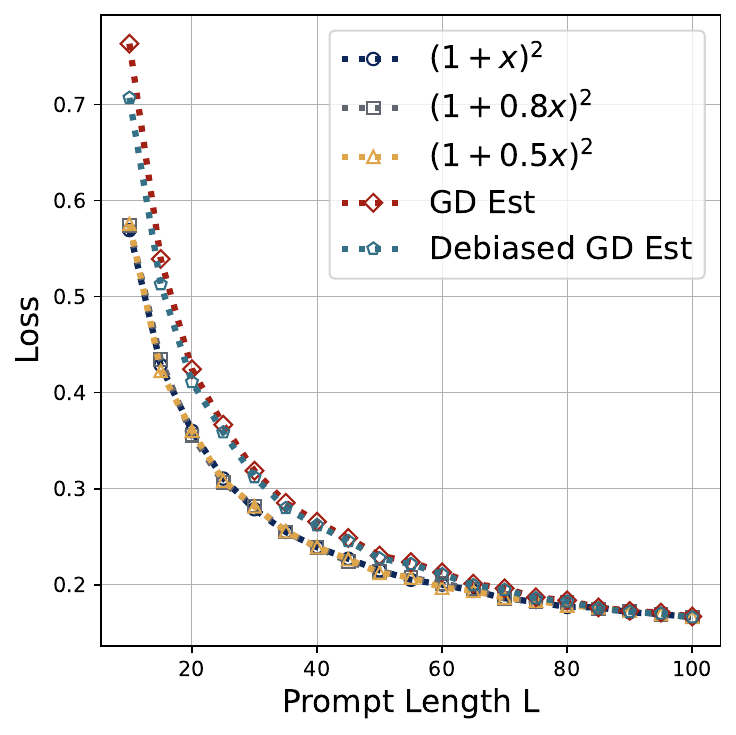}
    \end{minipage}~
    \begin{minipage}{0.31\textwidth}
        \centering
        \includegraphics[width=\linewidth]{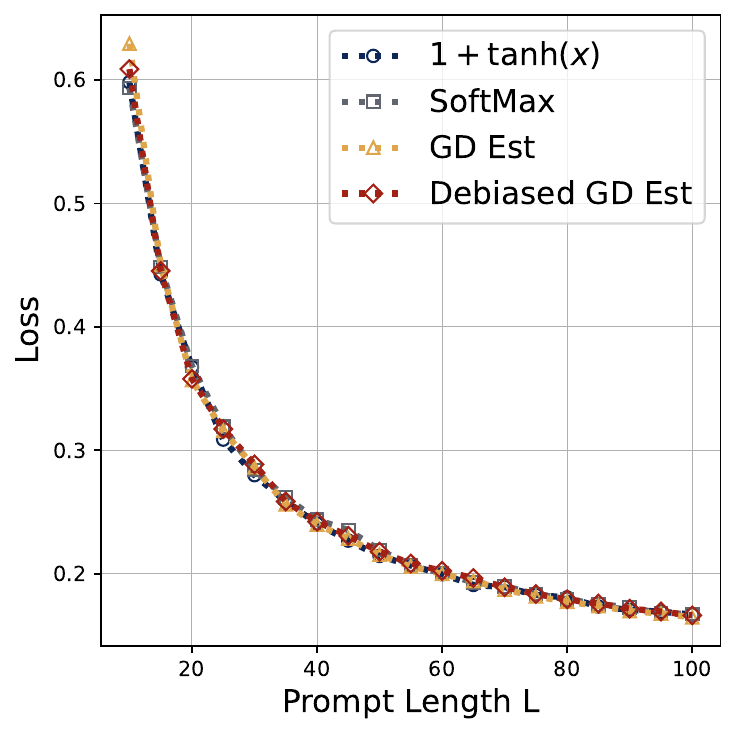}
    \end{minipage}
    \caption{Comparison of two-head attention models with different activation functions and the corresponding canonical predictors---vanilla GD and debiased GD predictor---with effective learning rate $\eta_{\sf eff} = 2C_f \cdot|\omega^{(1)}|\cdot|\mu^{(1)}|$. As shown in the figure, the ICL prediction error of attention models with different activations closely matches that of the vanilla GD and debiased GD predictors. While the normalized quadratic activation exhibits slight deviations from the GD predictors for small $L$, the loss curves ultimately converge as $L$ increases.}
    \label{fig:icl_error_activation}
\end{figure}

In Figure \ref{fig:ablation_heatmap}, we plot the KQ and OV parameters for all these attention models. We observe the consistent positive-negative pattern in the attention weights. 
That is, the first two observations of \S\ref{sec:empirical} are still true. 
This implies that all of these learned attention models implement a sum of kernel regressors. 
Moreover, by examining the magnitude of the KQ and OV parameters, we observe that the magnitude of $\omega = (\omega^{(1)}, \omega^{(2)})$ is still relatively small while the magnitude of $\mu = (\mu^{(1)}, \mu^{(2)})$ is large. 
This is also consistent with the softmax attention. 
Mathematically, this means that the transformer predictors can be written as 
\begin{align*}
\hat{y}_q&= |\mu^{(1)}| \cdot\biggl(  \sum_{\ell=1}^L\frac{y_\ell\cdot{\luolan{f}}( |\omega^{(1)}| \cdot x_\ell^\top x_q)}{\sum_{\ell=1}^L {\luolan{f}}(|\omega^{(1)}| \cdot x_\ell^\top x_q)} -  \sum_{\ell=1}^L\frac{y_\ell\cdot {\luolan{f}} (-|\omega^{(1)}|\cdot x_\ell^\top x_q)}{\sum_{\ell=1}^L {\luolan{f}}(-|\omega^{(1)}|\cdot x_\ell^\top x_q)} \bigg)\approx \frac{\eta_{\sf eff} } {L} \sum_{\ell=1}^ L y_{\ell} \cdot \bar x_{\ell}^\top x_q,
\end{align*}
where $\eta_{\sf eff} =  2C_{\luolan f}\cdot |\omega^{(1)}| \cdot |\mu^{(1)}|$, and $\bar x_{\ell} $ is the debiased covariate and the approximation can be similarly derived as in 
Remark \ref{remark:approx}. 

In Table \ref{tab:eff_learning_rate_activation}, we report the limiting values of $|\omega^{(1)}|$, $|\mu^{(1)}|$ and $\eta_{\sf eff}$ of these models. We observe that while the values of $|\omega^{(1)}|$ and $|\mu^{(1)}|$ differ across different activations, the values of $\eta_{\sf eff}$ are all close to one. This suggests that all these models effectively learn the same debiased GD predictor $\hat y_q^{\sf gd}(\eta^*)$. 
Here $\eta^* =  (1 + (1+\sigma^2 ) \cdot d / L)^{-1} \approx 1 $ when $d / L \rightarrow 0$.  

Moreover, in Figure \ref{fig:icl_error_activation} we report the ICL prediction errors of these learned multi-head attention models, together with the error of the debiased GD with corresponding effective learning rate. 
We observe that these error curves are highly consistent with debiased GD.
Notice that all these models can readily generalize in length, which seems a benefit of the normalized activation in attention.
Thus, we expect that Theorem \ref{thm:optimality} can be generalized to attention models based on normalized activations in general, as long as $f(x) \approx 1 + C_f \cdot x$ around $x=0$.
Finally, we remark the choice of the bias term $1$ is just for simplicity, and the argument can be readily generalized to other positive real numbers thanks to the normalization effect in softmax attention.

Finally, we plot the training dynamics of the KQ and OV parameters in Figure \ref{fig:ablation_dynamics}. Interestingly, the training dynamics across different activations exhibit slightly different behaviors. 
The behavior of $1 + \tanh(x)$ is similar to the softmax, where $|\omega^{(1)}|$ first increases and then decreases. 
The behavior of the other two cases in Figure \ref{fig:ablation_dynamics} seems more complicated. We defer their study to future work.

In summary, we show that for the class of multi-head attention models normalized activations, as long as the activation $f $ satisfies the first-order condition $f(x) \approx 1 + C_{f}\cdot x$, the first three observations in \S\ref{sec:empirical} remain valid. 
That is, the attention weights share the patterns and the limiting attention models learn the same debiased GD predictor. However, the training dynamics are more subtle, which seem to rely on the choice of activation function.

\subsection{Extension to Anisotropic Covariates}\label{sec:discuss2}
In the following, we examine how the data distribution contributes to the observed patterns in the learned attention weights. 
To this end, we focus on 
the anisotropic case where the covariates are sampled from a centered Gaussian distribution with a general covariance structure. 
In particular, we investigate the following questions: 
\begin{center}
\emph{(a) Can multi-head attention solve in-context linear regression with anisotropic covariates? (b) Do we observe the same patterns in the learned transformer? Does the learned transformer implement a GD algorithm approximately? }
\end{center}

\begin{figure}[!ht]
    \centering
    \begin{minipage}{0.45\textwidth}
        \centering
        \includegraphics[width=\linewidth]{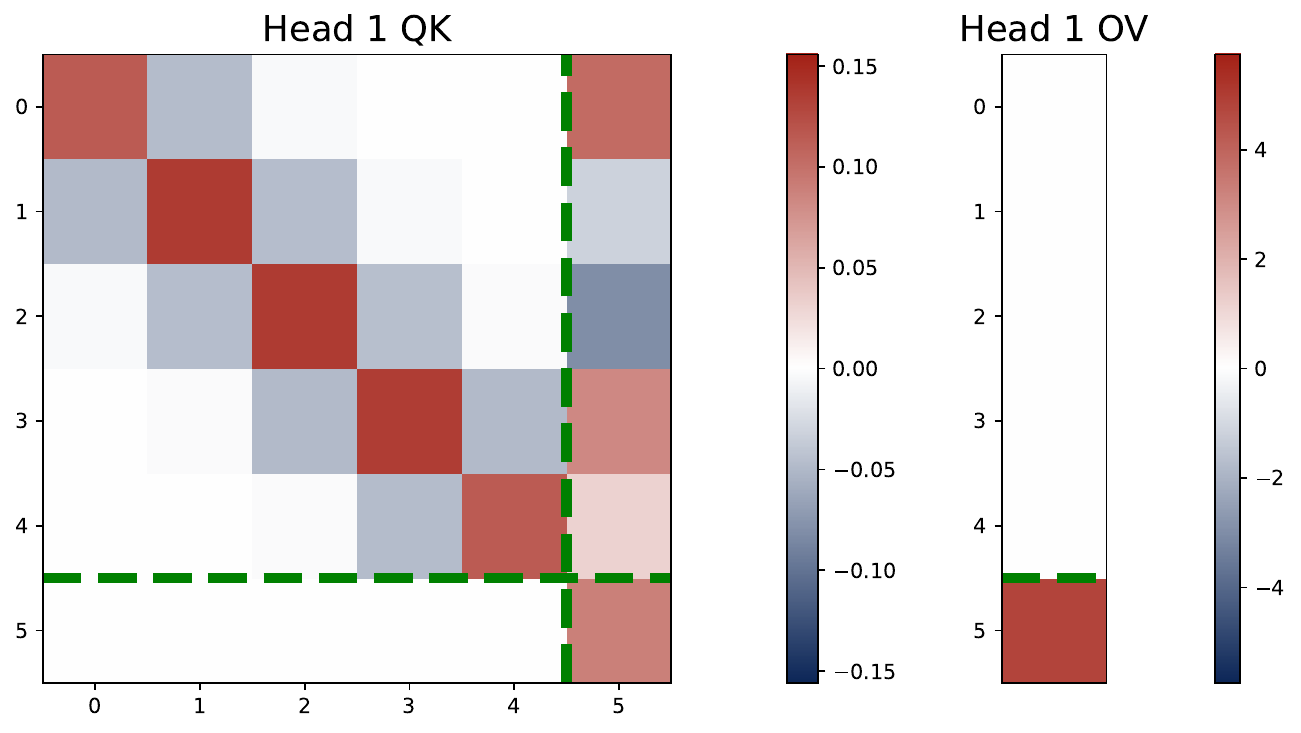}
    \end{minipage}\quad
    \begin{minipage}{0.31\textwidth}
        \centering
        \includegraphics[width=\linewidth]{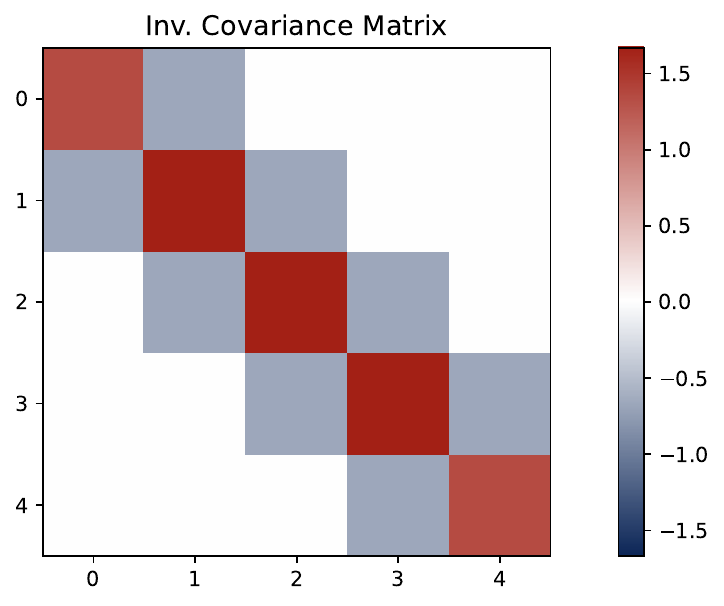}
    \end{minipage}
    \begin{minipage}{0.45\textwidth}
        \centering
        \includegraphics[width=\linewidth]{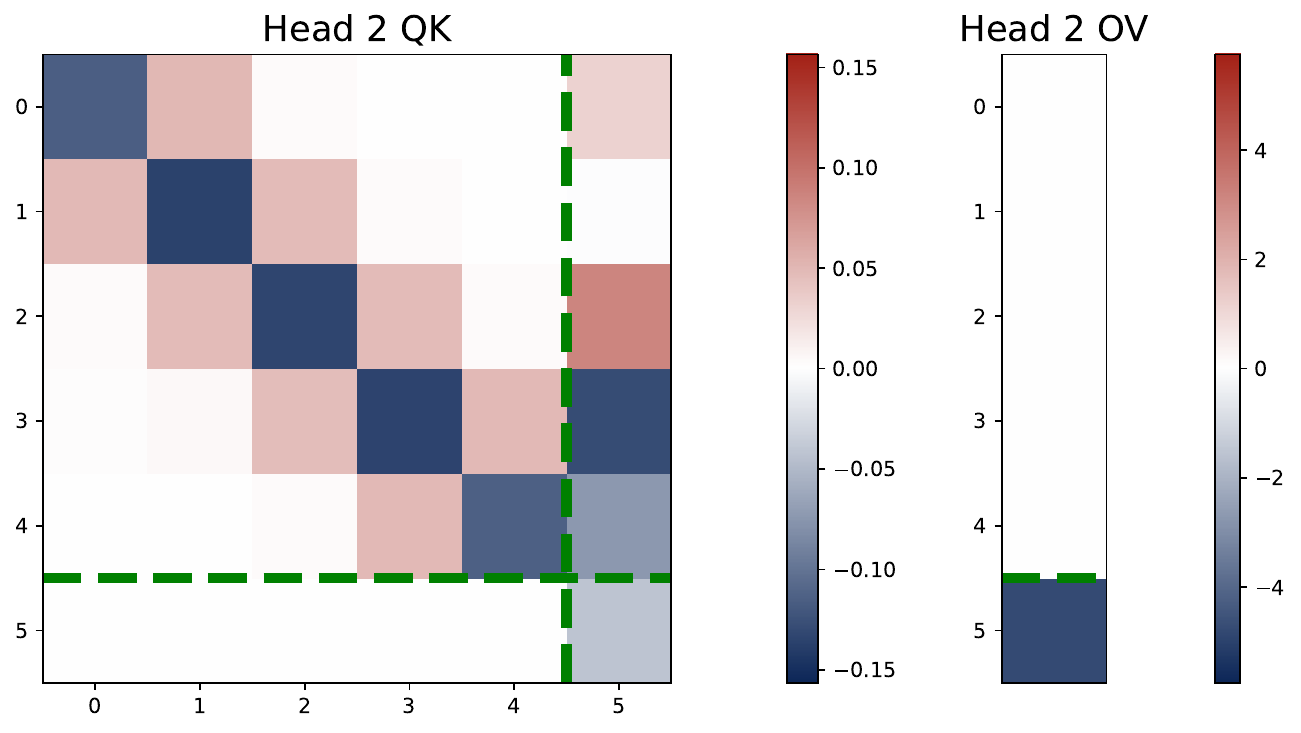}
    \end{minipage}\quad
    \begin{minipage}{0.31\textwidth}
        \centering
        \includegraphics[width=\linewidth]{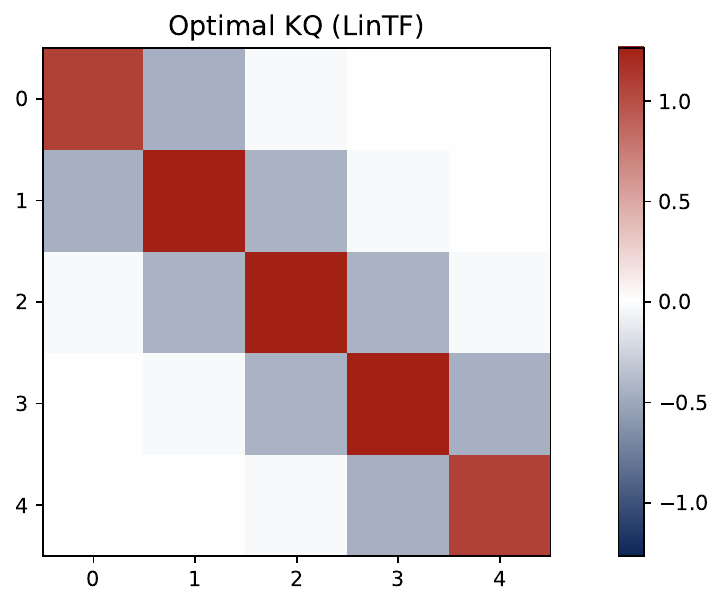}
    \end{minipage}
    \caption{Heatmap of KQ matrices and OV vectors trained on anisotropic covariates. The left two Figures illustrate the patterns of trained two-head attention model, and the right Figures plot the inverse of covariance matrix $\Sigma^{-1}$ and the conjectured optimal KQ pattern $\widetilde{\Sigma}^{-1}$ in \eqref{eq:noniso_vgd}. The learned KQ matrices exhibit a dominant tridiagonal structure with small values in the $(i, i+2)$ and $(i+2, i)$ positions for all $i\in[d]$, suggesting the model deviates from a strict implementation of pre-conditioned GD using $\Sigma^{-1}$ since the inverse covariance matrix should be tridiagonal. However, this pattern aligns with the conjectured structure $\tilde\Sigma^{-1}$, indicating the model implements a pre-conditioned GD predictor with structural adjustment.} 
    \label{fig:noniso_vis_all}
\end{figure}

In our experiment, we let the covariate distribution be $\mathsf{P}_x=\cN(0,\Sigma)$, where $\Sigma$ is a Kac-Murdock-Szeg\"o matrix \citep{fikioris2018spectral} with parameter $\rho = 0.5$. That is, the $(i,j)$-th entry of $\Sigma $ is $ \Sigma_{ij} = \rho^{|i-j|}$.
We train a two-head attention model with $d = 5$ and $L = 40$. 

\paragraph{Transformer Learns Sum of Kernel Regressors.} We observe that the diagonal pattern of the KQ circuits disappears, but the pattern of the OV circuits as in Observation 1 in \S\ref{sec:empirical} persists. 
Moreover, as in  Observation 2 in \S\ref{sec:empirical}, the weight matrices of the two heads sum to zero, which means they also split into a positive and a negative head. 
In particular, the learned weight matrices can be written as 
\begin{align}
K^{(h)^\top}Q^{(h)}=\begin{bmatrix}
 (-1)^{h+1} \cdot \gamma \cdot \Omega & \ast\\ \zero_d^\top & \ast\end{bmatrix},\quad O^{(h)}V^{(h)}=\begin{bmatrix}
*&*\\\zero_d^\top & (-1)^{h+1} \cdot \mu_{\gamma} \end{bmatrix}.
\label{eq:weight_noniso}
\end{align}
Here $\gamma > 0$ is a small scaling parameter, $\mu_{\gamma} $ is defined in \eqref{eq:conv_loss},  and $\Omega \in \RR^{d\times d}$ is a positive definite matrix.  
That is, the first head is positive and the second is negative. The properties of sign-matching and zero-sum OV still hold.
Moreover, the magnitude of the OV circuits is roughly the same as in the isotropic case. While KQ matrices still have a small magnitude, they are no longer proportional to an identity matrix.   
 See the first column of Figure \ref{fig:noniso_vis_all} for details.

As a result, the learned transformer still implements a sum of two kernel regressors:
\begin{align}\label{eq:transformer_noniso}
	\hat{y}_q&= \mu_{\gamma} \cdot\biggl (  \sum_{\ell=1}^L\frac{y_\ell\cdot\exp(\gamma\cdot x_\ell^\top  {\Omega} x_q)}{\sum_{\ell=1}^L\exp(\gamma\cdot x_\ell^\top {\Omega} x_q)} -  \sum_{\ell=1}^L\frac{y_\ell\cdot\exp(-\gamma\cdot x_\ell^\top {\Omega} x_q)}{\sum_{\ell=1}^L\exp(-\gamma\cdot x_\ell^\top {\Omega} x_q)} \biggr) ,
	\end{align}
where the kernel function is induced by the bivariate function $F(x, x' ; \gamma, \Omega) = 1/\gamma \cdot  \exp( - \gamma \cdot x^\top \Omega x')$. 
Since $\gamma $ is small, we can similarly perform a first-order Taylor expansion to $\exp(\cdot)$ in \eqref{eq:transformer_noniso}, which implies that 
$\hat y_q $ in \eqref{eq:transformer_noniso} is close to a pre-conditioned version of the debiased GD: 
\begin{align}
\label{eq:pre_cond_GD_estimator}
    \hat y_q \approx \frac{2 \gamma \cdot \mu_{\gamma} }{L} \sum_{\ell=1}^L y_{\ell} \cdot \bar x_{\ell}^\top \Omega x_q.  
\end{align}
Thus, the effective learning rate, $\eta = 2 \gamma \cdot \mu_{\gamma}$, is the same as in the isotropic case. 

\paragraph{Pre-Conditioning Matrix.} It remains to determine the pre-conditioning matrix $\Omega$. 
Recall that in the isotropic case, we prove that multi-head softmax attention recovers the estimator found by the trained linear attention. 
For the anisotropic case, it is proved that linear attention finds a pre-conditioned GD predictor \citep{ahn2023linear,zhang2024trained}, which is given by
\begin{align}
\hat{y}_q^{\sf vgd}:=\frac{1}{L}\cdot\sum_{\ell=1}^L y_\ell \cdot x_\ell^\top \widetilde{\Sigma}^{-1} x_q,\text{~~with~~}\widetilde{\Sigma}=\left(1+\frac{1}{L}\right)\Sigma+\frac{\tr(\Sigma)+d\sigma^2}{L}\cdot I_d. 
\label{eq:noniso_vgd}
\end{align}
Besides, as we will show in \S\ref{lem:noniso_gd}, $\tilde \Sigma$ corresponds to the optimal pre-conditioning matrix for the pre-conditioned GD, which minimizes the ICL risk. 
We conjecture that {\color{luolan}$\Omega$ is close to $\tilde\Sigma^{-1}$}, since (i) two-head softmax attention model and the linear attention when $\gamma $ is small, and (ii) $\tilde \Sigma$ enjoys optimality over all pre-conditioning matrices. 
In the right column of Figure \ref{fig:noniso_vis_all}, we plot $\Sigma^{-1}$ and $\tilde \Sigma^{-1}$. A closer examination of Figure \ref{fig:noniso_vis_all} shows that $\Omega$ is closer to $\tilde \Sigma^{-1} $ than $\Sigma^{-1}$. 

Note when $L$ is sufficiently large, $\widetilde{\Sigma}$ is close $\Sigma$.  Thus, while we cannot prove $\Omega = \tilde \Sigma^{-1}$, we know that when $L$ is sufficiently large, the transformer predictor is equal to $L^{-1}\cdot\sum_{\ell =1}^L y_{\ell} \cdot x_{\ell}^\top \Sigma^{-1} x_q $. Also see Figure \ref{fig:noniso_inverse} in \S\ref{ap:add_extention_noniso}, which shows that   $\Omega^{-1} $, $\Sigma$, and $\tilde \Sigma$ are close.

\subsection{Extension to In-Context Multi-Task Regression} \label{sec:multi-task}

In the following, we consider the \emph{multi-task} in-context regression, where the response variable is a vector. In this setting, as we will show below, depending on the number of heads and the number of tasks, the learned transformer may exhibit different patterns.

\paragraph{Task Formulation.}
We first introduce the data generation process of multi-task linear regression as follows, where each task has its own set of features.

\begin{definition}[Multi-task Linear Model]
\label{def:multi_task}
	Given $d\in\ZZ^+$, we assume the covariate $x\in\RR^d$ is independently sampled from $\mathsf{P}_x$, and let $\beta \in \RR^{d} $ be a fixed signal parameter. 
    Let $N\in\ZZ^+$ denote the number of tasks.
    For each task $n\in [N]$, let $\cS_n\subseteq[d]$ denote a nonempty set of indices for task $n\in[N]$. 
    Let $\beta_{\cS_n}$ and $x_{\cS _n} $ denote the subvectors of $\beta$ and $x$ indexed by $\cS_n$.
    We define the response vector $y=[y_1,\dots,y_N]^\top\in\RR^N$ by letting  $y_n=\beta_{\cS_n}^\top x_{\cS_n}+\epsilon_n$ for all $n \in [N]$, where $\epsilon_n\iid\cN(0,\sigma^2)$.
\end{definition}

In other words, each task $n$ only uses features in $\cS_n$ and the response is generated by a linear model with the corresponding signal parameter $\beta_{\cS_n}$ and covariate $x_{\cS_n}$.
Now we introduce the ICL problem for multi-task regression, which is almost the same as in the definition in \S\ref{sec:setup}. 

\begin{definition}\label{def:icl_multitask}
Suppose that the signal set $\{\cS_n\}_{n\in[N]}$ is fixed but unknown. 
To perform ICL, we first generate $\beta \sim  \mathsf{P}_\beta$ and then 
generate $L$ demonstration examples $\{ (x_{\ell}, y_{\ell}) \}_{\ell\in[L]}$ where $x_\ell\iid\mathsf{P}_x$ and $y_\ell$ is generated by the linear model in Definition \ref{def:multi_task} with parameter $\beta$. 
Moreover, we generate another covariate $x_q \sim \mathsf{P}_x$ and the goal is to predict the response $y_q\in\RR^N$ for the query  $x_q$.
\end{definition}

Note that for each  sequence $Z_\mathsf{ebd}\in\RR^{(d+1)\times(L+N)}$ defined as in \eqref{eq:embed}, the signal set $\{\cS_n\}_{n\in[N]}$ is shared while the signal parameter $\beta$ is randomly generated. 
Thus, we expect that the trained transformer model should be able to bake the shared structure $\{\cS_n\}_{n\in[N]}$ into the model.
\cite{chen2024training} study a special case of multi-task scenario with $H=N$. They assume that, up to a rotation in $y$, each $\cS_n$ does not overlap with each other, and  $\bigcup_{n\in[N]} \cS_n = [d]$. 
In this special case, each head learns to solve a separate linear task. That is, for each task $n$, there exists a head $h(n)$ such that the $h(n)$-th head solves the task $n$ using the single-head attention estimator given in \eqref{eq:single_head_kernel}. Moreover, $\{ h(n) \}_{n\in[N]}$ is a permutation of $[N]$.
Thus, in this special setup, the learned transformer model is a sum of $N$ independent kernel regressors, where the $n$-th regressor is based on the $\cS_n$-subvector of $x_{\ell}$ and the $n$-th entry of $y_{\ell}$.

In contrast, we allow the signal sets $\{\cS_n\}_{n\in[N]}$ to overlap with each other, which exhibits more complicated behaviors in the learned transformer model. In our experiment, we study a simple two-task scenario with $d = 6$, $L = 40$, $\sigma^2 = 0.1$, and $N = 2$. We set $\cS_1=\{1, 2, 3, 4\}$ and $\cS_2=\{3, 4, 5, 6\}$, which have two overlapping entries. For brevity, we defer the details of the experiment results and their interpretations to \S \ref{ap:add_extention_multitask}. We present the main findings as follows.

\paragraph{Global Pattern.}
In the multi-task setip, the KQ and OV matrices are $(N+d)\times (N+d)$ matrices.
Similar to \S \ref{sec:empirical}, we focus on the effective parameters that do not meet the zero vector in $Z_\mathsf{ebd}$, i.e., the top-left $(d+N) \times d$ submatrix of KQ matrix and the last $N$ rows of OV matrix:
\begin{align}\label{eq:multi_task_global_pattern}
KQ^{(h)} =\begin{bmatrix}  KQ^{(h)}_{11} & \ast \\ KQ^{(h)}_{21}  & \ast\end{bmatrix}\in\RR^{(d+N)\times(d+N)},\quad OV^{(h)}=\begin{bmatrix}*&*\\OV^{(h)}_{21} & OV^{(h)}_{22}  \end{bmatrix}\in\RR^{(d+N)\times(d+N)}.
\end{align}
Here, $KQ^{(h)}_{11} \in \RR^{d\times d}$ and $OV^{(h)}_{22} \in \RR^{N\times N}$. 
Regardless of the number of heads, we observe the following consistent global pattern: There exists $\omega^{(h)}\in\RR^d$ and $\mu^{(h)}\in\RR^N$ such that
\begin{tcolorbox}[frame empty, left = 0.1mm, right = 0.1mm, top = 0.1mm, bottom = 0.1mm]
\vspace{-10pt}
\begin{align}\label{eq:multi_task_global_pattern2}
KQ^{(h)}_{11}=\diag(\omega^{(h)}),\qquad OV^{(h)}_{22}=\diag(\mu^{(h)}),\qquad KQ^{(h)}_{21}=OV^{(h)}_{21}=\zero_{N\times d},\qquad\forall h\in[H].
\end{align}
\end{tcolorbox}
Therefore, the KQ circuit exhibits a diagonal-only pattern in the top $d\times d$ submatrix with other entries equal to zero, and the OV circuit has non-zero values exclusively in the last $N \times N$ matrix.
Following this, the trained transformer essentially behaves like a single-task model, where each head uses a softmax function to compute similarity scores $p^{(h)}=\smax(X\cdot\omega^{(h)}\odot x_q)\in\RR^L$, then produces a weighted response $\mu^{(h)} \cdot \sum_{\ell=1}^L  p_{\ell} ^{(h)}\cdot y_{\ell}$, where $p_{\ell} ^{(h)}$ is the $\ell$-th entry of $p^{(h)}$.
In other words, the learned transformer model is a sum of $H$ kernel regressors, but the weights computed by the kernel are used to aggregated the vector-valued responses.

\paragraph{Local Patterns.}
However, different from the single task case, here each $\omega^{(h)}$ does not corresponds to an all-one vector. 
Rather, $\omega^{(h)}$ can have different magnitudes in different entries, which reflects the tension of feature selections across different tasks. 
Intuitively, focusing on each attention head $h$, the value of $\omega^{(h)}$ determines the similarity scores $p^{(h)}$. 
If we only use it to solve the first task, we expect that $\omega^{(h)}$ only has non-zero entries in $\cS_1$.
Similarly, if we only use it to solve the second task, we expect that $\omega^{(h)}$ only has non-zero entries in $\cS_2$. 
But when it contributes to solving both tasks, the problem is tricky. How much effort does this head put into solving task one and task two? This will be reflected in the magnitude of $\omega^{(h)}$ in the overlapping entries. 

We run experiment with multi-head attention models with $H$ ranging from $1$ to $4$. Depending on the number of heads, the trained transformer exhibit four different local patterns:

\begin{itemize}
\setlength{\itemsep}{-1pt}
	\item [(\romannumeral1)] $H=1$: Single-head attention learns one \luolan{weighted kernel regressor}, assigning higher weights to the informative entries, i.e., the entries that are shared by more $\cS_n$'s. On our simple case, these entries are  $\cS_1 \cap \cS_2 = \{3, 4\}$.
	\item [(\romannumeral2)] $H=2$: The two-head attention learns to act as one \luolan{weighted pre-conditioned GD predictor} with also higher weight assigned to the informative entries.
	\item [(\romannumeral3)] $H \geq 2N$: The attention heads are clustered into $N$ groups corresponding to $N$ tasks, where each group contains at least $2$ heads. Within each group, 
    the attention heads learn to solve a single task, say, task $n$, via the same mechanism as in the single-task setting. 
    But the main difference is that these heads learns to use the true support $\cS_n$ to solve the task. 
    Thus, this group of heads learns to implement a debiased GD predictor based on data $\{ x_{\ell, \cS_n}, y_{\ell, n} \}_{\ell \in [L]}$, where $x_{\ell, \cS_n}$ is the subvector of $x_{\ell}$ indexed by $\cS_n$ and $y_{\ell, n}$ is the $n$-th entry of $y_{\ell}$.
    In particular, the knowledge of $\{\cS_n\}$ is encoded in the KQ circuit, which is reflected in the support of  parameter $\omega^{(h)}$. Overall, the transformer learns to {\color{luolan} implement $N$ independent debiased GD predictors} at the same time.

	\item [(\romannumeral4)] $2<H<2N$: In this case, the learned pattern varies in repeated runs. 
	In our experiment with $H =3$ and $N = 2$, we observe that an interesting head that simultaneously serves as a positive head for task one and a negative head for task two. Interestingly, in this case, the model effectively implements {\color{luolan} $N$ independent debiased GD predictors} as in the previous case. 
    But it uses a complicated \luolan{superposition mechanism} \citep{elhage2022toy} to achieve this. 
    
	\end{itemize}

\section{Conclusion}
We studied how multi-head softmax attention models learn in-context linear regression, revealing emergent structural patterns that drive efficient learning. 
Our analysis reveals that trained transformers develop structured key-query (KQ) and output-value (OV) circuits, with attention heads forming opposite positive and negative groups. 
These patterns emerge early in training and are preserved throughout the training process, enabling the transformers to approximately implement a debiased gradient descent predictor. 
Additionally, we show that softmax attention generalizes better than linear attention to longer sequences during testing and that learned attention structures extend to anisotropic and multi-task settings. 
As for future work, some interesting directions include ICL with nonlinear regression data or transformer models with at least two layers. 
Another intriguing problem is systematically demystifying the superposition phenomenon for multi-task ICL in the general case. 

\newpage
\bibliographystyle{ims}
\bibliography{reference}

\newpage 
\appendix
\section{Auxiliary Experiment Results}\label{ap:add_extention}

In this section, we provide additional experiment results to support the discussions in \S\ref{sec:empirical} and \S\ref{sec:discuss_extend}. 
\subsection{Auxiliary Experiment Results for Section \ref{sec:empirical}}\label{ap:add_empirical}
 
To back up the argument that multi-head attentions implement the same predictor, we consider the same setting as in \S\ref{sec:empirical} with various number of heads and visualize the training dynamics of the KQ and OV parameters.
In particular, we consider $H \in \{2, 3, 4\}$, and set $d=5$, $L=40$, and $\sigma^2=0.1$. 

In  Figure \ref{fig:dyn_different_heads}, we plot the training dynamics of the KQ and OV parameters for different numbers of heads. 
The first column shows $\{ \omega^{(h)}\}_{h\in[H]}$ and the second column shows $\{ \mu^{(h)}\}_{h\in[H]}$ for different multi-head attention models. Our findings are consistent with theoretical results in \S\ref{sec:discuss1}, and can be summarized as follows: 

\begin{itemize}
\setlength{\itemsep}{-1pt}
\item [(i)] The patterns of the KQ and OV matrices are consistent across different numbers of heads, and are the same as in those described in Observation 1 and Observation 2 in \S\ref{sec:empirical}. In particular, the property of homogeneous KQ scaling and zero-sum OV hold for all models. 
\item [(ii)] More importantly, across different models, the limiting value of $\omega^{(h)}$'s and $\mu_{+} = \sum_{h\in \cH_{+}} \mu^{(h)}$ are the same. Besides, $\mu_{-} = \sum_{h\in \cH_{-}} \mu^{(h)} = - \mu_{+}$. In particular, for all non-dummy heads, the magnitude of $|\omega^{(h)}|\sim0.13$ for each model, and $\mu_{+}$ and $|\mu_{-}|$ are both $\sim3.5$.
\item [(iii)]  When $H > 2$, we observe the emergence of dummy heads, which corresponds to a head $h \in [H]$ such that $|\omega^{(h)}| \approx |\mu^{(h)}| \approx 0$.
\end{itemize}

By the observation (ii) above, we know that when $H$ ranges from $2$ to $4$, the trained multi-head attention models all implements the same predictor as given in \eqref{eq:transformer_nonparam_main}, which is a sum of two kernel regressors. 
Moreover, since the magnitude of the KQ parameters are small, as we show in \eqref{eq:dgd_approx}, such a predictor is approximately equivalent to a debiased gradient descent predictor. 
Therefore, the experimental results support our theoretical findings in \S\ref{sec:discuss1} that multi-head attentions learn to implement the same predictor.

\begin{figure}[!ht]
    \centering
    \begin{minipage}{0.43\textwidth}
        \centering
        \includegraphics[width=\linewidth]{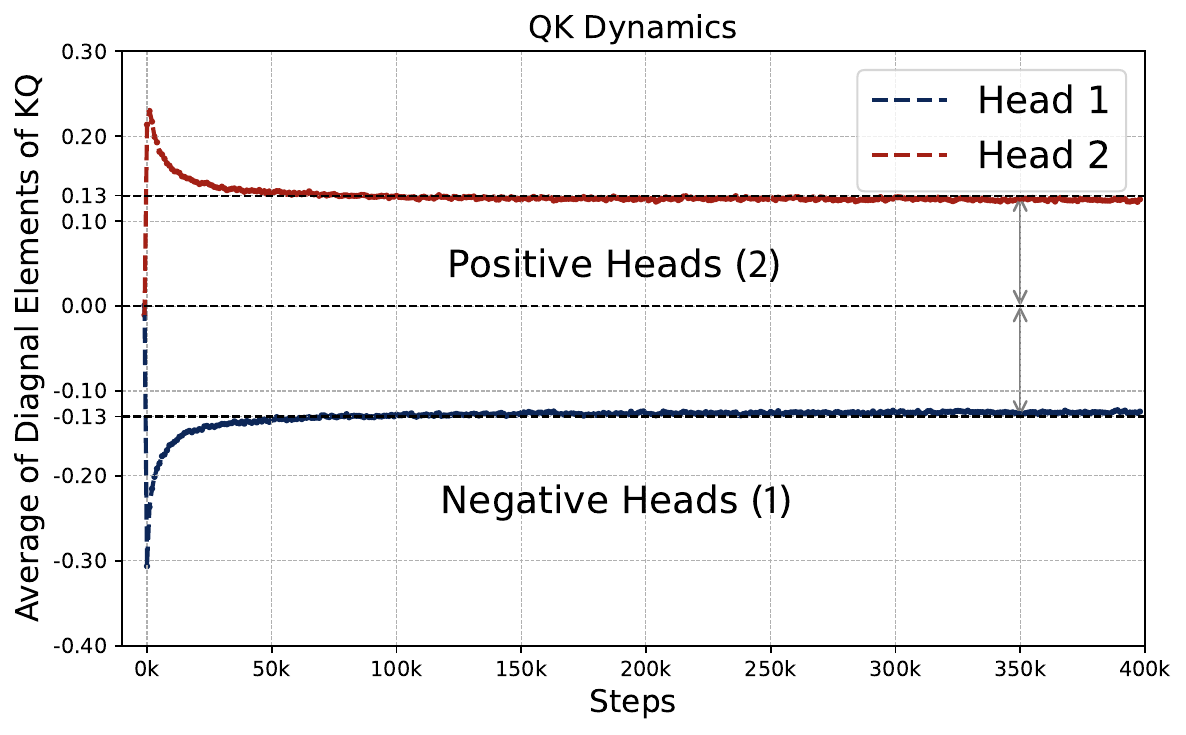}
    \end{minipage}
    \begin{minipage}{0.43\textwidth}
        \centering
        \includegraphics[width=\linewidth]{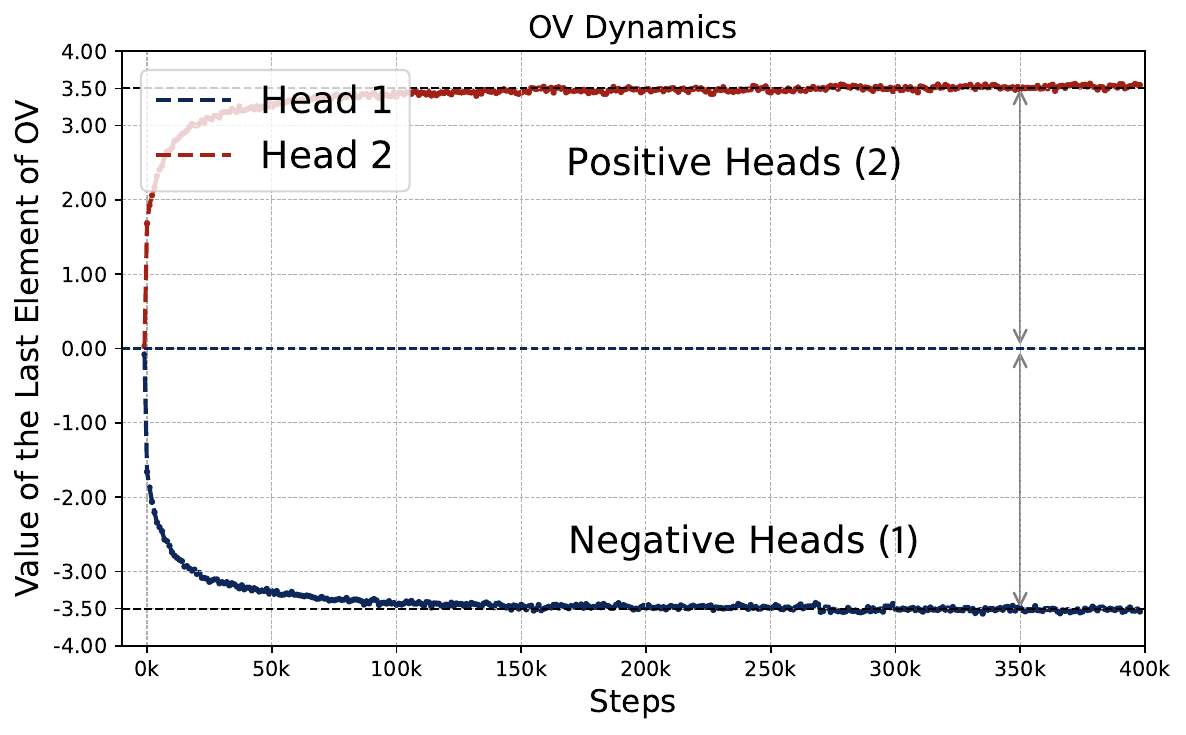}
    \end{minipage}
    \begin{minipage}{0.43\textwidth}
        \centering
        \includegraphics[width=\linewidth]{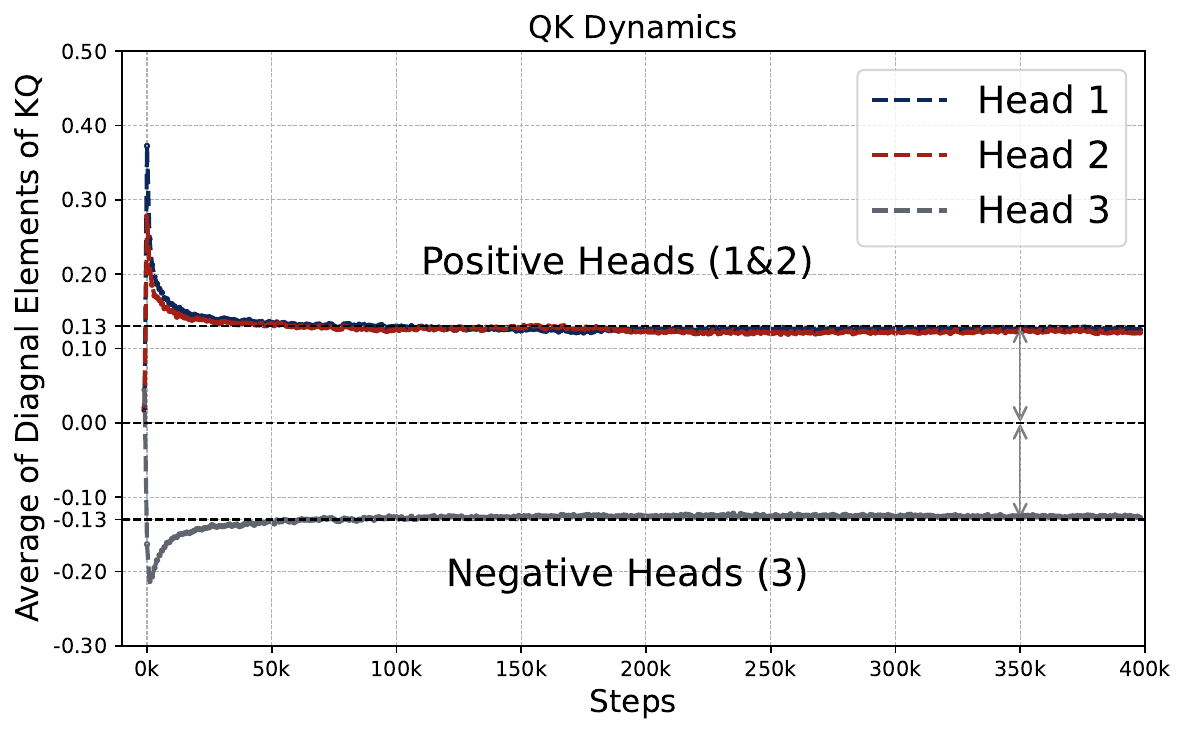}
    \end{minipage}
    \begin{minipage}{0.43\textwidth}
        \centering
        \includegraphics[width=\linewidth]{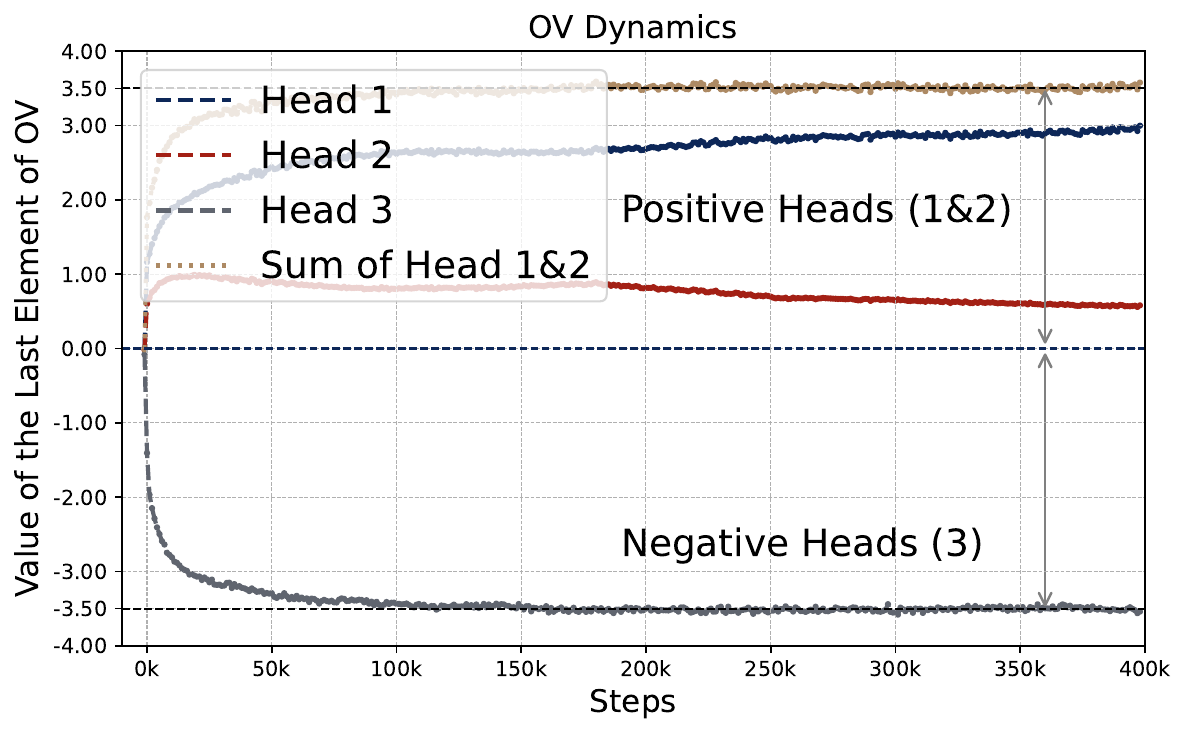}
    \end{minipage}
    \begin{minipage}{0.43\textwidth}
        \centering
        \includegraphics[width=\linewidth]{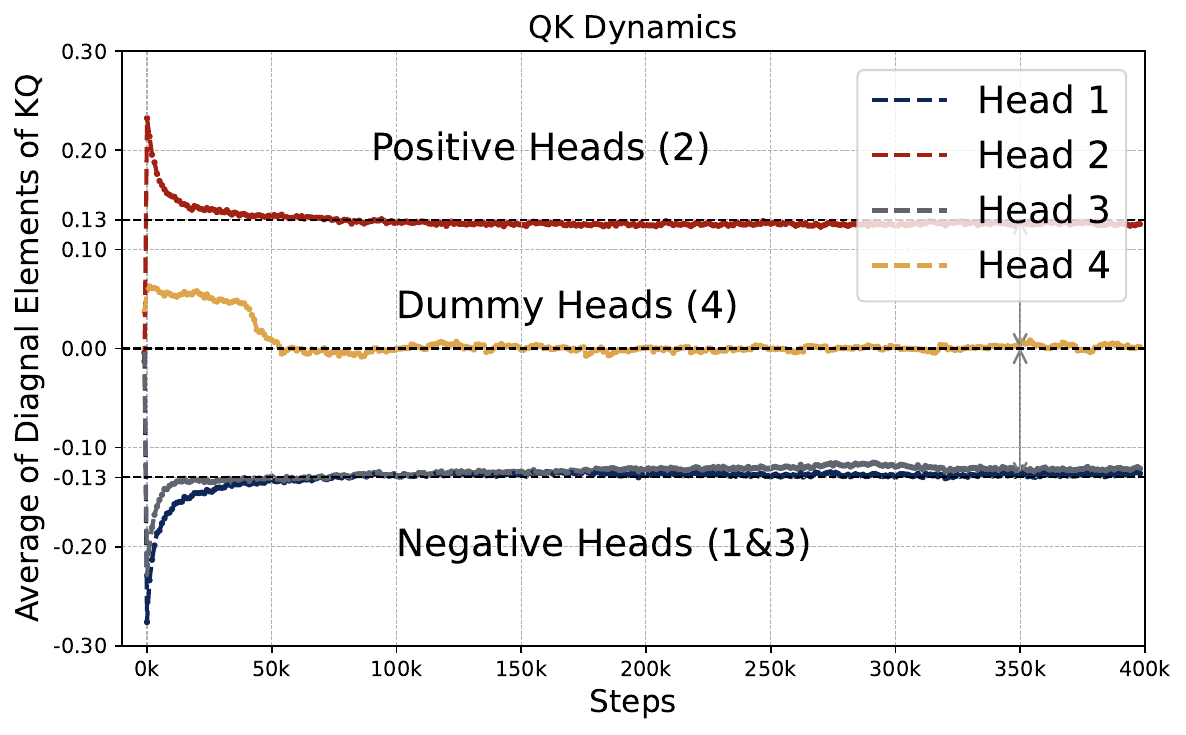}
    \end{minipage}
    \begin{minipage}{0.43\textwidth}
        \centering
        \includegraphics[width=\linewidth]{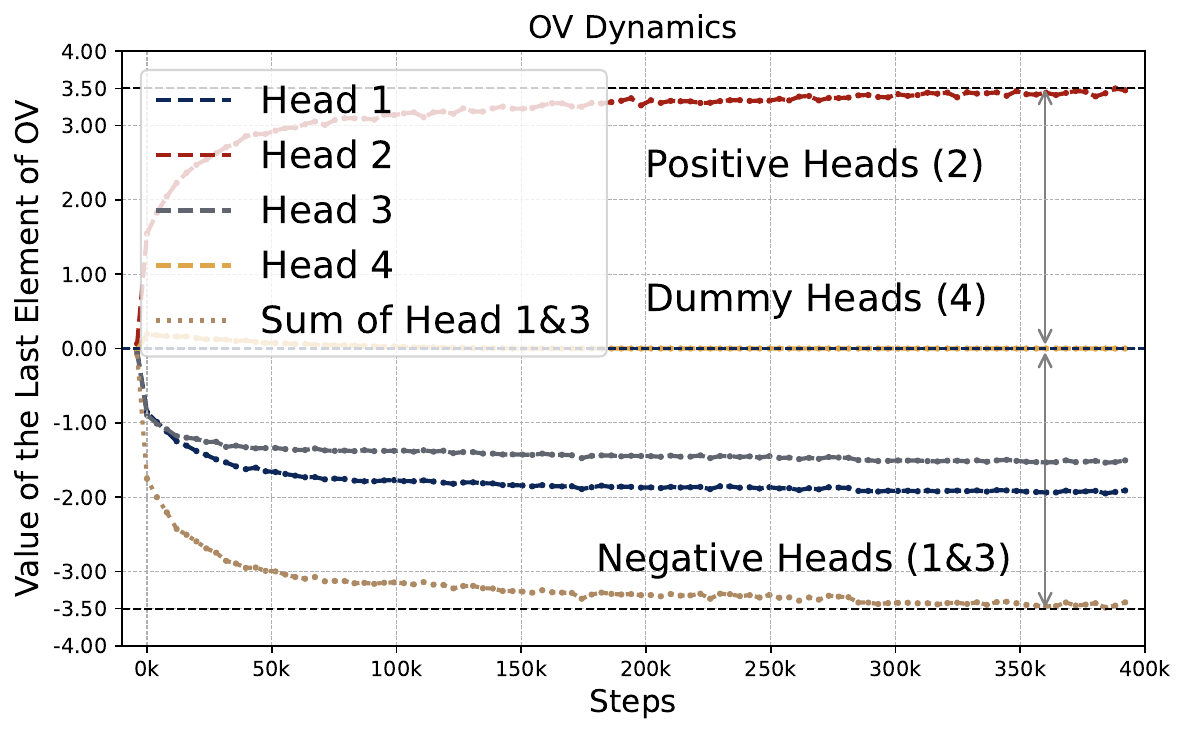}
    \end{minipage}
    \caption{Comparison of training dynamics under the same setup with the number of heads ranging from $2$ to $4$. The magnitude of the KQ parameters is the same for all non-dummy heads in all models. 
    The values of $\mu_{+}$ and $\mu_{-}$ are also consistent across different models. This shows that all these multi-head attention models learn to implement the same predictor. Furthermore, in this experiment, we observe an dummy head when $H = 4$. Dummy heads will appear when $H > 2$. }
    \label{fig:dyn_different_heads}
\end{figure}

\paragraph{Training Dynamics of Dummy Head.}

As a supplementary to Figure \ref{fig:2head_dyn}, which only plots the two-head cases, we provide the evolution of KQ and OV circuits for the four-head cases.
Different from the two-head scenarios where the dummy heads will not occur due to the constraint of solution manifold $\mathscr{S}^*$, i.e., there should be at least one positive head and one negative head, there may occur dummy heads when $H\geq3$. In our experiment, we observe a dummy head in the four-head model. We note that whether the dummy head appears and the number of dummy heads are random and seem to depend on the initialization and the optimization algorithm. We also observe dummy heads for $H =3$ with different random seeds.

Furthermore, as shown in Figure \ref{fig:heatmap_dyn_4head}, the KQ matrix of Head $4$ demonstrates a chaotic pattern at the early stage and converges to a zero-matrix. 
In correspondence, the OV vectors of the dummy heads converge to zero vectors, ensuring the dummy heads will not contribute to the output.
For positive and negative heads, the behavior is similar to two-head cases. These observations can also be seen from the last row of Figure \ref{fig:dyn_different_heads}. 

\begin{figure}[!ht]
    \centering
    \begin{minipage}{0.9\textwidth}
        \centering
        \includegraphics[width=\linewidth]{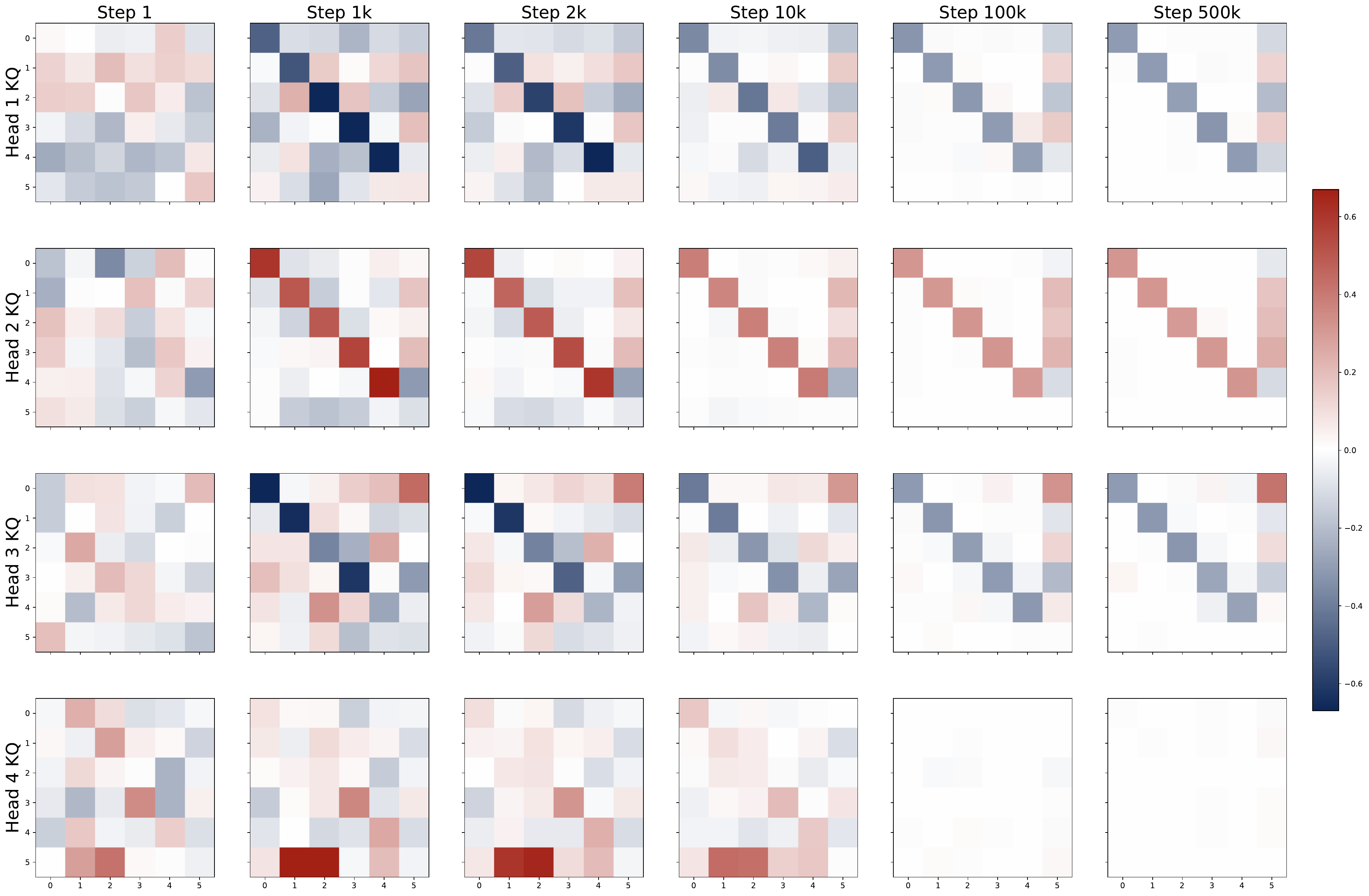}
        \vspace{1pt}
    \end{minipage}
    \begin{minipage}{0.44\textwidth}
        \centering
        \includegraphics[width=\linewidth]{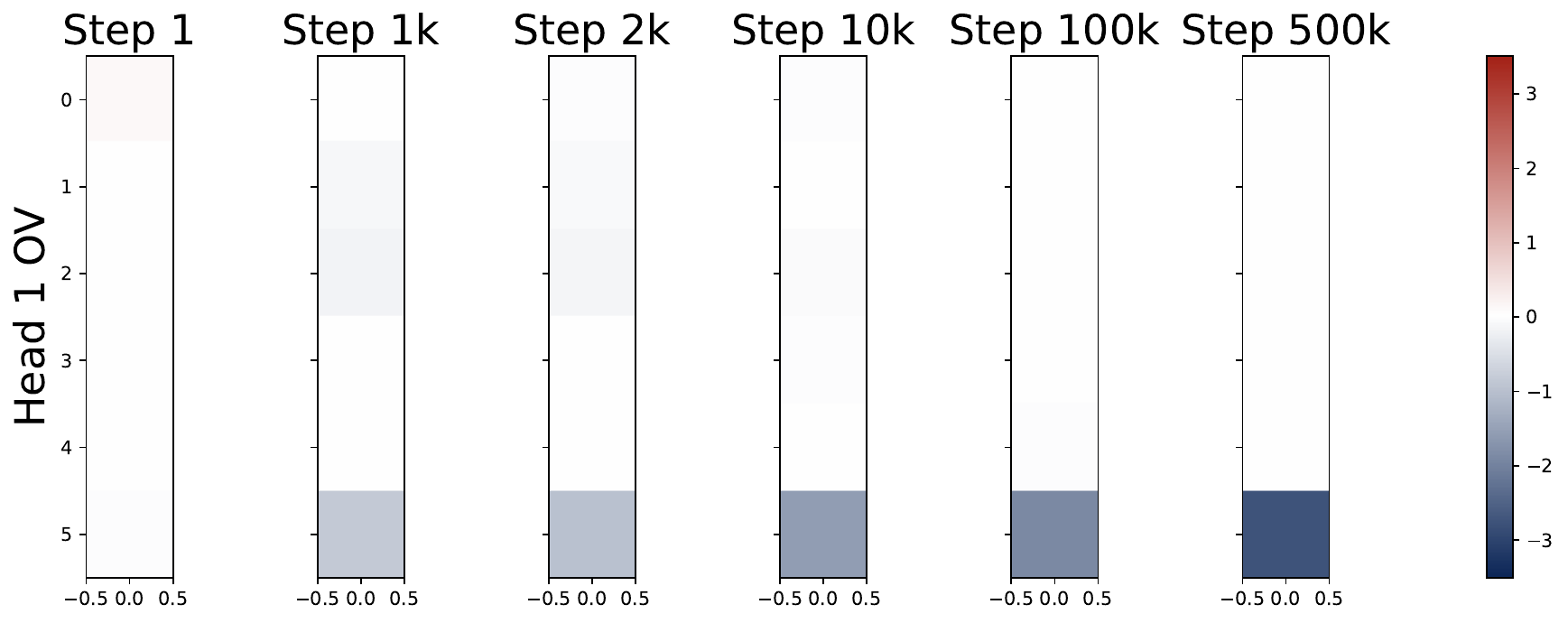}
    \end{minipage}\quad
    \begin{minipage}{0.44\textwidth}
        \centering
        \includegraphics[width=\linewidth]{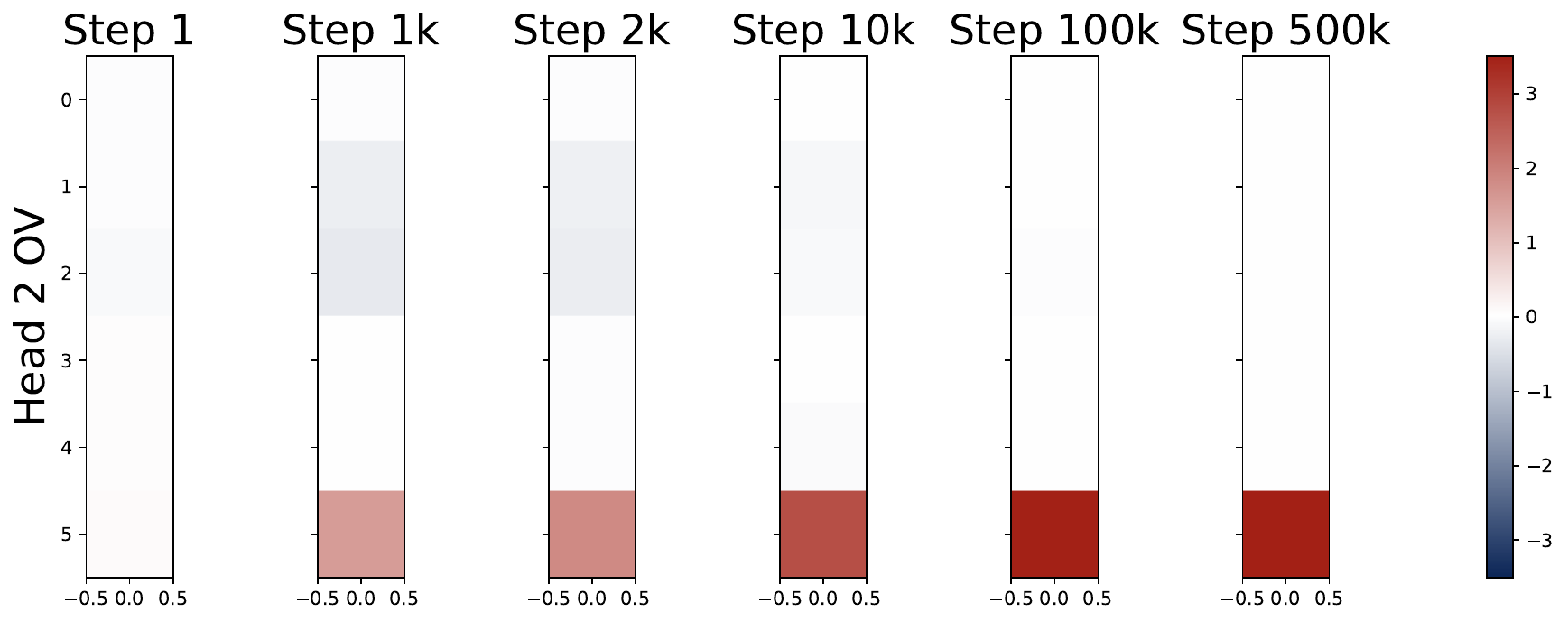}
    \end{minipage}
    \begin{minipage}{0.44\textwidth}
        \centering
        \includegraphics[width=\linewidth]{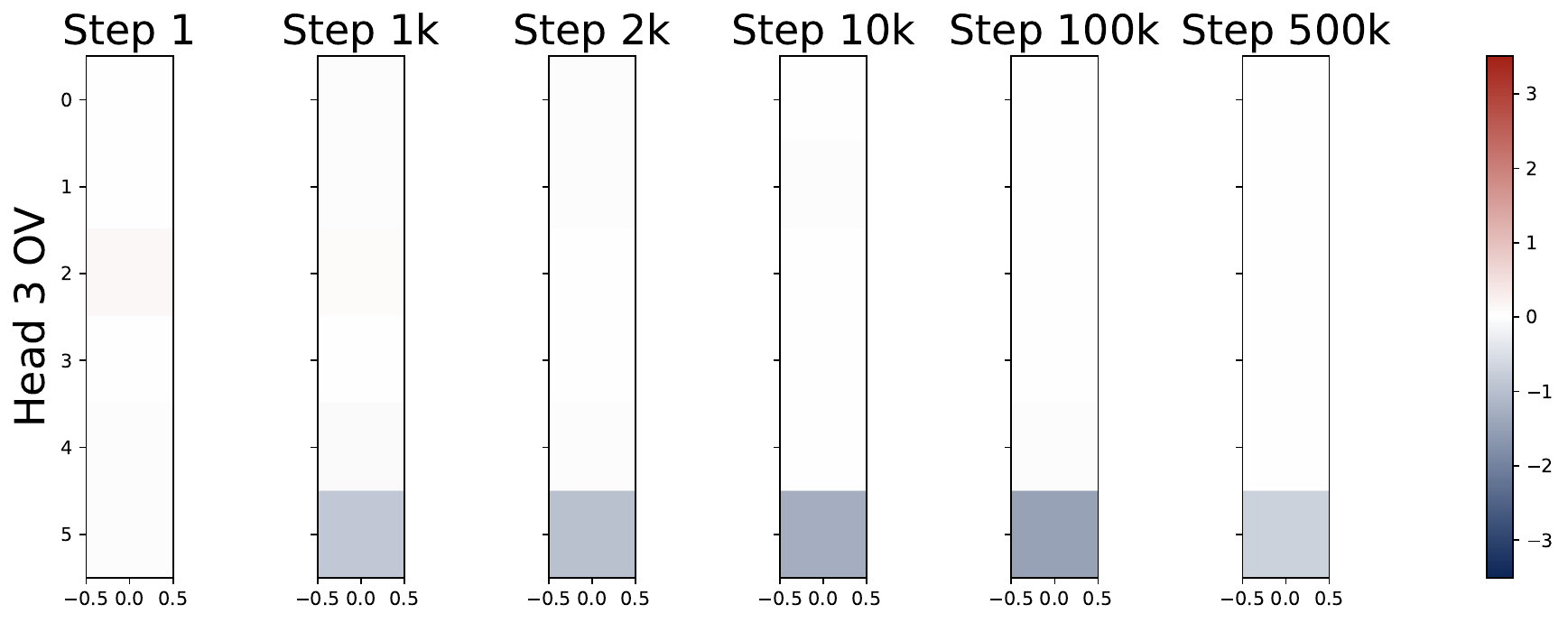}
    \end{minipage}\quad
    \begin{minipage}{0.44\textwidth}
        \centering
        \includegraphics[width=\linewidth]{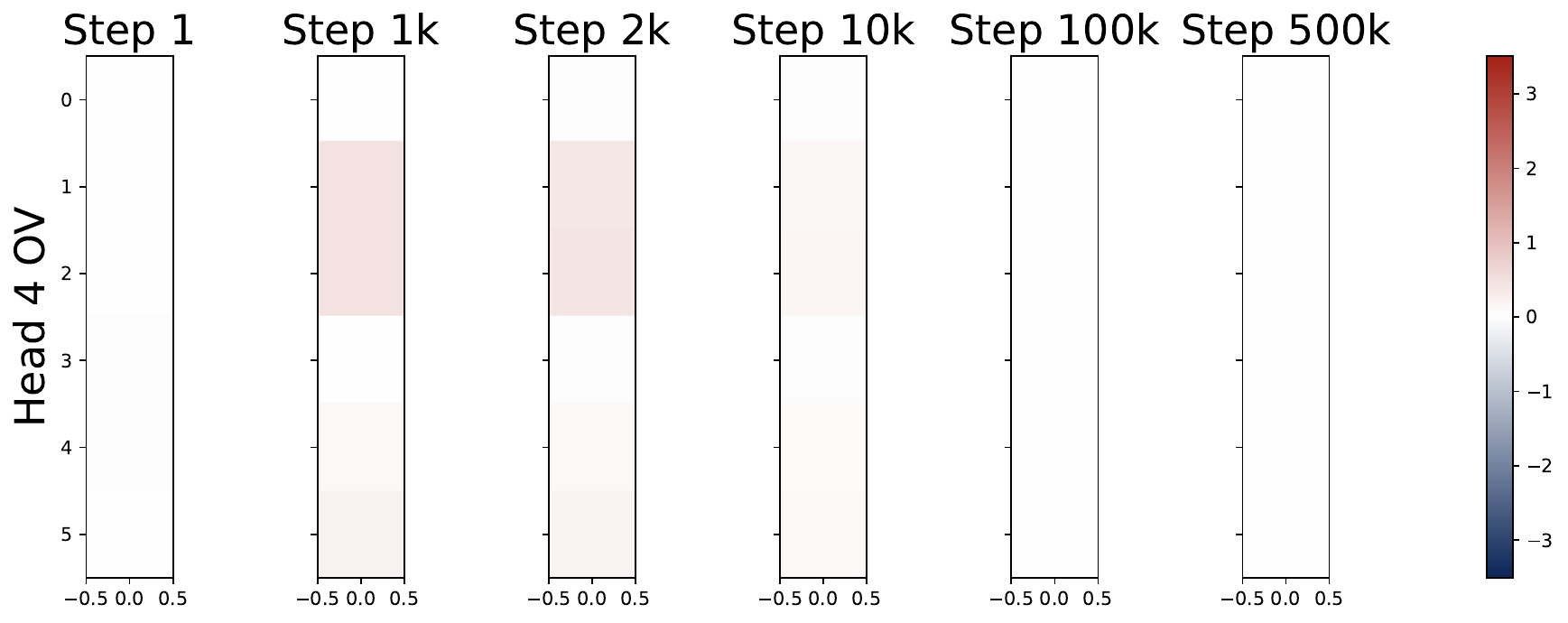}
    \end{minipage}
    \caption{Heatmaps of the KQ matrices and OV vectors along training epochs with $H=4$, $d=5$, and $L=40$, trained over $T=5\times10^5$ steps with random initialization. The behaviors of the first three heads are similar to the two-head case as shown in Figure \ref{fig:2head_dyn}. The KQ matrix of Head $4$ (dummy head) demonstrates a chaotic pattern at the early stage and converges to a zero-matrix. The OV vectors of the dummy heads converge to zero vectors.}
    \label{fig:heatmap_dyn_4head}
\end{figure}

\subsection{Additional Results for Anisotropic Covariates in Section \ref{sec:discuss2}} \label{ap:add_extention_noniso}

In the following, we provide additional details of the experiments on anisotropic covariates, which is introduced in \S\ref{sec:discuss2}.

\paragraph{Kac-Murdock-Szeg\"o matrix} 
A key property of the Kac-Murdock-Szeg\"o matrix is that its inverse matrix is a tridiagonal matrix. 
Let $\Lambda=(1-\rho^2)\cdot\Sigma^{-1}$, then the entries of $\Lambda$ are given by
$$
\Lambda_{ii}=(1+\rho^2)\cdot\ind(i\neq1\text{~and~}d)+\ind(i=1\text{~or~}d),\quad \Lambda_{i,i+1}=\Lambda_{i+1,i}=-\rho,\quad \text{for all } i\in[d].
$$
The rest of the entries are zero. See, e.g., Theorem 2.1 in \cite{fikioris2018spectral} for more details.
We use this property to examine the value of the KQ matrices learned by the attention model.

\paragraph{KQ Implements Pre-Conditioned GD with $\Omega = \widetilde\Sigma^{-1}$.} 
We show in \eqref{eq:pre_cond_GD_estimator} that multi-head attention learns to implement a pre-conditioned version of the debaised GD, where $\Omega$ is the pre-conditioning matrix. Here $\gamma\cdot \Omega$ appears in the KQ matrix of the positive head. 
In Figure \ref{fig:noniso_inverse}, we plot the inverse of the top-left $d\times d$ submatrices of the KQ matrices, and compare them with $\Sigma$ and $\tilde \Sigma$. 
This figures shows that these matrices are all very close, which is the case when $L$ is large. 

In addition, since the $\Sigma^{-1}$ is a
tridiagonal matrix, its $(i, i+2)$- and 
$(i+2, i)$-entries are zero.
As shown in Figure \ref{fig:noniso_vis_all}, $\Omega$ not only have significant values in the tridiagonal entries but also exhibit very small values in the $(i, i+2)$- and $(i+2, i)$-entries. This indicates that the trained model does not strictly implement pre-conditioned GD using $\Sigma^{-1}$.
In comparison, the adjusted $\widetilde\Sigma^{-1}$ (see bottom-right of Figure \ref{fig:noniso_vis_all}), also exhibits small values beyond the tridiagonal entries due to the perturbation of $\tr(\Sigma)/L \cdot I_d$.
By comparing KQ matrices and $\widetilde\Sigma^{-1}$, we can show that the KQ matrices of two heads take the form $\gamma\cdot\widetilde\Sigma$ and $-\gamma\cdot\widetilde\Sigma$ with a small scaling $\gamma$, aligning with the conjecture in \S \ref{sec:discuss1}. Meanwhile, note that the difference between $\Sigma^{-1} $ and $\tilde \Sigma^{-1}$ is roughly $O(1/L)$, which is negligible when $L$ is large. Thus, it suffices to identify the estimator in \eqref{eq:noniso_vgd} with simple pre-conditioned GD predictor $L^{-1} \sum_{\ell = 1}^L y_{\ell} \cdot x_{\ell}^\top \Sigma^{-1} x_q$ in the low-dimensional regime where $d/ L \rightarrow 0$.

\begin{figure}[!t]
    \centering
    \begin{minipage}{0.4\textwidth}
        \includegraphics[width=\linewidth]{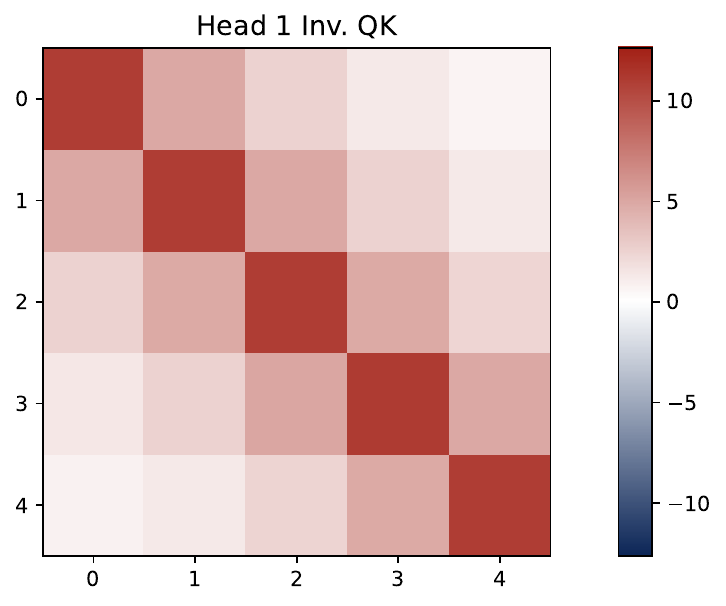}
    \end{minipage}\quad
    \begin{minipage}{0.4\textwidth}
        \includegraphics[width=\linewidth]{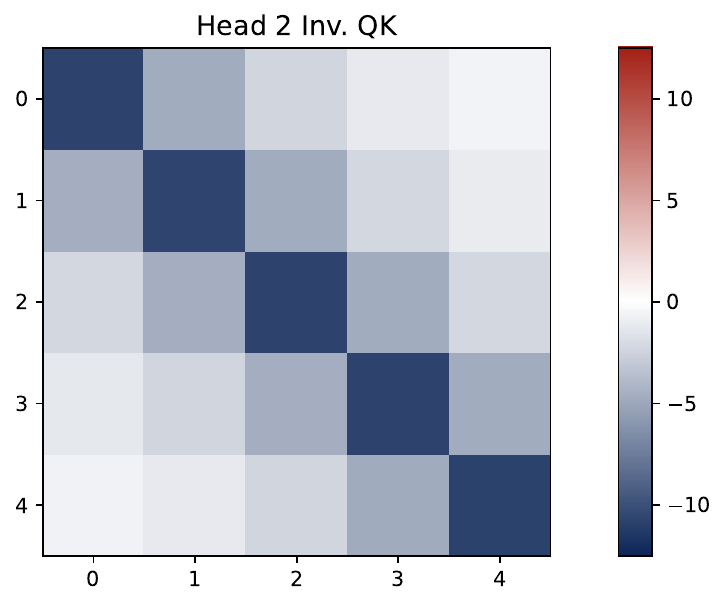}
    \end{minipage}
    \begin{minipage}{0.4\textwidth}
        \includegraphics[width=\linewidth]{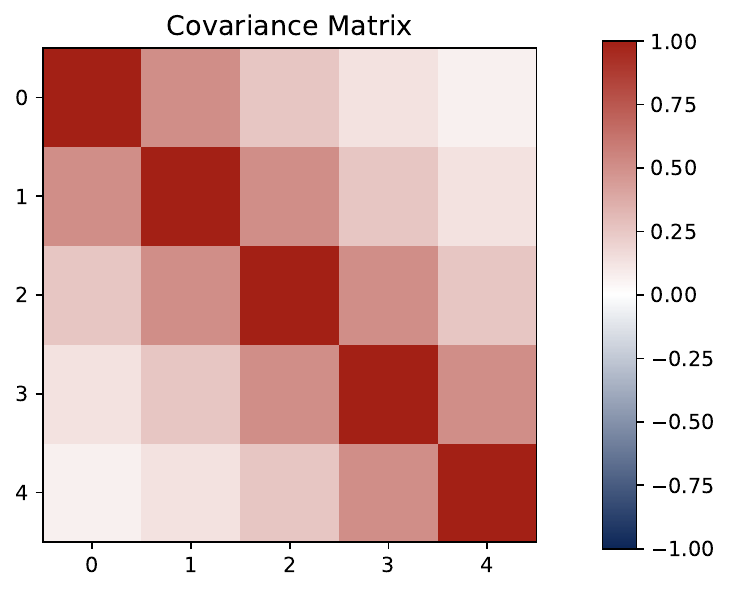}
    \end{minipage}\quad
    \begin{minipage}{0.4\textwidth}
        \includegraphics[width=\linewidth]{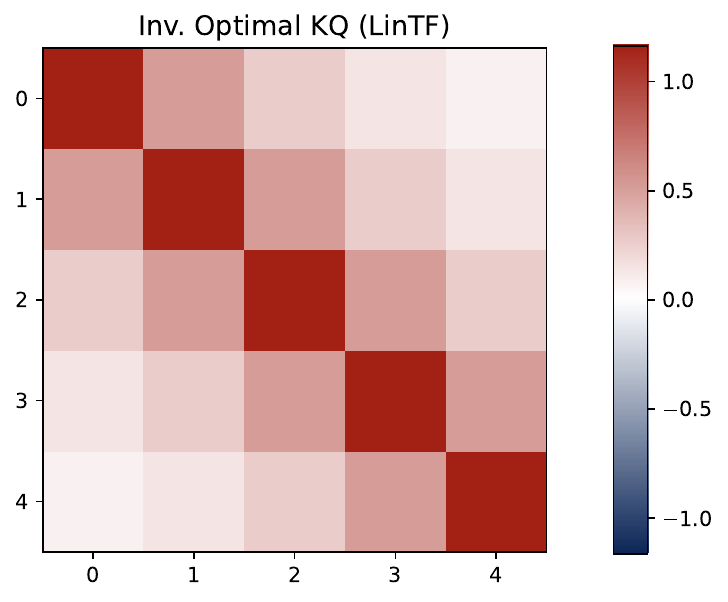}
    \end{minipage}
    \caption{Comparison of inverse KQ matrices trained on anisotropic covariates with the ground-truth covariance matrix $\Sigma$ and the inverse matrix of conjectured optimal KQ pattern $\widetilde{\Sigma}$ in \eqref{eq:noniso_vgd}. The learned KQ matrices exhibit an positive-negative pattern, similar to the isotropic case in \S\ref{sec:empirical}. Additionally, the inverse KQ matrix is nearly proportional to the ground-truth covariance matrix and $\widetilde{\Sigma}$, suggesting that the model implements a pre-conditioned GD predictor.}  
    \label{fig:noniso_inverse}
\end{figure}

\subsection{Additional Results for Multi-Task Linear Regression in Section \S\ref{sec:multi-task}}\label{ap:add_extention_multitask}

In the following, we introduce the details of the experiments on multi-task linear regression, which is discussed in \S\ref{sec:multi-task}.

\paragraph{Experimental Setups.}
Recall that the data generation process is defined in Definitions \ref{def:multi_task} and \eqref{def:icl_multitask}. 
We let $\{(x_\ell, y_\ell)\}_{\ell \in [L]}$ denote the ICL examples, where $x_\ell \in \mathbb{R}^d$ and $y_\ell \in \mathbb{R}^N$.
We consider the isotropic case with $\mathsf{P}_x = \mathcal{N}(0, I_d)$ and $\mathsf{P}_\beta = \mathcal{N}(0, I_d / d)$. Moreover, in our experiment, we focus on the two-task case with $N=2$, $d=6$, $L=40$, $\sigma^2=0.1$. Moreover, we set $\cS_1=\{1, 2, 3, 4\}$ and $\cS_2=\{3, 4, 5, 6\}$, i.e., the features of the tasks have overlap $\{ 3, 4\}$. We train multi-head attention models with $H \in\{1,2,3,4\}$. The rest of the experimental setup is the same as that in \S\ref{sec:empirical}.

To simplify the notation, we let $\cS^*=\cS_1\cap\cS_2$ and $\cS^c=[N]\backslash\cS^*$. 
Besides, for each $n \in [N]$, we let $y_{\ell, n}$ denote the $n$-th entry of the response vector $y_\ell$.
For any subset $\cS \subseteq [d]$, we let $x_{\ell, \cS}$ denote the subvector of $x_\ell$ with indices in $\cS$.
As discussed in \S\ref{sec:discuss2}, the trained transformer exhibits global patterns in the KQ and OV matrices. In specific, by writing the KQ and OV matrices as block matrices as in \eqref{eq:multi_task_global_pattern}, we have \eqref{eq:multi_task_global_pattern2} for all $h\in [H]$. In other words, each head $h$ is captured by parameters $\omega^{(h)} \in \RR^d$ and $\mu^{(h) } \in \RR^N$. 
As we will show as follows, each four attention models illustrate one of the four possible modes of behavior respectively. 

\paragraph{Mode \MakeUppercase{\romannumeral1} ($H=1$): Single Weighted Kernel Regressor.} 
\begin{wrapfigure}{r}{0.4\textwidth}  
	\centering 
    \vspace{-5mm}
	\includegraphics[width=0.4\textwidth,trim={0 0 0cm 0},clip]{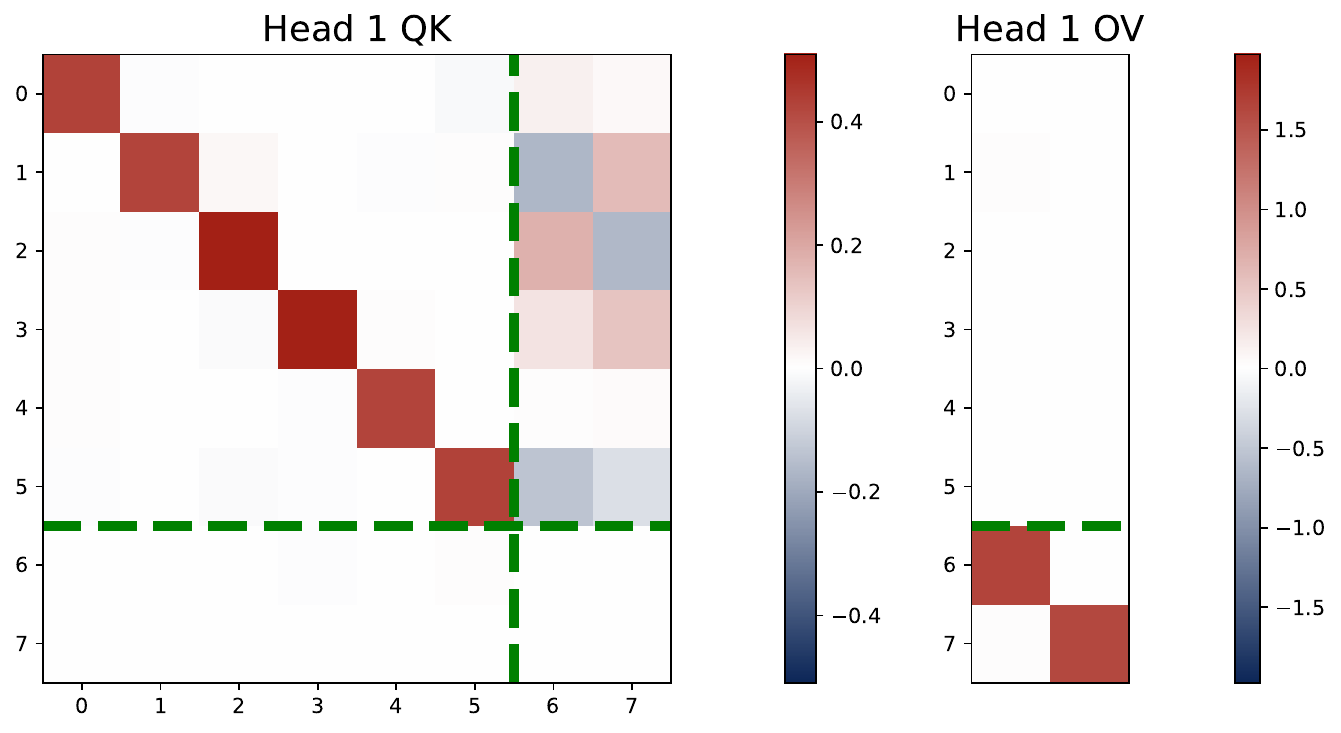} 
    \vspace{-7mm}
	\caption{\small Heatmap of KQ and OV matrices trained on multiple tasks with $H=1$. The trained attention model learns to implement a single weighted kernel regressor.}
	\label{fig:multitask_head1}
	\vspace{-15mm}   
	\end{wrapfigure}
In Figure \ref{fig:multitask_head1}, we visualize the learned pattern of single-head attention. 
Upon closer examination, the third and fourth entries, corresponding to $\cS_1\cap\cS_2$, have a larger value ($\sim0.5$) than other diagonal entries ($\sim0.42$). That is, $\omega^{(1)}_j \approx 0.42$ for $j \notin \{3,4\}$ and $\omega^{(1)}_3 \approx \omega^{(1)}_4 \approx 0.5$.  
In the general case, we have 
\begin{tcolorbox}[frame empty, left = 0.1mm, right = 0.1mm, top = 0.1mm, bottom = 1mm]
    \vspace{-10pt}
    $$
    \omega_{\cS^*}^{(1)}=\breve\omega^*\cdot\one_{|\cS^*|},\quad\omega^{(1)}_{\cS^c}=\breve\omega^c\cdot\one_{|\cS^c|},\quad\mu^{(1)}=\breve\mu\cdot\one_N,
    $$

    \end{tcolorbox}
{\noindent   where $\breve\omega^*$, $\breve\omega^c$, and $\breve\mu$ are three scaling factors.}

This indicates that,  the trained single-attention model learns a weighted kernel regressor. 
Specifically, for each task $n\in[N]$, the predictor is 

$$
\hat{y}_{q,\tianqin{n}}=\breve\mu\cdot\sum_{\luolan{\ell}=1}^L\frac{\exp(\breve\omega^*\cdot\langle x_{\luolan{\ell},\cS^*},x_{q,\cS^*}\rangle+\breve\omega^c\cdot\langle x_{\luolan{\ell},\cS^c},x_{q,\cS^c}\rangle)\cdot y_{\luolan{\ell},\tianqin{n}}}{\sum_{\ell=1}^L\exp(\breve\omega^*\cdot\langle x_{\ell,\cS^*},x_{q,\cS^*}\rangle+\breve\omega^c\cdot\langle x_{\ell,\cS^c},x_{q,\cS^c}\rangle)},
$$
which applies uses a same set of weights to aggregate responses for all tasks.

\paragraph{Mode \MakeUppercase{\romannumeral2} ($H=2$): Single Weighted Pre-Conditioned GD Predictor.} 
As shown in Figure \ref{fig:multitask_head2}, the two-head attention learns an opposing positive-negative pattern which is nearly identical to the single-task we observed in \S \ref{sec:empirical}.
However, similar to {Mode \MakeUppercase{\romannumeral1}}, the attention model assigns more weight on the informative entries $\cS^*$ ($\sim0.13$ vs. $\sim0.12$).
In addition, the learned scale is much smaller than that in the previous single-head case, which is also observed in the single-task setting.
The pattern learned by the attention model can be summarized as below: 
\begin{tcolorbox}[frame empty, left = 0.1mm, right = 0.1mm, top = 0.1mm, bottom = 1mm]
$$
\omega_{\cS^*}^{(1)}=-\omega_{\cS^*}^{(2)}=\breve\omega^*\cdot\one_{|\cS_*|},\quad\omega_{\cS^c}^{(1)}=-\omega_{\cS^c}^{(2)}=\breve\omega^c\cdot\one_{|\cS^c|},\quad\mu^{(1)}=-\mu^{(2)}=\breve\mu\cdot\one_N,
$$
\end{tcolorbox}
{\noindent where $\breve\omega^*$, $\breve\omega^c$, and $\breve\mu$ are three scaling factors.}
Therefore, when $H=2$, for each task $n\in[N]$, the predictor approximately  takes the following form:
\begin{align*}
    \hat{y}_{q,\tianqin{n}}
	&\approx\frac{2\breve\mu\breve\omega^*}{L}\cdot\sum_{\ell=1}^L\langle \bar{x}_{\luolan{\ell},\cS^*},x_{q,\cS^*}\rangle\cdot y_{\luolan{\ell},\tianqin{n}}+\frac{2\breve\mu\breve\omega^c}{L}\cdot\sum_{\ell=1}^L\langle \bar{x}_{\luolan{\ell},\cS^c},x_{q,\cS^c}\rangle\cdot y_{\luolan{\ell},\tianqin{n}}.
\end{align*}
Here we use the fact that $\breve \omega$ is small. 
Thus, the trained attention model solves the multi-task linear regression using weighted debiased GD predictor, which assigns a slightly larger weight to $\cS^*$.
\begin{figure}[!ht]
    \centering
    \begin{minipage}{0.47\textwidth}
        \centering
        \includegraphics[width=\linewidth]{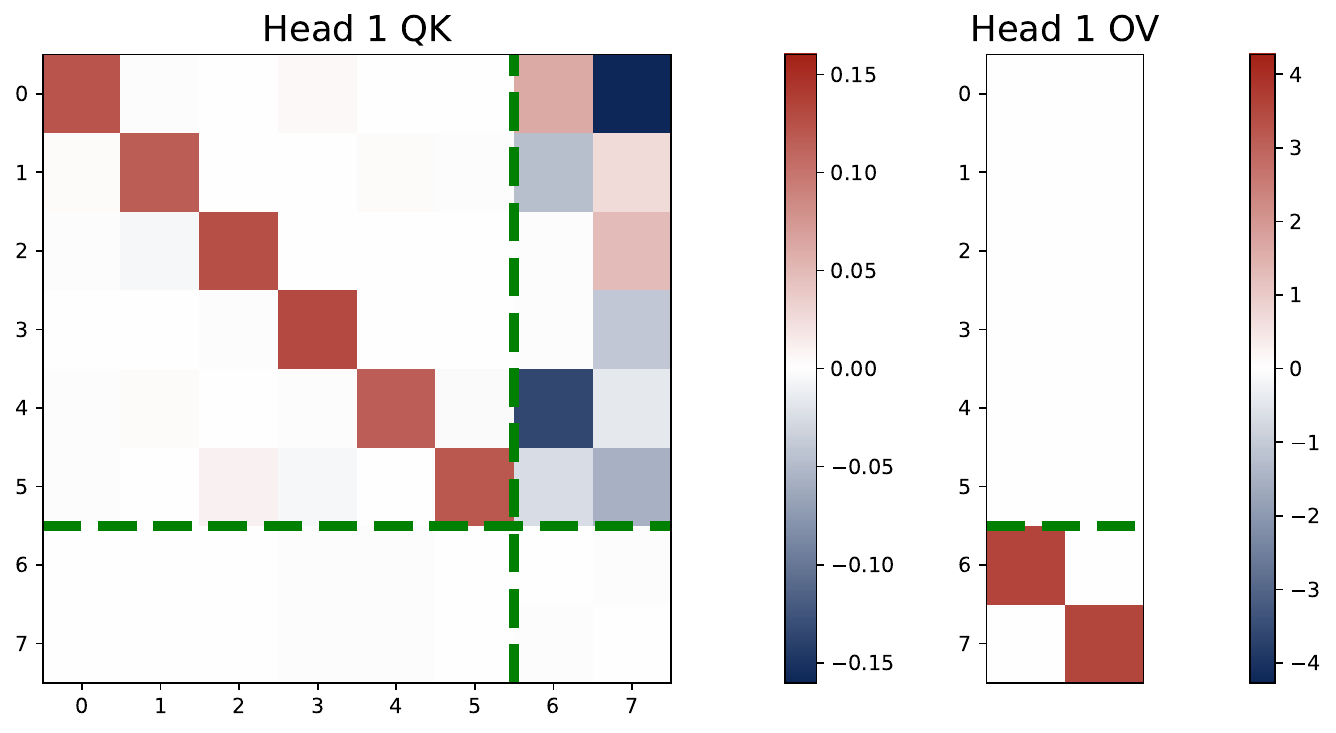}
    \end{minipage}\quad
    \begin{minipage}{0.47\textwidth}
        \centering
        \includegraphics[width=\linewidth]{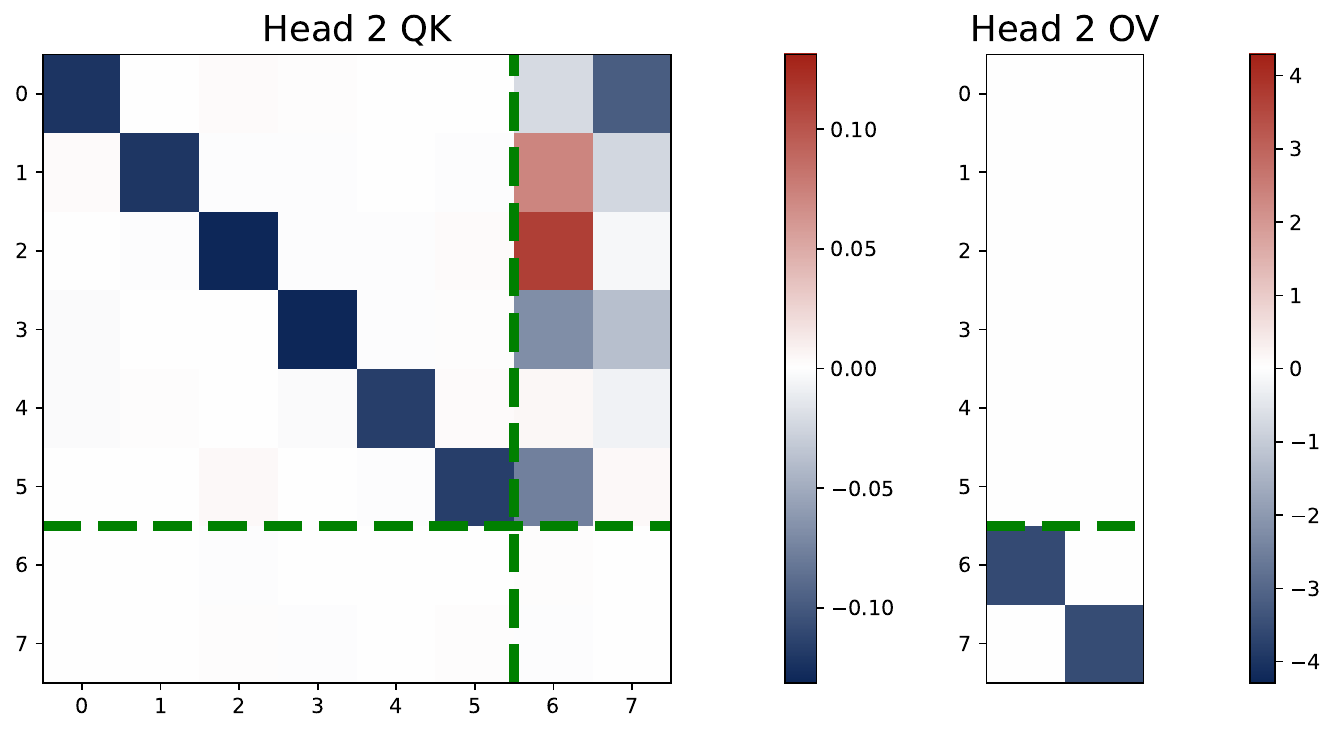}
    \end{minipage}
    \caption{Heatmap of KQ and OV matrices trained on multi-task problem with $H=2$. In the trained model, the KQ matrices exhibits a diagonal-only pattern and the diagonal values have two magnitudes. The magnitude of $\{ \omega_j^{(h)}\}_{j \in \{3, 4\}} $ is slightly larger than the rests of the diagonal entries. Besides, $\mu^{(1)} = -\mu^{(2)}$ and they are proportional to an all-one vector.}
    \label{fig:multitask_head2}
\end{figure}

\paragraph{Mode \MakeUppercase{\romannumeral3} ($H\geq2N$): $N$ Independent Debiased GD Predictors.}
Recall for a single task, at least two heads are required to implement a debiased   GD predictor.
Here, we explore a regime where $H \geq 2N$. Thus, the transformer has the expressive power to use two attention heads to solve each task independently, via implementing debiased GD predictors

As shown in Figure \ref{fig:multitask_head3}, the experiment result on a four-head model supports this hypothesis. 
In the trained model, Heads 1 and 2 are coupled to solve Task 1, focusing exclusively on $\cS_1$ in KQ while contributing only to the prediction of $\hat{y}_{q,1}$.
This can be seen by noting that $\omega_{\cS_1}^{(1)}=-\omega_{\cS_1}^{(2)}=\breve\omega \cdot\one_{|\cS_1|}$, $\omega_{\cS^c_1}^{(1)}=-\omega_{\cS^c_1}^{(2)}=\zero_{|\cS^c_1|}$, $\mu^{(1)}_1 = -\mu^{(2)}_1 =  \breve \mu$, and $\mu^{(1)}_2 =  \mu^{(2)}_2 = 0$. 
Here $\breve\omega$ and $\breve\mu$ are two positive scaling parameters and $\mu^{(h)}_n$ denotes the $n$-th entry of $\mu^{(h)}$.
Similarly, Heads 3 and 4 are coupled to solve Task 2, following the same structure.
In the general case, 
consider $H=2N$ with $(2n-1)$-th and $2n$-th heads solving the $n$-th task together. 
For all $n\in[N]$, the learned pattern satisfies
\begin{tcolorbox}[frame empty, left = 0.1mm, right = 0.1mm, top = 0.1mm, bottom = 1mm]
$$
\omega_{\cS_n}^{(2n-1)}=-\omega_{\cS_n}^{(2n)}=\breve\omega\cdot\one_{|\cS_n|},\quad\omega_{\cS^c_n}^{(2n-1)}=-\omega_{\cS^c_n}^{(2n)}=\zero_{|\cS^c_n|},\quad\mu^{(2n-1)}=-\mu^{(2n)}=\breve\mu\cdot \eb_n,
$$
\end{tcolorbox}
{\noindent where we use $\eb_n \in \RR^N$ to denote a canonical basis whose $n$-th entry is one and the rest of the entries are all zero.}
With this pattern, using the fact that $\breve \omega$ is small, for each task $n\in[N]$, the predictor approximately takes the following form:
\begin{align}\label{eq:multi_task_mode3_final}
	\hat{y}_{q,\tianqin{n}}
	&\approx\frac{2\breve\mu\breve\omega}{L}\cdot\sum_{\luolan{\ell}=1}^L\langle \bar{x}_{\luolan{\ell},\cS_{\tianqin{n}}},x_{q,\cS_{\tianqin{n}}}\rangle\cdot y_{\luolan{\ell},\tianqin{n}},
\end{align}
where $2 \breve\mu\breve\omega \approx 1$.
This implements an independent GD predictor for each task $\tianqin{n}$, using $\{ x_{\ell,\cS_{\tianqin{n} }}, y_{\ell, \tianqin{n}} \}_{\ell\in [L]}$.  In other words, the model bakes the true supports $\{ \cS_n \}_{n\in [H]}$ into the learned model weights.

Note that in {Mode \MakeUppercase{\romannumeral1}} and {Mode \MakeUppercase{\romannumeral2}}, the same weights are applied across different tasks, relying on all covariate entries.
In contrast, when the model has sufficient expressive power, it learns to allocate dedicated heads for each task and efficiently solve each task by using a debiased gradient descent predictor that is restricted to the true support $\cS_n$ of the task.

\begin{figure}[!ht]
    \centering
    \begin{minipage}{0.47\textwidth}
        \centering
        \includegraphics[width=\linewidth]{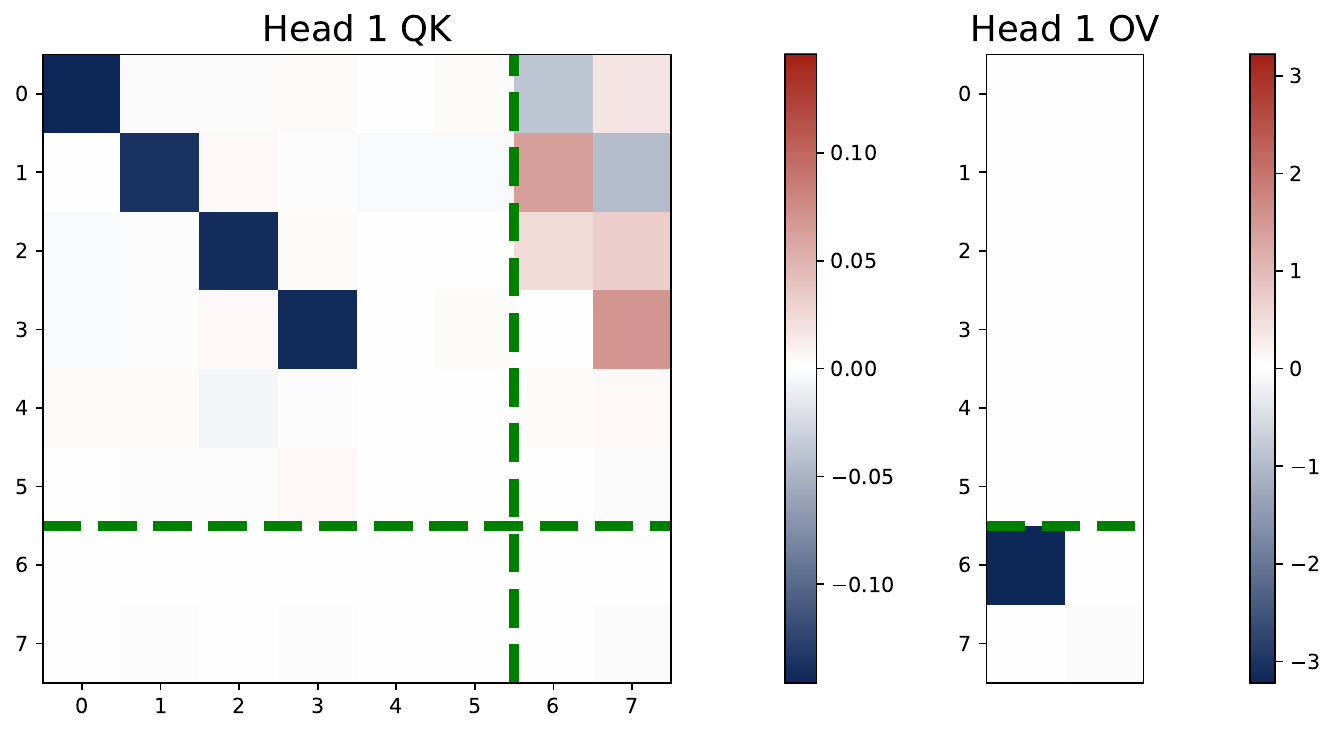}
    \end{minipage}\quad
    \begin{minipage}{0.47\textwidth}
        \centering
        \includegraphics[width=\linewidth]{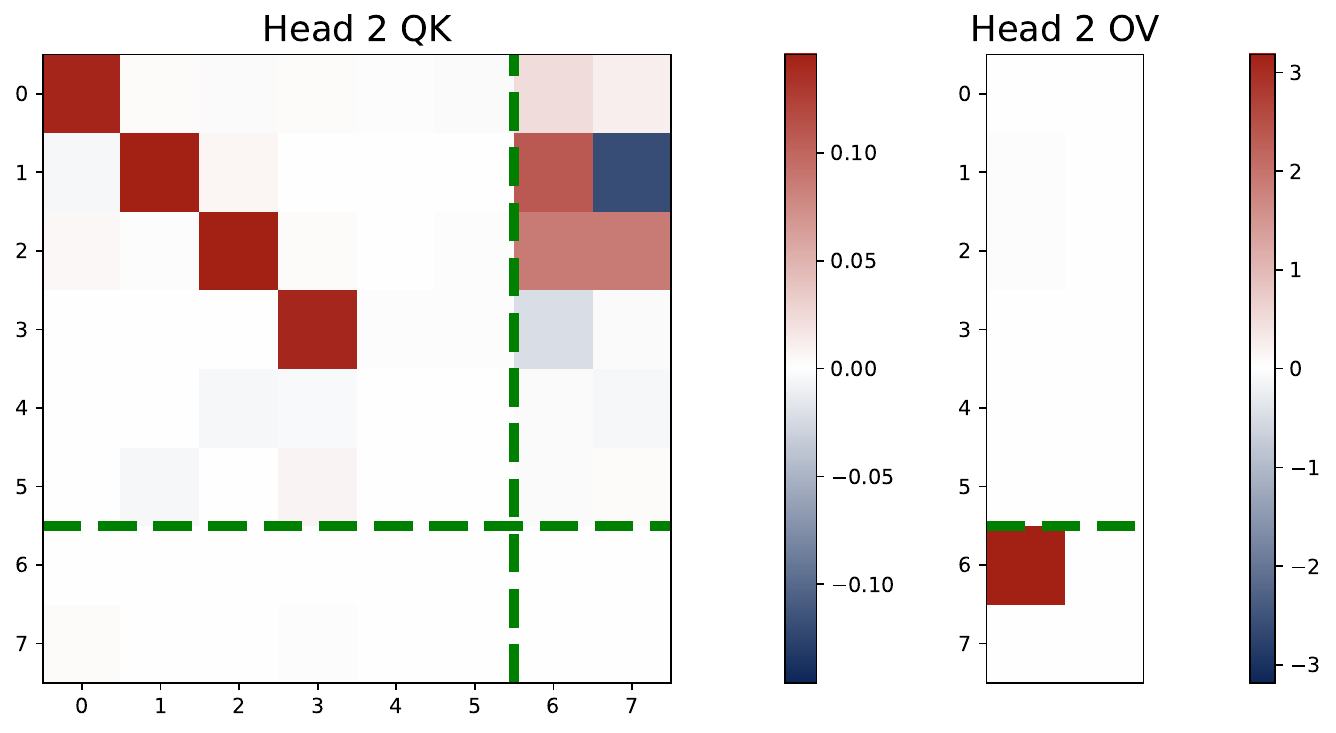}
    \end{minipage}
    \begin{minipage}{0.47\textwidth}
        \centering
        \includegraphics[width=\linewidth]{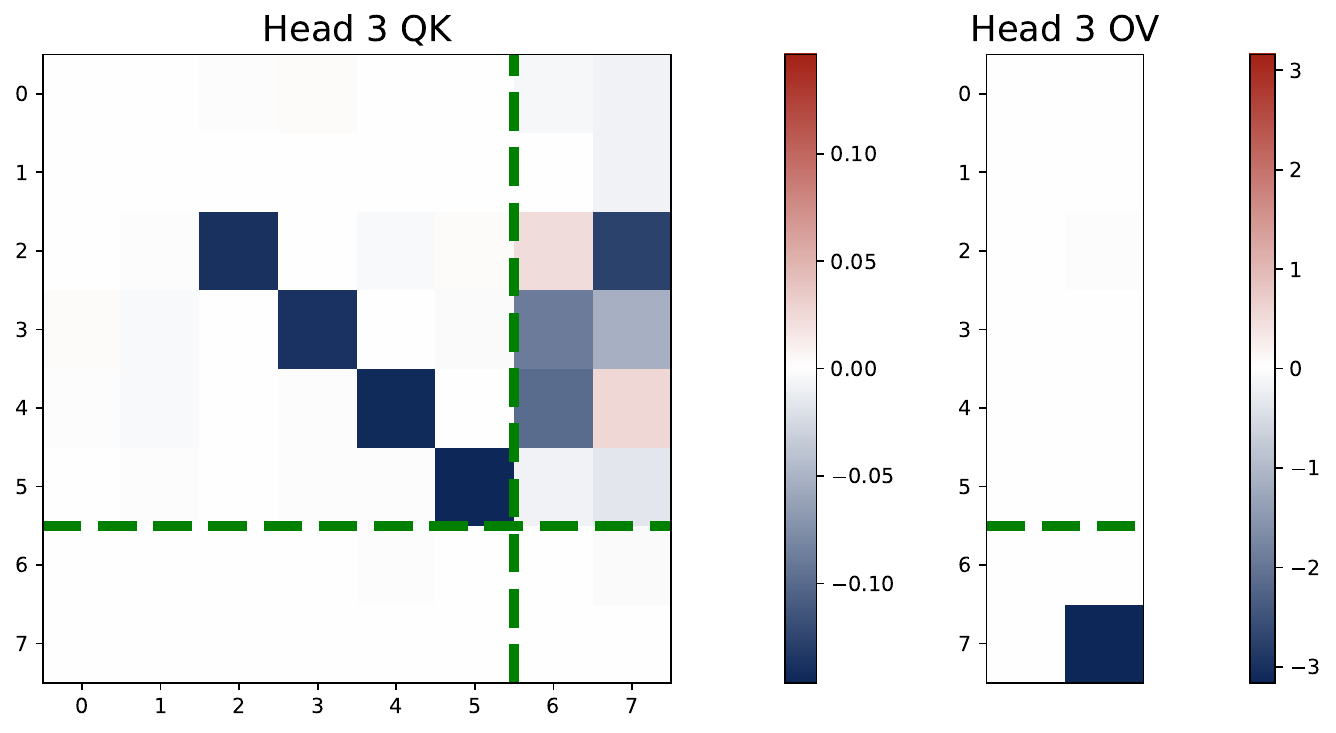}
    \end{minipage}\quad
    \begin{minipage}{0.47\textwidth}
        \centering
        \includegraphics[width=\linewidth]{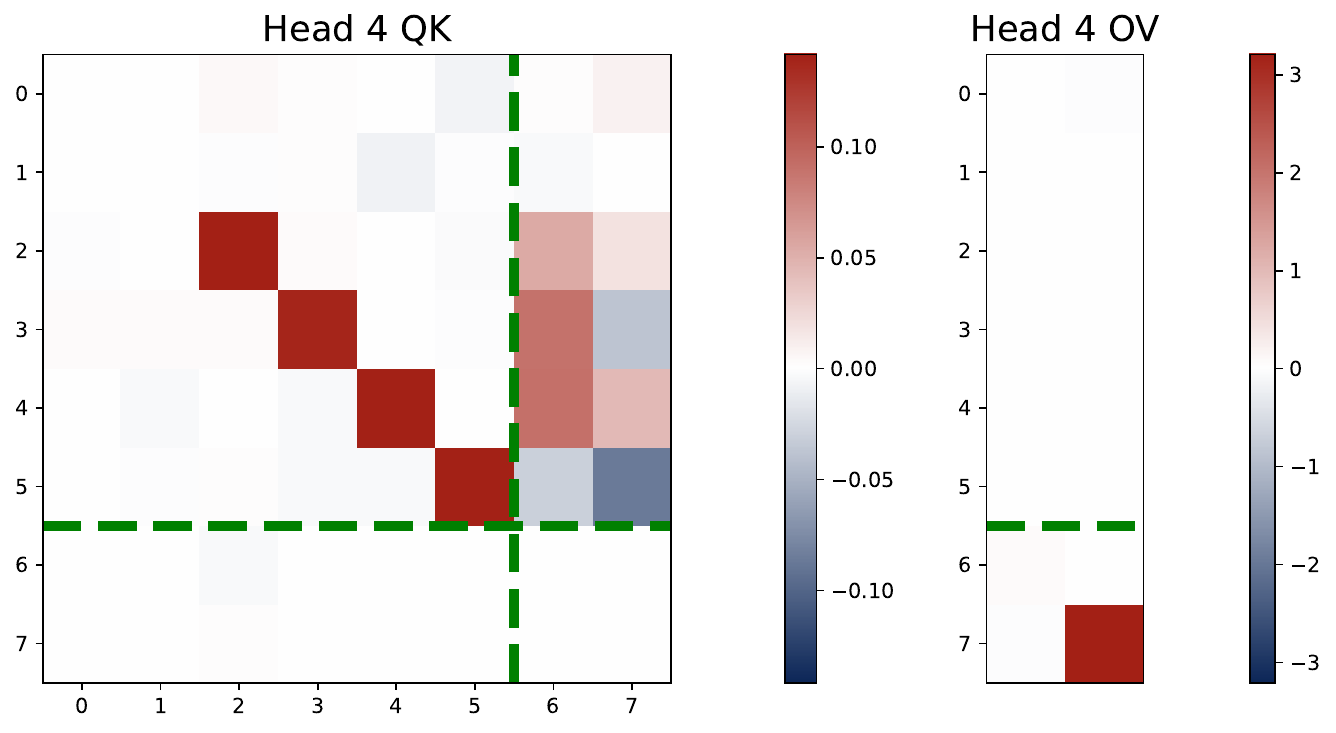}
    \end{minipage}
    \caption{Heatmap of KQ and OV matrices trained on the multi-task problem with $H=4$. The trained model learns to implement $N$ independent pre-conditioned GD predictors by assigning each task to a unique pair of heads, ensuring that each head is exclusively used for a single task. Task-relevant entry information is encoded in the KQ circuit and OV controls  the output task.}
    \label{fig:multitask_head3}
\end{figure}

\paragraph{Mode \MakeUppercase{\romannumeral4} ($2<H<2N$): Pre-Conditioned GD Predictors with Superposition.}

Unlike {Mode \MakeUppercase{\romannumeral1}} and {Mode \MakeUppercase{\romannumeral2}}, where attention employs shared weights across all tasks, or {Mode \MakeUppercase{\romannumeral3}}, where each task is assigned distinct, non-overlapping heads, when $2<H<2N$, we observe an intriguing {\color{luolan}superposition phenomenon}. 
Intuitively, {\color{luolan} superposition requires requires some heads to solve move than one task simultaneously}, which means that we may observe $\omega^{(h)}$ that have both positive and negative values.
Superposition enables the transformer model to approximate the performance of the $N$ debiased GD predictors despite the limited head capacity.

 \begin{figure}[!ht]
  \centering
  \begin{minipage}[t]{0.42\textwidth}
    \centering
    \renewcommand{\arraystretch}{0.9}
    \begin{tabular}{lrrr}
      \toprule
       & \textsc{Head 1} & \textsc{Head 2} & \textsc{Head 3} \\
      \midrule
      $\omega^{(h)}_{\cS_{1,*}^c}$ & -0.0316 &  0.0779 & -0.0837 \\
      $\omega^{(h)}_{\cS_{2,*}^c}$ & -0.0689 &  0.0000 &  0.0927 \\
      $\omega^{(h)}_{\cS^*}$       & -0.0893 &  0.0870 &  0.0152 \\
      $\breve\mu^{(h)}_{1}$        & -3.9911 &  6.9204 & -2.8739 \\
      $\breve\mu^{(h)}_{2}$        & -7.0742 &  2.3371 &  4.7399 \\
      \bottomrule
    \end{tabular}
  \end{minipage}\qquad\quad
  \begin{minipage}{0.48\textwidth}
    \centering
    \includegraphics[width=\linewidth]{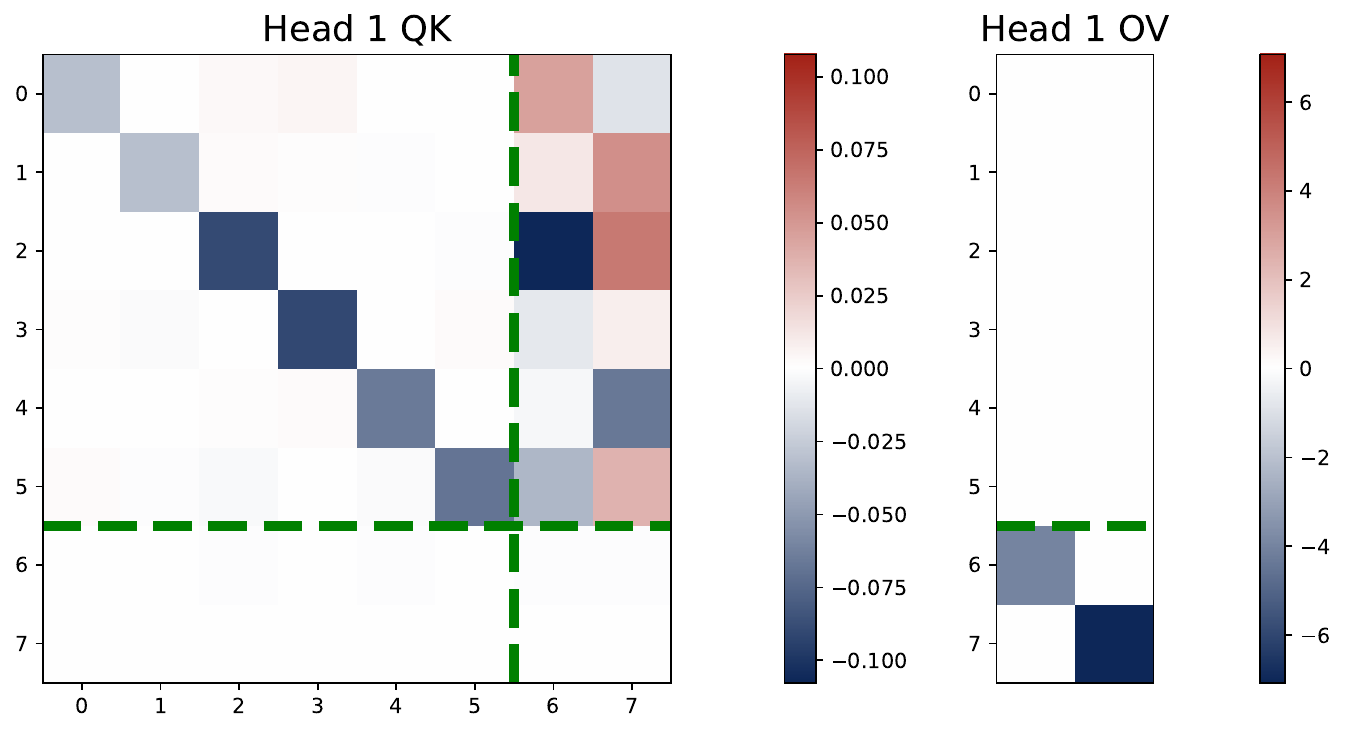}
  \end{minipage}

  \begin{minipage}[t]{0.48\textwidth}
    \centering
    \includegraphics[width=\linewidth]{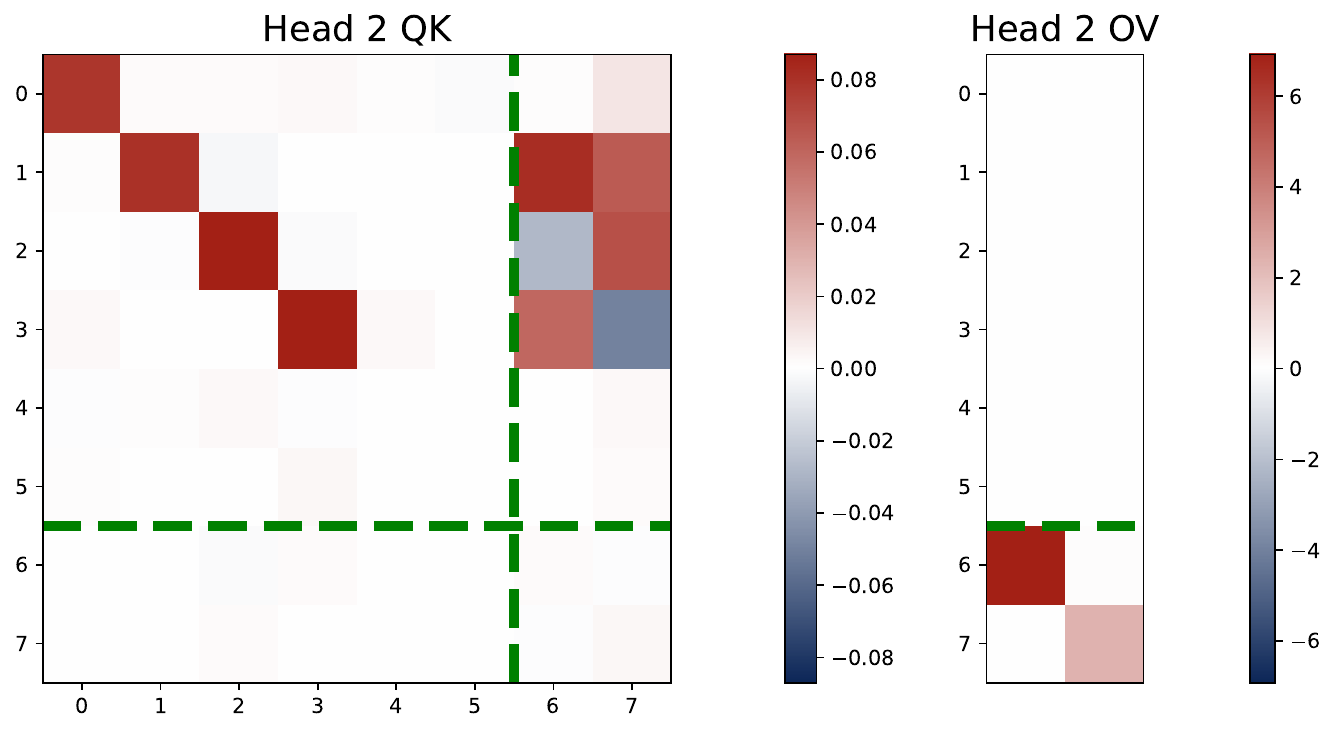}
  \end{minipage}~
  \begin{minipage}[t]{0.48\textwidth}
    \centering
    \includegraphics[width=\linewidth]{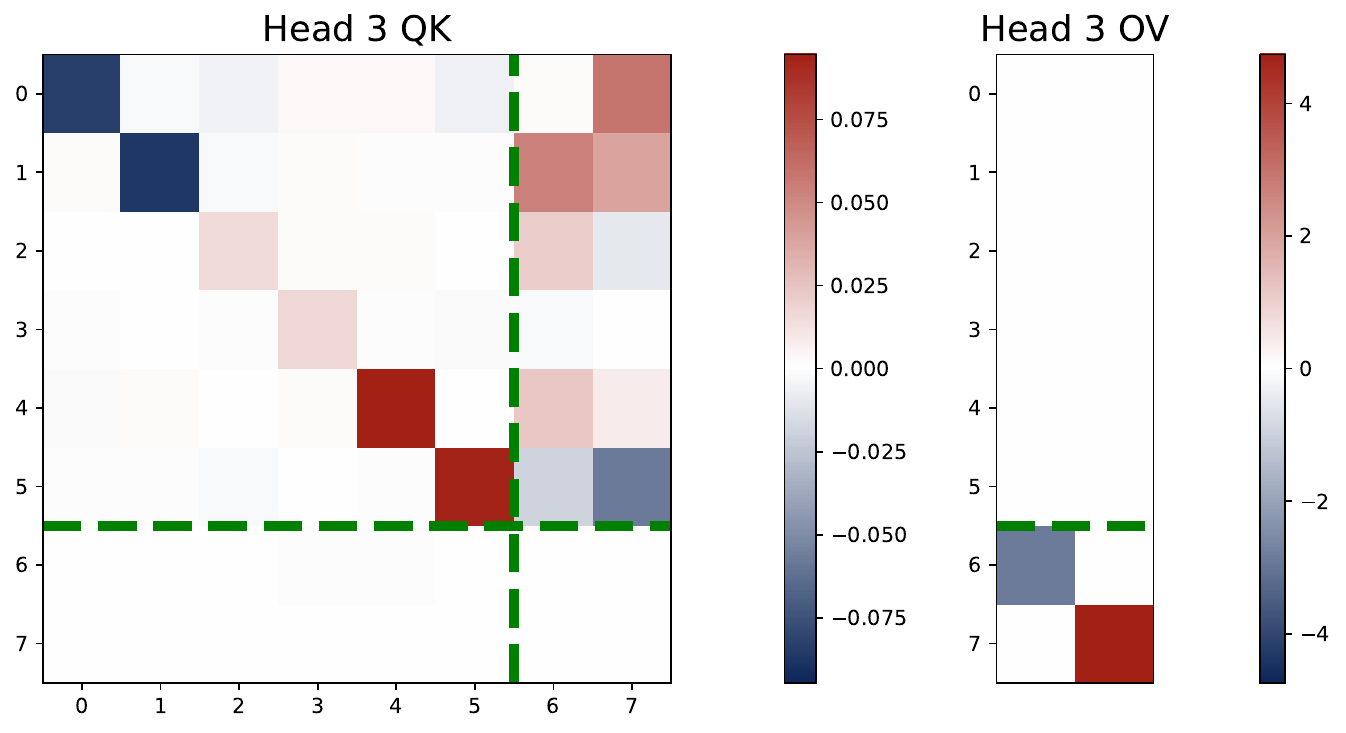}
  \end{minipage}
  \caption{%
    (Top left) Learned parameters of three-head attention model trained on multi-task problem. By direct calculation, we can show that the learned parameter follow the patterns described in \eqref{eq:superposition_pattern_1} and \eqref{eq:superposition_pattern_2}, indicating that the model effectively emulates the pre-conditioned GD predictor despite its limited expressive power via superposition. (Remaining) Heatmap of KQ and OV matrices trained on multi-task problem with $H=3$. 
    In this case, each head contributes to both tasks, and in particular, Head 3 is a positive head for Task 2 and a negative head for Task 1.
    The trained model learns to implement the $N$ independent debiased GD predictors with superposition.
  }
  \label{fig:multitask_head4}
\end{figure}

 Figure \ref{fig:multitask_head4} shows the learned pattern of a three-head model, trained on the two-task in-context linear regression problem. 
 We have the following observations:
 \begin{itemize}
 \setlength{\itemsep}{-1pt}
 \item [(i)] All the three heads contribute to solving both regression tasks. This can be seen from the fact that $\mu^{(h)}$ does not have zero entries for all $h$. 
 \item [(ii)] Head 1 is a negative head and Head 2 is a positive head. That is, $\mu^{(1)}$ and $\omega^{(1)}$ are negative, while $\mu^{(2)}$ and $\omega^{(2)}$ are positive. 
 \item [(iii)] Head 3 has {\color{luolan}both positive and negative components}, illustrating the {\color{luolan}superposition phenomenon.} Head 3 is a negative head for Task 1 and a positive head for Task 2. The first two entries of $\omega^{(3)}$ are negative, the last two entries are positive, and the middle two entries are close to $0$. 
 \item [(iv)] Head 1 put more effort in solving Task 2, as $\mu_{2}^{(1)}$ has a larger magnitude than $\mu_{1}^{(1)}$. Similarly, Head 2 focus more on Task 1. Head 3 is rather more balanced.
 
 \end{itemize}
 
 These observations lead to a mathematical depiction of $\{\omega^{(h)}, \mu^{(h)}\}_{h\in[H]}$ as follows. 
 Let $\mathscr{D}$  denote the set of all possible grouped divisions  $\{\cS^*,\cS_{1,*}^c,\cS_{2,*}^c\}$ with $\cS_{n,*}^c=\cS_n\backslash\cS^*$ for $n \in \{1, 2\}$. 
Under our setup, we have $\cS_{1,*}^c=\{1,2\}$, $\cS_{2,*}^c=\{5,6\}$, and $\cS^*=\{3,4\}$.
The parameters of the three-head model can be summarized as follows:
\begin{tcolorbox}[frame empty, left = 0.1mm, right = 0.1mm, top = 0.1mm, bottom = 1mm]
$$
\sum_{h=1}^H   \mu^{(h)}=\zero_N,\quad\omega^{(h)}_\cS=\breve\omega_\cS^{(h)}\cdot\one_{|\cS|},\quad\forall (h,\cS)\in[H]\times\mathscr{D}.
$$
\end{tcolorbox}
In other words, the supports of the two tasks split $[d]$ into non-overlapping groups, and the value of the KQ parameters are the same within each group. Here, $\omega^{(h)}_\cS$ denotes the subvector of $\omega^{(h)}$ with entries in $\cS$ and $\breve\omega_\cS^{(h)}$ is a scaling factor.
Moreover, the OV parameters sum to a zero vector, generalizes single-task setting.
Let $\breve\mu^{(h)}$ denote the learned value of $\mu^{(h)}$. Then Head $h$ contributes to the prediction of Task $n$ via the parameter $\breve \mu^{(h)}_n$, and we have 
$\sum_{h=1}^H \breve \mu^{(h)}_{\tianqin{n}}=0$ for all $n$.

By plugging the values of $\{ \omega^{(h)}, \mu^{(h)} \}_{h\in [H]}$ into the transformer model, for each task $n\in[N]$, its output can be written as
\begin{align}\label{eq:superposition_main1}
\hat{y}_{q,\tianqin{n}}&=\sum_{h=1}^H\breve\mu^{(h)}_{\tianqin{n}}\cdot\sum_{\luolan{\ell}=1}^L\frac{\exp(\langle x_{\luolan{\ell}},\breve\omega^{(h)}\odot x_{q}\rangle)\cdot y_{\luolan{\ell},\tianqin{n}}}{\sum_{\ell=1}^L\exp(\langle x_{\luolan{\ell}},\breve\omega^{(h)}\odot x_{q}\rangle)},
\end{align}
where $\omega^{(h)}\odot x_{q}$ denotes the Hadamard product.
By the construction of $\mathscr{D}$, we have 
\begin{align}\label{eq:superposition_main2}
\langle x_{\luolan{\ell}},\breve\omega^{(h)}\odot x_{q}\rangle = \sum_{\cS\in\mathscr{D}}\breve\omega^{(h)}_\cS\cdot\langle x_{\luolan{\ell},\cS}, x_{q,\cS}\rangle.
\end{align}
When $L$ is sufficiently large, we combine \eqref{eq:superposition_main2} and apply Taylor expansion in \eqref{eq:superposition_main1} to obtain that 
\begin{align}
    \hat{y}_{q,\tianqin{n}} &\approx\sum_{h=1}^H\breve\mu^{(h)}_{\tianqin{n}}\cdot\sum_{\luolan{\ell}=1}^L\frac{y_{\luolan{\ell},\tianqin{n}}}{L}\cdot\left(1+\sum_{\cS\in\mathscr{D}}\breve\omega^{(h)}_\cS\cdot\langle \bar x_{\luolan{\ell},\cS}, x_{q,\cS}\rangle\right)\notag\\   &=\sum_{h=1}^H\sum_{\cS\in\mathscr{D}}\frac{\breve\mu^{(h)}_{\tianqin{n}}\breve\omega^{(h)}_\cS}{L}\cdot\sum_{\luolan{\ell}=1}^L\langle \bar x_{\luolan{\ell},\cS}, x_{q,\cS}\rangle\cdot y_{\luolan{\ell},\tianqin{n}},
    \label{eq:superposition_predictor}
\end{align}
the last equality follows from the fact that $\sum_{h=1}^H \breve \mu^{(h)}_{\tianqin{n}}=0$. 
That is, $\hat y_{q, n}$ can be viewed as a sum of $|\mathscr{D}|$ independent debiased GD predictors, each of which is restricted to the support $\cS$.

Furthermore, we carefully examine the learned values of $\{\omega^{(h)}, \mu^{(h)}\}_{h\in[H]}$, which are listed in Table \ref{fig:multitask_head4}. 
We observe that the following conditions hold: 
\begin{tcolorbox}[frame empty, left = 0.1mm, right = 0.1mm, top = -1mm, bottom = 1mm]
\begin{align}
    &\sum_{h=1}^H\breve\mu^{(h)}_{\tianqin{1}}\breve\omega^{(h)}_{\cS_{\tianqin{1},*}^c}=\sum_{h=1}^H\breve\mu^{(h)}_{\tianqin{1}}\breve\omega^{(h)}_{\cS^*} \approx 1 ,\qquad \qquad \sum_{h=1}^H\breve\mu^{(h)}_{\tianqin{1}}\breve\omega^{(h)}_{\cS_{\luolan{2},*}^c}=0, \label{eq:superposition_pattern_1}\\
    &\sum_{h=1}^H\breve\mu^{(h)}_{\tianqin{2}}\breve\omega^{(h)}_{\cS_{\tianqin{2},*}^c}=\sum_{h=1}^H\breve\mu^{(h)}_{\tianqin{2}}\breve\omega^{(h)}_{\cS^*} \approx 1 ,\qquad\qquad   \sum_{h=1}^H\breve\mu^{(h)}_{\tianqin{2}}\breve\omega^{(h)}_{\cS_{ \luolan{1} ,*}^c}=0. 
    \label{eq:superposition_pattern_2}
\end{align}
    \end{tcolorbox}
As a result, $\hat y_{q, n}$  in \eqref{eq:superposition_predictor} can be simplified as 
$$
\hat{y}_{q,\tianqin{n}} \approx\sum_{h=1}  \frac{1}{L} \cdot \sum_{\luolan{\ell}=1}^L\langle \bar x_{\luolan{\ell},\cS_{\tianqin{n}}},x_{q,\cS_{\tianqin{n}}}\rangle\cdot y_{\luolan{\ell},\tianqin{n}}, \qquad \forall n \in [N].
$$
This estimator is identical to that in 
\eqref{eq:multi_task_mode3_final}. 
This means that, when $H = 3 $ and $N = 2$, the trained transformer model learns the same estimator -- sum of two independent debiased GD estimator -- as the case when $H = 2N$. But here the transformer model achieves by exploiting the superposition. That is, each head is used to solve both tasks, and the aggregated effect is that the model can approximate the performance of the $N$ independent debiased GD predictors.  
Here we only study a toy case with $H=3$ and $N=2$. We believe that the superposition phenomenon is a general phenomenon that occurs when $2<H<2N$.
But more a larger $N$, the superposition phenomenon becomes more complex and entangled. 
For the general multi-task setting, characterizing the training dynamics and solution manifold of the multi-head attention with superposition is an important direction for future work.

\subsection{Experimental Results for Loss Approximation}\label{ap:add_loss_approx}

In the following, we examine the difference of the true population loss of ICL and the approximate loss in \eqref{eq:approx_loss} in terms of the training dynamics. 

\paragraph{Intuitions of Approximation and Scaling in Proposition \ref{thm:approx_loss}.} 
In the proof of loss approximation, the main rationale is that the high-order terms are negligible when the prompt length $L$ is sufficiently large under certain conditions. 
For a fixed $d$, the intuition is that $p = \smax(\omega \cdot Xx_q)$ should be distributed relatively ``uniformly" across the entries, resulting in a quite small norm $\|p\|_2^k$ for $k\geq2$. 
As shown in Figure \ref{subfig:lowd}, with a fixed $d$, the norm concentrates around $0$ as the length $L$ increases.
However, this property may not hold in the high-dimensional proportional regime, i.e., $d/L\rightarrow\xi$ 
(see Figure \ref{subfig:highd}).
To address this issue, we impose scaling conditions $\|\omega\|_\infty^2\lesssim \log L/d$ to offset the effects of high dimensionality and ensure that the norm remains small (see Figure \ref{subfig:highdsmallw}).

\begin{figure}[!ht]
    \centering
    \begin{minipage}{0.31\textwidth}
        \includegraphics[width=\linewidth]{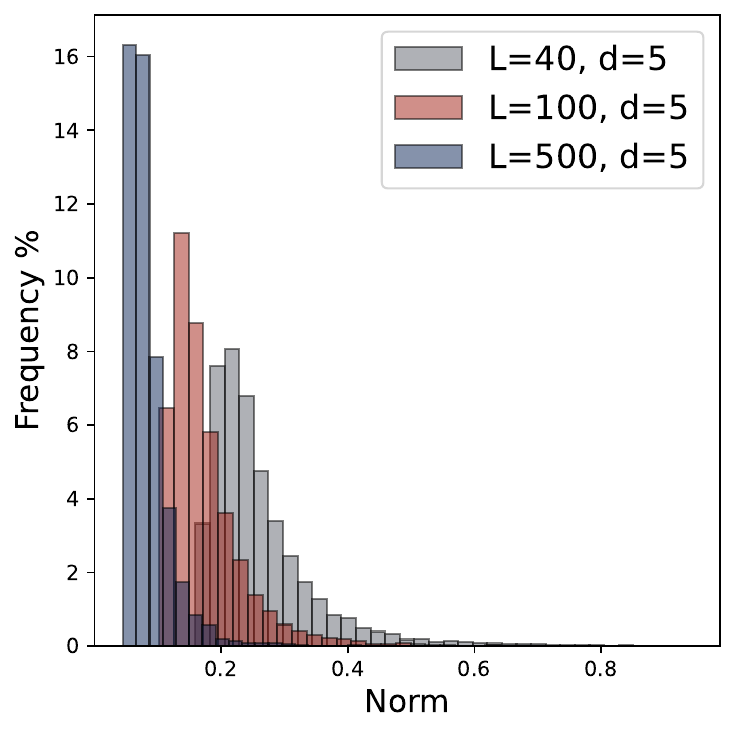}
        \subcaption{Low-dimension, $\omega=0.5$.}
        \label{subfig:lowd}
    \end{minipage}\quad
    \begin{minipage}{0.31\textwidth}
        \includegraphics[width=\linewidth]{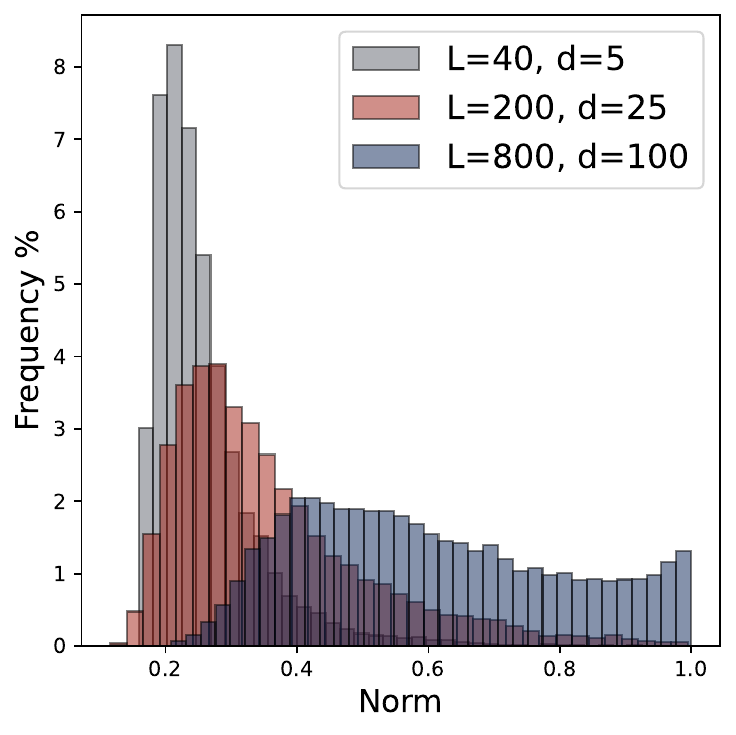}
        \subcaption{High-dimension, $\omega=0.5$.}
        \label{subfig:highd}
    \end{minipage}\quad
    \begin{minipage}{0.31\textwidth}
        \includegraphics[width=\linewidth]{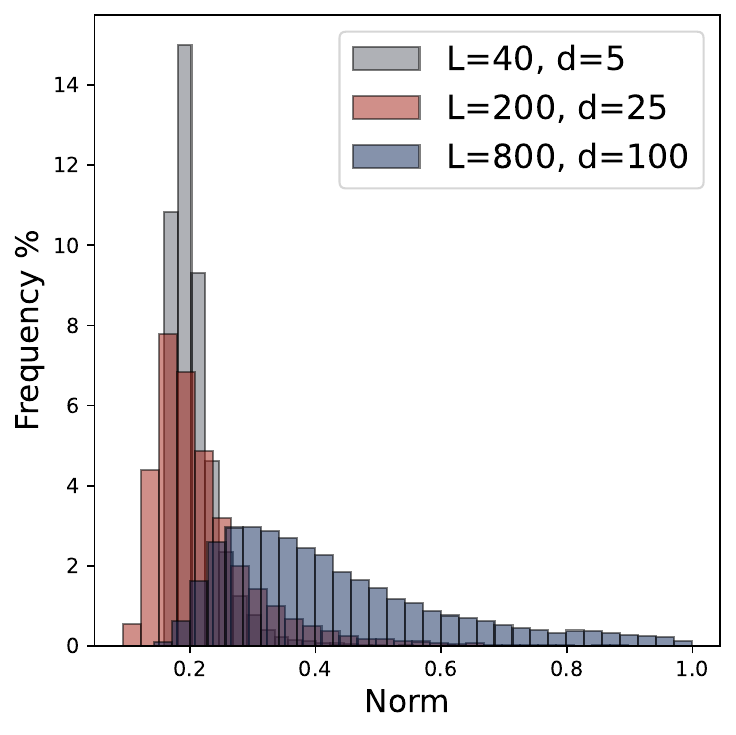}
        \subcaption{High-dimension, $\omega=0.35$.}
        \label{subfig:highdsmallw}
    \end{minipage}
    \caption{Distribution of norm $\|p\|_2^2$. Figure (a) depicts the low-dimensional regime with a fixed $d=5$. As the sequence length $L$ increases, the distribution of $\|p\|_2^2$ converges to singleton at zero. Figure (b) and (c) illustrate the proportional regime $d/L=1/8$. As $L$ increases and the scaling factor $\omega$ decreases, the norm $\|p\|_2^2$ becomes increasingly concentrated around zero.}
\end{figure}

\paragraph{Training Dynamics of True Loss and Approximate Loss.}
To further evaluate the performance of approximate loss in \eqref{eq:approx_loss}, we compare the training dynamics given by these two different losses.
Consider the four-head attention model with $d=5$, $L=40$, $\sigma^2=0.1$.
By comparing Figures \ref{subfig:loss_dyn} and \ref{subfig:approx_loss_dyn}, we observe that the approximate loss effectively captures the evolution of parameters seen in the full model.
Additionally, we note that the actual training dynamics appear less ``regular" due to the finite batch size (as opposed to the population risk used in the approximate loss) and internal perturbations arising from parameters at non-target positions in the full model.
\vspace{5pt}
\begin{figure}[!ht]
    \centering
    \begin{minipage}{0.47\textwidth}
        \includegraphics[width=\linewidth]{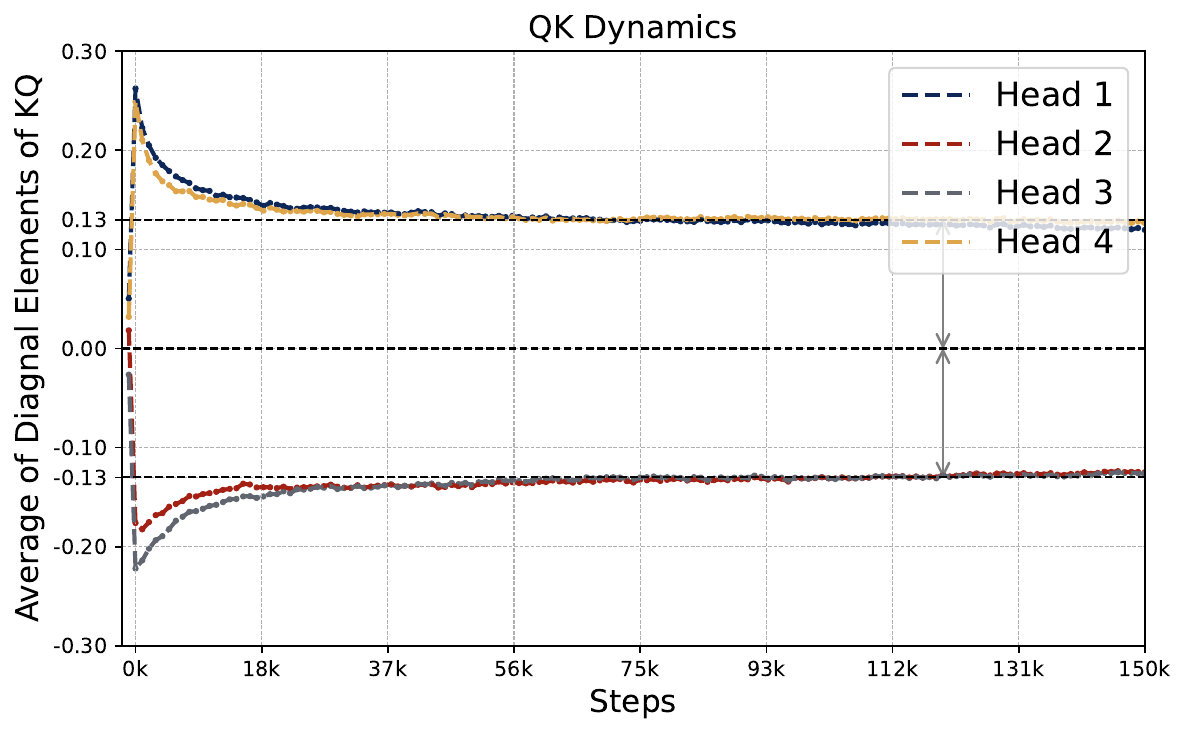}
    \end{minipage}
    \begin{minipage}{0.47\textwidth}
        \includegraphics[width=\linewidth]{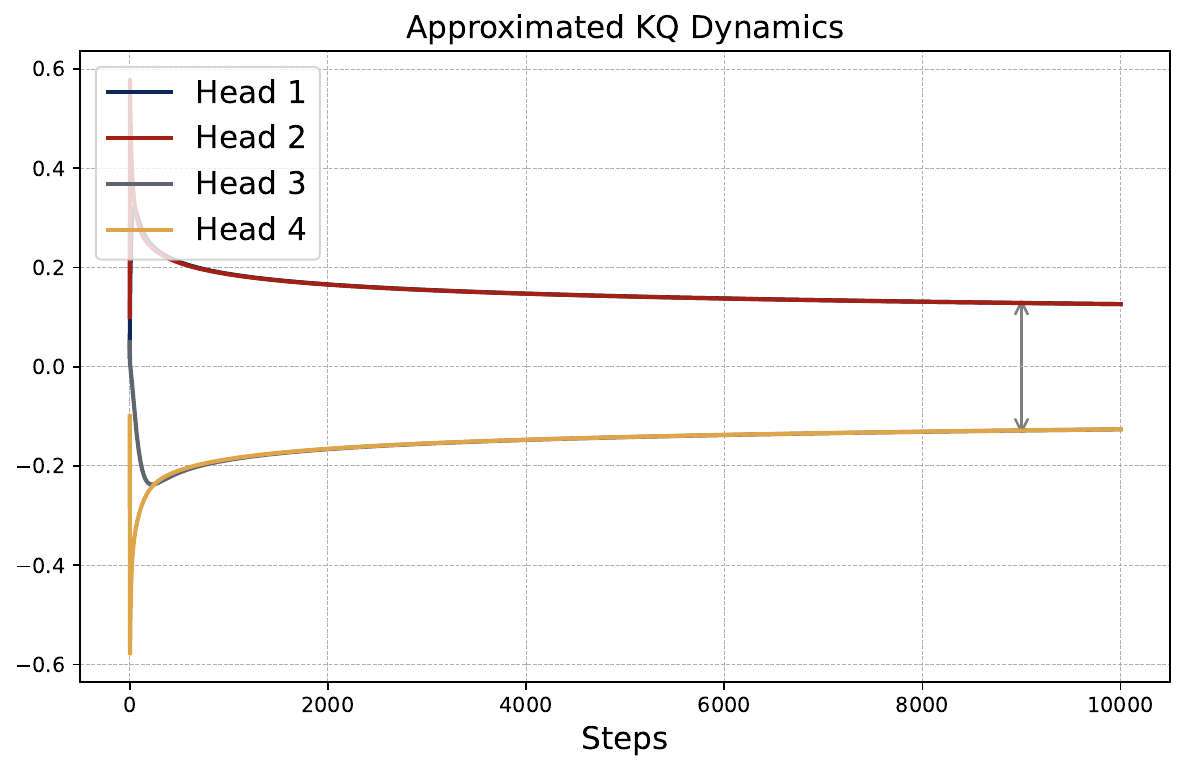}
    \end{minipage}
    \begin{minipage}{0.47\textwidth}
        \includegraphics[width=\linewidth]{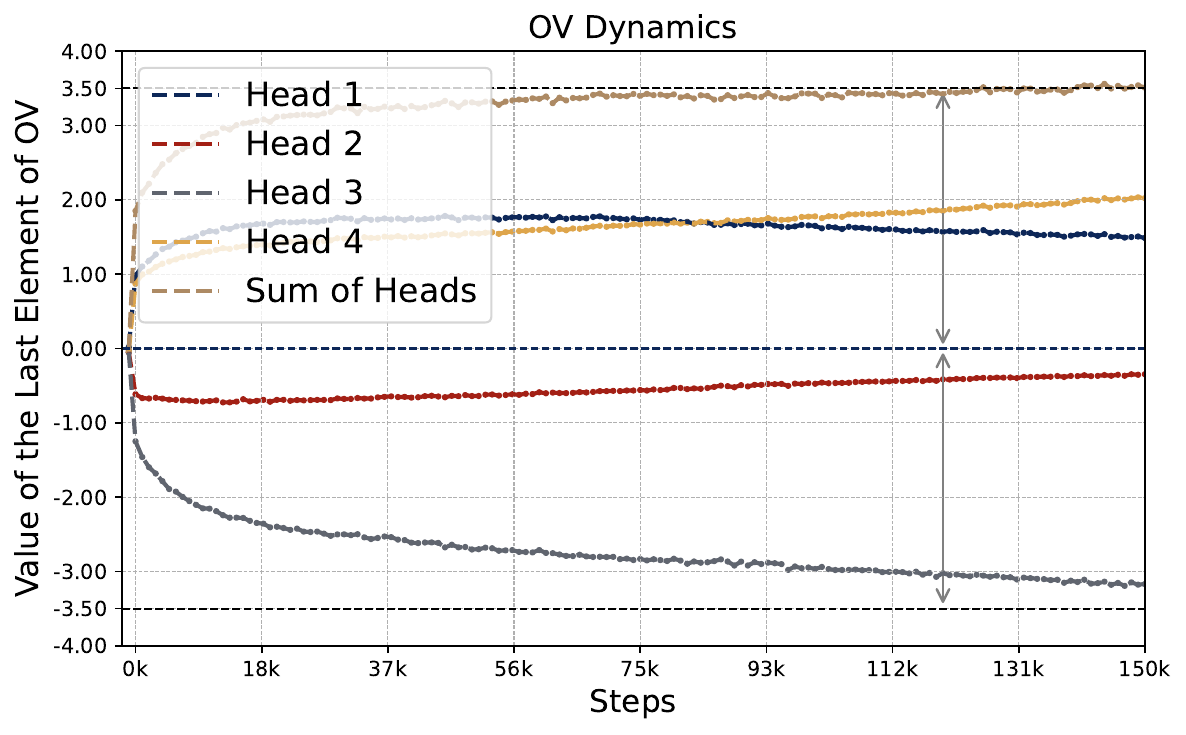}
        \subcaption{Training Dynamics of Attention Weights.}
        \label{subfig:loss_dyn}
    \end{minipage}
    \begin{minipage}{0.47\textwidth}
        \includegraphics[width=\linewidth]{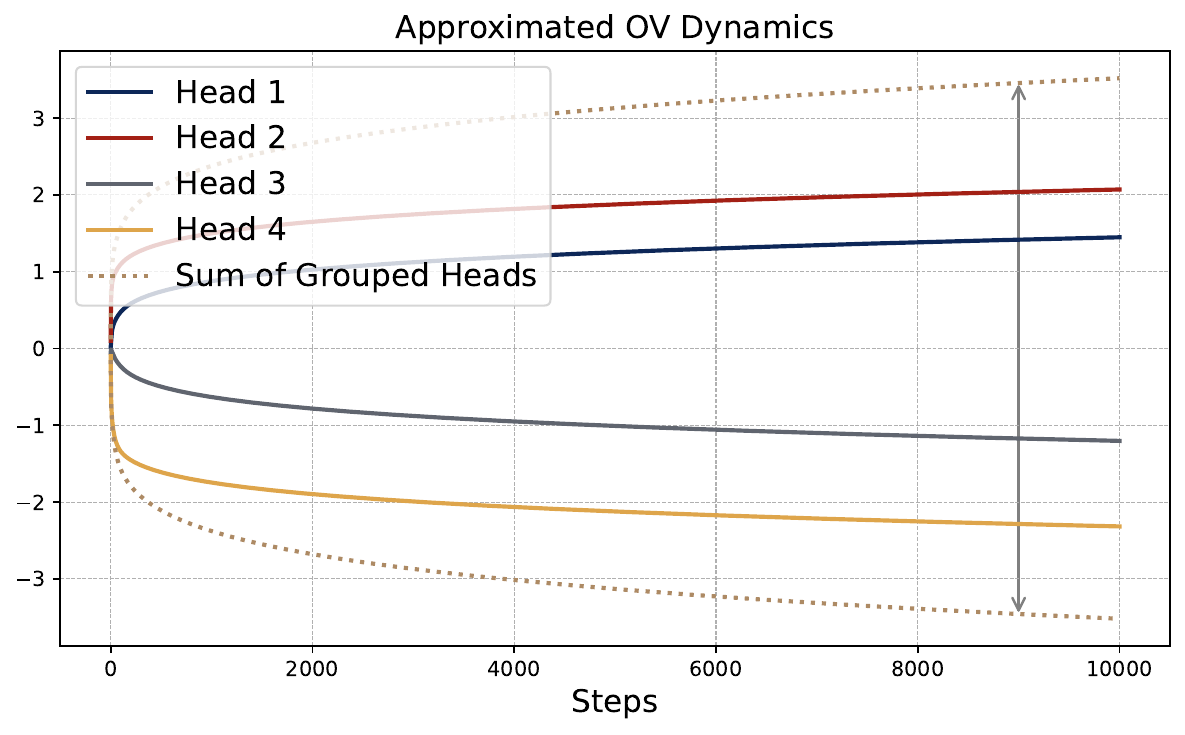}
        \subcaption{Approximate Training Dynamics.}
        \label{subfig:approx_loss_dyn}
    \end{minipage}
    \caption{Comparison of training dynamics between the true loss and the approximate loss for four-head attention model. The approximate training dynamics closely capture the behavior of the true dynamics, providing an effective characterization of the whole training process.}  
\end{figure}

\newpage
\section{Analysis on Training Dynamics of Two-head Attention}\label{ap:gradient_flow_2head}
In this section, we offer a more comprehensive analysis of the gradient flow on the approximate loss defined in \eqref{eq:approx_loss} with a two-head attention model. 
This analysis provides justification for the evolution of the parameters $\{ \omega^{(h)}, \mu^{(h)}\}_{h \in [2]}$ in Figure \ref{subfig:dynamic}.
For simplicity, we consider the gradient flow of $\tilde\cL/2$ with a two-head attention, where $\tilde \cL$ is the approximate loss function. 
We let  $\theta$ denote the vector of all parameters, i.e., $\theta = (\mu, \omega) \in \RR^4$, where $\mu = (\mu^{(1)}, \mu^{(2)}) \in \RR^2$ and $\omega = (\omega^{(1)}, \omega^{(2)}) \in \RR^2$. 
Consider the gradient flow algorithm. We let $\theta_t$ denote the parameter at time $t$. 
The gradient flow dynamics is 
characterized by an ordinary differential equation (ODE): 
$$
\partial_t\theta_t=-\nabla_\theta\cL(\theta_t),\qquad\text{where~~}\cL(\theta):=\tilde\cL(\theta)/2. 
$$
Here we slightly abuse the notation by writing $\tilde \cL/ 2$ as $\cL$, which is the case only in this section. 
 For simplification, we consider the following initialization.
\begin{definition}[Initialization]
Consider a two-head attention model with symmetric initialization such that $\mu=\omega=(\alpha,-\alpha)$ where $\alpha>0$ is a sufficiently small constant defining the initial scale.
\label{def:initial}
\end{definition}
Under initialization in Definition \ref{def:initial}, gradient flow starts from a point satisfying sign-matching and homogeneous scaling (see \S \ref{sec:training_dynamic}).
 In the following, we show that these patterns, once emerged, are preserved along the gradient flow trajectory. 
 This aligns with the experimental results shown in Figure \ref{subfig:dynamic}, where we train the full parameters of the attention model from random initialization.

\paragraph{Preservation of Symmetry and Homogeneous Scaling.} Based on gradient calculations and the definition of gradient flow, the time derivative of parameters satisfies that
\begin{align*}
	&\partial_t\mu_t=\omega_t-\mu_t^\top\omega_t\cdot\omega_t-\lambda\cdot\exp(d\omega_t\omega_t^\top)\cdot\mu_t,\\
	&\partial_t\omega_t=\mu_t-\mu_t^\top\omega_t\cdot\mu_t-d\lambda\cdot\left(\exp(d\omega_t\omega_t^\top)\odot(\mu_t\mu_t^\top)\right)\cdot\omega_t,
\end{align*}
where we define $\lambda=(1+\sigma^2)/L$ for notational simplicity.
Let $s_t=\exp(d\cdot (\omega_t^{(1)})^2 )-\exp(d\cdot (\omega_t^{(2)})^2)$ and $u_t=(\langle\mu_t,\one\rangle,\langle\omega_t,\one\rangle,s_t)\in\RR^3$. By direct calculation and careful decomposition, we have
\begin{align}
	\partial_t\langle\mu_t,\one\rangle&=(1-\mu_t^\top\omega_t)\cdot\langle\omega_t,\one\rangle-\lambda\cdot\exp(d\cdot\omega_t^{(1)}\omega_t^{(2)})\cdot\langle\mu_t,\one\rangle\notag\\
    &\qquad+\lambda\cdot\exp(d\cdot(\omega_t^{(1)})^2)\cdot\langle\mu_t,\one\rangle+\lambda\cdot\mu_t^{(2)}\cdot s_t:=\langle g_{1,t},u_t\rangle,\label{eq:ode_mu_sum}\\
	\partial_t\langle\omega_t,\one\rangle
    &=(1-\mu_t^\top\omega_t)\cdot\langle\mu_t,\one\rangle-d\lambda\cdot\mu_t^{(1)}\mu_t^{(2)}\cdot\exp(d\cdot\omega_t^{(1)}\omega_t^{(2)})\cdot\langle\omega_t,\one\rangle\notag\\
    &\qquad-d\lambda\cdot\omega^{(1)}_t\cdot(\mu^{(1)}_t)^2\cdot s_t-d\lambda\cdot(\mu^{(1)}_t)^2\cdot\exp(d\cdot(\omega_t^{(2)})^2)\cdot\langle\omega_t,\one\rangle\notag\\
    &\qquad-d\lambda\cdot\omega_t^{(2)}\cdot\exp(d\cdot(\omega_t^{(2)})^2)\cdot(\mu_t^{(2)}-\mu_t^{(1)})\cdot\langle\mu_t,\one\rangle:=\langle g_{2,t},u_t\rangle.\label{eq:ode_omega_sum}
\end{align}
Here we observe that the time-derivatives of both  $\langle\mu_t,\one\rangle$ and $ \langle\omega_t,\one\rangle$ are linear functions of $u_t$, and we let $g_{1,t}$ and $g_{2,t}$ denote the linear coefficients, respectively. 
Furthermore, it is easy to see that
\begin{align}
    \partial_t s_t&=2d\cdot\exp\bigl(d\cdot (\omega_t^{(1)})^2 \bigr) \cdot\omega_t^{(1)}\cdot\partial_t\omega_t^{(1)}-2d\cdot\exp\bigl(d\cdot (\omega_t^{(2)})^2 \bigr) \cdot\omega_t^{(2)}\cdot\partial_t\omega_t^{(2)}\notag\\
    &=2d\cdot\omega_t^{(1)}\cdot\partial_t\omega_t^{(1)}\cdot s_t+2d\cdot\exp(d\cdot(\omega_t^{(2)})^2)\cdot\partial_t\omega_t^{(1)}\cdot\langle\omega_t,\one\rangle\notag\\
    &\qquad-2d\cdot\exp(d\cdot(\omega_t^{(2)})^2)\cdot\omega_t^{(2)}\cdot\partial_t\langle\omega_t,\one\rangle:=\langle g_{3,t},u_t\rangle.
    \label{eq:ode_diff_exp}
\end{align}
In other words, the time-derivative of $s_t$ is also a linear function of $u_t$, with coefficient given by $g_{3, t}$. 
Here the last equality uses the fact that we can write $\partial_t\langle\omega_t,\one\rangle=\langle g_{2,t},u_t\rangle$ as shown in \eqref{eq:ode_omega_sum}.
We note that $g_{1, t}$, $g_{2, t}$, and $g_{3,t}$ are functions of $u_t$. 

To show that $\mu^{(1)}=-\mu^{(2)}$ and $\omega^{(1)}=-\omega^{(2)}$ are preserved along the gradient flow, note that based on \eqref{eq:ode_mu_sum}, \eqref{eq:ode_omega_sum} and \eqref{eq:ode_diff_exp}, we can characterize the dynamics of $u_t$ with a matrix ODE $\partial_t u_t=G_tu_t$ where $G_t=[g_{1,t},g_{2,t},g_{3,t}]^\top\in\RR^{3\times3}$.
By solving the ODE, we have $u_t =\varpi\cdot\exp\big(\int_0^tG_s \rd s\big)$, where $\varpi$ is a constant vector that reflects the initial condition, i.e., $u_0  = \varpi$. 
Note that the initialization in Definition \ref{def:initial} indicates that $u_0 = \varpi=0$. Therefore, $u_t =0$ at any time $t$. 
In particular, this implies that $\mu^{(1)}_t + \mu^{(2)}_t = \omega^{(1)}_t+ \omega_{t}^{(2)} = 0$. 
See Figure \ref{subfig:approx_full_dynamic} for an illustration, which plot the full evolution of $\mu$ and $\omega$.

\vspace{5pt}
Hence, we can track the dynamics of $(\mu_t, \omega_t)$ by only focusing on $\mu^{(1)}_t$ and $\omega^{(1)}_t$. 
For notational clarity,  we let $\varphi_t$ and $\varrho_t$ denote  $ \mu^{(1)}_t$ and $ \omega^{(1)}_t$, respectively. Then the evolution of $(\varphi_t, \varrho_t)$ is given by the following ODE system: 
\begin{equation}
\begin{aligned}
	&\partial_t\varphi_t=\varrho_t-2\varphi_t\varrho_t^2-\lambda\varphi_t\cdot\left(\exp(d\varrho_t^2)-\exp(-d\varrho_t^2)\right),\\
	&\partial_t\varrho_t=\varphi_t-2\varphi_t^2\varrho_t-d\lambda\varphi_t^2\varrho_t\cdot\left(\exp(d\varrho_t^2)+\exp(-d\varrho_t^2)\right).
\end{aligned}
\label{eq:ode_dynamic}
\end{equation}
Such an ODE system does not admit a closed-form solution, but can be solved numerically. To gain some insight, in Figures \ref{subfig:approx_full_dynamic} and \ref{subfig:approx_early_dyn}, we plot the full dynamics of $ \mu$ and $\omega$ with $L = 40$, $d =5$ and $\sigma = 0.1$, as well as the dynamics in the early stage of gradient flow. 
In addition, in Figure \ref{subfig:approx_dynamic_ratio} we plot the evolution of the loss $\tilde L(\theta_t)$ as a function of $t$, as well as the ratio $\varphi_t / \varrho_t$. In Figure \ref{subfig:approx_dynamic_opt}, we plot the dynamics of $\varphi_t$, together with $\varphi_t \cdot \varrho_t$ and $\varphi^* (\varrho_t)$. Here $\varphi^* (\varrho_t)$ denotes the optimal value of $\varphi_t$ that minimizes the approximate loss function, when $\varrho_t$ is fixed. See \eqref{eq:def_phistar} for its definition. 
In addition, we additionally plot the dynamics of $\mu$ and $\omega$ in the case with $d = 10$ in Figure \ref{fig:dynamics_overview}. We observe almost identical behaviors as in the case with $d = 5$.

By examining these figures, we observe that the evolution of the ODE system in \eqref{eq:ode_dynamic} exhibits the following three phases: 
 
\begin{itemize}[leftmargin=*]
    \item \textbf{Phase I (Exponential and  Synchronous Growth).}
    From a small initialization, both $\varphi_t$ and $\varrho_t$ grow exponentially fast in $t$ at nearly the same rate. This leads to a rapid reduction in the loss $\tilde \cL$. Moreover, during this stage, $\varphi_t$ stays roughly equal to $\varrho_t$, as shown in Figure \ref{subfig:approx_early_dyn}. In particular,  their ratio $\varphi_t / \varrho_t$ stays close to one, which is shown in Figure \ref{subfig:approx_dynamic_ratio}. 

    \item \textbf{Phase II (Slowed Growth and  Peak Formation).}
    As $\varphi_t$ and $\varrho_t$ increase, their growth rate in time decreases, i.e., the exponential growth in both $\varphi_t$ and $\varrho_t$ stops. Moreover, 
    $\varrho_t$ eventually reaches its maximum value $\tilde O(d^{-1/2})$. 
    Although the parameters no longer grow exponentially, the loss decreases sharply until $\varrho_t$ begins to decline. 
Moreover, the ratio $\varphi_t / \varrho_t$ does not grow much from one. This phase ends when  $\varrho_t$ attains its maximum value, which corresponds a critical of the ODE in \eqref{eq:ode_dynamic}, under the condition that $\varphi _t \approx \varrho_t$. 
As shown in Figure \ref{subfig:approx_early_dyn}, by the end of this phase, $\varphi_t$ and $\varrho_t$ part ways, with $\varphi_t$ keeping increasing while $\varrho_t$ begins to decrease. 
Throughout this phase, the ratio $\varphi_t / \varrho_t$ does not increase much from one, as shown in Figure \ref{subfig:approx_dynamic_ratio}. In addition to the sharp drop of the loss, the product $\varphi_t \cdot \varrho_t$ increases rapidly, as shown in  Figure \ref{subfig:approx_dynamic_opt}.

    \item \textbf{Phase III (Convergence).}
    After $\varrho_t$ reaches its peak, it decreases to zero while $\varphi_t$ continues increasing. 
    Their ratio $\varphi_t / \varrho_t$ thus keeps increasing. 
    In addition, the loss converges to the minimum value and the value of the product $\varphi_t \cdot \varrho_t$ increases and converges. 
    Moreover, the last phase exhibits an interesting phenomenon: the difference between $\varphi_t$ and $\varphi^*(\varrho_t)$ gradually becomes negligible. 
    This motivates us to use $\varphi^*(\varrho_t)$ as a surrogate of $\varphi_t$ for the analysis of the limiting behavior of the ODE system.   
     
\end{itemize}

In the following, we analyze the ODE system defined in \eqref{eq:ode_dynamic} under the high-dimensional regime where $d/L\rightarrow\xi$ and $d,L\rightarrow\infty$. 
 Recall that we define $\lambda = (1 + \sigma^2 ) / L$, and thus  $d \lambda \rightarrow  (1 + \sigma^2 ) \cdot \xi$.   Here we additionally assume that $\xi $ is a small constant such that the limit of $d \lambda $ is less than one.


\begin{figure}[t!]
    \centering
    \begin{subfigure}{0.43\textwidth}
        \includegraphics[width=\linewidth]{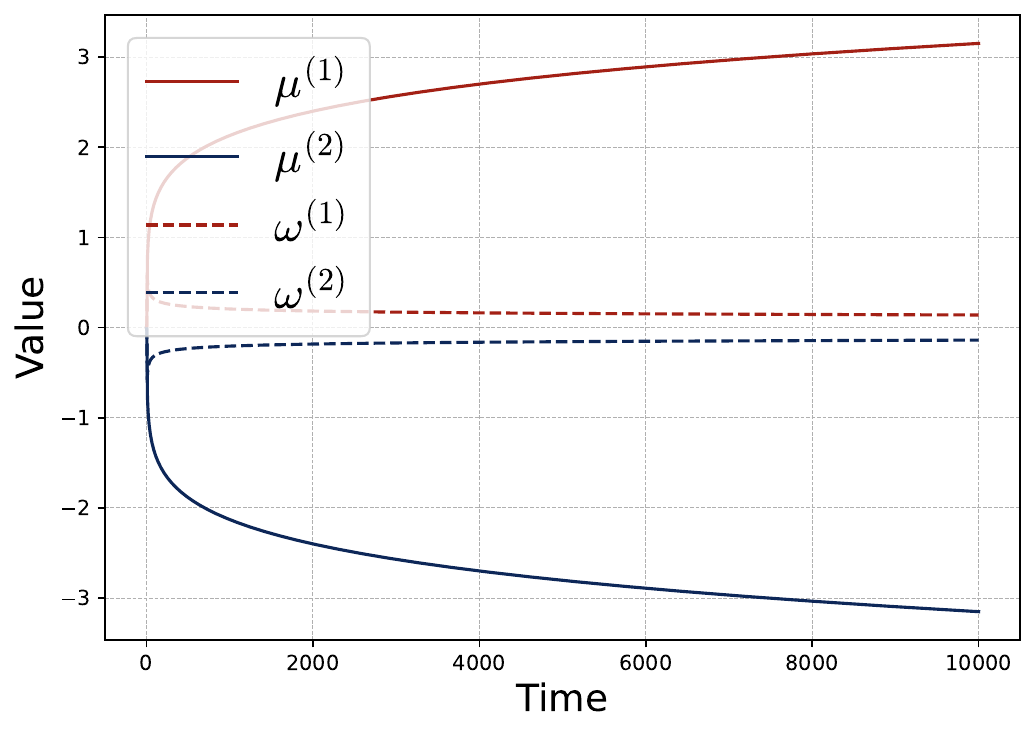}
        \caption{Full dynamics.}
        \label{subfig:approx_full_dynamic}
    \end{subfigure}~~
    \begin{subfigure}{0.43\textwidth}
        \includegraphics[width=\linewidth]{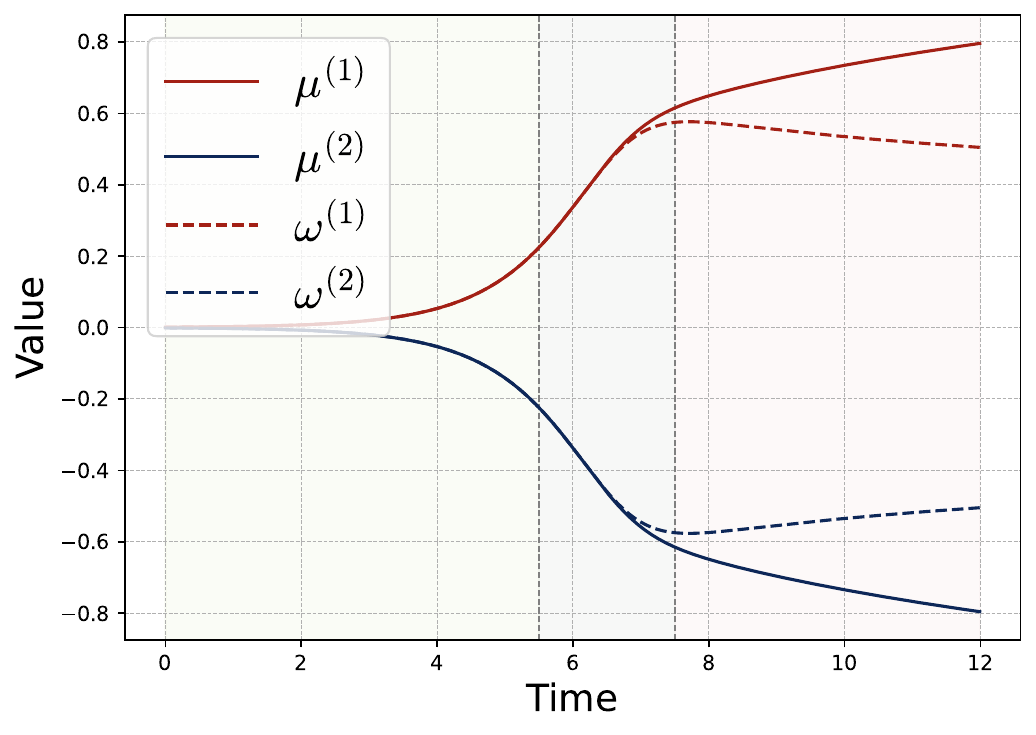}
        \caption{Early-stage dynamics.}
        \label{subfig:approx_early_dyn}
    \end{subfigure}
    \begin{subfigure}{0.03\textwidth}
    \quad
    \end{subfigure}
    \vspace{1em}
    
    \begin{subfigure}{0.45\textwidth}
        \includegraphics[width=\linewidth]{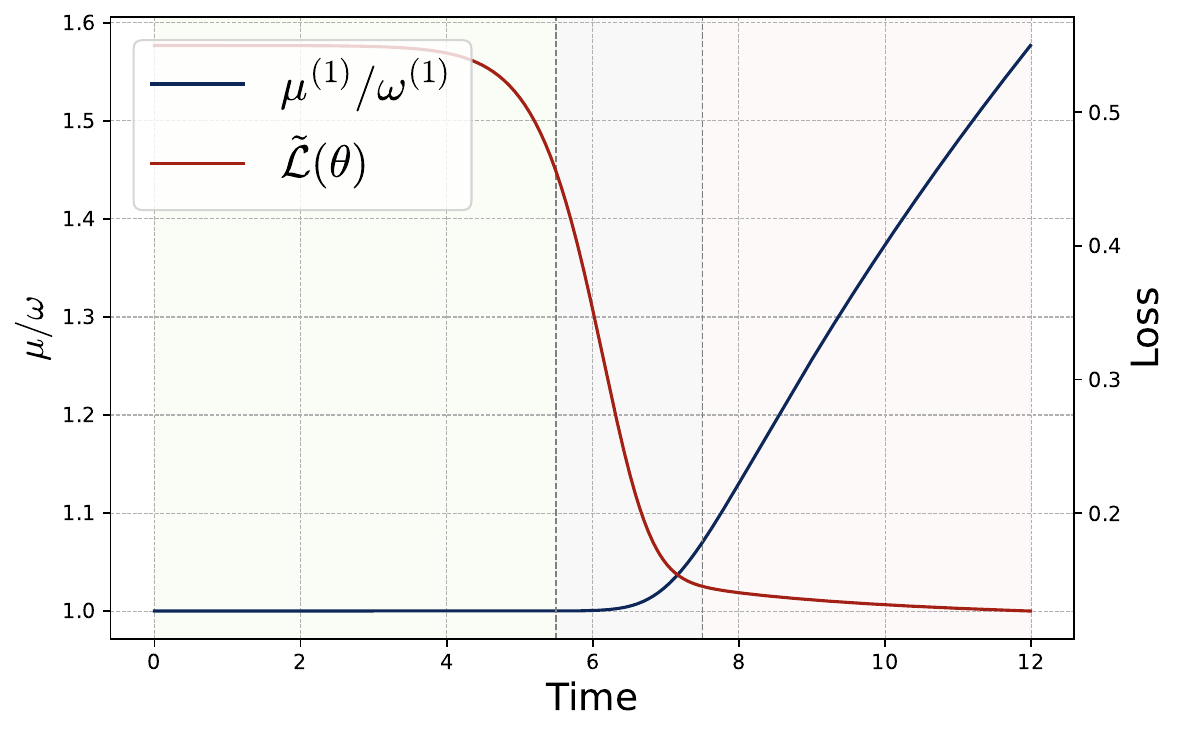}
        \caption{Loss $\tilde \cL(\theta_t)$ and ratio $\varphi_t / \varrho_t$.}
        \label{subfig:approx_dynamic_ratio}
    \end{subfigure}
    \begin{subfigure}{0.45\textwidth}
        \includegraphics[width=\linewidth]{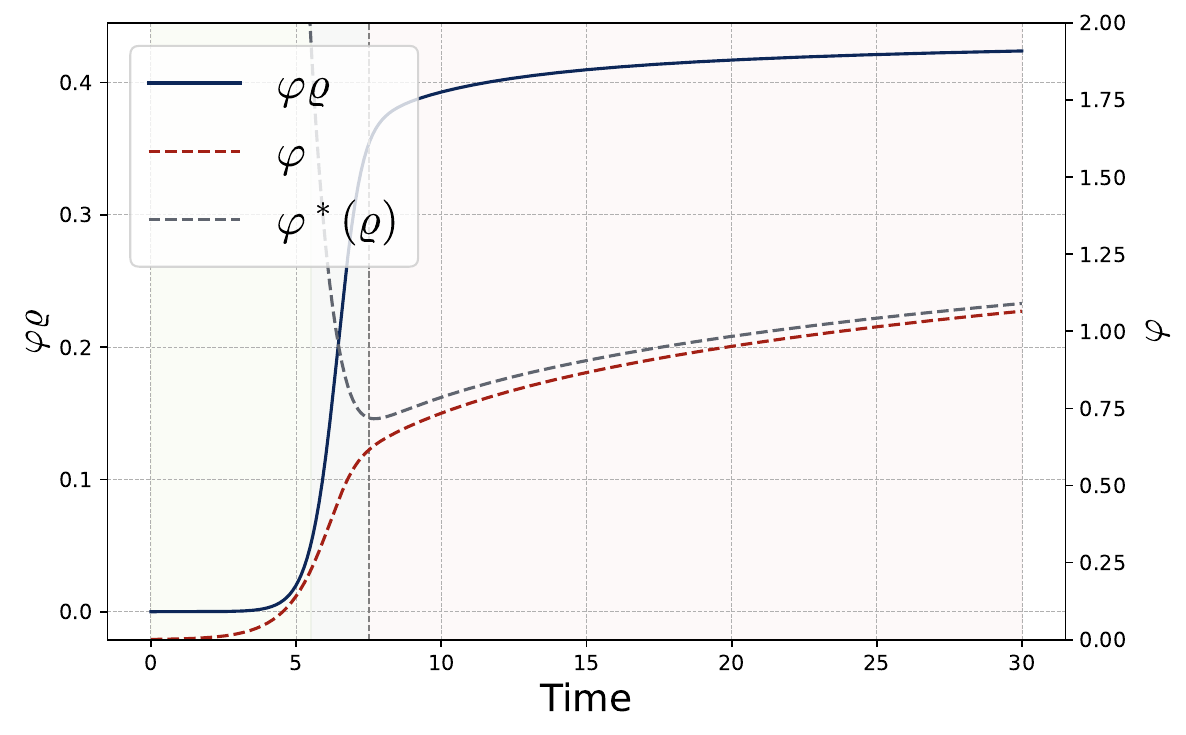}
        \caption{Product $\varphi_t \cdot \varrho_t$ and $\varphi^*(\varrho_t)$.}
        \label{subfig:approx_dynamic_opt}
    \end{subfigure}
    \caption{Gradient flow with respect to the approximated loss $\tilde\cL/2$ with $L=40$, $d=5$, $H=2$ and $\sigma^2=0.1$ under initialization in Definition \ref{def:initial} with initialization $\alpha=0.001$. Figure (a) shows the full evolution of the four-dimensional dynamic system in terms of $\mu$ and $\omega$ along the gradient flow. Figure (b) provides a closer view of the dynamics during the early stage, with areas shaded in different colors highlighting the three phases. Specifically, $\varrho_t$ first reaches its maximum ($\sim0.57$) which is on the order of $O(\sqrt{\log d/d})$ $(\sim0.56)$ and $\varphi_t$ keeps increasing. Figure (c) and (d) present the evolution of loss $\tilde \cL (\theta_t)$, $\varphi_t/\varrho_t$, $\varphi_t\varrho_t$, and $\varphi^*(\varrho_t)$ to track the relative behavior of parameters. Here $\varphi^*(\varrho_t)$ is defined in \eqref{eq:def_phistar}.} 
    \label{fig:dynamics_overview}
\end{figure}

\begin{figure}[t!]
    \centering
    \begin{subfigure}{0.43\textwidth}
        \includegraphics[width=\linewidth]{figures/Dynamic_approx_all_time}
        \caption{Full dynamics.}
        \label{subfig:approx_full_dynamic_2}
    \end{subfigure}~~
    \begin{subfigure}{0.43\textwidth}
        \includegraphics[width=\linewidth]{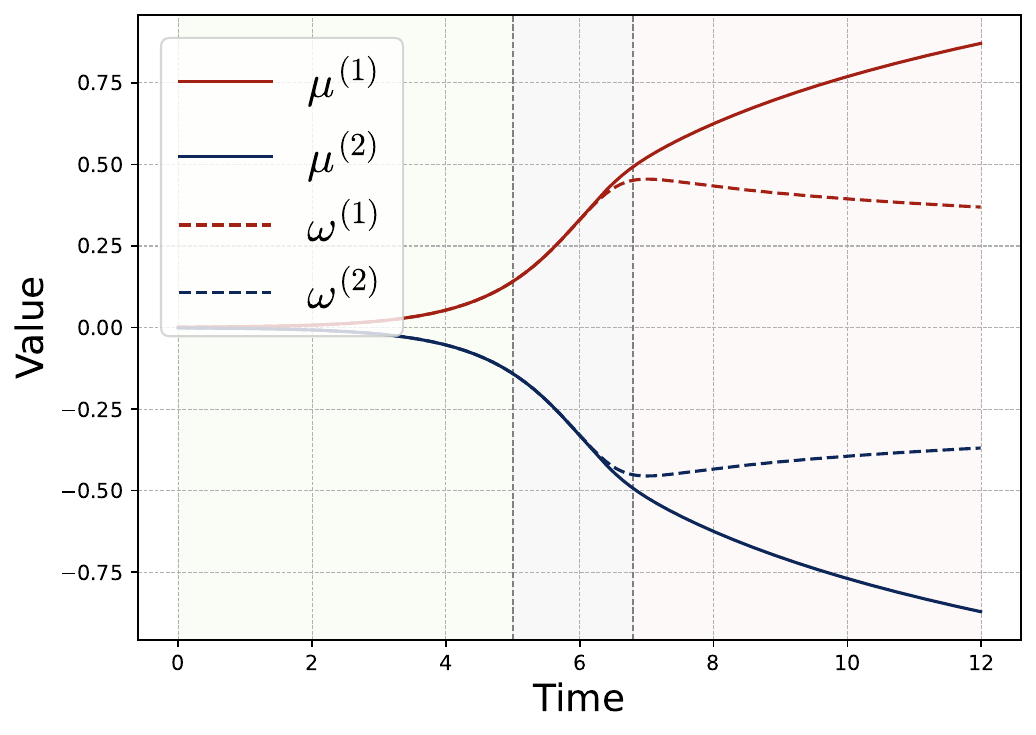}
        \caption{Early-stage dynamics.}
        \label{subfig:approx_early_dyn2}
    \end{subfigure}
    \begin{subfigure}{0.03\textwidth}
    \quad
    \end{subfigure}
    \vspace{1em}
    
    \begin{subfigure}{0.45\textwidth}
        \includegraphics[width=\linewidth]{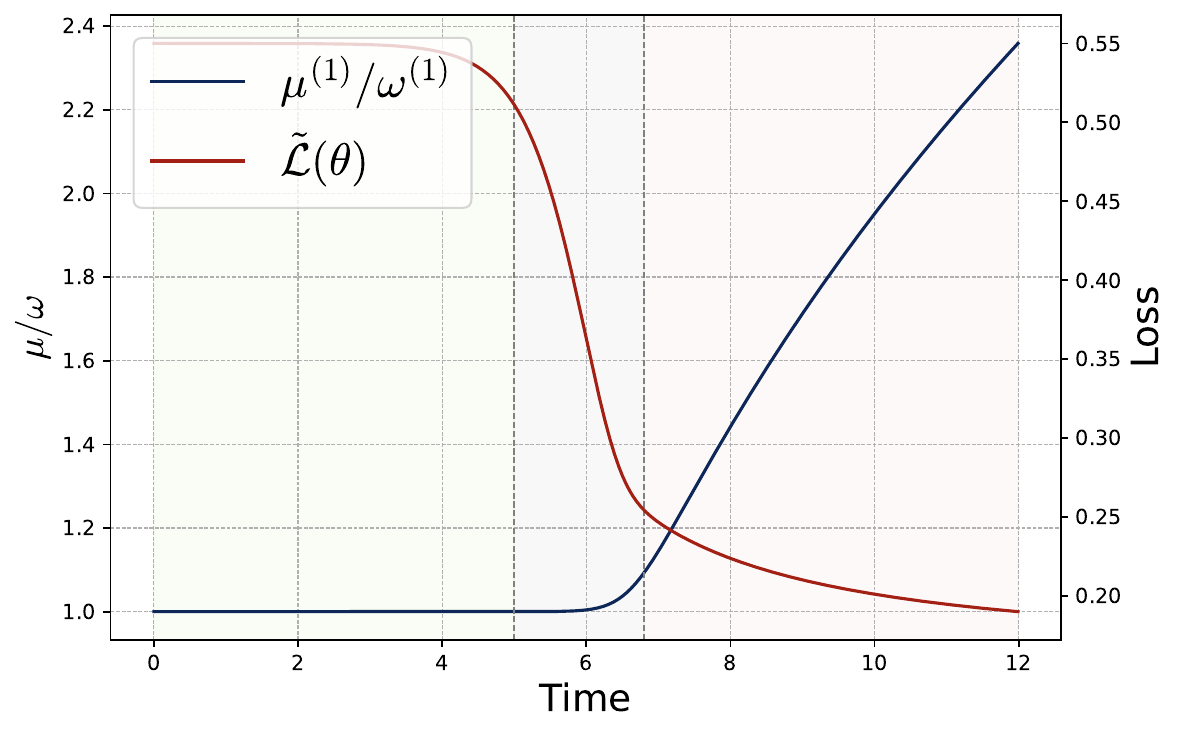}
        \caption{Loss and ratio.}
        \label{subfig:approx_dynamic_ratio2}
    \end{subfigure}
    \begin{subfigure}{0.45\textwidth}
        \includegraphics[width=\linewidth]{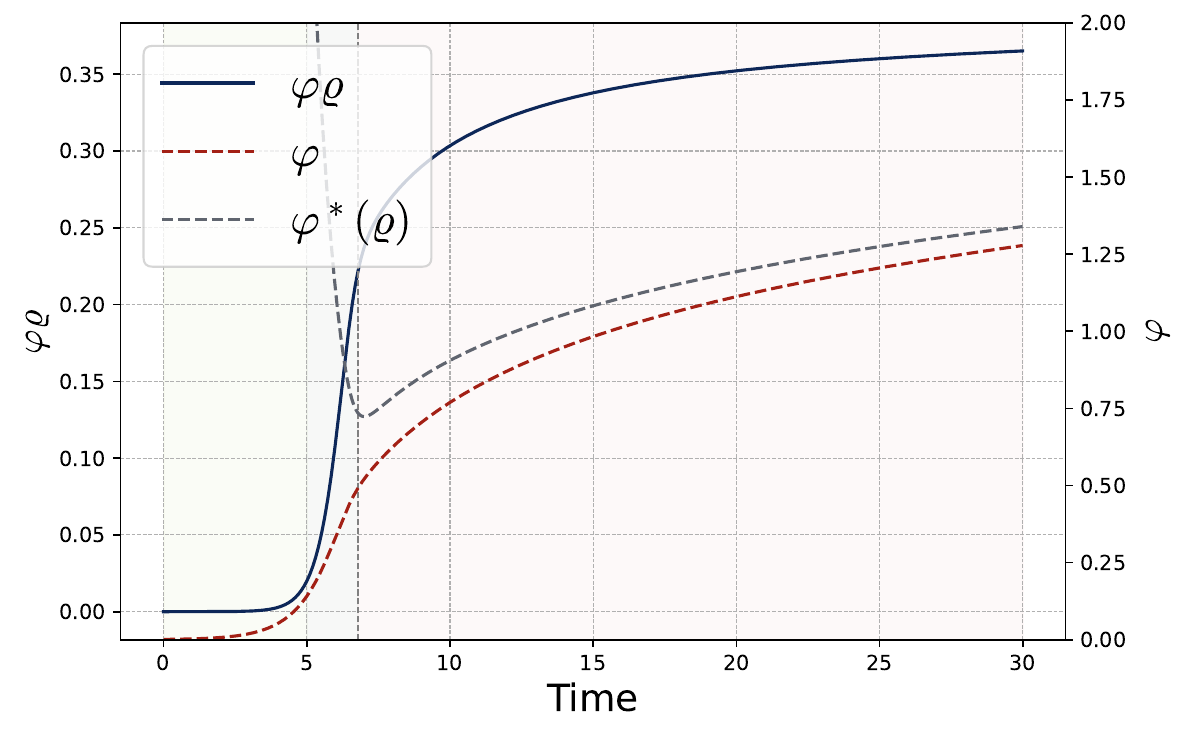}
        \caption{Product and $\varphi^*_t(\varrho_t)$.}
        \label{subfig:approx_dynamic_opt2}
    \end{subfigure}
   
    \caption{Gradient flow with respect to the approximated loss $\tilde\cL/2$ with $L=40$, $d=10$, $H=2$ and $\sigma^2=0.1$ under initialization in Definition \ref{def:initial} with initialization $\alpha=0.001$. Figure (a) shows the full evolution of the four-dimensional dynamic system in terms of $\mu$ and $\omega$ along the gradient flow. Figure (b) provides a closer view of the dynamics during the early stage, with areas shaded in different colors highlighting the three phases. Specifically, $\varrho_t$ first reaches its maximum ($\sim0.45$) which is on the order of $O(\sqrt{\log d/d})$ $(\sim0.47)$ and $\varphi_t$ keeps increasing. Figure (c) and (d) present the evolution of loss, $\varphi_t/\varrho_t$, $\varphi_t\varrho_t$, and $\varphi_t^*$ to track the relative behavior of parameters.} 
\end{figure}

\paragraph{Phase \MakeUppercase{\romannumeral1}: Exponential and  Synchronous Growth.} At the beginning of gradient flow, both $\varphi_t$ and $\varrho_t$ quickly escape from the starting point, maintaining an almost identical growth rate while keeping their magnitude small.
 During the first stage, $\varphi_t\approx\varrho_t$ holds due to the same small initialization (see Figure \ref{subfig:approx_early_dyn} for an illustration). 
In this case, we have $\exp(d\varrho_t^2)-\exp(-d\varrho_t^2) = O(\varrho_t^2)$ and $\exp(d\varrho_t^2)+\exp(-d\varrho_t^2)\approx2$.
Ignoring the high-order terms of $\varphi_t$ and $\varrho_t$, we can approximate 
\eqref{eq:ode_dynamic} by a much simpler system of differential equations: 
\begin{align}\label{eq:ode_approx_phase1}
\partial_t\varphi_t\approx\varrho_t,\qquad\partial_t\varrho_t\approx\varrho_t,
\end{align}
whose solution is given by 
\begin{align}
    \label{eq:phase1_para_dyn}
\varrho_t\approx\alpha\exp(t),\qquad \varphi_t\approx\alpha\int_0^t\varrho_s\rd s\approx\alpha \cdot \exp(t).
\end{align}
Thus, both $\varphi_t$ and $\varrho_t$ grows exponentially in $t$. 

Moreover, despite the exponential growth, the ratio between $\varphi_t$ and $\varrho_t$ remains at the initial value, i.e., ${\varphi_0}/{\varrho_0}=1$ for a pretty long time, resulting in a synchronous growth at the initial stage. To see this, note that we have 
\begin{align}\label{eq:approx_log_ratio}
	\partial_t\log (\varphi_t / \varrho_t ) &=\partial_t\log\varphi_t-\partial_t\log\varrho_t = 1/\varphi_t \cdot \partial_t \varrho_t - 1/ \varrho_t \cdot \partial _t  \varrho _t \notag \\
	&=\left(\frac{\varrho_t}{\varphi_t}-\frac{\varphi_t}{\varrho_t}\right)-2(\varrho_t^2-\varphi_t^2)-\lambda\cdot\left(\exp(d\varrho_t^2)-\exp(-d\varrho_t^2)\right) \notag \\
	&\qquad+d\lambda\varphi_t^2\cdot\left(\exp(d\varrho_t^2)+\exp(-d\varrho_t^2)\right)\approx2d\lambda\cdot\varphi_t^2 > 0.
\end{align}
Here the last approximation step is derived from the facts that $\varphi_t \approx \varrho_t $ and that $\exp(d\varrho_t^2)+\exp(-d\varrho_t^2) \approx 2$.
As a result, the ratio $ \varphi_t / \varrho_t $ increases in $t$. 
Moreover, recall that we define $\lambda = (1+ \sigma^2 ) / L$, which is small when $L$ is large. 
As a result, when $t$ is small,  the ratio $ \varphi_t / \varrho_t $ increases in $t$ at a rather slow rate.   
More specifically, by solving the ODE above, ${\varphi_t}/{\varrho_t}$ can be approximately written as a function of $\varrho_t$, i.e., 
\begin{align}
	\frac{\varphi_t}{\varrho_t} & \approx\exp\left(2d\lambda\int_0^t\varphi_s^2\rd s\right)\approx \exp\left(2d\lambda \alpha^2 \cdot \int_0^t\exp(2s) \rd s\right)  \notag \\
    &   = \exp\left(d\lambda\alpha^2\cdot\exp(2t)\right)\approx1+d\lambda\alpha^2\cdot\exp(2t)\approx1+d\lambda \cdot \varrho_t^2.
	\label{eq:phase1_ratio}
\end{align}
Here, the second and the last approximation step uses \eqref{eq:phase1_para_dyn}, and the third approximation step uses the fact that $t$ is small. 
As shown in Figures \ref{subfig:approx_early_dyn} and \ref{subfig:approx_dynamic_ratio}, in the first phase, while both $\varphi_t$ and  $\varrho_t $
increase in $t$, their ratio  remains substantially close to one when $\varrho_t$. 

In addition, Figure  \ref{subfig:approx_dynamic_ratio} also shows that in the first phase, the loss started to decrease alongside the rapid growth of parameters. 
To show this rigorously, we compute the time-derivative of the loss function. Let $\cL_t $ denote $\cL(\theta_t)$, which is given by
$$
\cL_t:=\cL(\theta_t)=(1+\sigma^2)/2-2\varphi_t\varrho_t+2\varphi_t^2\varrho_t^2+\lambda\varphi_t^2\cdot\left(\exp(d\varrho_t^2)-\exp(-d\varrho_t^2)\right).
$$
Then we have  
\begin{align}
	\partial_t\cL_t&=-2(1-2\varphi_t\varrho_t)\cdot(\varphi_t\cdot\partial_t\varrho_t+\varrho_t\cdot\partial_t\varphi_t)+2\lambda\varphi_t\cdot\partial_t\varphi_t\cdot\left(\exp(d\varrho_t^2)-\exp(-d\varrho_t^2)\right)\notag \\
	&\quad+4\lambda d\varphi_t^2\varrho_t\cdot\partial_t\varrho_t\cdot\left(\exp(d\varrho_t^2)+\exp(-d\varrho_t^2)\right) \label{eq:loss_time_derivative}\\
	&\approx4\varrho_t^2\cdot(2\varphi_t\varrho_t-1)+8\lambda d\cdot\varphi_t^2\varrho_t^2\approx-4\varrho_t^2 = -4 \alpha^2 \cdot \exp(2t). \notag 
\end{align}
Here, the first approximation is based on \eqref{eq:ode_approx_phase1} and the facts that $\varphi_t \approx \varrho_t $ and that $\exp(d\varrho_t^2)+\exp(-d\varrho_t^2 )\approx 2$. 
In the second approximation, we only keep the dominating term, $-4 \varrho_t^2$, and remove the high-order terms that are negligible. 
In the last equality, we plug in the closed-form of $\rho_t$ in \eqref{eq:phase1_para_dyn}. 
Hence, solving this differential equation, we have 
$$
\cL_t\approx\cL_0-4\int_0^t\alpha^2\exp(2s)\rd s=\cL_0-2\varrho_t^2,
$$
where $\cL_0 = \cL(\theta_0)$ is the initial value of the loss. 
That is, the loss decreases exponentially in $t$. 
However, since the magnitude of $\varrho_t$ remains small despite its exponential growth, the loss function does not decrease significantly during this phase.

In summary, during Phase \MakeUppercase{\romannumeral1}, $\varphi_t$ and $\varrho_t$ grow exponentially while keeping their ratio close to one.
In addition,  the change in both the loss $\cL_t$ and the ratio ${\varphi_t}/{\varrho_t}$ is proportional to $\varrho_t^2$, indicating exponential changes in both quantities.
The end of Phase \MakeUppercase{\romannumeral1} is defined by
\begin{align}\label{eq:tau1}
\tau_1=\max\{t\in\RR^+:\min \{\varphi_t,\varrho_t\}\leq d^{-1/2}/2 \},
\end{align} 
when $\varphi_t$ and $\varrho_t$ is sufficiently large such that approximations for $\exp(d\varrho_t^2)$, $\exp(-d\varrho_t^2)$, and the high-order terms are no longer valid.
 Hence, the loss has decreased at least by $(2d)^{-1}$ and ${\varphi_t}/{\varrho_t}$ is approximately bounded by  $1 + \lambda/4 \asymp 1 + 1/L $ during the first phase.

\paragraph{Phase \MakeUppercase{\romannumeral2}: Slowed Growth and Peak Formation.} In the second phase, we focus on the evolution after time $t \geq \tau_1$, which is defined in \eqref{eq:tau1}. 
In this phase, we cannot ignore the high-order terms of $\varphi_t$ and $\varrho_t $. 
In particular, we use the first-order approximation $\exp(d\varrho_t^2)-\exp(-d\varrho_t^2)\approx2d\varrho_t^2$. 
Moreover, 
We still adopt the approximation that $\varphi_t\approx\varrho_t$. To see this, note that   \eqref{eq:approx_log_ratio} now becomes 
\begin{align}\label{eq:approx_log_ratio2}
\partial _t \log (\varphi_t / \varrho_t ) \approx  - 2 d\lambda  \cdot \varrho_t ^2 +   d \lambda \varphi_t^2 \cdot ( 2  + d^2  \varrho_t^4 )  = d \lambda \cdot \varphi_t^ 6. 
\end{align}
Here we plug in $\varrho_t \approx \varphi_t $ and the second-order approximation that 
$\exp(d\varrho_t^2)+\exp(-d\varrho_t^2) \approx 2 + d^2 \varrho_t^4.  $
Similar to \eqref{eq:phase1_ratio}, we can solve \eqref{eq:approx_log_ratio2} and conclude that the ratio $\varphi_t / \varrho_t $ remains close to one, as long as $\varphi_t $ and $\varrho_t $ remains small. 

Now we focus on the dynamics of $\varphi_t$ and $\varrho_t$. By \eqref{eq:ode_dynamic}, the dynamics of $\varphi_t$ can be simplified to
\begin{align}\label{eq:phi_dynamics_2} 
&\partial_t\varphi_t\approx\varrho_t-2\varphi_t\varrho_t^2-\lambda\varphi_t\cdot2d\varrho_t^2\approx\varphi_t-2(d\lambda+1)\cdot\varphi_t^3,
\end{align}
where the approximations follow from  $\exp(d\varrho_t^2)-\exp(-d\varrho_t^2)\approx2d\varrho_t^2$ and $\varphi_t \approx \varrho_t $. 
Solving this differential equation, for all  for all time $t$ during this phase ($t\geq \tau_1$) , we have 
\begin{align}
\varphi_t\approx\left\{{2(d\lambda+1)+\zeta(\tau_1)\cdot\exp(-2(t-\tau_1))}\right\}^{-1/2},\text{~~with~~}\zeta(t)=\alpha^{-2}\exp(-2t)-2(d\lambda+1).
\label{eq:phase2_phi}
\end{align}
Note that here we directly solve the differential equation $\partial_t\varphi_t = \varphi_t-2(d\lambda+1)\cdot\varphi_t^3$ with initialization $\varphi_{\tau_1} = \alpha \cdot \exp(\tau_1) $ in closed-form. 
Based on \eqref{eq:phase2_phi}, we see that the growth rate of $\varphi_t$ is decelerated compared with the exponential growth in Phase \MakeUppercase{\romannumeral1}. 
But the increase in the magnitude of $\varphi_t$ is much faster thanks to the much larger initial value $\alpha\cdot\exp(\tau_1)=d^{-1/2}/2 \gg\alpha$.

Note that $\varrho_t$ shares a similar behavior since their ratio remains around one given its small growing rate. 
In particular,  the dynamics of $\varrho_t$ can be approximately  characterized by
\begin{align}\label{eq:rho_dynamics_2}
\partial_t\varrho_t\approx\varphi_t-2\varphi_t^2\varrho_t-d\lambda\varphi_t^2\varrho_t\cdot\left(2+d^2\varrho_t^4\right)\approx\varrho_t-2(d\lambda+1)\cdot\varrho_t^3-d^3\lambda\cdot\varrho_t^7,
\end{align}
where we use the second-order approximation $\exp(d\varrho_t^2)+\exp(-d\varrho_t^2)\approx  2 +d^2\varrho_t^4 $ and $\varphi_t \approx \varrho_t $.
Furthermore,  to better understand the relative behavior between $\varphi_t$ and $\varrho_t$,  we can calculate the time derivative of their difference $\varphi_t - \varrho_t$. In particular, combining \eqref{eq:phi_dynamics_2}  and \eqref{eq:rho_dynamics_2}, we have 
\begin{align}
\partial_t\varphi_t-\partial_t\varrho_t\approx d^3 \lambda \cdot \varrho_t^7 >0,
\label{eq:phase2_gap}
\end{align}
where we  use the fact that $\varphi_t \approx \varrho_t$. 
Hence,  the gap between $\varphi_t$ and $\varrho_t$ increase in $t$ and thus approximation $\varphi_t\approx\varrho_t$ would be violated by the end of this phase.
Note that with the increasing of $\varrho_t$, $\partial_t\varphi_t$ and $\partial_t\varrho_t$ are quickly decreasing due to the exponential growth of $\exp(d\varrho_t^2)$ in \eqref{eq:ode_dynamic}.
Based on \eqref{eq:phase2_gap}, we can see that $\partial_t\varrho_t<\partial_t\varphi_t$ such that $\varrho_t$ will first reach the critical point, i.e., $\partial_t\varrho_t=0$, which marks the end of Phase \MakeUppercase{\romannumeral2}. Define
$$
\tau_2=\max\{t\in\RR^+:\varphi_t\varrho_t\cdot(d\lambda\cdot\exp(d\varrho_t^2)+d\lambda\cdot\exp(-d\varrho_t^2)+2)\leq1\}.
$$
To characterize the scale of $\varrho_t$ by the end of the second phase, given $\varrho_t\approx\varphi_t$, we have
\begin{align}
1\approx\varrho_{\tau_2}^2\cdot(d\lambda\cdot\exp(d\varrho_{\tau_2}^2)+d\lambda\cdot\exp(-d\varrho_{\tau_2}^2)+2)\gtrsim\varrho_{\tau_2}^2\cdot\exp(d\varrho_{\tau_2}^2),
\label{eq:peak_value_bound_KQ}
\end{align}
since we assume $d \lambda = (1 + \sigma^2 ) \cdot \xi$ is of constant order.
Moreover, note that $\tau_2$ characterizes the time when $\varrho_t$ hit its peak value since $\varrho_t$ increases monotonically during the first two stages. 
And as we will show later, $\varrho_t$ continues to decrease thereafter. Thus, we conclude that 
$$
\max_{t\in\RR^+}\varrho_t=\varrho_{\tau_2}=O\left(\sqrt{ ( \log d-\log\log d )/ d} \right) = \tilde O(1 / \sqrt{d}),
$$
by solving the inequality in \eqref{eq:peak_value_bound_KQ} given sufficiently large $d$. 
More specifically, we solve the equation $ d \lambda \cdot \varrho_{\tau_2} ^2  \cdot \exp(d\varrho_{\tau_2} ^2) = 1$ and use the expansion of Lambert-$W$ function.
Here $\tilde O(\cdot)$ omits logarithmic factors. 
Therefore, we prove that the value of $\varrho_t$ forms a peak, with magnitude of $\tilde O(1 / \sqrt{d})$. This aligns with the observation in Figure  \ref{subfig:approx_early_dyn}.

\paragraph{Phase \MakeUppercase{\romannumeral3}: Convergence} Next, we provide a heuristic analysis of the last phase showing that $\varrho_t$ decays to zero asymptotically. 
In particular, we employ a two-timescale analysis by first pushing $\varphi_t$ to its limit, enabling us to eliminate $\varphi_t$ by treating it as a function of $\varrho_t$.
Recall that 
$$
\cL_t=(1+\sigma^2)/2-2\varphi_t\varrho_t+2\varphi_t^2\varrho_t^2+\lambda\varphi_t^2\cdot\left(\exp(d\varrho_t^2)-\exp(-d\varrho_t^2)\right).
$$
Note that the loss is quadratic with respect to $\varphi_t$.
By finding the critical point of $\cL_t$ with respect to $\varphi_t$, we know that 
the limiting value of $\varphi_t$ for a fixed $\varrho_t$ is given by
\begin{align}\label{eq:def_phistar}
\varphi_t^*:=\varphi ^*(\varrho_t)=\frac{\varrho_t}{2\varrho_t^2+\lambda\left(\exp(d\varrho_t^2)-\exp(-d\varrho_t^2)\right)}.
\end{align}
Moreover, as shown in Figure \ref{subfig:approx_dynamic_ratio}, in this phase, the evolution of
$\varphi_t^*$ and $\varphi_t$ are almost identical. 
Motivated by this observation, to characterize the limiting behavior of $\varrho_t$, we replace $\varphi_t$ by $\varphi_t^* $ in the ODE in \eqref{eq:ode_dynamic}. 
This leads to an ODE only involving $\varrho_t$: 
\begin{align}\label{eq:new_ode_two_ts}
	\partial_t\varrho_t&=\varphi_t^*\cdot\left\{1-\left(2+d\lambda\cdot\left(\exp(d\varrho_t^2)+\exp(-d\varrho_t^2)\right)\right)\cdot\varphi_t^*\varrho_t\right\}.
\end{align}
Using first-order approximations  $$\exp(d\varrho_t^2)-\exp(-d\varrho_t^2)\approx2d\varrho_t^2, \qquad \textrm{and} \qquad \exp(d\varrho_t^2)+\exp(-d\varrho_t^2)\approx2+d^2\varrho_t^4
$$ in \eqref{eq:def_phistar} and \eqref{eq:new_ode_two_ts}, we have 
$ \varphi_t^*\approx (2(d\lambda+1)\cdot\varrho_t) ^{-1}$ and 
$$
\partial_t\varrho_t\approx\frac{1}{2(d\lambda+1)\cdot\varrho_t}\cdot\biggl\{1-\frac{2+d\lambda\cdot\left(2+d^2\varrho_t^4\right)}{2(d\lambda+1)}\biggl\} = - \frac{d^3\lambda}{4(d\lambda+1)^2}\cdot \varrho_t^3.
$$
Solving this ODE in closed form, we can characterize the evolution of $\varrho_t$ and $\varphi_t^* = \varphi^*(\varrho_t)$ by
$$
\varrho_t\approx\left(\frac{1}{\varrho_{\tau_2}^2}+\frac{d^3\lambda\cdot(t-\tau_2)}{2(d\lambda+1)^2}\right)^{-1/2},\quad\varphi_t^*\approx\frac{1}{2(d\lambda+1)}\cdot\left(\frac{1}{\varrho_{\tau_2}^2}+\frac{d^3\lambda\cdot(t-\tau_2)}{2(d\lambda+1)^2}\right)^{1/2},\quad \forall t\in[\tau_2,\infty).
$$

As mentioned above, when $t$ is sufficiently large, we regard that $\varphi_t^* $ is almost equal to $\varphi_t$. 
Furthermore, we know that $\varrho_{\tau}^2 \approx  \log d / d $. 
Under the high-dimensional regime where $d / L \rightarrow \xi$, we know that $d \lambda 
\rightarrow (1 + \sigma^2) \cdot \xi < 1 $, which can be regarded as a constant.
Thus, $\varrho_t$ decreases and converges to $0^{+}$ at a rate of $\Theta(  (d\sqrt{t})^{-1})$, i.e.,  $\varrho_\infty \to 0^+$.
While $\varphi_t$ increases at a rate of $\Theta(  d\sqrt{t})$ when $t$ increases, and thus grows to infinity. 
Moreover, 
while $t$ increases, 
the product $2\varphi_t \varrho_t$  remains almost constant and converges to $1 / (1 + (1 + \sigma^2) \cdot \xi)$ during this phase. 

In summary, through a heuristic two-timescale analysis, we prove that $\varrho_t$ decreases to zero after reaching its peak. 
And $\varphi_t$ increases to infinity while maintaining $\varphi_t \varrho_t$ at a constant level.

\newpage
\section{Implications of Learned Pattern of QK and OV Circuits} \label{sec:learned_circuit_implications}

\subsection{Transformer Model as a Sum of Kernel Regressors} \label{sec:transformer_kernel}
Recall that we use $ \smax(\cdot) $ denote the softmax function and use   $Z_\mathsf{ebd} \in \RR^{(d+1) \times (L+1) }$ denote $\begin{bmatrix}Z & z_q\end{bmatrix}$, where $Z $ contains the first $L$ demonstration samples and $z_q = (x_q^\top , 0)^\top$ contains the test input. 
Here the $L$ demonstration samples are generated from a noisy linear model with parameter $\beta$, which itself is drawn from a distribution $\mathsf{P}_\beta$.
The transformer output $\tf_\theta(Z_\mathsf{ebd})$ is specified in \eqref{eq:std_TF}, which is a $(d+1)\times (L+1)$-dimensional matrix. Finally, the prediction head outputs the $(d+1,L+1)$-th entry of $\tf_\theta(Z_\mathsf{ebd})$, denoted as $\hat{y}_q$, which is used to predict the desired response $\beta^\top x_q$. 
Notice that the $(d+1,L+1)$-th entry of $Z_{\mathsf{ebd}}$ is the $(d+1)$-th entry of $z_q$, which is equal to zero. 
Due to causal masking, $z$ does not attend to itself and only attend to the previous tokens, namely $Z$. 
Thus, we can simplify the $(L+1)$-th column of $\tf_\theta(Z_\mathsf{ebd})$ to 
\begin{align}\label{eq:hat_yq_simplify}
\sum_{h=1}^H O^{(h)}V^{(h)}Z \cdot\smax (Z^\top K^{(h)^\top}Q^{(h)}z_q \big)\in\RR^{(d+1) }.
\end{align}
Compared with \eqref{eq:std_TF}, we replace $Z_{\mathsf{ebd}}$ with $Z$ in the value part of attention, because causal masking ensures that the output of the softmax function is a probability distribution over $[L]$. Then, $\hat{y}_q$ is the last entry of the vector in \eqref{eq:hat_yq_simplify}. 
Under the simplified parametrization given in \eqref{eq:simple_model}, 
we have 
$$
Z^\top K^{(h)^\top}Q^{(h)}z_q = \omega^{(h)} \cdot X x_q \in \RR^{L}, \qquad O^{(h)}V^{(h)}Z  = \mu^{(h)}  y \in \RR^L, \qquad \forall h\in[H].
$$
Thus, we can simplify $\hat {y}_q$ to the following form: 
	\begin{align}\label{eq:hat_yq_kernel_regress}
	\hat{y}_q= \sum_{h=1}^H\mu^{(h)} \cdot \left\langle y,\smax\left(\omega^{(h)}\cdot Xx_q\right)\right\rangle:=\sum_{h=1}^H\mu^{(h)}\cdot y^\top p^{(h)},
	\end{align}
	where we  define  $p^{(h)}=\smax\left(  {\omega^{(h)}}  \cdot Xx_q\right)$ for all $h\in[H]$. 
	Here, $p^{(h)}$ and $\mu^{(h)} \cdot  y$ are the attention score and value of the $h$-th head at position $(L+1)$, respectively.
	
	\paragraph{Interpretation of \eqref{eq:hat_yq_kernel_regress} as a Sum of Kernel Regressors.}
    Consider the $h$-th head, where $p^{(h)}$ is a probability distribution over $[L]$, and the probability assigned to each $\ell$ is proportional to $\exp( \omega^{(h)} \cdot x_\ell  ^\top x_q)$. Then, the output of the $h$-th head is equal to 
	\begin{align}\label{eq:kernel_expression}
	\mu^{(h)} \cdot \sum_{\luolan{\ell}=1}^L   y_{\luolan{\ell}} \cdot \mathfrak{K} (x_q, x_{\luolan{\ell}}; \omega^{(h)}), \qquad \textrm{where}\quad \mathfrak{K} (x_q, x_{\luolan{\ell}}; \omega) = \frac{  \exp( \omega \cdot x_{\luolan{\ell} } ^\top x_q) }{\sum_{i=1} ^L \exp( \omega  \cdot x_i  ^\top x_q) }.  
	\end{align}
Here $\mathfrak{K} (x_q, x_\ell ; \omega)$ is a kernel function that captures the similarity of $x_q$ and $x_{\ell}$ and the parameter $1/ \omega$ plays the role of bandwidth of the kernel.
Notice that the kernel $\mathfrak{K}$ in \eqref{eq:kernel_expression} is slightly different from the standard Gaussian radial basis function (RBF) kernel, which is defined as $\exp(-\omega \cdot \|x_q - x_\ell\|_2^2  )$, where $\omega$ is a parameter. 
These two kernels coincide when when $x_q$ and $x_{\ell}$ are on the unit sphere. 
However, in our case, $\mathsf{P}_x$ is supported on $\RR^d$ and thus these two kernels are different. 
Nevertheless, each term as in \eqref{eq:kernel_expression} is still a Nadaraya-Watson kernel regression predictor.
Moreover, the intuition of such an estimator is clear: the output in \eqref{eq:kernel_expression} is a weighted sum of the responses $\{ y_{\ell} \}_{\ell\in [L]}$ in the demonstration data, and the weights are determined by the similarity between the test input $x_q$ and the demonstration input $x_{\ell}$. 
In summary, when the parameters of the transformer are given by \eqref{eq:simple_model}, the prediction $\hat{y}_q$ is a sum of $H$ kernel regressors.

\paragraph{Interpretation of Multi-Head Attention Predictor with $(\omega,\mu)\subseteq\mathscr{S} _\gamma$.} As shown in the empirical observations in \S \ref{sec:empirical} and the solution manifold in \S \ref{sec:training_dynamic}, parameters $\{ (\omega^{(h)}, \mu^{(h)}) \}_{h\in[H]} $  found by the transformer model are in the solution manifold $\mathscr{S}_\gamma$ defined in \eqref{eq:solution_manifold} for some $\gamma>0$.
Thus, $\hat {y}_q$ in \eqref{eq:hat_yq_kernel_regress} can be simplified as a sum of two kernel regressors: 
\begin{align}\label{eq:hat_yq_kernel_regress_simplified}
\hat {y}_q = \sum_{\luolan{\ell}=1}^L \mu_{\gamma} \cdot  y_i \cdot \bigl( \mathfrak{K} (x_q, x_{\luolan{\ell}}; \gamma ) -   \mathfrak{K} (x_q, x_{\luolan{\ell}}; -  \gamma ) \bigr).
\end{align}
The intuition behind such an estimator is as follows. The kernel $\mathfrak{K}(x_q, x_{\ell}; \gamma)$ assigns a larger weight to $x_{\ell}$ when 
$x_{\ell}^\top x_{q} $ is large, i.e., when $x_q$ and $x_{\ell}$ are similar. The kernel $\mathfrak{K}(x_q, x_{\ell}; - \gamma)$ is large when $x_{\ell}^\top x_{q}$ is small, i.e., when $x_q$ and $x_{\ell}$ are dissimilar. 
Thus, the predictor in \eqref{eq:hat_yq_kernel_regress_simplified} is a sum of two kernel regression predictors based on datasets $\{ x_\ell, y_{\ell} \}_{\ell \in [L]}$ and   $\{ - x_{\ell}, - y_{\ell} \}_{\ell \in [L]}$ respectively. 
It is also reasonable why both these two kernel regressors appear. Specifically,  $(x_\ell, y_\ell)$ and $(-x_{\ell}, - y_{\ell})$ have the same distribution when $x_{\ell} \sim \cN (0, I_d)$ and $y_{\ell}$ is generated from a linear model. Thus, we can use both $\{ x_\ell, y_{\ell} \}_{\ell \in [L]}$ and $\{ - x_{\ell}, - y_{\ell} \}_{\ell \in [L]}$ to learn the parameter $\beta$. 

\paragraph{Difference between Single-Head and Multi-Head Softmax Attention.} 
As shown in \cite{chen2024training}, when $H = 1$, the learned transformer implements a Nadaraya-Watson kernel regressor as in \eqref{eq:kernel_expression}, with $|\omega^{(1)}| \asymp 1/\sqrt{d}$. 
Moreover, the KQ and OV matrices of the trained single-head model exhibit a positive or negative only pattern.
In contrast, when $H\geq2$, the multi-head attention mechanism learns to behave as a sum of kernel regressors, achieving better performance due to its greater expressive power.
Furthermore,the trained multi-head model displays a coupled positive-negative pattern and works in a regime with a significantly smaller magnitudes of KQ parameters.
Therefore, despite the form of the sum of kernel regressors, the multi-head attention indeed learns to approximate a variant of the debiased GD predictor, which better captures the linear structure of the problem compared to the single-head case (see \S \ref{sec:optimality} for details).

\subsection{Proof of Proposition \ref{prop:informal_approx_loss}: Loss Approximation}\label{ap:informal_approx_loss}
For better understanding the approximation and scaling conditions in Proposition \ref{thm:approx_loss}, we provide a more detailed discussion alongside the simulation results in \S \ref{ap:add_loss_approx}.
\begin{proposition}[Formal Statement of Proposition \ref{prop:informal_approx_loss}]\label{thm:approx_loss}
	Under the parametrization in \eqref{eq:simple_model}, consider the $H$-head attention model with dimension $d\in\ZZ^+$ and sample size $L\in\ZZ^+$. 
    Suppose that the parameters $(\omega,\mu)\subseteq\RR^{2H}$ satisfy one of the following conditions:
    $$
    \|\omega\|_\infty\leq 0.1\sqrt{\log L/\max\{d,\log L\}},\qquad\max\{\|\mu\|_\infty,\|\mu\|_\infty^2\}\lesssim L^{-\lambda/2+3/10},
    $$
    where $\lambda>0$ is an absolute constant defining the error level. Then, it holds that
	$$
	\cL(\omega,\mu)=1+\sigma^2-2\mu^\top\omega+\mu^\top\big(\omega\omega^\top+(1+\sigma^2)\cdot L^{-1}\cdot\exp(d\omega\omega^\top)\big)\mu+O(dH^2\cdot L^{-\lambda}).
	$$
	Here $O(\cdot)$ omits some universal constant.
\end{proposition}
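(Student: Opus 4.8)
\textbf{Proof plan for Proposition~\ref{thm:approx_loss}.}
The plan is to expand the squared-loss $\cL(\omega,\mu) = \EE[(y_q - \hat y_q)^2]$ using the kernel-regressor form of $\hat y_q$ in \eqref{eq:hat_yq_kernel_regress}, namely $\hat y_q = \sum_h \mu^{(h)} \langle y, p^{(h)}\rangle$ with $p^{(h)} = \smax(\omega^{(h)} X x_q)$, and then identify which moments of the random vectors $\{p^{(h)}\}$ survive in the limit. Writing $y = X\beta + \epsilon$, we have $\EE[(y_q-\hat y_q)^2] = \EE[y_q^2] - 2\EE[y_q \hat y_q] + \EE[\hat y_q^2]$. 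Since $y_q = \beta^\top x_q + \epsilon_q$ and $\beta \sim \cN(0, I_d/d)$, $x_q\sim\cN(0,I_d)$, we get $\EE[y_q^2] = 1 + \sigma^2$. For the cross term and the quadratic term, the key algebraic observation is that $\langle y, p^{(h)}\rangle = \beta^\top X^\top p^{(h)} + \epsilon^\top p^{(h)}$; conditioning on $X$ and $x_q$ (so $p^{(h)}$ is determined), and using $\EE[\beta\beta^\top] = I_d/d$, $\EE[\epsilon\epsilon^\top] = \sigma^2 I_L$, the loss reduces to expectations over $X, x_q$ of quantities like $x_q^\top X^\top p^{(h)}$, $d^{-1}(X^\top p^{(h)})^\top (X^\top p^{(h')})$, and $\sigma^2\langle p^{(h)}, p^{(h')}\rangle$. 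So the whole computation collapses to understanding $\EE[X^\top p^{(h)} (x_q\text{ or }X^\top p^{(h')})^\top]$ and $\EE[\langle p^{(h)},p^{(h')}\rangle]$.

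The next step is a Taylor expansion of the softmax weights in $\omega$. Because $p^{(h)}_\ell \propto \exp(\omega^{(h)} x_\ell^\top x_q)$, I would expand $\exp(\omega^{(h)} x_\ell^\top x_q)$ and exploit Gaussianity: conditionally on $x_q$, each $x_\ell^\top x_q \sim \cN(0, \|x_q\|_2^2)$, and $\|x_q\|_2^2$ concentrates at $d$. The leading-order behavior gives $p^{(h)} \approx \one_L/L + (\omega^{(h)}/L)(X x_q - \text{mean})$, and carrying the expansion to the relevant order produces the matrix $\omega\omega^\top + (1+\sigma^2) L^{-1}\exp(d\omega\omega^\top)$ in the statement: the $\exp(d\omega^{(h)}\omega^{(h')})$ factor arises precisely from $\EE_{x_q}[\exp((\omega^{(h)}+\omega^{(h')}) \cdot(\text{Gaussian with variance}\sim d))]$ after the normalizing denominators are handled, i.e.\ from the moment generating function of $x_\ell^\top x_q$. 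The main bookkeeping is to track how the softmax \emph{denominator} $\sum_\ell \exp(\omega^{(h)} x_\ell^\top x_q)$ behaves; it concentrates around $L\cdot\EE[\exp(\omega^{(h)} x_\ell^\top x_q)] \approx L\exp(d(\omega^{(h)})^2/2)$, and its fluctuations must be shown to contribute only to the error term.

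The error control is where the scaling hypotheses $\|\omega\|_\infty \lesssim \sqrt{\log L / \max\{d,\log L\}}$ and $\|\mu\|_\infty, \|\mu\|_\infty^2 \lesssim L^{-\lambda/2 + 3/10}$ enter. The strategy is: (i) truncate the Taylor series of each $\exp(\omega^{(h)} x_\ell^\top x_q)$ at a suitable order and bound the remainder using sub-exponential/sub-Weibull tail bounds for $x_\ell^\top x_q$ — the constraint on $\|\omega\|_\infty$ ensures $\omega^{(h)} x_\ell^\top x_q$ stays in a regime where $\exp(\cdot)$ has controlled moments, contributing $L^{O(1)}$ type factors that are dominated by the $L^{-\lambda}$ slack; (ii) show the softmax denominators concentrate, via a Bernstein-type bound on $\sum_\ell \exp(\omega^{(h)} x_\ell^\top x_q) - L\EE[\cdot]$, so that dividing by them is harmless up to $O(L^{-c})$; (iii) collect all cross-head products, which accounts for the $H^2$, and the dimension factor $d$ from terms like $d^{-1}\|X^\top p^{(h)}\|_2^2$ having magnitude $O(d/L)$. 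The hardest part, I expect, will be step (ii) combined with controlling the \emph{higher moments} $\EE[\langle p^{(h)}, p^{(h')}\rangle^k]$-type quantities that appear in $\EE[\hat y_q^2]$ when $\|\mu\|_\infty$ is not tiny: one needs the concentration of $\|p^{(h)}\|_2^2$ around $1/L$ (the ``relatively uniform softmax'' heuristic mentioned before Proposition~\ref{thm:approx_loss} and illustrated in Figure~\ref{subfig:lowd}) to be quantitative enough, with explicit dependence on $d$ and $\|\omega\|_\infty$, so that multiplying by $\|\mu\|_\infty^2 \lesssim L^{-\lambda/2+3/10}$ still leaves something $O(L^{-\lambda})$. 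Getting the exponents $3/10$ and $\lambda/2$ to line up is the delicate accounting that the proof must carry through carefully; everything else is a (lengthy but) routine Gaussian moment computation.
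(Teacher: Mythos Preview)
Your overall decomposition---splitting $\cL$ into $\EE[y_q^2]=1+\sigma^2$, the cross term, and the quadratic term, then reducing to the moments $\EE[x_q^\top X^\top p^{(h)}]$, $d^{-1}\EE[(X^\top p^{(h)})^\top(X^\top p^{(h')})]$, and $\sigma^2\EE[\langle p^{(h)},p^{(h')}\rangle]$---matches the paper's Step~1 exactly, and your identification of the $\exp(d\omega^{(h)}\omega^{(h')})$ factor via the MGF of $x_\ell^\top x_q$ is correct. The technical departure is in how those moments are evaluated. You propose to Taylor-expand $p^{(h)}$ in $\omega^{(h)}$ and control the softmax denominator by Bernstein-type concentration. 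The paper instead keeps $p^{(h)}$ intact and applies Stein's Lemma to the Gaussian rows of $X$: once to obtain $\EE[X^\top p^{(h)}\mid x_q]=\omega^{(h)}x_q\bigl(1-\EE[\|p^{(h)}\|_2^2\mid x_q]\bigr)$ exactly, and twice (packaged as Lemma~\ref{lem:2stein}) to obtain
\[
\EE\bigl[p^{(h')\top}XX^\top p^{(h)}\,\big|\, x_q\bigr]=d\cdot\EE[\langle p^{(h)},p^{(h')}\rangle\mid x_q]+\omega^{(h)}\omega^{(h')}\|x_q\|_2^2+\text{(higher-order $p$-moments)}.
\]
This is what makes the target form fall out cleanly: the $d\cdot\EE[\langle p,p'\rangle]$ piece combines with the noise term $\sigma^2\EE[\langle p,p'\rangle]$ to give the $(1+\sigma^2)$ prefactor, and the $\omega^{(h)}\omega^{(h')}\|x_q\|_2^2$ piece gives $(\mu^\top\omega)^2$ after averaging over $x_q$. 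All remaining error then lives in scalar $p$-moments, for which the paper invokes a black-box result from \cite{chen2024training} (Lemma~\ref{lem:saturation_concent}) that directly supplies $\bigl|\EE[\langle p^{(h)},p^{(h')}\rangle\mid x_q]-L^{-1}\exp(\omega^{(h)}\omega^{(h')}\|x_q\|_2^2)\bigr|=O(L^{-2(1-\epsilon)})$ together with $O(L^{-2(1-\epsilon)})$ bounds on the cubic/quartic $p$-terms, with $\epsilon$ an explicit function of the constant in the $\|\omega\|_\infty$ bound. Your Taylor-plus-concentration route should also work and is more self-contained, but recovering that same $d\cdot\langle p,p'\rangle$ contribution from $\langle X^\top p, X^\top p'\rangle$ requires a fourth-moment Gaussian computation that Stein renders automatic, and your ``hardest part'' (denominator fluctuations) is completely absorbed into the cited lemma in the paper's route; the specific constants $0.1$ and $3/10$ then emerge from a one-line simplification of that lemma's exponent rather than a from-scratch accounting.
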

The informal statement in Proposition \ref{prop:informal_approx_loss} follows by assuming $d > \log L$ and setting $\lambda = 1/5$. A more refined trade-off between the scales of $\omega$ and $\mu$ can be achieved by carefully selecting $\kappa$, the scaling factor for $\omega$, i.e., $\|\omega\|_\infty \leq \kappa\sqrt{\log L / \max\{d, \log L\}}$, where $\kappa$ is now set to a constant 0.1.

\begin{proof}[Proof of Proposition \ref{thm:approx_loss}]
We prove this proposition in three steps:
\begin{itemize}
    \setlength{\itemsep}{-1pt}
    \item In \textbf{Part 1}, we first expand the population loss $\cL(\omega,\mu)$ into multiple terms, where each term involves expectations over the randomness of both coefficient $\beta \sim\mathsf{P}_\beta$ and ICL samples $Z_{\mathsf{ebd}}$. 
    \item In \textbf{Part 2}, we simplify these terms by conditioning on $x_q$ and taking an expectation over the randomness of the rest $\beta$ and $\{(x_\ell,y_\ell)\}_{\ell\in[L]}$ within the context sequence, which gives us a reformulation of the loss in terms of $\|x_q\|_2^2$ and \emph{moments} of the attention probabilities.
    \item In \textbf{Part 3}, we isolate the high-order moments from the low-order moments in the reformulated loss. We show that low-order terms can be well approximated by a polynomial of OV weights and exponential QK weights up to $O(dH\cdot L^{-\lambda})$ error, and high-order terms can also be uniformly bounded by $O(dH^2\cdot L^{-\lambda})$ under appropriate scaling conditions.
\end{itemize}

\subsubsection*{Step 1. Expand Population Loss into Multiple Terms.}
Recall that under the simplified parametrization given in \eqref{eq:simple_model}, the prediction of the transformer model, $\hat y_q$,  is given by \eqref{eq:hat_yq_kernel_regress}, where we let $p^{(h)} = \smax (\omega^{(h)}\cdot Xx_q )$ be the attention score of the $h$-th head. 
Then, the population loss defined in \eqref{eq:train_loss} can be expanded as follows:
	\begin{align}
		\cL(\omega,\mu)&=\EE\biggl[\biggl(\beta^\top x_q-\sum_{h=1}^H\mu^{(h)}\cdot (X\beta+\epsilon)^\top p^{(h)}\biggr)^2\biggr]+\sigma^2\notag\\
		&=\underbrace{\EE\Bigl[(\beta^\top x_q)^2\Bigr]}_{\displaystyle  \textbf{(\romannumeral1)}}-2\underbrace{\EE\biggl[\beta^\top x_q\cdot\sum_{h=1}^H\mu^{(h)}(X\beta+\epsilon)^\top p^{(h)}\biggr]}_{\displaystyle  \textbf{(\romannumeral2)}}\notag\\
		&\quad+\underbrace{\EE\biggl[\sum_{h=1}^H\sum_{h'=1}^H\mu^{(h)}\mu^{(h')}\cdot (X\beta+\epsilon)^\top p^{(h)}\cdot(X\beta+\epsilon)^\top p^{(h')}\biggr]}_{\displaystyle \textbf{(\romannumeral3)}}+\sigma^2.
		\label{eq:loss_decom}
	\end{align}
	We let $\epsilon=(\epsilon_1,\dots,\epsilon_L)\in\RR^{L}$ with $\epsilon_i\iid\cN(0,\sigma^2)$ be the noise terms in $\{ y_{\ell} \}_{\ell \in [L]}$. 
	Notice that $p^{(h)} \in \RR^{L}$ and $X \in \RR^{L\times d}$. 
	Recall that $\beta\sim\cN(0,I_d/d)$ and $x_q\sim\cN(0,I_d)$, then we have
	\begin{equation}
		\textbf{\small(\romannumeral1)}=\EE\left[(\beta^\top x_q)^2\right]=\EE\left[x_q^\top\beta\beta^\top x_q\right]=\EE\|x_q\|^2_2/d=1.
		\label{eq:loss_decom_1}
	\end{equation}
	For the remaining two terms, we can simplify using a similar argument to marginalize $\beta \sim \cN(0,I_d/d)$. 
	Specifically, for term \textbf{\small(\romannumeral2)}, we have
	\begin{align}
		 \textbf{\small(\romannumeral2)}& =\EE\biggl[\beta^\top x_q\sum_{h=1}^H\mu^{(h)}\beta^\top X^\top p^{(h)}\biggr] \notag \\
		&   =\sum_{h=1}^H\mu^{(h)}\cdot\EE\left[x_q^\top\beta\beta^\top X^\top p^{(h)}\right]=\sum_{h=1}^H\frac{\mu^{(h)}}{d}\cdot\EE\left[x_q^\top X^\top p^{(h)}\right].
		\label{eq:loss_decom_2}
	\end{align} 
	For term \textbf{\small(\romannumeral3)}, 
	using the independence between $\epsilon$ and $(X, x_q,  \beta)$, we have 
	\begin{align}
		\textbf{\small(\romannumeral3)}&=\sum_{h=1}^H\sum_{h'=1}^H\mu^{(h)}\mu^{(h')}\cdot\EE\left[ p^{(h')^\top}(X\beta+\epsilon)(X\beta+\epsilon)^\top p^{(h)}\right]\notag\\
		&=\sum_{h=1}^H\sum_{h'=1}^H\mu^{(h)}\mu^{(h')}\cdot\EE\left[ p^{(h')^\top}X\beta\beta^\top X^\top p^{(h)}\right]+\sum_{h=1}^H\sum_{h'=1}^H\mu^{(h)}\mu^{(h')}\cdot\EE\left[ p^{(h')^\top}\epsilon\epsilon^\top p^{(h)}\right]\notag\\
		&=\sum_{h=1}^H\sum_{h'=1}^H\frac{\mu^{(h)}\mu^{(h')}}{d}\cdot\EE\left[p^{(h')^\top}X X^\top p^{(h)}\right]+\sigma^2 \cdot \sum_{h=1}^H\sum_{h'=1}^H\mu^{(h)}\mu^{(h')}\cdot\EE\left[ p^{(h')^\top}p^{(h)}\right],
		\label{eq:loss_decom_3}
	\end{align}
	where the last equality holds because $\EE[ \epsilon \epsilon^\top ] = \sigma^2  \cdot I_{L} $ and $\EE[ \beta \beta^\top ] = I_d/d$.

	In the following, to further simplify the expectation terms in \eqref{eq:loss_decom_2} and \eqref{eq:loss_decom_3},   we decouple the randomness of $X\in\RR^{L\times d}$ and $x_q\in\RR^d$, where $\{ x_\ell\}_{\ell\in [L]}$ and $x_q$ are i.i.d. random vectors sampled from $\cN(0,I_d)$. To this end, we take conditional expectations with respect to $X$ given $x_q$.
	
	\subsubsection*{Step 2. Take Conditional Expectations given $x_q$.}

	Recall that $p^{(h)}=\smax(\omega^{(h)}\cdot Xx_q)$. In the following, for any $\ell \in [L]$, we let $p^{(h)}_\ell$ and $[\smax(\omega^{(h)}\cdot Xx_q)]_\ell$ denote the $\ell$-th entry of the attention probability $p^{(h)}$.  
	Also note that $X =[x_1,\dots,x_L]^\top$ with $x_\ell\iid\cN(0,I_d)$. For any random vector $x \sim \cN(0, I_d)$ and any differentiable function $g$, 
	Stein's Lemma states that $\EE[xg(x)]=\EE[\nabla g(x)]$.
	By this lemma, for all $h\in[H]$, it holds that
	\begin{align}
		\EE[X^\top p^{(h)} \mid x_q]&= \sum_{\ell=1}^L\EE\bigl[x_\ell\cdot [\smax(\omega^{(h)}\cdot Xx_q)]_\ell \given x_q \bigr]= \sum_{\ell=1}^L\EE\bigl[\nabla_{x_\ell}~[\smax(\omega^{(h)}\cdot Xx_q) ]_\ell  \given x_q \bigr]\notag\\
		&= \sum_{\ell=1}^L\omega^{(h)} \cdot x_q\cdot\EE\bigl[p^{(h)}_\ell-( p^{(h) }_\ell ) ^2 \given x_q\bigr]= \omega^{(h)} \cdot x_q -\omega^{(h)} \cdot x_q \cdot \EE\bigl[\|p^{(h)}\|_2^2 \given x_q \bigr].
		\label{eq:cond_expt_1}
	\end{align}
Here, the second equality follows from Stein's Lemma, the third equality follows from the derivative of the softmax function, and the last equality follows from the fact that $\sum_{\ell=1}^L p^{(h)}_\ell = 1$. 
Combining \eqref{eq:cond_expt_1} and \eqref{eq:loss_decom_2}, conditioning on the value of $x_q$, we can write the second term in \eqref{eq:loss_decom} as 
\begin{align}\label{eq:loss_decom_2_final}
\textbf{\small(\romannumeral2)}_{\given x_q} = \sum_{h=1}^H\mu^{(h)}\omega^{(h)} /d \cdot\|x_q\|_2^2 -  \sum_{h=1}^H\mu^{(h)}\omega^{(h)}  \cdot\|x_q\|_2^2 /d\cdot\EE[\|p^{(h)}\|_2^2 \mid x_q],
\end{align}
where we use $\textbf{\small(\romannumeral2)}_{\given x_q}$ to denote the counterpart of $\textbf{\small(\romannumeral2)}$ with $x_q$ fixed.
It remains to simplify the third term in \eqref{eq:loss_decom}.
In particular, we need to handle terms of the form $\EE[p^{(h')^\top}X X^\top p^{(h)} \mid x_q]$.
To this end, we introduce the following lemma.

\begin{lemma}\label{lem:2stein}
	Consider random variable $X =[x_1,\dots,x_L]^\top\in\RR^{L\times d}$ with $x_\ell\iid\cN(0,I_d)$ for all $\ell \in [L]$.  Let $\omega,\tilde\omega\in\RR$ and $\upsilon\in\RR^d$ be fixed parameters. 
	We define 
	$p=\smax(\omega\cdot X \upsilon)$ and $\tilde{p}=\smax(\tilde\omega\cdot X \upsilon)$.
	Then, it holds
	\begin{align}
		\EE[\tilde{p}^\top X X^\top p]&= d\cdot\EE[p^\top\tilde{p}] + \omega \cdot \tilde{\omega}\cdot\|\upsilon\|_2^2\cdot\EE[(1 - \|p\|_2^2)\cdot (1 - \|\tilde{p}\|_2^2)] \notag\\
		&\quad + 2\omega^2\cdot\|\upsilon\|_2^2\cdot\EE[-\tilde{p}^\top  p^{\odot2}+ \tilde{p}^\top p\cdot\|p\|_2^2]+ 2\tilde{\omega}^2\cdot\|\upsilon\|_2^2\cdot\EE[-p^\top \tilde{p}^{\odot2}+ p^\top \tilde{p}\cdot\|\tilde{p}\|_2^2] \notag\\
		&\quad + \omega \cdot \tilde{\omega}\cdot\|\upsilon\|_2^2\cdot\EE[p^\top \tilde{p} - p^\top \tilde{p}^{\odot2} - \tilde{p}^\top  p^{\odot2} + (p^\top \tilde{p})^2 ]\notag.
	\end{align}
	Here $p^{\odot2} = p\odot p$ denotes the element-wise square of $p \in \RR^{L}$.
\end{lemma}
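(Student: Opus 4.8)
The plan is to evaluate $\EE[\tilde p^\top X X^\top p]$ by applying Stein's lemma to the i.i.d.\ standard Gaussian rows $x_1,\dots,x_L$ of $X$ in two successive passes. First I would rewrite $\tilde p^\top X X^\top p=(X^\top\tilde p)^\top(X^\top p)=\sum_{\ell=1}^L x_\ell^\top\bigl(p_\ell\cdot X^\top\tilde p\bigr)$, observing that $p=\smax(\omega\cdot X\upsilon)$ and $\tilde p=\smax(\tilde\omega\cdot X\upsilon)$ depend on $X$ only through the vector $v:=X\upsilon$ with entries $v_\ell=x_\ell^\top\upsilon$. Conditioning on all rows except $x_\ell$ and using the vector Stein identity $\EE[x\,g(x)]=\EE[\nabla g(x)]$ for $x\sim\cN(0,I_d)$ --- exactly as in \eqref{eq:cond_expt_1} --- together with the softmax Jacobian $\nabla_{x_\ell}p_k=\omega\,p_k(\ind(k=\ell)-p_\ell)\,\upsilon$, the identity $\nabla_{x_\ell}v_{\ell'}=\upsilon\,\ind(\ell'=\ell)$, and $\sum_k p_k=1$, this first pass collapses $\EE[\tilde p^\top X X^\top p]$ to $d\cdot\EE[p^\top\tilde p]$ plus a short list of terms that are \emph{linear} in $v$, each of the generic shape $\EE\bigl[c(p,\tilde p)^\top v\bigr]$ where $c(p,\tilde p)$ is assembled from $p$, $\tilde p$ and their Hadamard squares (for instance $\omega(1-\|p\|_2^2)(\tilde p^\top v)$, $\tilde\omega\,(p\odot\tilde p)^\top v$, and $-\tilde\omega(p^\top\tilde p)(\tilde p^\top v)$, plus the $p\leftrightarrow\tilde p$, $\omega\leftrightarrow\tilde\omega$ mirror images).

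Next I would dispose of each residual linear-in-$v$ term by a second round of Stein. Such a term has the form $\sum_\ell\EE[c_\ell\cdot(x_\ell^\top\upsilon)]$ for a scalar $c_\ell=c_\ell(X)$, which equals $\sum_\ell\upsilon^\top\EE[\nabla_{x_\ell}c_\ell]$; differentiating once more the products of softmax entries and squared norms occurring in $c_\ell$ introduces the factor $\|\upsilon\|_2^2$ and the higher combinations $\tilde p^\top p^{\odot2}$, $p^\top\tilde p^{\odot2}$, $\tilde p^\top p\,\|p\|_2^2$, $p^\top\tilde p\,\|\tilde p\|_2^2$, $(p^\top\tilde p)^2$, and $\|p\|_2^2\|\tilde p\|_2^2$ that appear on the right-hand side. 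I would then assemble the outputs of both passes, group them by coefficient type --- $d$, $\omega\tilde\omega\|\upsilon\|_2^2$, $\omega^2\|\upsilon\|_2^2$, $\tilde\omega^2\|\upsilon\|_2^2$ --- using the $p\leftrightarrow\tilde p$ symmetry of the construction to pair mirrored contributions, and simplify with the elementary identities $\sum_\ell p_\ell(1-p_\ell)=1-\|p\|_2^2$ and $\sum_\ell p_\ell^2=\|p\|_2^2$ to match the claimed formula.

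Two points need care. First, the legitimacy of each Stein step: one must verify that the functions differentiated are $C^1$ and that they and their products with $x_\ell$ are integrable against the Gaussian measure. This is immediate, since every softmax entry and all of its derivatives are bounded in $[0,1]$, and after the first pass the only unbounded factors are the linear terms $x_{\ell'}^\top\upsilon$, so all integrands grow at most linearly and the integration by parts is justified. Second, the real cost is bookkeeping: carrying every term correctly through two rounds of differentiation of nested softmax/norm expressions is the main obstacle and the only place errors are likely. To control this I would first record, as a small internal lemma, the gradients $\nabla_{x_\ell}$ of the three atoms $p_k$, $\|p\|_2^2$, and $p^\top\tilde p$ (and the analogues obtained by replacing $\omega,p$ with $\tilde\omega,\tilde p$), and then derive the final identity mechanically from those atoms by the product rule. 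Beyond this accounting I do not anticipate any conceptual difficulty.
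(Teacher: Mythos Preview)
Your proposal is correct and follows essentially the same route as the paper: two successive applications of Stein's lemma to the Gaussian rows, using the softmax Jacobian $\nabla_{x_\ell}p_k=\omega\,p_k(\ind(k=\ell)-p_\ell)\,\upsilon$, followed by bookkeeping and grouping by the coefficients $d$, $\omega^2\|\upsilon\|_2^2$, $\tilde\omega^2\|\upsilon\|_2^2$, $\omega\tilde\omega\|\upsilon\|_2^2$. The only cosmetic difference is organization: the paper first splits $\tilde p^\top XX^\top p=\sum_\ell\|x_\ell\|_2^2 p_\ell\tilde p_\ell+\sum_{\ell\neq k}\langle x_\ell,x_k\rangle p_\ell\tilde p_k$ and applies Stein coordinate-wise ($\partial_{\ell,j}$) to each piece, whereas you keep the vector form $\sum_\ell x_\ell^\top(p_\ell\,X^\top\tilde p)$ and do the two Stein passes holistically; both lead to the same derivative computations and the same final grouping.
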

\begin{proof}[Proof of Lemma \ref{lem:2stein}]
See \S\ref{proof:lem:2stein} for a detailed proof. 
\end{proof}

To compute $\EE[p^{(h')^\top}X X^\top p^{(h)} \mid x_q]$, we apply Lemma \ref{lem:2stein} with $\omega = \omega^{(h)}$,  $\tilde{\omega} = \omega^{(h')}$,  and $\upsilon = x_q$. 
Then, for any $h, h ' \in [H]$, we obtain that 
	\begin{align}
		&\EE[p^{(h')^\top}X X^\top p^{(h)} \mid x_q]
		\notag\\
		&\quad= \luolan{d\cdot\EE[p^{(h)^\top}p^{(h')} \mid x_q]} + \tianqin{\omega^{(h)}\omega^{(h')}\cdot\|x_q\|_2^2}\cdot\EE[(\tianqin{1} - \|p^{(h)}\|_2^2)(\tianqin{1} - \|p^{(h')}\|_2^2)\mid x_q] \notag\\
		&\qquad + 2(\omega^{(h)})^2 \cdot\|x_q\|_2^2\cdot\EE[-p^{(h')^\top} p^{(h)^{\odot2}}+ p^{(h')^\top} p^{(h)}\cdot\|p^{(h)}\|_2^2\mid x_q] \notag\\
		&\qquad + 2(\omega^{(h') })^2 \cdot\|x_q\|_2^2\cdot\EE[-p^{(h)^\top} p^{(h')^{\odot2}}+ p^{(h)^\top} p^{(h')}\cdot\|p^{(h')}\|_2^2\mid x_q] \notag\\
		&\qquad + \omega^{(h)}\cdot \omega^{(h')}\cdot\|x_q\|_2^2\cdot\EE[p^{(h)^\top} p^{(h')} - p^{(h)^\top} p^{(h')^{\odot2}} - p^{(h')^\top} p^{(h)^{\odot2}} + (p^{(h)^\top} p^{(h')})^2 \mid x_q ]\notag\\
		&\quad:=\luolan{d\cdot\EE[p^{(h)^\top}p^{(h')} \mid x_q]}+\tianqin{\omega^{(h)}\omega^{(h')}\cdot\|x_q\|_2^2}+\|x_q\|_2^2\cdot {\eu T}_{h,h'}(x_q),
		\label{eq:cond_expt_2}
	\end{align}
	where we use ${\eu T}_{h,h'}(x_q)$ to denote the high-order terms in \eqref{eq:cond_expt_2}. Here the high-order terms contain the product of $\|x_q\|_2^2$ and expectations of polynomials of $p^{(h)}$ and $p^{(h')}$ with degrees at least two.

	Combine \eqref{eq:loss_decom}, \eqref{eq:loss_decom_1}, \eqref{eq:loss_decom_2} and \eqref{eq:loss_decom_3} and the form of conditional expectations in \eqref{eq:cond_expt_1}, \eqref{eq:loss_decom_2_final}, and \eqref{eq:cond_expt_2}, we can write the conditional expectation of the loss function given  $x_q$ into a polynomial of $\{p^{(h)}\}_{h\in[H]}$. Specifically, we have 
	\begin{align}
		&\cL_{|x_q}(\omega,\mu):=\EE\bigg[\bigg(y_q-\sum_{h=1}^H\mu^{(h)}y^\top p^{(h)}\bigg)^2~\biggl|\,~x_q\bigg]\notag\\
		&\qquad=1+\sigma^2-\frac{2\|x_q\|_2^2}{d}\cdot\sum_{h=1}^H\mu^{(h)}\omega^{(h)}\notag\\
		&\qquad\qquad+\frac{\|x_q\|_2^2}{d}\cdot\sum_{h=1}^H\sum_{h'=1}^H\mu^{(h)}\mu^{(h')}\omega^{(h)}\omega^{(h')}+(1+\sigma^2)\cdot\sum_{h=1}^H\sum_{h'=1}^H\mu^{(h)}\mu^{(h')}\cdot\EE[p^{(h)^\top}p^{(h')} \mid x_q]\notag\\
		&\qquad\qquad+\frac{2\|x_q\|_2^2}{d}\cdot\sum_{h=1}^H\mu^{(h)}\omega^{(h)}\cdot \EE[\|p^{(h)}\|_2^2 \mid x_q ]+\frac{\|x_q\|_2^2}{d}\cdot\sum_{h=1}^H\sum_{h'=1}^H\mu^{(h)}\mu^{(h')}\cdot{\eu T}_{h,h'}(x_q).
		\label{eq:cond_loss}
	\end{align}
Notice that $\cL_{|x_q}(\omega,\mu)$ above is a function of $x_q$. In the following, we analyze the expected value of each term in \eqref{eq:cond_loss}.

%
%
%
	
	\subsubsection*{Step 3. Marginalize $x_q$ and Eliminations of High-order Terms.}
	Let $\mu=(\mu^{(1)},\dots,\mu^{(H)})^\top\in\RR^H$ and $\omega=(\omega^{(1)},\dots,\omega^{(H)})^\top\in\RR^H$.
	Since $x_q\sim\cN(0,I_d)$, we have $\EE[\|x_q\|_2^2]=d$.  By direct calculation, we have
	\begin{equation}
		\EE\left[\frac{\|x_q\|_2^2}{d}\cdot\sum_{h=1}^H\mu^{(h)}\omega^{(h)}\right]=\mu^\top\omega,\quad\EE\left[\frac{\|x_q\|_2^2}{d}\cdot\sum_{h=1}^H\sum_{h'=1}^H\mu^{(h)}\mu^{(h')}\omega^{(h)}\omega^{(h')}\right]=(\mu^\top\omega)^2.
		\label{eq:loss_exp_23}
	\end{equation}
	In the following, we show that $\EE[p^{(h)^\top}p^{(h')}]$ with large $L$ can be approximated by $\exp(d\cdot\omega^{(h)}\omega^{(h')})/L$ (\textbf{Step 3.1}), and the remaining terms are mostly high-order and thus  can be ignored (\textbf{Step 3.2}).

	\paragraph{Step 3.1. Approximate $\EE[p^{(h)^\top}p^{(h')}]$ under Large $L$.}
Next, we approximate $\EE[p^{(h)^\top}p^{(h')}]$ by  $\exp(d\cdot\omega^{(h)}\omega^{(h')})/L$ under large $L$. This enables us to control the term 
$$
(1+\sigma^2)\cdot\sum_{h=1}^H\sum_{h'=1}^H\mu^{(h)}\mu^{(h')}\cdot\EE[p^{(h)^\top}p^{(h')}  ], 
$$ 
of the loss $L(\omega, \mu)$, as given in \eqref{eq:cond_loss}. 
Our derivation is based on the following insight. 
When $\omega^{(h)}$ is small, for $p^{(h)}$, we can approximate the denominator in the softmax function by $L$. 
Viewing $p^{(h)}$ as a function of $\omega^{(h)}$, the first two terms in the Taylor expansion is: 
$$
p^{(h)}_{\ell} \approx L^{-1}\cdot(1 + \omega^{(h)} \cdot x_{\ell}^\top x_q ), \qquad \forall \ell \in [L].
$$
Here, we approximate the denominator $\sum_{\ell=1}^L  \exp(\omega^{(h)} \cdot x_{\ell}^\top x_q )$ in the softmax function by $L$.
We can similarly write down the Taylor expansion for $p^{(h')}$ in terms of $\omega^{(h')}$.
Thus, we have
\begin{align*}
    \EE[p^{(h)^\top}p^{(h')} \given x_q ]&\approx\frac{1}{L^2}\cdot\sum_{\ell = 1 }^L\EE \big[  (1 + \omega^{(h)} \cdot x_{\ell}^\top x_q ) \cdot (1 + \omega^{(h')} \cdot x_{\ell}^\top x_q )  \big |\, x_q \big]\\
    &= L^{-1}\cdot(1 +  \omega^{(h)} \omega^{(h')} \cdot \|x_q\|_2^2) . 
\end{align*}
This implies the following approximation scheme:
\begin{align}
	\EE[p^{(h)^\top}p^{(h')} \given x_q ] &  \approx  L^{-1}\cdot(1 +  \omega^{(h)} \omega^{(h')} \cdot x_{q}^\top x_q)\approx L^{-1}\cdot\exp( \omega^{(h)} \omega^{(h')} \cdot x_{q}^\top x_q). \label{eq:approx_regime_1}
\end{align}
Similarly, by taking expectation with respect to $x_q$ in \eqref{eq:approx_regime_1}, we have 
\begin{align*}
	\EE[p^{(h)^\top}p^{(h')}]  &  \approx L^{-1}\cdot\big(1 +  \omega^{(h)} \omega^{(h')} \cdot \EE [x_{q}^\top x_q]\big) = L^{-1}\cdot(1 + d \cdot   \omega^{(h)} \omega^{(h')}) \approx L^{-1}\cdot\exp(d\cdot\omega^{(h)}\omega^{(h')}). 
\end{align*}

In the following, we rigorously prove the above approximation. 
We prove by leveraging a lemma from \cite{chen2024training} to quantify the accuracy of the approximation in \eqref{eq:approx_regime_1} and the large deviation of $x_q$. 
We first present the lemma below.
\begin{lemma}
	Let $x_q\in \RR^d$ be any fixed vector that satisfies $\max\{\|Wx_q\|_2^2, \|\widetilde{W}x_q\|_2^2\} \leq \tau\log L$, where $\tau>0$ is a constant and $W\in\RR^{d\times d}$, $\widetilde{W}\in\RR^{d\times d}$ are two parameter matrices. 
	Let $p=\smax(X Wx_q)$ and $\tilde{p}=\smax(X\widetilde Wx_q)$, where $X =[x_1,\dots,x_L]^\top\in\RR^{L\times d}$ with $x_\ell\iid\cN(0,I_d)$, it holds that
	\begin{itemize}
		\item[(\romannumeral1).] $\left|\EE[p^\top \widetilde{p} \mid x_q] - L^{-1}\cdot\exp \big( x_q^\top \widetilde{W}^\top W x_q \big) \right| = O(L^{-2(1-\epsilon)})$;
		
		\item[(\romannumeral2).] 
		$\max\left\{\EE[p^\top\tilde p^{\odot 2}\mid x_q],~\mathbb{E}[\|p\|_2^2\cdot p^\top \widetilde{p} \mid x_q],~\EE[(p^\top \widetilde{p})^2\mid x_q]\right\}= O(L^{-2(1-\epsilon)})$,
	\end{itemize}
where we denote $\epsilon =\sqrt{\tau/2}+3/(1+\sqrt{1+1/\tau})$. Furthermore, when $x_q \sim \cN(0, I_d)$ and $\|\omega\|_\infty\lesssim \tau\log L/\max\{d,\log L\}$, we further have 
\begin{itemize}
\item[(\romannumeral3).] $\EE\Big[\big(\EE[p^\top \widetilde{p} \mid x_q] - L^{-1}\cdot\exp \big( x_q^\top \widetilde{W}^\top W x_q \big)\big)^2 \Big] = O(L^{-(3-\epsilon)})$.
\end{itemize}
Note all these three arguments are at population level and thus hold with probability one.  
\label{lem:saturation_concent}
\end{lemma}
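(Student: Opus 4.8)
The plan is to follow the log-normal concentration argument of \citet{chen2024training}, adapted to the bivariate setting with two weight matrices $W,\tilde W$. Throughout I would work conditionally on $x_q$, so that $\{x_\ell\}_{\ell\in[L]}$ are i.i.d.\ $\cN(0,I_d)$ and the pairs $(a_\ell,b_\ell):=(\langle x_\ell,Wx_q\rangle,\langle x_\ell,\tilde W x_q\rangle)$ are i.i.d.\ centered bivariate Gaussians with variances $\|Wx_q\|_2^2,\|\tilde Wx_q\|_2^2\le\tau\log L$ and covariance $x_q^\top W^\top\tilde W x_q$. Writing $D=\sum_\ell e^{a_\ell}$, $\tilde D=\sum_\ell e^{b_\ell}$, $N=\sum_\ell e^{a_\ell+b_\ell}$ (and, for part (ii), analogous sums such as $\sum_\ell e^{2a_\ell}$), every quantity in the statement is a rational function of these; in particular $p^\top\tilde p=N/(D\tilde D)$ and $\|p\|_2^2=(\sum_\ell e^{2a_\ell})/D^2$. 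The Gaussian moment formula gives $\EE[D]=Le^{\|Wx_q\|_2^2/2}$, $\EE[\tilde D]=Le^{\|\tilde Wx_q\|_2^2/2}$, and $\EE[N]=Le^{(\|Wx_q\|_2^2+\|\tilde Wx_q\|_2^2)/2+x_q^\top W^\top\tilde W x_q}$, so $\EE[N]/(\EE[D]\EE[\tilde D])=L^{-1}\exp(x_q^\top W^\top\tilde W x_q)$ is exactly the idealized value in part (i).

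For parts (i) and (ii) I would introduce a good event $\cG$ on which each of $D,\tilde D,N$ (and the auxiliary sums) lies within a factor $1\pm L^{-c}$ of its mean. On $\cG$, $p^\top\tilde p=N/(D\tilde D)=L^{-1}\exp(x_q^\top W^\top\tilde W x_q)\cdot(1+O(L^{-c}))$, and since $|x_q^\top W^\top\tilde W x_q|\le\|Wx_q\|_2\|\tilde Wx_q\|_2\le\tau\log L$ the additive error is $O(L^{\tau-1-c})$; on $\cG^c$ one uses the crude bounds $p^\top\tilde p\le1$, $p^\top\tilde p^{\odot2}\le1$, $\|p\|_2^2\le1$, so the contribution to each conditional expectation is at most $\Pr(\cG^c)$. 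The crux is establishing $\Pr(\cG^c)=O(L^{-2(1-\epsilon)})$: because $a_\ell\sim\cN(0,v)$ with $v\le\tau\log L$, the terms $e^{a_\ell}$ have all moments but those moments grow polynomially, $\EE[e^{ma_\ell}]=e^{m^2v/2}\le L^{m^2\tau/2}$, so $D$ is a sum of i.i.d.\ heavy-tailed variables. The upper deviation follows from truncation plus a Bernstein/moment bound; the lower deviation of $D$ (it sits in a denominator) is controlled via negative moments or a one-sided Chernoff bound on $\sum_\ell\mathbf 1\{a_\ell\ge-C\}$. Optimizing the moment order in these estimates is precisely what yields the exponent $\epsilon=\sqrt{\tau/2}+3/(1+\sqrt{1+1/\tau})$, and carrying this optimization consistently through $D,\tilde D,N$ and the cross terms is the main technical obstacle. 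Part (ii) then follows from the same decomposition: on $\cG$ each of the three quantities is of order $L^{-2}$ times a factor $e^{O(\tau\log L)}=L^{O(\tau)}$, hence $O(L^{-2(1-\epsilon)})$ once $\tau$ is small enough that $O(\tau)\le2\epsilon$, and on $\cG^c$ each contributes $O(\Pr(\cG^c))$.

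For part (iii), with $x_q\sim\cN(0,I_d)$ random, I would split the outer expectation according to whether $x_q$ lies in $\cB:=\{\max(\|Wx_q\|_2^2,\|\tilde Wx_q\|_2^2)\le\tau\log L\}$. On $\cB$, part (i) gives a pointwise bound $O(L^{-4(1-\epsilon)})$ on the squared difference, which integrates to far less than $L^{-(3-\epsilon)}$. On $\cB^c$ I would bound the squared difference crudely by $2+2L^{-2}\exp(2\|Wx_q\|_2\|\tilde Wx_q\|_2)$ and apply Cauchy--Schwarz, $\EE_{x_q}[(2+2L^{-2}e^{2\|Wx_q\|_2\|\tilde Wx_q\|_2})\mathbf 1_{\cB^c}]\le2\Pr(\cB^c)+2L^{-2}\sqrt{\EE[e^{4\|Wx_q\|_2\|\tilde Wx_q\|_2}]}\sqrt{\Pr(\cB^c)}$. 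Since $\|Wx_q\|_2\|\tilde Wx_q\|_2\lesssim\|\omega\|_\infty^2\|x_q\|_2^2$, the scaling hypothesis on $\|\omega\|_\infty$ makes this a $\chi^2_d$ moment generating function that converges with room to spare, hence polynomial in $L$; and $\Pr(\cB^c)$ is a standard Gaussian tail bound for $\|\omega\|_\infty^2\|x_q\|_2^2$ exceeding $\tau\log L$, which by the same hypothesis is $L^{-\Omega(1)}$ with the constant tunable to exceed $3-\epsilon$ by choosing the absolute constant in the scaling condition small enough. Combining the two regimes gives the claimed $O(L^{-(3-\epsilon)})$. Apart from this last tuning, the only genuinely delicate ingredient remains the sharp log-normal tail exponent from part (i); the rest is assembly of standard Gaussian tail and moment estimates.
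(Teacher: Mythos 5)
You should know at the outset that the paper does not prove this lemma at all: its ``proof'' is a pointer to Lemma B.2 (items (i)--(ii)) and Lemma B.3 (item (iii)) of \cite{chen2024training}. So your attempt to reconstruct the argument is welcome, but as written it has two concrete gaps. First, the heart of your plan --- a good event $\mathcal{G}$ on which $D=\sum_\ell e^{a_\ell}$, $\tilde D$, $N$ all lie within a factor $1\pm L^{-c}$ of their means, plus a crude bound off $\mathcal{G}$ --- cannot deliver the stated rate $O(L^{-2(1-\epsilon)})$ uniformly over constant $\tau$, because $c$ and $\PP(\mathcal{G}^c)$ are not free parameters to be tuned independently. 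The relative fluctuation of $D$ is at least of order $\sqrt{(e^{v}-1)/L}$ with $v=\|Wx_q\|_2^2$ possibly as large as $\tau\log L$, so no concentration argument can give $c$ beyond roughly $(1-\tau)/2$; the on-event error, which is \emph{first order} in this fluctuation, is then at best of order $L^{\tau-1}\cdot L^{-(1-\tau)/2}=L^{-3(1-\tau)/2}$, and this exceeds $L^{-2(1-\epsilon)}$ whenever $\epsilon<1/4+3\tau/4$ --- in particular for all small $\tau$, since $\epsilon(\tau)\to 0$. The near-$L^{-2}$ accuracy in (i) comes from the fact that the first-order fluctuation terms of the ratio $N/(D\tilde D)$ have (essentially) zero mean, so one must expand the ratio around the means and control the expectation of the quadratic remainder (or use a leave-one-out decomposition of the softmax denominator), not merely a high-probability multiplicative bound; and the specific exponent $\epsilon=\sqrt{\tau/2}+3/(1+\sqrt{1+1/\tau})$, which is the whole content of the quantitative claim, is asserted to ``come out of optimizing the moment order'' but never derived. (For the particular value $\tau=5\kappa^2=0.05$ used downstream in this paper the target $L^{-2(1-\epsilon)}\approx L^{-0.6}$ is loose enough that your scheme would pass, but the lemma claims the bound for every constant $\tau>0$.)

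Second, your derivation of (iii) from (i) is incorrect in exactly the regime the paper uses. Squaring the pointwise bound of (i) on $\mathcal{B}$ gives $O(L^{-4(1-\epsilon)})$, and $L^{-4(1-\epsilon)}\le L^{-(3-\epsilon)}$ only when $\epsilon\le 1/3$; for the paper's application $\epsilon\approx 0.7$, so $L^{-4(1-\epsilon)}=L^{-1.2}$ is far \emph{larger} than $L^{-(3-\epsilon)}=L^{-2.3}$, and the claim that the on-$\mathcal{B}$ contribution ``integrates to far less than $L^{-(3-\epsilon)}$'' is false. Item (iii) is a genuinely sharper second-moment statement (which is precisely why the paper cites a separate lemma, B.3, for it): one has to bound $\EE\big[(\EE[p^\top\tilde p\mid x_q]-L^{-1}\exp(x_q^\top\widetilde W^\top Wx_q))^2\big]$ directly, exploiting the averaging over $x_q$ under the scaling condition on $\|\omega\|_\infty$, rather than squaring a worst-case-in-$x_q$ bound. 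Your handling of the complementary region $\mathcal{B}^c$ via the $\chi^2$ moment generating function and Cauchy--Schwarz is fine modulo constants; the missing piece is the on-$\mathcal{B}$ term.
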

\begin{proof}[Proof of Lemma \ref{lem:saturation_concent}.] 
	See Lemma B.2 and Lemma B.3 in \cite{chen2024training} for detailed proofs. The first two items are in Lemma B.2, and the third item is in Lemma B.3.
\end{proof}

The first item in Lemma \ref{lem:saturation_concent} quantifies the error of the approximation in \eqref{eq:approx_regime_1}. 
The second item shows that the high-order terms of $p $ and $\tilde p$ are small.  
Additionally, the third item quantifies the squared error of the approximation given $x_q \sim \cN(0, I_d)$.
Furthermore, we note that the approximation errors in Lemma \ref{lem:saturation_concent} depends on $\epsilon$, which depends on the parameter $\tau$. To make the approximation errors small, we need to choose a small $\tau$ such that the resulting $\epsilon$ is small.

To apply this lemma to $p^{(h)}$ and $p^{(h')}$, we set $W = \omega^{(h)}\cdot I_d$ and $\widetilde{W} = \omega^{(h')}\cdot I_d$. To satisfies the requirement that $x_q$ is bounded in the lemma, we control the tail behavior of $\|x_q\|_2^2$ using the concentration results for $\chi^2$-distribution. 
By setting $t=\log L$ in Lemma \ref{lem:chi_concen}, we have
	\begin{align}
		&\PP\left(\|x_q\|_2^2>5\max\{d,\log L\}\right)\leq\PP\left(\|x_q\|_2^2>d+2\sqrt{d\log L}+2\log L\right)\leq L^{-1}.
		\label{eq:chi_concent}
	\end{align}
	In the following, we assume that $\omega \in \RR^H$ satisfies the following condition: 
   $$
   \textrm{(\sf C1)}\qquad \|\omega\|_\infty\leq \kappa\sqrt{\log L/\max\{d,\log L\}},
   $$
   where $\kappa$ denotes a parameter to be determined later. 
   Based on any $\omega$ satisfying {(\sf C1)}, we define 
    the following good event where $\|\omega^{(h)}x_q\|_2^2$ are all bounded:
	\begin{align}\label{eq:def_good_event}
	\cE_{\sf good}=\big\{\forall h\in[H],~\|\omega^{(h)}\cdot x_q\|_2^2\leq5\kappa^2\cdot\log L\big\}. 
	\end{align}
	Then, when $\omega$ satisfies {(\sf C1)}, by \eqref{eq:chi_concent} we have
	$$
	\PP\bigl(\cE_{\sf good}^c \bigr)\leq\PP\left(\|\omega\|_\infty^2\cdot \|x_q\|_2^2>5\kappa^2\log L\right) \leq \PP  \left(\|x_q\|_2^2>5\max\{d,\log L\}\right)  \leq L^{-1}. 
	$$
Besides, when $\cE_{\sf good}$ holds, the premise of Lemma \ref{lem:saturation_concent} holds with $\tau = 5\kappa^2 $. 	
Now we are ready to work on $\EE[p^{(h)^\top}p^{(h')}]$. 
We decompose the  $\EE[p^{(h)^\top}p^{(h')}]$ as follows: 
	\begin{align}
		\EE[p^{(h)^\top}p^{(h')}]&=\EE\bigl[p^{(h)^\top}p^{(h')}\cdot\ind\bigl(\cE_{\sf good}\bigr)\bigr]+\EE\bigl[p^{(h)^\top}p^{(h')}\cdot\ind\bigl(\cE_{\sf good} ^c \big) \big]\notag\\
		&\leq\EE\big[p^{(h)^\top}p^{(h')}\cdot\ind\big(\cE_{\sf good}\big)\big]+L^{-1}, 
		\label{eq:pp_decom}
	\end{align}
	where the last inequality follows from the fact that  $p^{(h)^\top}$ and $p^{(h')}$ are probability distributions over $[L]$, and thus $p^{(h)^\top}p^{(h')} \leq 1$.  Furthermore, we decompose the first term on the RHS of \eqref{eq:pp_decom} as 
	\begin{align}
		\EE\big[p^{(h)^\top}p^{(h')}\cdot\ind\big(\cE_{\sf good}\big)\big]&=\underbrace{\EE\left[\left(p^{(h)^\top}p^{(h')}-L^{-1}\exp(\omega^{(h)}\omega^{(h')}\cdot\|x_q\|_2^2|)\right)\cdot\ind\big(\cE_{\sf good}\big)\right]}_{\displaystyle\textbf{(\romannumeral4)}}\notag\\
		&\qquad+\underbrace{L^{-1}\cdot\left(\EE\bigl[\exp(\omega^{(h)}\omega^{(h')}\cdot\|x_q\|_2^2)\cdot\ind\big(\cE_{\sf good}\big) \big]-\exp(d\cdot\omega^{(h)}\omega^{(h')})\right)}_{\displaystyle\textbf{(\romannumeral5)}}\notag\\
		&\qquad+L^{-1}\cdot\exp(d\cdot\omega^{(h)}\omega^{(h')})
		\label{eq:pp_good_decom}
	\end{align}
	We control term \textbf{\small(\romannumeral4)} by applying Lemma \ref{lem:saturation_concent}-(\romannumeral1) with $\tau = 5\kappa^2$, which implies that there exists and absolute constant $C>0$ such that
	\begin{align}  
		\textbf{\small(\romannumeral4)}\leq\left|\EE\left[\EE\left[\big(p^{(h)^\top}p^{(h')}-L^{-1}\cdot\exp(\omega^{(h)}\omega^{(h')}\|x_q\|_2^2|)\big)\cdot\ind\big(\cE_{\sf good}\big)~\Big|~x_q\right]\right]\right|\leq C_1\cdot L^{-2(1-\epsilon)}.
		\label{eq:term4_bound}
	\end{align}
	Moreover, here $\epsilon$ is a function of specified constant $\kappa$ that controls the scaling of $\|\omega\|_\infty$, defined as 
	\begin{align} 
		\epsilon=\kappa \sqrt{5/2}+3/(1+\sqrt{1+(5\kappa ^2)^{-1}}).
		\label{eq:epsilon_def}
	\end{align} 
    To handle term \textbf{\small(\romannumeral5)}, we write it as 
\begin{align*}
	\textbf{\small(\romannumeral5)}&=L^{-1}\cdot\exp(d\cdot\omega^{(h)}\omega^{(h')})\cdot\Big(\EE\big[\exp(\omega^{(h)}\omega^{(h')}\cdot(\|x_q\|_2^2-d))\cdot\ind\big(\cE_{\sf good}\big)\big]-1\Big).
\end{align*}
Let $\Upsilon$ denote the random variable $ \exp(\omega^{(h)}\omega^{(h')}\cdot(\|x_q\|_2^2-d))\cdot\ind\big(\cE_{\sf good}\big) $, which depends on $x_q$ only. 
Motivated by \eqref{eq:chi_concent}, we consider the cases where $\ind(\|x_q\|_2^2-d\leq 2\sqrt{d\log L}+2\log L)$ is true or not separately. 
Note that when this event is true, 
we have 
\begin{align}\label{eq:bound_upsilon}
\Upsilon \leq \exp(| \omega^{(h)}\omega^{(h')}| \cdot(2\sqrt{d\log L}+2\log L)) .
\end{align}
When this event does not hold, 
we have 
\begin{align}\label{eq:bound_upsilon2}
	\exp(d\cdot\omega^{(h)}\omega^{(h')}) \cdot \Upsilon = \exp(\omega^{(h)}\omega^{(h')}\cdot\|x_q\|_2^2 )\cdot\ind\big(\cE_{\sf good}\big) \leq \exp( 5 \kappa^2 \log L),
\end{align}
thanks to the event $\cE_{\sf good}(\omega)$ defined in \eqref{eq:def_good_event}. 
Therefore, we conclude that 
	\begin{align}
		\textbf{\small(\romannumeral5)}&=L^{-1}\cdot\exp(d\cdot\omega^{(h)}\omega^{(h')})\cdot \EE[\Upsilon - 1] \notag  \\
		& = L^{-1}\cdot\exp(d\cdot\omega^{(h)}\omega^{(h')})\cdot \EE[\Upsilon \cdot \ind(\|x_q\|_2^2-d \leq  2\sqrt{d\log L}+2\log L) - 1]  \notag \\
		& \qquad \qquad + L^{-1}\cdot\exp(d\cdot\omega^{(h)}\omega^{(h')})\cdot \EE[\Upsilon \cdot \ind(\|x_q\|_2^2-d >   2\sqrt{d\log L}+2\log L)] \notag \\
		&\leq L^{-1}\cdot\exp(d\cdot\omega^{(h)}\omega^{(h')})\cdot\left(\exp\bigl(|\omega^{(h)}\omega^{(h')}|\cdot (2\sqrt{d\log L}+2\log L)\bigr)-1\right)\notag\\
		&\qquad\qquad +L^{-2}\cdot\exp\left(5\kappa^2\cdot\log L\right). 
		\label{eq:term5_bound}
	\end{align}
	Here, the last inequality follows from \eqref{eq:bound_upsilon} and \eqref{eq:bound_upsilon2}.
Note that when \textrm{(\sf C1)} is true, we have 
\begin{align*}
|\omega^{(h)}\omega^{(h')}|\cdot (2\sqrt{d\log L}+2\log L) &  \leq \kappa^2  \cdot \log  L / \max\{d, \log L\} \cdot (2\sqrt{d\log L}+2\log L) \leq 4 \kappa^2 \cdot  \log L, \\
 \exp(d\cdot\omega^{(h)}\omega^{(h')}) & \leq \exp\bigl ( \kappa^2  \cdot \log  L / \max\{d, \log L\} \cdot d \bigr) \leq L^{\kappa^2}.
\end{align*}
Thus, in this case, we have 
By substituting the arguments above back into \eqref{eq:term5_bound}, we   obtain that
\begin{align}
		\textbf{\small(\romannumeral5)}&\leq L^{-1+5\kappa^2}+L^{-2+5\kappa^2}\leq2L^{-1+5\kappa^2}. 
		\label{eq:term5_bound_post}
	\end{align}
Combining \eqref{eq:pp_decom}, \eqref{eq:pp_good_decom}, \eqref{eq:term4_bound} and \eqref{eq:term5_bound_post}, we have
	\begin{align*}
		\EE[p^{(h)^\top}p^{(h')}]&=L^{-1}\cdot\exp(d\cdot\omega^{(h)}\omega^{(h')})+C_1 \cdot L^{-2(1-\epsilon)}+2L^{-1+5\kappa^2}+L^{-1}\notag\\
		&=L^{-1}\cdot\exp(d\cdot\omega^{(h)}\omega^{(h')})+ C_1\cdot L^{-2(1-\epsilon) } + 3L^{-1 + 5\kappa^2},
	\end{align*}
	where we define $\epsilon$ as $\epsilon=\kappa \sqrt{5/2}+3/(1+\sqrt{1+1/(5\kappa ^2)})$, which is a function of the specified constant $\kappa$, and $C_1$ is an absolute constant coming from \eqref{eq:term4_bound}. 
	Thus, we conclude that 
	\begin{align}
		&\sum_{h=1}^H\sum_{h'=1}^H\mu^{(h)}\mu^{(h')}\cdot\EE[p^{(h)^\top}p^{(h')}]\notag\\
		&\quad \leq \frac{1}{L}\sum_{h=1}^H\sum_{h'=1}^H\mu^{(h)}\mu^{(h')}\cdot\exp(d\cdot\omega^{(h)}\omega^{(h')})+ \|\mu\|_\infty^2\cdot H^2 \cdot  \bigl( C_1\cdot L^{-2(1-\epsilon) } + 3L^{-1 + 5\kappa^2} \bigr) .
		\label{eq:loss_exp_4}
	\end{align}
This enables us to bound the corresponding term in the loss function $L(\omega, \mu)$, as given in \eqref{eq:cond_loss}.


	\paragraph{Step 3.2. Bound $\EE\left[\|x_q\|_2^2\cdot\EE[\|p^{(h)}\|_2^2 \mid x_q ]\right]$ under Large $L$.}
    Next, we bound the term 
$$
\EE \bigg[\frac{2\|x_q\|_2^2}{d}\cdot\sum_{h=1}^H\mu^{(h)}\omega^{(h)}\cdot \EE[\|p^{(h)}\|_2^2 \mid x_q ] \bigg],
$$ 
in the decomposition of the loss $\tilde\cL(\omega, \mu)$.
	Recall that $\|x_q\|_2^2\sim\chi^2_d$. Thus, using the second moment of $\chi^2_d$, we have $\EE\|x_q\|_2^4=d\cdot(d+2)$. 
	Now, applying the Cauchy-Schwartz inequality,  we have  
	\begin{align*}
		&\sum_{h=1}^H\mu^{(h)}\omega^{(h)}\cdot \EE\left[\|x_q\|_2^2\cdot\EE [\|p^{(h)}\|_2^2 \mid x_q ]\right]\leq\sum_{h=1}^H\mu^{(h)}\omega^{(h)} \cdot \sqrt{\EE\bigl[ \|x_q\|_2^4 \bigl] \cdot\EE\big[ \bigl( \EE[\|p^{(h)}\|_2^2 \mid x_q ]\bigr)^2\big]}\notag\\
		&\quad\lesssim d\cdot\sum_{h=1}^H\mu^{(h)}\omega^{(h)} \cdot \biggl(\sqrt{\EE \big[\left(\EE[\|p^{(h)}\|_2^2 \mid x_q ]-L^{-1}\cdot\exp\bigl(d\omega^{(h),2} \bigr)\right)^2\big]}+L^{-1}\cdot\exp\bigl(d\omega^{(h),2} \bigr)\biggl).
	\end{align*}
Here, in the last inequality, we use $\sqrt{\EE[ X^2 ] } \leq  \sqrt{\EE [ (X-a)^2 ]}  + |a|$, where $X$ is any random variable and $a$ is a constant. Recall that we use $a \lesssim b$ to denote that $a \leq C \cdot b$ for some absolute constant $C>0$.
By applying Lemma \ref{lem:saturation_concent}-(\romannumeral3) with $\tau=5\kappa^2$, we obtain the following inequality:
\begin{align*}
	\EE \bigl[\big(\EE[\|p^{(h)}\|_2^2 \mid x_q ]-L^{-1}\cdot\exp\bigl(d\omega^{(h),2} \bigr)\big)^2\bigr] \leq C _2\cdot L^{-3(1-\epsilon)}, 
\end{align*}
 where $C_2$ is an absolute constant and $\epsilon$ is defined in \eqref{eq:epsilon_def}. 
 In addition, under the scaling condition {\sf (C1)}, for all $h\in[H]$, we have $\exp(d\omega^{(h)^2})\leq L^{\kappa^2}$. 
	Therefore, we  conclude that
	\begin{align}
		\sum_{h=1}^H\mu^{(h)}\omega^{(h)}\cdot \EE\big[\|x_q\|_2^2\cdot\EE [\|p^{(h)}\|_2^2 \mid x_q ]\big]& \lesssim  \|\mu\|_\infty \cdot \|\omega\|_{\infty} \cdot  d H \cdot \bigl(L^{-(3-\epsilon)/2} + L^{-1+\kappa^2} \bigr) \notag \\
        & \leq  \| \mu\|_\infty \cdot  d H \cdot \bigl(L^{-(3-\epsilon)/2} + L^{-1+\kappa^2} \bigr) ,
		\label{eq:loss_exp_5}
	\end{align}
	where the last inequality follows from the fact that $\kappa \leq 1$. 

	\paragraph*{Step 3.3. Bound High-order Terms $\EE[\|x_q\|_2^2\cdot{\eu T}_{h,h'}(x_q)]$ under large $L$.} Recall that we define high-order terms $ {\eu T}_{h,h'}(x_q)$ as in \eqref{eq:cond_expt_2}, which comes from the decomposition of $\EE[p^{(h')^\top}X X^\top p^{(h)} \mid x_q]$.
	 As shown in \eqref{eq:cond_expt_2}, $ {\eu T}_{h,h'}(x_q)$ is composed of terms such as $$\EE[p^\top\tilde p^{\odot 2}\mid x_q], \qquad \mathbb{E}[\|p\|_2^2\cdot p^\top \widetilde{p} \mid x_q], \qquad \textrm{and} \qquad \EE[(p^\top \widetilde{p})^2\mid x_q],$$
     where we denote $\{ p, \tilde p\} = \{ p^{(h)}, p^{(h')}\}$.
	 Also notice that, by \eqref{eq:cond_expt_2}, each term in ${\eu T}_{h,h'}(x_q)$ involves has multiplicative factors $(\omega^{(h) })^2$, or $(\omega^{(h')})^2$, or $\omega^{(h)}\omega^{(h')}$.
	 Following a similar argument as in {\bf Step 3.2}, we apply Cauchy-Schwartz inequality to bound the high-order terms as
	\begin{align}\label{eq:high_order_terms}
		&\sum_{h=1}^H\sum_{h'=1}^H\mu^{(h)}\mu^{(h')}\cdot\EE[\|x_q\|_2^2\cdot{\eu T}_{h,h'}(x_q)]\lesssim d\cdot\sum_{h=1}^H\mu^{(h)}\mu^{(h')}\sqrt{\EE \big[ \bigl([\EE[{\eu T}_{h,h'}(x_q) \mid x_q ] \bigr)^2\big]}.
	\end{align}
	 Then we apply Lemma \ref{lem:saturation_concent}-(\romannumeral2) with $\tau=5\kappa^2$ to the terms in \eqref{eq:high_order_terms}, which implies that
$$
 [\EE[{\eu T}_{h,h'}(x_q) \mid x_q ] \leq C_3 \cdot \| \omega\|_{\infty}^2 \cdot L^{-2(1-\epsilon)} \lesssim L^{-2(1-\epsilon)} , 
$$
where $\epsilon $ is defined in \eqref{eq:epsilon_def}. By \eqref{eq:high_order_terms}, we have 
	\begin{align}
    &\sum_{h=1}^H\sum_{h'=1}^H\mu^{(h)}\mu^{(h')}\cdot\EE[\|x_q\|_2^2\cdot{\eu T}_{h,h'}(x_q)]\lesssim\|\mu\|_\infty^2\cdot dH^2\cdot L^{-2(1-\epsilon)}.
		\label{eq:loss_exp_6}
	\end{align}
	This inequality establishes an upper bound on the last term in \eqref{eq:cond_loss}.

	\subsubsection*{Step 4. Combine Everything and Conclude the Proof.} 
	Now we take the expectation with respect to $x_q$ in \eqref{eq:cond_loss} and get 
	\begin{align*}
		\cL(\omega, \mu)   & =1+\sigma^2-\underbrace{\frac{2\EE[ \|x_q\|_2^2] }{d}\cdot\sum_{h=1}^H\mu^{(h)}\omega^{(h)}}_{\displaystyle \eqref{eq:loss_exp_23}} +\underbrace{\frac{\EE[ \|x_q\|_2^2] }{d}\cdot\sum_{h=1}^H\sum_{h'=1}^H\mu^{(h)}\mu^{(h')}\omega^{(h)}\omega^{(h')}}_{\displaystyle \eqref{eq:loss_exp_23}} \notag\\
		&\qquad +(1+\sigma^2)\cdot\underbrace{\sum_{h=1}^H\sum_{h'=1}^H\mu^{(h)}\mu^{(h')}\cdot\EE[p^{(h)^\top}p^{(h')}  ] }_{\displaystyle \eqref{eq:loss_exp_4}}+\underbrace{ \sum_{h=1}^H\mu^{(h)}\omega^{(h)}\cdot \EE \bigg[\frac{2\|x_q\|_2^2}{d}\cdot \EE[\|p^{(h)}\|_2^2 \mid x_q ]\bigg]}_{\displaystyle \eqref{eq:loss_exp_5}}\notag\\
		&\qquad+\underbrace{\sum_{h=1}^H\sum_{h'=1}^H\mu^{(h)}\mu^{(h')}\cdot \EE \biggl[\frac{\|x_q\|_2^2}{d}\cdot{\eu T}_{h,h'}(x_q) \biggr]}_{_{\displaystyle \eqref{eq:loss_exp_6}}} .
	\end{align*}
	Here we list the number of inequalities that bound each term. 
Combining  \eqref{eq:loss_exp_23}, \eqref{eq:loss_exp_4}, \eqref{eq:loss_exp_5} and \eqref{eq:loss_exp_6}, we have
	\begin{align*}
		\cL(\omega,\mu)
		&=1+\sigma^2-2\mu^\top\omega+(\mu^\top\omega^2)+\frac{1}{L}\sum_{h=1}^H\sum_{h'=1}^H\mu^{(h)}\mu^{(h')}\cdot\exp(d\cdot\omega^{(h)}\omega^{(h')})+\mathsf{Err}_\kappa\notag\\
		&=1+\sigma^2-2\mu^\top\omega+\mu^\top\Big(\omega\omega^\top+(1+\sigma^2)\cdot L^{-1}\cdot\exp(d\omega\omega^\top)\Big)\mu+\mathsf{Err}_\kappa,
	\end{align*}
	where we error term $\mathsf{Err}_\kappa$ is bounded via 
 \begin{align}\label{eq:simplify_err}
    \mathsf{Err}_\kappa & \leq \|\mu\|_\infty^2\cdot H^2 \cdot  \bigl(   L^{-2(1-\epsilon) } +  L^{-1 + 5\kappa^2} \bigr) + \| \mu\|_\infty \cdot  d H \cdot \bigl(L^{-(3-\epsilon)/2} + L^{-1+\kappa^2} \bigr)  \notag \\
	& \qquad +  \|\mu\|_\infty^2\cdot dH^2\cdot L^{-2(1-\epsilon)}\notag\\
    & \lesssim     \max \{\| \mu\|_\infty,  \|\mu\|_\infty^2\} \cdot dH^2\cdot  \max \bigl\{  L^{-2(1-\epsilon)} ,  L^{-(3-\epsilon)/2 } ,  L^{- (1-5 \kappa^2 )} \bigr\}  . 
\end{align}
Here $\epsilon$ is defined in \eqref{eq:epsilon_def} and $\kappa$ is a small constant. 
By taking $\kappa=0.1$ such that $\epsilon\approx 0.69\leq0.7$, the approximation error can be further simplified as $\mathsf{Err}_\kappa\lesssim\max \{\| \mu\|_\infty,  \|\mu\|_\infty^2\} \cdot dH^2\cdot  L^{-2(1-\epsilon)}\leq\max \{\| \mu\|_\infty,  \|\mu\|_\infty^2\} \cdot dH^2\cdot  L^{-3/5}$.
Hence, for a given error level$ \lambda>0$, if we assume $\mu\in \RR^H$ satisfies 
$$
\textrm{(\sf C2)}\qquad\max \{\| \mu\|_\infty,  \|\mu\|_\infty^2\}\lesssim L^{-\lambda/2+3/10},
$$
then  we can control the error by $\mathsf{Err}_\kappa = O(d H^2 \cdot  L^{-\lambda})$. This completes the proof.
\end{proof}
\newpage
\section{Proof of Theorem \ref{thm:optimality}}
\label{ap:main_thoery}

In this section, we prove Theorem \ref{thm:optimality}, the main result of this paper. 
We present separate proofs for each part of the theorem in the following subsections.
Recall that we use $\theta = (\omega, \mu) \in \RR^{2H}$ to parameterize the multi-head attention model with $H$ heads, where the weight matrices satisfy \eqref{eq:simple_model}. 
Furthermore, recall that we define the parameter spaces $\mathscr{S}^*$ and
$\bar{\mathscr{S}}$ in \eqref{eq:solution_manifold} and \eqref{eq:larger_param_space}, respectively, with $\mathscr{S}^*\subseteq\bar{\mathscr{S}}$. 
We define $\sum_{h\in\cH_+}\mu^{(h)}=\mu_+$ and $\sum_{h\in\cH_-}\mu^{(h)}=\mu_-$ to simplify the notation. 
Also recall that we define the debiased GD predictor with learning $\eta$ as 
\begin{align}\label{eq:def_debiased_GD}
\hat{y}_q^{\sf gd}(\eta) = \frac{\eta}{L}\cdot\sum_{\ell=1}^L y_\ell \cdot \bar x_\ell^\top x_q,\quad\text{with}\quad\bar x_\ell=x_\ell-\frac{1}{L}\sum_{\ell=1}^Lx_\ell.
\end{align}
Note that here we use $\bar x_\ell$ to denote the centered covariate.

In the following, we present a proof of Theorem \ref{thm:optimality}. 
In \S\ref{ap:thm_1}, we show that the multi-head attention model can approximate the debiased GD predictor with a small error. 
In \S\ref{ap:thm_2}, we show that minimizing the approximate loss in \eqref{eq:approx_loss} over the parameter space $\bar{\mathscr{S}}$ leads to a multi-head attention model that converges to the debiased GD predictor. In \S\ref{ap:thm_3}, we show that the debiased GD predictor is asymptotically Bayes optimal up to a proportionality factor.

\subsection{Proof of Theorem \ref{thm:optimality}-(\romannumeral1): Approximation}\label{ap:thm_1}
\begin{proof}[Proof of Theorem \ref{thm:optimality}-(\romannumeral1)]
In this proof, we regard the learning rate $\eta $ as a constant. 
Let $\gamma > 0$ be a scaling parameter and let $\breve \mu $ be defined such that $2 \breve \mu \gamma = \eta$.  
Since we assumed that $\mu_+=-\mu_-=\breve\mu$ with $\breve\mu>0$ and $(\omega,\mu)\in\bar{\mathscr{S}}$, the multi-head attention estimator takes the form
	\begin{align}\label{eq:y_q_theta_again}
    &\hat{y}_q(\theta)=\breve\mu\cdot\sum_{\ell=1}^L\frac{y_\ell\cdot\exp(\gamma\cdot x_\ell^\top x_q)}{\sum_{\ell=1}^L\exp(\gamma\cdot x_\ell^\top x_q)}-\breve\mu\cdot\sum_{\ell=1}^L\frac{y_\ell\cdot\exp(-\gamma\cdot x_\ell^\top x_q)}{\sum_{\ell=1}^L\exp(-\gamma\cdot x_\ell^\top x_q)},	
	\end{align}
and  the debiased GD predictor is defined in \eqref{eq:def_debiased_GD} with $\eta=2 \gamma \breve\mu  $. 
Following this, we the decompose difference between transformer $\hat y_q (\theta) $  and debiased GD $\hat{y}_q^{\sf gd}(\eta)$ as 
	\begin{align}
		&\breve\mu^{-1}\cdot\{\hat{y}_q(\theta)-\hat{y}_q^{\sf gd}(\eta)\}\notag\\
		&\qquad= \frac{1}{L}\cdot\sum_{\ell=1}^L\left(\exp(\gamma\cdot x_\ell^\top x_q)-\exp(-\gamma\cdot x_\ell^\top x_q)-2\gamma\cdot x_\ell^\top x_q\right)\cdot y_\ell\notag\\
		&\qquad\qquad+  \frac{L-\sum_{\ell=1}^L\exp(\gamma\cdot x_\ell^\top x_q)}{L\cdot \sum_{\ell=1}^L\exp(\gamma\cdot x_\ell^\top x_q)}\cdot\sum_{\ell=1}^L\exp(\gamma\cdot x_\ell^\top x_q)\cdot y_\ell+\frac{\gamma }{L}\cdot\sum_{\ell=1}^L\bar x^\top x_q\cdot y_\ell\notag\\
		&\qquad\qquad+ \frac{\sum_{\ell=1}^L\exp(-\gamma\cdot x_\ell^\top x_q)-L}{L \cdot\sum_{\ell=1}^L\exp(-\gamma\cdot x_\ell^\top x_q)}\cdot\sum_{\ell=1}^L\exp(-\gamma\cdot x_\ell^\top x_q)\cdot y_\ell+\frac{\gamma }{L}\cdot\sum_{\ell=1}^L\bar x^\top x_q\cdot y_\ell\notag\\
		&\qquad=\textbf{\small(\romannumeral1)}+\textbf{\small(\romannumeral2)}+\textbf{\small(\romannumeral3)}.
		\label{eq:approx_error_decom}
	\end{align}
	Here we define the three terms as \textbf{\small(\romannumeral1)}, \textbf{\small(\romannumeral2)} and \textbf{\small(\romannumeral3)} to simplify the notation. 
	Before delving into details, we first define three auxiliary functions over $(\gamma,x)\in\RR^2$ as below:
	\begin{align*}
	\phi_1(\gamma,x)=&\exp(\gamma x)-\exp(-\gamma x) -2\gamma x,~~\phi_2(\gamma,x)=\exp(\gamma x)-1-\gamma x,~~\psi(\gamma, x)  = \exp(\gamma x)-1.
	\end{align*}
	By Taylor expansion, we can obtain that $\phi_1(\gamma,x)$ and $\phi_2(\gamma,x)$ are both $O(\gamma^2 x^2)$ when $\gamma \cdot x$ is small, and $\psi(\gamma,x)= O(\gamma x)$. 
	We use $\phi_1$ to bound term $\textbf{\small(\romannumeral1)}$ in \eqref{eq:approx_error_decom} as follows: 
	\begin{align}
		\bigl| \textbf{\small(\romannumeral1)} \bigr| = \bigg| \frac{1}{L}\cdot\sum_{\ell=1}^L\phi_1(\gamma, x_\ell^\top x_q)\cdot y_\ell \bigg| &\leq \max_{\ell\in[L]}|\phi_1(\gamma, x_\ell^\top x_q)|\cdot \frac{\|y\|_2}{\sqrt{L}}=\phi_1\Big(\gamma,\max_{\ell\in{[L]}}|x_\ell^\top x_q|\Big)\cdot \frac{\|y\|_2}{\sqrt{L}}.
		\label{eq:approx_1}
	\end{align}
	Here the inequality results from the Cauchy-Schwartz inequality and the last equality follows from the fact that $\phi_1(\gamma,\cdot)$ is an even function and monotonically increasing for $\gamma>0$. 
	Next we consider the term $\textbf{\small(\romannumeral2)}$. 
    To simplify the notation, we define ${\eu S}_{\sf exp} = \sum_{\ell=1}^L \exp (\gamma \cdot x_\ell^\top x_q ) $ and then we have 
\begin{align}\label{eq:decompose_term2_1}
	\sum_{\ell=1}^L-\phi_2(\gamma,x_\ell^\top x_q) = \sum_{\ell=1}^L \Bigl(1 + \gamma \cdot x_\ell^\top x_q  - \exp( \gamma \cdot x_{\ell}^\top x_q ) \Bigr) = L + \gamma \cdot \sum_{\ell=1}^L x_\ell^\top x_q -  {\eu S}_{\sf exp}. 
\end{align}
	Based on the definition of term \textbf{\small(\romannumeral2)} in \eqref{eq:approx_error_decom}, we have
	\begin{align}\label{eq:decompose_term2_2}
		\textbf{\small(\romannumeral2)} & =  \frac{L- {\eu S}_{\sf exp}} {L\cdot {\eu S}_{\sf exp}} \cdot \biggl( \sum_{\ell=1}^L \exp(\gamma \cdot x_{\ell}^\top x_q ) \cdot y_\ell \biggr) + \frac{\gamma }{L}\cdot\sum_{\ell=1}^L\bar x^\top x_q\cdot y_\ell   \notag  \\
		& = \frac{1}{L\cdot {\eu S}_{\sf exp}} \cdot \biggl (\sum_{\ell=1}^L-\phi_2(\gamma,x_\ell^\top x_q) - \gamma \cdot \sum_{\ell=1}^L x_\ell^\top x_q \biggr) \cdot \biggl( \sum_{\ell=1}^L \exp(\gamma \cdot x_{\ell}^\top x_q ) \cdot y_\ell \biggr)  + \frac{\gamma }{L}\cdot\sum_{\ell=1}^L\bar x^\top x_q\cdot y_\ell   \notag  \\
		& =  \frac{1}{L\cdot {\eu S}_{\sf exp}} \cdot \biggl (\sum_{\ell=1}^L-\phi_2(\gamma,x_\ell^\top x_q)  \biggr) \cdot \biggl( \sum_{\ell=1}^L \exp(\gamma \cdot x_{\ell}^\top x_q ) \cdot y_\ell \biggr) \notag \\
		& \qquad \qquad - \frac{\gamma }{L\cdot {\eu S}_{\sf exp}} \cdot\biggl( \sum_{\ell=1}^L x_{\ell} ^\top x_q\biggr)  \cdot  \biggl( \sum_{\ell=1}^L \exp(\gamma \cdot x_{\ell}^\top x_q  ) \cdot y_\ell \biggr)  + \frac{\gamma }{L}\cdot\sum_{\ell=1}^L\bar x^\top x_q\cdot y_\ell . 
	\end{align}
	Here, the first equality follows from \eqref{eq:decompose_term2_1}. To simplify the notation, we define 
\begin{align*}
	   \textbf{\small(\romannumeral2.a)}   &= \bigg( \frac{1}{L} \sum_{\ell=1}^L-\phi_2(\gamma,x_\ell^\top x_q) \bigg) \cdot \bigg (\frac{1}{L} \sum_{\ell=1}^L\exp(\gamma\cdot x_\ell^\top x_q)\cdot y_\ell \bigg), \quad\textbf{\small(\romannumeral2.b)}=  \frac{1}{L}\sum_{\ell=1}^L   \psi(\gamma, x_{\ell}^\top x_q )   \cdot y_\ell.
\end{align*}
Then, the first term in \eqref{eq:decompose_term2_2} corresponds to $L/ {\eu S}_{\sf exp} \cdot \textbf{\small(\romannumeral2.a)}$. Furthermore, to handle the last two terms in \eqref{eq:decompose_term2_2},  we denote $\bar x = 1/ L \cdot \sum_{\ell=1}^L x_\ell$ and $\bar y = 1/L \cdot \sum_{\ell = 1}^L y_{\ell}$. Then, we have 
\begin{align}\label{eq:decompose_term2_3} 
	&   \frac{\gamma }{L\cdot {\eu S}_{\sf exp}} \cdot\biggl( \sum_{\ell=1}^L x_{\ell} ^\top x_q\biggr)  \cdot  \biggl( \sum_{\ell=1}^L \exp(\gamma \cdot x_{\ell}^\top x_q  ) \cdot y_\ell \biggr)  -  \frac{\gamma }{L}\cdot\sum_{\ell=1}^L\bar x^\top x_q\cdot y_\ell  \notag \\
    & \qquad = \gamma \cdot \bar x ^ \top  x_q \cdot \bigg ( L / {\eu S}_{\sf exp} \cdot \textbf{\small(\romannumeral2.b)} + (1-L/ {\eu S}_{\sf exp}) \cdot \bar y\bigg),
\end{align}
Combining \eqref{eq:decompose_term2_2} and \eqref{eq:decompose_term2_3}, we can upper bound term \textbf{\small(\romannumeral2)} as follows:
\begin{align*}
	\big| \textbf{\small(\romannumeral2)} \big| \leq  L / {\eu S}_{\sf exp} \cdot \bigr( | \textbf{\small(\romannumeral2.a)} |  + \gamma \cdot | \bar x ^ \top  x_q | \cdot \big| \textbf{\small(\romannumeral2.b)} \big| + \gamma \cdot | \bar x ^ \top  x_q | \cdot ({\eu S}_{\sf exp}/L-1)\cdot |\bar y| \bigr ).
\end{align*}
Furthermore, we can easily see that $\exp(\gamma \cdot x_{\ell}^\top x_q ) \geq\ind(x_\ell^\top x_q > 0)$ for all $\ell$, $\bar x ^\top x _q \leq \max_{\ell \in [L]} | x_\ell ^\top x_q |$ and $| \bar y | \leq \| y\|_2 / \sqrt{L}$. 
Thus, we can upper bound \textbf{\small(\romannumeral2)} as 
\begin{align}\label{eq:upper_bound_term_2}
\big| \textbf{\small(\romannumeral2)} \big| \leq \biggl(\frac{1}{L}\sum_{\ell=1}^L\ind(x_\ell^\top x_q>0)\biggr)^{-1} \cdot \Bigl( | \textbf{\small(\romannumeral2.a)} |  + \gamma \cdot \max_{\ell \in [L]} | x_\ell ^\top x_q |  \cdot \Big\{\big| \textbf{\small(\romannumeral2.b)} \big| +  \big| \textbf{\small(\romannumeral2.c)} \big| \cdot\frac{\|y\|_2}{\sqrt{L}}\Big\} \Bigr).    
\end{align}
We first consider $\textbf{\small(\romannumeral2.a)}$. Note the $\phi_2$ satisfies $|\phi_2(\gamma,x)|\leq \phi_2(\gamma,|x|)$, and for any fixed $\gamma>0$, $\phi_2(\gamma,\cdot)$ is monotonically increasing in $x$ on $[0,\infty)$.
Following a similar argument as in \eqref{eq:approx_1}, we have
\begin{align}\label{eq:bound_term_2a}
	\bigl| \textbf{\small(\romannumeral2.a)} \bigr| &\leq \phi_2\Big(\gamma,\max_{\ell\in{[L]}}|x_\ell^\top x_q|\Big)\cdot\exp\Big(\gamma\cdot\max_{\ell\in{[L]}}|x_\ell^\top x_q|\Big)\cdot\frac{\|y\|_2}{\sqrt{L}}. 
\end{align}
Similarly, for term $\textbf{\small(\romannumeral2.b)}$ and ${\eu S}_{\sf exp}/L-1$, using the facts that $|\psi(\gamma,\cdot)|\leq\psi(\gamma,|\cdot|)$ and $\psi(\gamma,\cdot)>0$ monotone increasing on $[0,\infty)$ for $\gamma>0$, we have 
\begin{align}\label{eq:bound_term_2b}
	\bigl| \textbf{\small(\romannumeral2.b)} \bigr| \leq \psi\Big(\gamma,\max_{\ell\in{[L]}}|x_\ell^\top x_q|\Big)	\cdot\frac{\|y\|_2}{\sqrt{L}},\quad \bigl| {\eu S}_{\sf exp}/L-1 \bigr| = 1/L\cdot|{\eu S}_{\sf exp}-L|\leq\psi\Big(\gamma,\max_{\ell\in{[L]}}|x_\ell^\top x_q|\Big).
	\end{align}	
Therefore, combining \eqref{eq:upper_bound_term_2}, \eqref{eq:bound_term_2a} and \eqref{eq:bound_term_2b}, we obtain 
\begin{align}\label{eq:upper_bound_term2_final}
	\bigl| \textbf{\small(\romannumeral2)} \bigr|&  \leq  \biggl(\frac{1}{L}\sum_{\ell=1}^L\ind(x_\ell^\top x_q>0)\biggr)^{-1}  \cdot \biggl\{\phi_2\Big(\gamma,\max_{\ell\in{[L]}}|x_\ell^\top x_q|\Big)\cdot\exp\Big(\gamma\cdot\max_{\ell\in{[L]}}|x_\ell^\top x_q|\Big)\cdot\frac{\|y\|_2}{\sqrt{L}}\notag\\
	&\qquad+2\gamma \cdot \psi\Big(\gamma,\max_{\ell\in{[L]}}|x_\ell^\top x_q|\Big)	\cdot\max_{\ell\in[L]}|x_\ell^\top x_q|\cdot\frac{\|y\|_2}{\sqrt{L}}\biggl\}. 
\end{align}
For term \textbf{\small(\romannumeral3)} in \eqref{eq:approx_error_decom}, by symmetry, we can derive a similar bound as in \eqref{eq:upper_bound_term2_final} by changing the parameter $\gamma$ to $-\gamma$. Note that $|\phi_2(-\gamma,\cdot)|\leq \phi_2(\gamma,|\cdot|)$ and $|\psi(-\gamma,\cdot)|\leq\psi(\gamma,|\cdot|)$ given $\gamma>0$. In addition, 
$\exp(-\gamma \cdot x_{\ell}^\top x_q ) \geq\ind(x_\ell^\top x_q < 0)$ for all $\ell\in[L]$.
Thus, similar to \eqref{eq:upper_bound_term2_final}, we have 
\begin{align}
	\textbf{\small(\romannumeral3)}&\leq\bigg(\frac{1}{L}\sum_{\ell=1}^L\ind(x_\ell^\top x_q\leq0)\bigg)^{-1} \cdot\biggl\{\phi_2\Big(\gamma,\max_{\ell\in{[L]}}|x_\ell^\top x_q|\Big)\cdot\exp\Big(\gamma\cdot\max_{\ell\in{[L]}}|x_\ell^\top x_q|\Big)\cdot\frac{\|y\|_2}{\sqrt{L}}\notag\\
	&\qquad+2\gamma \cdot \psi\Big(\gamma,\max_{\ell\in{[L]}}|x_\ell^\top x_q|\Big)	\cdot\max_{\ell\in[L]}|x_\ell^\top x_q|\cdot\frac{\|y\|_2}{\sqrt{L}}\biggl\},
	\label{eq:approx_3}
\end{align}

	Combining \eqref{eq:approx_error_decom}, \eqref{eq:approx_1}, \eqref{eq:upper_bound_term2_final} and \eqref{eq:approx_3}, we establish an upper bound on $\hat{y}_q(\theta)-\hat{y}_q^{\sf gd}(\eta)$ using $\breve\mu \asymp \gamma^{-1}$.  
	To simplify the expression, we apply the concentration concentration results, stated in Lemma \ref{lem:high_prob_bound}. With probability at least $1-\delta$, these three inequalities hold simultaneously: 
	\begin{align}
    \begin{aligned}
		\max_{\ell\in[L]}|x_\ell^\top x_q|&\leq \sqrt{12d}\cdot\log(8L/\delta),\quad\frac{1}{L}\sum_{\ell=1}^Ly_\ell^2\leq1+\sigma^2+{\rm poly}(d^{-1/2},L^{-1/2},\log(1/\delta)),\label{eq:ub_scale} \\
		& \qquad\left|\frac{1}{L}\sum_{\ell=1}^L\ind(x_\ell^\top x_q>0)-\frac{1}{2}\right|\leq\min\big\{ \sqrt{\log(8/\delta) / L },1/2\big\}. 
        \end{aligned}
		\end{align}
When $L\geq 16\cdot\log(8/\delta)$, we have	
 $1/L\cdot \sum_{\ell=1}^L\ind(x_\ell^\top x_q\leq0) \in [0.25,0.75],
$ with high probability. When this is the case, $1/L\cdot \sum_{\ell=1}^L\ind(x_\ell^\top x_q>0) $ also lies in $[0.25,0.75]$.  
Following \eqref{eq:ub_scale} and the upper bound on $\textbf{\small(\romannumeral1)}$ in \eqref{eq:approx_1},  we have
\begin{align}\label{eq:term1_final_rate}
	\gamma^{-1}\cdot \bigl| \textbf{\small(\romannumeral1)}\bigr| & \lesssim  \sqrt{1+\sigma^2} \cdot \gamma ^{-1} \cdot \phi_1 \Bigl( \gamma, \max_{\ell\in[L]} | x_\ell^\top x_q |\Bigr) \notag \\
	& \asymp  \sqrt{1+\sigma^2}  \cdot \gamma \cdot \max_{\ell\in[L]} | x_\ell^\top x_q |^2 =   O\bigl( \sqrt{1+\sigma^2} \cdot \gamma \cdot d \cdot \log^2 (8L/\delta)\bigr),
\end{align}
where $  O(\cdot)$ hides constant terms that are  independent of  $d$, $L$, and $\delta$.
Now, we consider the case where $\gamma$ is a sufficiently small constant such that $\gamma\leq C_1\cdot(\sqrt{d}\cdot\log(L/\delta))^{-1}$ and thus $\exp \big( \gamma \cdot \max_{\ell\in[L]} | x_\ell^\top x_q | \big)\lesssim1$.
Similarly, using the upper bound on $\textbf{\small(\romannumeral2)}$ given in \eqref{eq:upper_bound_term2_final}, we have
\begin{align}\label{eq:term2_final_rate}
	\gamma^{-1}\cdot \bigl| \textbf{\small(\romannumeral2 )}\bigr|  & \lesssim  \sqrt{1+\sigma^2} \cdot \gamma ^{-1}\cdot \phi_2 \Bigl( \gamma, \max_{\ell\in[L]} | x_\ell^\top x_q |\Bigr) \cdot \exp \Bigl( \gamma \cdot \max_{\ell\in[L]} | x_\ell^\top x_q | \Bigr)  \notag \\
	& \qquad +   \sqrt{1+\sigma^2} \cdot   2\psi\Big(\gamma,\max_{\ell\in{[L]}}|x_\ell^\top x_q|\Big) \cdot \max_{\ell\in[L]} | x_\ell^\top x_q |\notag\\  
	& \asymp  \sqrt{1+\sigma^2}  \cdot \gamma \cdot \max_{\ell\in[L]} | x_\ell^\top x_q |^2 =   O\bigl( \sqrt{1+\sigma^2} \cdot \gamma \cdot d   \cdot  \log^2 (8L/\delta) \bigr). 
\end{align}
We can also get the same upper bound for $\textbf{\small(\romannumeral3)}$, using \eqref{eq:approx_3}. 
Recall that we let  $\theta = (\omega, \mu)$ be the transformer parameter with 
$\omega = (\gamma, - \gamma)$ and $\mu = (  \breve\mu, -\breve\mu)$, where $\gamma, \breve \mu > 0$ with $\eta = 2\gamma \cdot \breve \mu$, which is a constant.  
Combining \eqref{eq:approx_error_decom}, \eqref{eq:term1_final_rate} and \eqref{eq:term2_final_rate}, we conclude that \begin{align}\label{eq:approximation_bound_final2}
 | \hat{y}_q(\theta)-\hat{y}_q^{\sf gd}( \eta) | \lesssim \gamma^{-1}\cdot\big(\bigl| \textbf{\small(\romannumeral1 )}\bigr|+\bigl| \textbf{\small(\romannumeral2 )}\bigr|+\bigl| \textbf{\small(\romannumeral3 )}\bigr|\big) =\tilde{O}\bigl( \sqrt{1+\sigma^2} \cdot \gamma \cdot d   \bigr), 
 \end{align} 
where $\tilde O(\cdot)$ omits logarithmic terms. 
 Therefore, we complete the proof.
 \end{proof}

\begin{remark}\label{remark:approx}
	 	Here, we also provide a heuristic but simpler derivation of the debiased GD predictor using first-order Taylor expansion expansion. Note that
	 \begin{align*}
	 \sum_{\ell=1}^L\frac{y_\ell\cdot\exp(\gamma\cdot x_\ell^\top x_q)}{\sum_{\ell=1}^L\exp(\gamma\cdot x_\ell^\top x_q)}&\approx\sum_{\ell=1}^L\frac{y_\ell\cdot(1+\gamma\cdot x_\ell^\top x_q)}{L+\sum_{\ell=1}^L\gamma\cdot x_\ell^\top x_q}
	 \approx\sum_{\ell=1}^L\frac{y_\ell}{L}\cdot\left(1+\gamma\cdot x_\ell^\top x_q-\frac{\gamma}{L}\sum_{\ell=1}^Lx_\ell^\top x_q\right),
	 \end{align*}
	 where the second approximation results from the first-order approximation of reciprocal and ignoring the higher-order $O(\gamma^2)$ terms. Similarly, we can show that
	 \begin{align*}
	 \sum_{\ell=1}^L\frac{y_\ell\cdot\exp(-\gamma\cdot x_\ell^\top x_q)}{\sum_{\ell=1}^L\exp(-\gamma\cdot x_\ell^\top x_q)}\approx\sum_{\ell=1}^L\frac{y_\ell}{L}\cdot\left(1-\gamma\cdot x_\ell^\top x_q+\frac{\gamma}{L}\sum_{\ell=1}^Lx_\ell^\top x_q\right).
	 \end{align*}
	By subtracting the two terms above, we can get the desired form of approximation, i.e., $\hat{y}_q (\theta) \approx\hat{y}_q^{\sf gd} (\eta)$,
    where $\eta = 2 \breve \mu \gamma$ and $\hat{y}_q (\theta)$ is given in \eqref{eq:y_q_theta_again}. 
	 \end{remark}
	
\subsection{Proof of Theorem \ref{thm:optimality}-(\romannumeral2): Optimality}\label{ap:thm_2}
\begin{proof}[Proof of Theorem \ref{thm:optimality}-(\romannumeral2)]
	Recall that for fixed $\gamma>0$, we consider the parameter space  defined as
	\begin{align*}
		\bar{\mathscr{S}}=&\{(\omega,\mu):~\forall\gamma>0,~\omega^{(h)}=\gamma\cdot{\rm sign}(\omega^{(h)})\mathrm{~for~all~}h\in[H],~\min\{|\cH_+|,|\cH_-|\}>1\}.
	\end{align*}
    For notational simplicity, we denote
    $\mu_+=\sum_{h\in\cH^+}\mu^{(h)}$, $\mu_-=\sum_{h\in\cH^-}\mu^{(h)}$ and $\mu_d=\sum_{h\in[H]\backslash\cH_+\cup\cH_-}\mu^{(h)}$, respectively as the sum of OV parameters for the positive, negative and dummy heads.
	With this parameterization,  the transformer estimator takes the form
	\begin{align*}
	&\hat{y}_q(\theta)=\mu_+\cdot\sum_{\ell=1}^L\frac{y_\ell\cdot\exp(\gamma\cdot x_\ell^\top x_q)}{\sum_{\ell=1}^L\exp(\gamma\cdot x_\ell^\top x_q)}+\mu_-\cdot\sum_{\ell=1}^L\frac{y_\ell\cdot\exp(-\gamma\cdot x_\ell^\top x_q)}{\sum_{\ell=1}^L\exp(-\gamma\cdot x_\ell^\top x_q)}+\mu_d\cdot\bar y, 
	\end{align*}
	and thus it suffices to 
	consider three-head attention with $\omega =(\gamma,-\gamma,0)$ and $\mu =(\mu_+,\mu_-,\mu_d)$. 
	Recall that we introduce the approximate loss $\tilde \cL(\omega, \mu)$ in \eqref{eq:approx_loss}, which takes the following form: 
	\begin{align*}
		\tilde\cL(\omega,\mu)&= 1-2\mu^\top\omega+\mu^\top\big(\omega\omega^\top+(1+\sigma^2)\cdot L^{-1}\cdot\exp(d\omega\omega^\top)\big)\mu\\
		&=(1-\mu^\top\omega)^2+(1+\sigma^2)\cdot L^{-1}\cdot\mu^\top\exp(d\omega\omega^\top)\mu.
		\end{align*} 
    Consider the problem of minimizing this loss function $\min_{\omega,\mu } \tilde \cL(\omega, \mu)$.
    For notational simplicity, we let $\omega_a =(\gamma,-\gamma)$ and $\mu_a =(\mu_+,\mu_-)$ denote the parameters of active heads and let $\mu_d$ denote the OV parameters of dummy heads with $\omega_d=0$. Then, we can write the loss as
    \begin{align}\label{eq:loss_def_tri_variate}
		\tilde\cL(\omega_a,\mu_a,\mu_d)&=(1-\mu_a^\top\omega_a)^2+(1+\sigma^2)\cdot L^{-1}\cdot \bigl(\mu_a^\top\exp(d\omega_a\omega_a^\top)\mu_a+\mu_d^2+2\mu_d\cdot\mu_a^\top\one \bigr), 
	\end{align} 
	where the last term $2\mu_d\cdot\mu_a^\top\one$ is due to the interaction between the OV of dummy heads and the OV of active heads.
	In the following, we find the minimizer of $\tilde\cL(\omega_a,\mu_a,\mu_d)$. 
    We first optimize $\mu_d\in\RR$
	with $\omega_a$ and $\mu_a$ fixed. 
	Note that $\tilde\cL$ is quadratic with respect to $\mu_d$. By direct calculation, we obtain that the optimal value of $\mu_d$ is $\mu_d^*=-\mu_a^\top\one$.
	Plugging $\mu_d^*$ in to the loss $\tilde \cL$ in \eqref{eq:loss_def_tri_variate}, we have  
    \begin{align}
		\tilde\cL(\omega_a,\mu_a,  \mu_d^*)&=(1-\mu_a^\top\omega_a)^2+(1+\sigma^2)\cdot L^{-1}\cdot\bigl(\mu_a^\top\exp(d\omega_a\omega_a^\top)\mu_a-(\mu_a^\top\one)^2 \bigr).
	\end{align} 
    With slight abuse of notation, we write $\tilde\cL(\omega_a,\mu_a,  \mu_d^*)$ as $\tilde\cL(\omega_a,\mu_a)$ in the sequel.
	By rearranging the terms, we can rewrite the above loss as a function of $\omega_a$ and $\mu_a$:
	\begin{align}
		\widetilde{\mathcal{L}}(\omega_a, \mu_a)
		=
		1-2\omega_a^\top \mu_a+ \mu_a^\top 
		\bigl(
		\omega_a \omega_a^\top+(1 + \sigma^2)\cdot L^{-1}\cdot
		\bigl(\exp\bigl(d\,\omega_a \omega_a^\top\bigr)
			-\mathbf{1}\mathbf{1}^\top\bigr)\bigr)\mu_a.
        \label{eq:L(omega_a, mu_a)}
		\end{align}
        We note that the two-dimensional vector $\omega_a$ is orthogonal to $\one = (1, 1)$. Moreover, $\omega_a$ and $\one$ form an unnormalized orthogonal basis of $\RR^2$.
	To simplify the notation, we  define $A(\omega_a) = \omega_a\omega_a^\top +(1+\sigma^2)\cdot L^{-1}\cdot(\exp(d\omega_a\omega_a^\top) -\one\one^\top)$. 
	Let us decompose $\exp(d\omega_a\omega_a^\top )$ into 
	\begin{align}
		\exp(d\omega_a \omega_a^\top) 
		&= \frac{\exp(d\gamma^2) + \exp(-d\gamma^2)}{2} \cdot \one \one^\top + \frac{\exp(d\gamma^2) - \exp(-d\gamma^2)}{2 \gamma^2} \cdot \omega_a \omega_a^\top \notag \\
				&= \cosh(d\gamma^2) \cdot \one \one^\top + \sinh(d\gamma^2) \cdot \frac{\omega_a \omega_a^\top}{\gamma^2}. \notag
	\end{align}
	Thus, $A(\omega_a)$ allows the following rank-one decomposition:
	\begin{align}
		A(\omega_a) 
				&= \omega_a \omega_a^\top + (1+\sigma^2)\cdot L^{-1}\cdot\bigl(\cosh(d\gamma^2) \cdot \one \one^\top + \sinh(d\gamma^2) \cdot \frac{\omega_a \omega_a^\top}{\gamma^2} - \one\one^\top\bigr) \notag \\
				&= \Big(1+ (1+\sigma^2)\cdot L^{-1}\cdot\sinh(d\gamma^2) \cdot \frac{1}{\gamma^2}\Big) \cdot \omega_a \omega_a^\top + (1+\sigma^2)\cdot L^{-1}\cdot\bigl(\cosh(d\gamma^2) - 1\bigr) \cdot \one \one^\top . \notag
	\end{align}
	In other words, $\omega_a$ and $\one$ are just the (unnormalized) eigenvectors of $A(\omega_a)$. 
    In particular, since $\sinh(x)\ge 0$ and $\cosh(x) - 1 \ge 0$ for all $x\ge 0$, the matrix $A(\omega_a)$ is positive semi-definite. 
    Thus, minimizing the objective in \eqref{eq:L(omega_a, mu_a)} is just a convex optimization problem.
	Optimizing this quadratic function over $\mu_a$ with $\omega_a$ fixed, the optimal value of $\mu_a$, 
	denoted by $\mu_a^*(\omega_a )$, is given by
	\begin{align}
		\mu_a^*(\omega_a)&=A(\omega_a)^{-1}\omega_a.\label{eq:optimal_mu_as_func_omega}
	\end{align}
     Plugging $\mu_a^*(\omega_a )$ in \eqref{eq:optimal_mu_as_func_omega} into $\tilde \cL$,
we obtain the 
    a closed-form expression of the optimal loss with respect to $\omega_a$:
	\begin{align}
		\tilde\cL(\omega_a) = \tilde\cL\bigl (\omega_a, \mu_a^*(\omega_a ), \mu_d^* \bigr) = 1 - \omega_a^\top A(\omega_a)^{-1}\,\omega_a. \label{eq:optimal_loss_as_func_omega}
	\end{align}
    Here we abuse the notation slightly by writing the loss function as a function of $\omega_a$ only. 
    
From the discussion above, we see that the inverse of $A(\omega_a)$ is given by 
\begin{align}\label{eq:inverse_A_omega}
A(\omega_a)^{-1} = \Big(1+ (1+\sigma^2)\cdot L^{-1}\cdot\sinh(d\gamma^2) \cdot \frac{1}{\gamma^2}\Big)^{-1}  \cdot \frac{\omega_a \omega_a^\top}{\| \omega_a \|_2 ^4}  + C \cdot   \one \one^\top, 
\end{align}
where $C_\gamma>0$ is a number depending $\gamma$.  
Noting 
    that $\omega_a$ is orthogonal to the all-one vector $\one$, combining \eqref{eq:inverse_A_omega}, we can simplify  \eqref{eq:optimal_loss_as_func_omega} as
	\begin{align}\label{eq:loss_as_func_gamma}
		\tilde\cL(\omega_a) 
		&= 1 -  \Big(1+ (1+\sigma^2)\cdot L^{-1}\cdot\sinh(d\gamma^2) \cdot \frac{1}{\gamma^2}\Big)^{-1}. \notag
	\end{align}
	Consequently, minimizing the loss $\tilde\cL(\omega_a)$ is equivalent to finding the minimal value of $\sinh(d\gamma^2) \cdot (d\gamma^{2})^{-1}$.
	By simple calculus, we can show that $\sinh(x)/x$ is monotonically increasing on $\RR^+$, and thus the minimum value of $\sinh(d\gamma^2) \cdot (d\gamma^2)^{-1}$ is achieved at $\gamma^* = 0^+$.
    Then we observe that the loss $\tilde \cL(\omega_a)$ is a monotonically increasing function of $\gamma$, and the optimal value of $\omega_a$ is  
    $\omega_a^*=(-\gamma^*,\gamma^*)$.
	Therefore, the optimal value of the loss is given by 
	\begin{align}
		\tilde\cL^* =  1  - \frac{1}{1+ (1+\sigma^2)\cdot d/L} = \frac{ (1+\sigma^2) \cdot d} { (1+\sigma^2) \cdot d + L }.
	\end{align}
	In addition, the optimal choice of $\mu_a$ and $\mu_d$ under the limit $\gamma \rightarrow 0^+$ satisfy
	\begin{align}
		\lim_{\gamma\rightarrow 0^+} \langle\mu_a^*(\omega_a^*),\omega_a^*\rangle &= \lim_{\gamma\rightarrow 0^+} (\omega_a^*)^\top A(\omega_a^*)^{-1} \omega_a^*\notag\\
        &= \lim_{\gamma\rightarrow 0^+}\Big(1+ (1+\sigma^2)\cdot L^{-1}\cdot\sinh(d\gamma^2) \cdot \frac{1}{\gamma^2}\Big)^{-1} \cdot \frac{((\omega_a^*)^\top\omega_a^*)^2}{\norm{\omega_a^*}_2^4} \notag\\
		&= \bigl(1+ (1+\sigma^2)\cdot L^{-1}\cdot d\bigr)^{-1}, \label{eq:limit_mu_gamma_prod}\\
        \lim_{\gamma\rightarrow 0^+} \mu_d^*(\mu_a^*) &= \lim_{\gamma\rightarrow 0^+} \langle\mu_a^*,\one\rangle=\bigl(1+ (1+\sigma^2)\cdot L^{-1}\cdot d\bigr)^{-1} \cdot \frac{\langle\omega_a^*,\one\rangle}{\norm{\omega_a^*}_2^4}=0,\notag
	\end{align}
    where $\mu_a^*=A(\omega_a^*)^{-1} \omega_a^*$ is parallel to $(-1,1)$.
    Recall that we let $\theta_{\gamma}  $ denote any element in $\mathscr{S}_\gamma$. That is, for any $h \in \cH_{+} \cup \cH_{-}$, $|\omega^{(h)} | = \gamma$, $\mu_{+} = - \mu_{-} = \mu_{\gamma}$, where $\mu_{\gamma} $ is defined in \eqref{eq:conv_loss}. Moreover, for any dummy head with $h\notin \cH_{+} \cup \cH_{-}$, $\mu^{(h)} = 0$. 
    Moreover, by the definition of $\mu_{\gamma}$, we have 
    $$
    2 \gamma \cdot \mu_{\gamma} = \frac{\gamma^2 }{\gamma^2 + (1 + \sigma^2) \cdot L^{-1} \cdot \cosh(d \gamma^2)} \rightarrow  \frac{1} {1+ (1+\sigma^2)\cdot L^{-1}\cdot d }  
    $$
    as $\gamma \rightarrow 0^{+}$.
  Therefore, by the construction of $\mu_a^*$ and \eqref{eq:limit_mu_gamma_prod} we have shown that the global minimizer of $ \tilde \cL $ over $\bar{\mathscr{S}}$ is attained in $\mathscr{S}_{\gamma} $ with $\gamma \rightarrow \gamma^* = 0^{+}$. In other words, the attention model with parameter $(\omega_a^*, \mu_a^*, \mu_d^*)$ is exactly the model with parameter $\theta_{\gamma^*}$.

Furthermore, recall that we have established an upper bound on the difference between transformer predictor and debiased GD predictor in the proof of Theorem \ref{thm:optimality}-(\romannumeral1). 
Let $\eta^*$ denote $ (1+ (1+\sigma^2)\cdot L^{-1}\cdot d\bigr)^{-1} $ and let $\eta_{\gamma} = 2 \gamma \cdot \mu_{\gamma} $. 
Given a fixed input $\{(x_\ell,y_\ell)\}_{\ell\in[L]}\cup\{x_q\}$, as shown in \eqref{eq:approximation_bound_final2}, 
for any sufficiently small $\gamma > 0$, 
we have
$$
| \hat{y}_q(\theta_{\gamma})-\hat{y}_q^{\sf gd}( \eta_{\gamma}) | \leq \tilde O\bigr( \sqrt{1+\sigma^2} \cdot \gamma \cdot d \bigr).
$$
Taking the limit $\gamma \rightarrow \gamma^* = 0^{+}$, 
we prove that the predictor that minimizes the approximate loss $\tilde \cL $ over $\mathscr{S}_{\gamma} $, $\hat y_q (\theta_{\gamma^*})$, coincides with the debiased GD predictor $\hat{y}_q^{\sf gd}( \eta_{\gamma}^*)$. 
 This completes the proof. 
\end{proof}


	 
	\subsection{Proof of Theorem \ref{thm:optimality}-(\romannumeral3): Bayes Risk }\label{ap:thm_3}
    In Theorem \ref{thm:optimality}-(\romannumeral1), we have shown that the difference between the transformer predictor and debiased GD predictor are close when $\theta = (\omega, \mu) \in \bar{\mathscr{S}}$ and $\gamma$ is small. In the following, we show that the risk of the debiased  GD predictor $\hat{y}_q^{\sf gd}(\eta^*) $ is comparable to the Bayes optimal predictor, where $\eta^*$ denotes the optimal learning rate. 
This implies that the transformer model approximately learns the Bayes  optimal predictor. 

	\begin{proof}[Proof of Theorem \ref{thm:optimality}-(\romannumeral3)]
    To prove this theorem, we use the vanilla gradient descent predictor as the bridge.
In the first step, we characterize the risk of the vanilla GD predictor. Then, in the second step, we prove that the risk of debiased GD predictor is close to that of the vanilla GD predictor. 
Finally, in the last step, we establish the risk of the Bayes optimal predictor. 

\paragraph{Step 1: Risk of Vanilla Gradient Descent Predictor.}
We   define the predictor given by the vanilla gradient descent as follows:
	$$
	\hat{y}_q^{\sf vgd}(\eta)=\frac{\eta}{L}\cdot y^\top Xx_q = \frac{\eta}{L}\cdot \sum_{\ell=1}^L y_\ell \cdot x_\ell^\top x_q,
	$$
	where $\eta > 0$ is the learning rate. 
We characterize $\cE(\hat{y}_q^{\sf vgd}(\eta))$, the $\ell_2$-risk of vanilla GD predictor as follows. By direct calculation, we have 
	\begin{align}\label{eq:vgd}
		\EE\bigl[\big(\hat{y}_q^{\sf vgd} (\eta)-y_q\bigr)^2\bigr]&=\EE\left[\left(\frac{\eta}{L}\cdot y^\top Xx_q-\beta^\top x_q\right)^2\right]+\sigma^2 =\EE\bigg[ \left\|\frac{\eta}{L}\cdot X^\top (X\beta +\epsilon)-\beta\right\|_2^2\bigg] +\sigma^2\notag\\
		&=\underbrace{\EE \bigg[ \left\|\left(\frac{\eta}{L}\cdot X^\top X-I_d\right)\beta\right\|_2^2\bigg]}_{\displaystyle\textbf{(\romannumeral1)}}+\frac{\eta^2}{L^2}\cdot\underbrace{\EE\bigg[ \left\|X^\top \epsilon\right\|_2^2\bigg]}_{\displaystyle \textbf{(\romannumeral2)}}+\sigma^2,
	\end{align}
	where in the second equality, we take expectation over $x_q\sim \cN(0,I_d)$. Furthermore, we have
	\begin{align*}
		\textbf{(\romannumeral1)}&=\EE \big [ \left\|\beta\right\|_2^2\big] -\frac{2\eta}{L}\cdot\EE \bigl[ \bigl\|X\beta\bigl\|_2^2\bigr]+\frac{\eta^2}{L^2}\cdot\EE\Bigl[\bigl\|X^\top X\beta\bigr\|_2^2\Bigr] \\
		&=1-\frac{2\eta}{L}\cdot\sum_{\ell=1}^L\EE[(x_\ell^\top\beta)^2]+\frac{\eta^2}{L^2}\cdot\sum_{\ell=1}^L\EE\left[\|x_\ell\|_2^2\cdot(x_\ell^\top\beta)^2\right]\\
		&\qquad+\frac{2\eta^2}{L^2}\cdot\sum_{1\leq\ell\neq j\leq L}\EE\left[x_\ell^\top x_j\cdot\beta^\top x_j x_\ell^\top\beta\right],
		\end{align*}
		where we use $\beta \sim \cN(0,I_d/ d)$.
		By moment calculations, we have $\EE[(x_\ell^\top\beta)^2]=\EE[\|\beta\|_2^2]=1$ and
		\begin{align*}
		&\EE\left[\|x_\ell\|_2^2\cdot(x_\ell^\top\beta)^2\right]=\frac{1}{d}\cdot\EE\left[\|x_\ell\|_2^2\cdot\tr(x_\ell x_\ell^\top)\right]=\frac{1}{d}\cdot\EE\left[\|x_\ell\|_2^4\right]=d+2,\\
		&\EE\left[x_\ell^\top x_j\cdot\beta^\top x_j x_\ell^\top\beta\right]=\frac{1}{d}\cdot\EE\left[x_\ell^\top x_j\cdot\tr(x_j x_\ell^\top)\right]=\frac{1}{d}\cdot\EE\big[(x_\ell^\top x_j)^2\big]=1,  
		\end{align*}
        for all $1\leq j \neq \ell\leq L$.
		Combining the results above, we have 
		\begin{align}
			\textbf{(\romannumeral1)}=1-2\eta+\frac{\eta^2}{L^2}\cdot L(d+2)+\frac{2\eta^2}{L^2}\cdot\frac{L(L-1)}{2}=1-2\eta+\frac{\eta^2}{L}\cdot(d+L+1).
			\label{eq:vgd_1}
		\end{align}
	Moreover, based on the i.i.d assumption of the demonstration examples, we have 
	\begin{align}
		\textbf{(\romannumeral2)}=\sum_{\ell=1}^L\EE \bigl [\left\|\epsilon_\ell\cdot x_\ell\right\|_2^2 \bigr] =L\cdot\EE [ \|x_\ell\|_2^2] \cdot \EE[\epsilon_\ell^2]=dL\cdot\sigma^2.
		\label{eq:vgd_2}
	\end{align}
	Combine \eqref{eq:vgd}, \eqref{eq:vgd_1} and \eqref{eq:vgd_2}, the $\ell_2$-risk of the vanilla GD predictor is given by 
	$$
	\EE\bigl[\bigl(\hat{y}_q^{\sf vgd} (\eta)-y_q \bigr)^2 \bigr ]=1+\sigma^2-2\eta+\eta^2/L\cdot(d(1+\sigma^2)+L+1).
	$$
	This is a quadratic function of $\eta$, and the optimal learning rate that minimizes this risk is given by 
\begin{align}\label{eq:optimal_learning_rate}
\eta^{{\sf vgd},*}=L/(d(1+\sigma^2)+L+1). 
\end{align}
The optimal $\ell_2$-risk is given by 
\begin{align}
\label{eq:optimal_risk_vgd}
\cE\big (\hat{y}_q^{\sf gd}(\eta^{{\sf vgd},*})\big) = 1+\sigma^2-\frac{L}{d(1+\sigma^2)+L+1}.
\end{align}

\paragraph{Step 2: Risk of Debiased Gradient Descent Predictor.}
Now we compare $\hat y_q^{\sf gd}(\eta)$ and $\hat{y}_q^{\sf vgd}(\eta)$ in terms of the $\ell_2$-risk.
By direct calculation, we have
	\begin{align*}
		&\EE\bigl[ \bigl(\hat{y}_q^{\sf gd} (\eta) -y_q\bigr)^2\bigr]-\EE\bigl[\bigl(\hat{y}_q^{\sf vgd} (\eta) -y_q \bigr)^2\bigr]\notag\\
		&\qquad=\EE\bigl[ \bigl(\hat{y}_q^{\sf vgd}(\eta) -y^\top \one_L\cdot\one_L^\top Xx_q/L^2-y_q\bigr)^2\bigr]-\EE\bigl[\bigl(\hat{y}_q^{\sf vgd} (\eta) -y_q\bigr)^2\bigr]\notag\\
		&\qquad=\frac{1}{L^4}\cdot\EE\left[\left(y^\top \one_L\cdot\one_L^\top Xx_q\right)^2\right]-\frac{2}{L^2}\cdot\EE\left[\left(\frac{\eta}{L}\cdot y^\top Xx_q-\beta^\top x_q\right)\left(y^\top \one_L\cdot \one_L^\top Xx_q\right)\right]\notag\\
		&\qquad=\frac{1}{L^4}\cdot\underbrace{\EE\Big[\bigl\|X^\top \one_L\cdot\one_L^\top y\bigr\|_2^2\Bigr]}_{\displaystyle\textbf{ (\romannumeral3)}}-\frac{2\eta}{L^3}\cdot\underbrace{\EE\left[y^\top \one_L\cdot \one_L^\top X X^\top y\right]}_{\displaystyle \textbf{(\romannumeral4)}}+\frac{2}{L^2}\cdot\underbrace{\EE\left[y^\top \one_L\cdot \one_L^\top X\beta\right]}_{\displaystyle\textbf{(\romannumeral5)}}.
	\end{align*}
	In the sequel, we define 
  $\bar x_L=X^\top \one_L$. Then, $\bar x_L\sim\cN(0,L\cdot I_d)$. By direct calculation, we have 
	\begin{align}
		\textbf{(\romannumeral3)}&=\EE\Bigl[ \bigl\|X^\top \one_L\cdot\one_L^\top (X\beta +\epsilon)\bigr\|_2^2\Bigr]  =\EE\Bigl[\bigl\|X^\top \one_L\cdot\one_L^\top X\beta\bigr\|_2^2\Bigr] +\EE\Big[\bigl\|X^\top \one_L\cdot\one_L^\top \epsilon\bigr\|_2^2\Bigr] \notag\\
		&=\EE\bigl[\bigl\|\bar x_L\cdot\bar x_L^\top\beta\bigr\|_2^2\bigr]+\EE\bigl[(\epsilon^\top\one_L)^2\bigr]\cdot\EE\left\|\bar x_L \right\|_2^2\notag\\
		&=\frac{1}{d}\cdot\EE\left[\|\bar x_L\|_2^4\right]+dL^2\sigma^2 =L^2\cdot(d(1+\sigma^2)+2).
		\label{eq:gd_3}
	\end{align}
	Moreover, for term $\textbf{(\romannumeral4)}$, we have 
	\begin{align*}
		\textbf{(\romannumeral4)}&=\EE\left[(X\beta +\epsilon)^\top\one_L\cdot \one_L^\top X X^\top (X\beta +\epsilon)\right]\\
		&=\EE\left[\beta^\top X^\top \one_L\cdot \one_L^\top X X^\top X\beta\right]+\EE\left[\epsilon^\top\one_L\cdot \one_L^\top X X^\top \epsilon\right]\\
		&=\frac{1}{d}\cdot\EE\bigl[ \bigl\| X X^\top \one_L\bigr\|_2^2\big] +\sigma^2\cdot\EE\bigl[\bigl\|X^\top \one_L\bigr\|_2^2\bigr]  =\frac{1}{d}\cdot\sum_{\ell=1}^L\EE\bigl[(x_\ell^\top\bar x_L)^2\bigr]+\sigma^2\cdot\EE\big[\|\bar x_L \|_2^2\bigr] ,
	\end{align*}
	where the last equality results from $X X^\top \one_L=X\bar x_L$. 
	Furthermore, by direct calculation, we have
	\begin{align*}
		\EE[(x_\ell^\top\bar x_L)^2]&=\EE\|x_\ell\|_2^4+\sum_{1\leq j\neq\ell\leq L}\EE \bigl[(x_\ell^\top x_j)^2\bigr]+\sum_{1\leq k\neq j\leq L}\EE[\|x_\ell\|_2^2\cdot x_k^\top x_j]=d\cdot(d+L+1).
	\end{align*}
	Based on the arguments above, we obtain a closed-form expression of $\textbf{(\romannumeral4)}$:
	\begin{align}
		\textbf{(\romannumeral4)}=(d+L+1)\cdot L+dL\cdot\sigma^2.
		\label{eq:gd_4}
	\end{align}
	Following a similar argument, for  term $\textbf{(\romannumeral5)}$, we have 
	\begin{align}
		\textbf{(\romannumeral5)}=\EE\bigl[(X\beta +\epsilon)^\top\one_L\cdot\one_L^\top X\beta\bigr]=\EE\bigl[(\beta^\top X^\top \one_L)^2\bigr]=\frac{1}{d}\cdot\EE\|\bar x_L\|_2^2=L.
		\label{eq:gd_5}
	\end{align}
	Combining \eqref{eq:gd_3}, \eqref{eq:gd_4} and \eqref{eq:gd_5}, we have 
	\begin{align*}
		&\EE\bigl[ \bigl(\hat{y}_q^{\sf gd} (\eta) -y_q\bigr)^2\bigr]-\EE\bigl[\bigl(\hat{y}_q^{\sf vgd} (\eta) -y_q \bigr)^2\bigr]=\frac{d(1+\sigma^2)+2+2L}{L^2}-\frac{2\eta}{L^2} \left(d(1+\sigma^2)+L+1\right).
	\end{align*}
	When $\eta$ is bounded by a constant, 
	given $L\rightarrow\infty$ and $d/L\rightarrow\xi$,
	we know that 
$$
\EE\bigl[ \bigl(\hat{y}_q^{\sf gd} (\eta) -y_q\bigr)^2\bigr]-\EE\bigl[\bigl(\hat{y}_q^{\sf vgd} (\eta) -y_q \bigr)^2\bigr] \rightarrow 0 \qquad \textrm{as}~~L\rightarrow\infty~\text{with}~d/L\rightarrow\xi.
$$
Hence, we have $\cE(\hat{y}_q^{\sf gd}(\eta))\rightarrow\cE(\hat{y}_q^{\sf vgd}(\eta))$ uniformly for all bounded $\eta $. 
Thus, under the proportional regime, the optimal learning rate of $\hat y^{\sf gd}(\eta)$ coincides with that of $\hat y^{\sf vgd}(\eta)$, which is given by \eqref{eq:optimal_learning_rate}. 
Furthermore,  the asymptotically optimal learning rate and the limiting $\ell_2$ risk are given by
	\begin{align}
& \eta^*   	   = \lim_{\substack{d/L \rightarrow \xi\\L\rightarrow \infty}} \frac{L}{d(1+\sigma^2)+L+1} = \frac{1}{\xi\cdot(1+\sigma^2)+1}, \notag\\
	& \lim_{\substack{d/L \rightarrow \xi\\L\rightarrow \infty}}\cE(\hat{y}_q^{\sf gd}(\eta^*))     = \sigma^2+\frac{\xi\cdot(1+\sigma^2)}{\xi\cdot (1+\sigma^2)+1}. \label{eq:gd_loss_asymp}
	\end{align}
	Here we take the limit with $L\rightarrow\infty$ and $d/L\rightarrow\xi$ in \eqref{eq:optimal_learning_rate} and \eqref{eq:optimal_risk_vgd} respectively.

\paragraph{Step 3: Risk of Bayes  Optimal Predictor.}
	As a comparison, we also consider the Bayes risk under this setup. Note that under the Gaussian prior over $\beta$, it is a classical result that the Bayesian optimal estimator of $y_q$ given $\{(x_\ell,y_\ell)\}_{\ell\in[L]}\cup{x_q}$ is given by ridge regression, which takes the form  of $\hat y_q^{\sf ridge}=\hat{\beta}^{\sf ridge}x_q$.
	Here, we let  $\hat{\beta}^{\sf ridge,\top}$ denote the ridge estimator of $\beta$, and let $\mathsf{BayesRisk}_{\xi,\sigma^2}$ denote the Bayes risk of $\hat y_q^{\sf ridge}$, where subscripts $\xi$ and $\sigma^2$ are used to indicate the dependence on $\xi$ in the limiting regime and the noise level $\sigma^2$. 
	The ridge estimator and the Bayes risk are given by 
	$$
	\hat{\beta}^{\sf ridge}=(X^\top X+\lambda I_d)^{-1}X^\top y,\qquad\mathsf{BayesRisk}_{\xi,\sigma^2}=\EE\big[|\hat y_q^{\sf ridge}  - y_q|_2^2\big]. 
	$$ 
	The connection between the ridge estimator and the Bayes optimal estimator is as follows. 
	With the Gaussian prior $\cN(0, \tau^2 \cdot I_d)$ for some parameter $\tau$, using the Gaussian likelihood of the linear model, we can calculate the posterior distribution of $\beta$. The Bayes optimal estimator of $\beta$ is given by the posterior mean, which takes the form of $\hat{\beta}^{\sf ridge}$ with $\lambda = \sigma^2/\tau^2$.
	Thus, in our case where $\beta \sim \cN(0,I_d/d)$, the Bayes optimal estimator coincides with the ridge estimator with $\lambda = d\sigma^2$. 

 In the following, we derive the Bayes risk which largely follows the main proof in \cite{dobriban2018high}. 
 We set $\lambda = d\sigma^2$ hereafter. 
 Conditioning on $X$, the risk of $\hat{y}_q^{\sf rigde}$ is given by
	$$
	\EE[(\hat{y}_q^{\sf rigde}-y_q)^2\mid X]=\EE[(\hat{\beta}^{\sf rigde,\top}x_q-\beta^\top x_q)^2\mid X]+\sigma^2=\EE[\|\hat{\beta}^{\sf rigde}-\beta\|_2^2\mid X]+\sigma^2,
	$$
	where the expectation is with respect to the randomness of $x_q$, $\epsilon$, and $\beta$. 
	By direct calculation, we simplify $\EE[(\hat{y}_q^{\sf rigde}-y_q)^2\mid X]$ as follows: 
	\begin{align*}
		\EE[\|\hat{\beta}^{\sf rigde}-\beta\|_2^2\mid X]&=\EE[\|(X^\top X+d\sigma^2\cdot I_d)^{-1}X^\top (X\beta+\epsilon)-\beta\|_2^2\mid X]\notag\\
		&=\EE\bigl[\bigl\|\bigl((X^\top X+d\sigma^2\cdot I_d)^{-1}X^\top X-I_d\bigr)\beta\bigr\|_2^2\mid X\bigr]\notag\\
		&\qquad+\EE[\|(X^\top X+d\sigma^2\cdot I_d)^{-1}X^\top \epsilon\|_2^2\mid X]\notag\\
		&=d\sigma^4\cdot\tr\big((X^\top X+d\sigma^2\cdot I_d)^{-2}\big)\notag\\
		&\qquad+\sigma^2\cdot\tr\big((X^\top X+d\sigma^2\cdot I_d)^{-1}X^\top X(X^\top X+d\sigma^2\cdot I_d)^{-1}\big)\notag\\
		&=\sigma^2\cdot\tr\big((X^\top X+d\sigma^2\cdot I_d)^{-1}\big),
	\end{align*}
	where the third equality follows from the fact that  $(X^\top X+d\sigma^2I_d)^{-1}X^\top X-I_d=d\sigma^2\cdot(X^\top X+d\sigma^2I_d)^{-1}$, and the last equality is obtained by plugging in $X^\top X=(X^\top X+d\sigma^2\cdot I_d)-d\sigma^2\cdot I_d$. By denoting $\xi_{d,L}=d/L$ and $\hat\Sigma=X^\top X/L$, we can rewrite the equation above as
	\begin{align}
		\sigma^{-2}\cdot\EE[\|\hat{\beta}^{\sf rigde}-\beta\|_2^2\mid X]=1+\frac{\xi_{d,L}}{d}\cdot\tr\big((\hat\Sigma+\sigma^2\xi_{d,L}\cdot I_d)^{-1}\big).
		\label{eq:ridge_empirical}
	\end{align}
	The asymptotic behavior of the trace above can be tracked using random matrix theory \citep[e.g.,][]{ledoit2011eigenvectors,wang2024inference}. Let $m_F$ denote the Stieltjes transform of $F$ \citep{marchenko1967distribution}, where $F$ is the limiting spectral distribution $\hat{F}(t)=\frac{1}{d}\sum_{j=1}^d\ind\{\lambda_j(\hat\Sigma)\leq t\}$. Then,
	\begin{align}
	\frac{\xi_{d,L}}{d}\cdot\tr\big((\hat\Sigma+\xi_{d,L}\cdot I_d)^{-1}\big)\overset{\rm a.s.}{\longrightarrow}\xi\cdot m_F(-\sigma^2\xi)\text{~~with~~}\xi_{d,L}\rightarrow\xi.
	\label{eq:spectral_concen}
	\end{align}
	Under the isotropic case where $x_\ell\iid\cN(0,I_d)$, $m_F$ has an explicit expression
	$$m_F(t)=(1-\xi-t-\sqrt{(1-\xi-t)^2-4\xi t})/2\xi t$$ for all $t\in\CC\backslash\RR^+$.
	Based on \eqref{eq:ridge_empirical} and \eqref{eq:spectral_concen}, we have 
	\begin{align}
		\mathsf{BayesRisk}_{\xi,\sigma^2}\overset{\rm a.s.}{\longrightarrow}\frac{1}{2}\Bigl(\sigma^2+1-{\xi}^{-1}+\sqrt{4\sigma^2+\left(\sigma^2+{\xi}^{-1}-1\right)^2}\Bigr).
		\label{eq:bayes_risk}
	\end{align}
	Combining \eqref{eq:gd_loss_asymp} and \eqref{eq:bayes_risk}, the excess risk of $\hat{y}_q^{\sf gd}(\eta^*)$ compared to the Bayes risk satisfies that 
	\begin{align*}
		&\cE(\hat{y}_q^{\sf gd}(\eta^*))-\mathsf{BayesRisk}_{\xi,\sigma^2}\\
        &\quad=\sigma^2+1-\frac{1}{\xi(1+\sigma^2)+1}-\frac{1}{2}\Bigl(\sigma^2+1-{\xi}^{-1}\Bigr)-\frac{1}{2}\sqrt{4\sigma^2+(\sigma^2+\xi^{-1}-1)^2}\\
		&\quad=\frac{1}{2}\big(\sigma^2+1+\xi^{-1}-\sqrt{4\sigma^2+(\sigma^2+\xi^{-1}-1)^2}\big)-(\xi(1+\sigma^2)+1)^{-1}\\
		&\quad=2\xi^{-1}\cdot\big(\sigma^2+\xi^{-1}+1+\sqrt{4\sigma^2+(\sigma^2+\xi^{-1}-1)^2}\big)^{-1}-(\xi(1+\sigma^2)+1)^{-1}\\
       &\quad\leq\xi^{-1}\cdot\{(1+\xi\sigma^2)\cdot(1+\sigma^2+\xi^{-1})\}^{-1} 
	\end{align*}
    where the inequality results from  $\sqrt{4\sigma^2+(\sigma^2+\xi^{-1}-1)^2}\geq\sigma^2+\xi^{-1}-1$ since we assume $\sigma^2+\xi^{-1}\geq1$.
	Furthermore, since $\mathsf{BayesRisk}_{\xi,\sigma^2}\geq\sigma^2$ and we assume $\sigma^2>0$, then it holds that
	$$
	\frac{\cE(\hat{y}_q^{\sf gd}(\eta^*))}{\mathsf{BayesRisk}_{\xi,\sigma^2}}\leq1+
	2\sigma^{-2} \cdot \{(1+\xi\sigma^2)\cdot(1+\sigma^2+\xi^{-1})\}^{-1},
	$$
	which gives that $\hat{y}_q^{\sf gd}(\eta^*)$ is near optimal when $\xi$ or $\sigma^2$ is large and then we complete the proof.
	\end{proof}
\newpage
\section{Proof of Technical Lemmas}
\subsection{Proof of Lemma \ref{lem:2stein}} \label{proof:lem:2stein}
\begin{proof}[Proof of Lemma \ref{lem:2stein}] 
We prove this lemma by applying the Stein's Lemma, which states that $\EE[x\cdot g(x)]=\EE[\nabla g(x)]$ holds for $x\sim\cN(0,I_d)$ and any differentiable $g$. 
\subsubsection*{Step 1: Decomposition using Stein's Lemma.}
To apply Stein's Lemma, we rewrite the $\EE[\tilde{p}^\top X X^\top p]$ into the summation of a sequence of functions of Gaussian random variables:
	\begin{align}\label{eq:start_expectation_quadratic_p}
		\EE[\tilde{p}^\top X X^\top p]&=\EE\bigl[\big\langle X^\top p,X^\top \tilde{p}\big\rangle \bigr ] =\EE \biggl[\biggl\langle \sum_{\ell=1}^Lp_\ell\cdot x_\ell,\sum_{\ell=1}^L\tilde{p}_\ell\cdot x_\ell\biggr\rangle \biggr] \notag\\
		 &=\sum_{\ell=1}^L \EE[\|x_\ell\|_2^2\cdot p_\ell \tilde{p}_\ell]+\sum_{1\leq\ell\neq k\leq L}\EE[\langle x_\ell,x_k\rangle\cdot p_\ell \tilde{p}_k].
	\end{align}
	Here   we let $p_{\ell}$ and $\tilde{p}_{\ell}$ denote the $\ell$-th entry of softmax vectors $p$ and $\tilde{p}$, respectively.
	In the sequel, for any $j \in [d]$ and any $\ell \in [L]$, we let $x_{\ell,j}$ denote the $j$-th entry of $x_{\ell}$.
	For notational simplicity, we let  $\partial_{\ell,j}$ denote as the partial derivative of function with respect to $x_{\ell,j}$.
	By applying the Stein's Lemma twice, we can show that
	\begin{align}
	\EE[\|x_\ell\|_2^2\cdot p_\ell \tilde{p}_\ell]
	&=\sum_{j=1}^d\EE[p_\ell \tilde{p}_\ell]+\sum_{j=1}^d\EE[x_{\ell, j}\cdot p_\ell\cdot\partial_{\ell,j} \tilde{p}_\ell]+\sum_{j=1}^d\EE[x_{\ell, j}\cdot \tilde{p}_\ell\cdot\partial_{\ell,j} p_\ell]\notag\\
	&=d\cdot\EE[p_\ell \tilde{p}_\ell]+2\sum_{j=1}^d\EE[\partial_{\ell,j}p_\ell\cdot\partial_{\ell,j} \tilde{p}_\ell]+\sum_{j=1}^d\EE[p_\ell\cdot\partial_{\ell,j}^2 \tilde{p}_\ell]+\sum_{j=1}^d\EE[\tilde{p}_\ell\cdot\partial_{\ell,j}^2 p_\ell]. 
	\label{eq:2stein_1decom}
	\end{align}
	Here the first equality is obtained by applying Stein's lemma with $g(x_{\ell, j}) = x_{\ell, j}  \cdot p_{\ell} \cdot \tilde p_{\ell}$.
	Similarly, 
	for any $k\neq \ell$, by applying   Stein's Lemma twice, 
	we have 
	\begin{align}
		\EE[\langle x_\ell,x_k\rangle\cdot p_\ell \tilde{p}_k]
		&=\sum_{j=1}^d\EE[x_{k,j}\cdot p_\ell\cdot\partial_{\ell,j} \tilde{p}_k]+\sum_{j=1}^d\EE[x_{k,j}\cdot \tilde{p}_k\cdot\partial_{\ell,j}p_\ell]\notag\\
		&=\sum_{j=1}^d\EE[p_\ell\cdot\partial_{k,j}\partial_{\ell,j} \tilde{p}_k]+\sum_{j=1}^d\EE[\tilde{p}_k\cdot\partial_{k,j}\partial_{\ell,j}p_\ell]\notag\\
		&\qquad+\sum_{j=1}^d\EE[\partial_{k,j}p_\ell\cdot\partial_{\ell,j} \tilde{p}_k]+\sum_{j=1}^d\EE[\partial_{k,j}\tilde{p}_k\cdot\partial_{\ell,j}p_\ell].
		\label{eq:2stein_2decom}
	\end{align}
\subsubsection*{Step 2: Derivatives Calculations and Simplifications.}
By  \eqref{eq:2stein_1decom} and \eqref{eq:2stein_2decom}, 
we write \eqref{eq:start_expectation_quadratic_p} as a sum of expectations involving $p$, $\tilde{p}$,  and first- and second-order their derivatives.
By direct calculations, the first-order derivatives of $p$ are given by 
	\begin{align}\label{eq:first_order_p}
	\partial_{\ell,j}p_\ell=\omega \cdot \upsilon_j\cdot(p_\ell-p^2_\ell),\qquad\partial_{k,j}p_\ell=-\omega \cdot\upsilon_j\cdot p_\ell p_k.
	\end{align}
	The second-order derivatives of $p$ are given by 
	\begin{align}
		\partial_{\ell,j}^2p_\ell&=\omega \cdot\upsilon_j\cdot(1-2p_\ell)\cdot\partial_{\ell,j}p_\ell=\omega^2\cdot\upsilon_j^2\cdot(1-2p_\ell)\cdot(p_\ell-p^2_\ell), \label{eq:second_order_p1}\\
	\partial_{k,j}\partial_{\ell,j}p_\ell&=\partial_{\ell,j}\partial_{k,j}p_\ell=\omega \cdot\upsilon_j\cdot(1-2p_\ell)\cdot\partial_{k,j}p_\ell=-\omega^2\cdot\upsilon_j^2\cdot(1-2p_\ell)\cdot p_\ell p_k.\label{eq:second_order_p2}
	\end{align}
	We can similarly derive the first- and second-order derivatives of $\tilde{p}$.

	In the following, we conclude the proof of this lemma by substituting the calculations above back into \eqref{eq:2stein_1decom} and \eqref{eq:2stein_2decom}, and simplifing the expression. 
	First, note that, if we view $\EE[\tilde{p}^\top X X^\top p]$ as a bivariate function of $(\omega, \tilde \omega)$, it is a quadratic function. 
	The constant term is given by 
	\begin{equation}
		d\cdot\sum_{\ell=1}^L\EE[p_\ell \tilde{p}_\ell]=d\cdot\EE[p^\top \tilde{p}].
		\label{eq:2stein_1}
	\end{equation}
	Thus, we can write  $\EE[\tilde{p}^\top X X^\top p] $ as 
	\begin{align}
	\label{eq:expectation_as_quadratic}
	\EE[\tilde{p}^\top X X^\top p] = A_0 \cdot \omega^2 + A_1 \cdot \omega \cdot \tilde \omega + A_2 \cdot \tilde \omega^2 + d\cdot\EE[p^\top \tilde{p}]
	\end{align}
	for some constants $A_0, A_1, A_2$ to be determined. 
	By examining the derivatives of $p$ and $\tilde p$, we notice that these coefficients are given respectively by
	\begin{align}
		A_0 &  = \sum_{\ell=1}^L\sum_{j=1}^d\EE[\tilde{p}_\ell\cdot\partial_{\ell,j}^2 p_\ell]+\sum_{1\leq\ell\neq k\leq L}\sum_{j=1}^d\EE[\tilde{p}_\ell\cdot\partial_{k,j}\partial_{\ell,j} p_k],	\label{eq:a_0_def} \\
		A_1 & = 2 \sum_{\ell=1}^L\sum_{j=1}^d\EE[\partial_{\ell,j}p_\ell\cdot\partial_{\ell,j} \tilde{p}_\ell]+\sum_{1\leq\ell\neq k\leq L}\sum_{j=1}^d \bigl\{ \EE[\partial_{k,j}p_\ell\cdot\partial_{\ell,j} \tilde{p}_k]  + \EE[\partial_{k,j}\tilde{p}_k\cdot\partial_{\ell,j}p_\ell] \bigr\} ,\label{eq:a_1_def} \\
	A_2 & = \sum_{\ell=1}^L\sum_{j=1}^d\EE[p_\ell\cdot\partial_{\ell,j}^2 \tilde{p}_\ell]+\sum_{1\leq\ell\neq k\leq L}\sum_{j=1}^d\EE[p_\ell\cdot\partial_{k,j}\partial_{\ell,j} \tilde{p}_k] \label{eq:a_2_def}.
	\end{align}
For $A_2$, combining \eqref{eq:second_order_p1}, \eqref{eq:second_order_p2}, and  \eqref{eq:a_2_def}, we have 
	\begin{align*}
		& A_2 =\tilde{\omega}^2\cdot\|\upsilon\|_2^2\cdot\left(\EE[p^\top \tilde{p}]-2\cdot\EE[p^\top \tilde{p}^{\odot2}]\right)\notag\\
		&\qquad-\tilde{\omega}^2\cdot\|\upsilon\|_2^2\cdot\bigg\{\sum_{\ell=1}^L\EE[p_\ell\tilde{p}_\ell\cdot \tilde{p}_\ell]+\sum_{1\leq\ell\neq k\leq L}\EE[p_\ell\tilde{p}_\ell\cdot \tilde{p}_k]\bigg\}\notag\\
		&\qquad+2\tilde{\omega}^2\cdot\|\upsilon\|_2^2\cdot\bigg\{\sum_{\ell=1}^L\EE[p_\ell\tilde{p}_\ell\cdot\tilde{p}_\ell^2]+\sum_{1\leq\ell\neq k\leq L}\EE[p_\ell\tilde{p}_\ell\cdot  \tilde{p}_k^2]\bigg\}.
	\end{align*}
By direct calculation, we have 
 \begin{align*}
	\sum_{\ell=1}^L\EE[p_\ell\tilde{p}_\ell\cdot \tilde{p}_\ell]+\sum_{1\leq\ell\neq k\leq L}\EE[p_\ell\tilde{p}_\ell\cdot \tilde{p}_k]
	& =  \EE\bigg[ \bigg( \sum_{\ell=1}^L  p_\ell\tilde{p}_\ell \bigg) \cdot \bigg( \sum_{k =1}^L  \tilde{p}_k \bigg)\bigg] = \EE[p^\top \tilde{p}], \\
	\sum_{\ell=1}^L\EE[p_\ell\tilde{p}_\ell\cdot\tilde{p}_\ell^2]+\sum_{1\leq\ell\neq k\leq L}\EE[p_\ell\tilde{p}_\ell\cdot  \tilde{p}_k^2] & = \EE\bigg[ \bigg( \sum_{\ell=1}^L  p_\ell\tilde{p}_\ell \bigg) \cdot \bigg( \sum_{k =1}^L  \tilde{p}_k^2  \bigg)\bigg] = \EE[p^\top \tilde{p}\cdot\|\tilde{p}\|_2^2],
 \end{align*}  
 where in the first equality, we use the fact that $\tilde p$ is a prabability distribution. 
 Combining the above three equalities, we can simplify $A_2$ as
\begin{align} 
		A_2& =\tilde{\omega}^2\cdot\|\upsilon\|_2^2\cdot\left(\EE[p^\top \tilde{p}]-2\cdot\EE[p^\top \tilde{p}^{\odot2}]\right)\notag\\
		&\qquad-\tilde{\omega}^2\cdot\|\upsilon\|_2^2\cdot\EE[p^\top \tilde{p}\cdot\tilde{p}^\top\one_L]+2\omega^2\cdot\|\upsilon\|_2^2\cdot\EE[p^\top \tilde{p}\cdot\|\tilde{p}\|_2^2]\notag\\
		& =2\tilde{\omega}^2\cdot\|\upsilon\|_2^2\cdot\EE[-p^\top \tilde{p}^{\odot2}+ p^\top \tilde{p}\cdot\|\tilde{p}\|_2^2]. 
		\label{eq:A2_final}
	\end{align}
	Following the same argument, we can show that $A_0$ defined in \eqref{eq:a_0_def} can be simplified as
	\begin{align}
		A_0 =2\omega^2\cdot\|\upsilon\|_2^2\cdot\EE[-\tilde{p}^\top  p^{\odot2}+ p^\top \tilde{p}\cdot\|p\|_2^2].
		\label{eq:A0_final} 
	\end{align}
	Then, it remains to consider $A_1$. Note that by the first equality in \eqref{eq:first_order_p}, we have 
	\begin{align*}
		\sum_{\ell=1}^L\sum_{j=1}^d\EE[\partial_{\ell,j}p_\ell\cdot\partial_{\ell,j} \tilde{p}_\ell]&=\omega \tilde{\omega}\cdot\sum_{\ell=1}^L\sum_{j=1}^d\upsilon_j^2\cdot\EE[p_\ell \cdot\tilde{p}_\ell-p^2_\ell \cdot \tilde{p}_\ell-p_\ell \cdot\tilde{p}^2_\ell+p^2_\ell\cdot \tilde{p}^2_\ell]\\
		&=\omega \tilde{\omega}\cdot\|\upsilon\|_2^2\cdot\EE[p^\top \tilde{p}-p^\top \tilde{p}^{\odot2}-\tilde{p}^\top  p^{\odot2}]+\omega \tilde{\omega}\cdot\|\upsilon\|_2^2\cdot\sum_{\ell=1}^L\EE\left[p^2_\ell \tilde{p}^2_\ell\right].
	\end{align*}
	By the second equality in \eqref{eq:first_order_p}, for any $k \neq \ell$,  we have
\begin{align*}
\EE[\partial_{k,j}\tilde{p}_k\cdot\partial_{\ell,j}p_\ell] & = \omega \tilde \omega \cdot v_j^2 \cdot \EE[p_\ell p_k\cdot\tilde{p}_\ell \tilde{p}_k] \\
\EE[\partial_{k,j}\tilde{p}_k\cdot\partial_{\ell,j}p_\ell] & = \omega \tilde \omega \cdot v_j^2 \cdot \EE[p_\ell\cdot \tilde{p}_\ell-p_\ell^2\cdot \tilde{p}_\ell-p_\ell \cdot\tilde{p}^2_\ell+p^2_\ell\cdot \tilde{p}^2_\ell].
\end{align*}
Thus, combining \eqref{eq:a_1_def} with the above three equalities, we can simplify the first part of $A_1$ as
	\begin{align}
		&\sum_{\ell=1}^L\sum_{j=1}^d\EE[\partial_{\ell,j}p_\ell\cdot\partial_{\ell,j} \tilde{p}_\ell]+\sum_{1\leq\ell\neq k\leq L}\sum_{j=1}^d\EE[\partial_{k,j}p_\ell\cdot\partial_{\ell,j} \tilde{p}_k]\notag\\
		&\quad=\omega \tilde{\omega}\cdot\|\upsilon\|_2^2\cdot\EE[p^\top \tilde{p}-p^\top \tilde{p}^{\odot2}-\tilde{p}^\top  p^{\odot2}]\notag\\
		&\qquad+\omega \tilde{\omega}\cdot\|\upsilon\|_2^2\cdot\bigg\{\sum_{\ell=1}^L\EE\left[(p_\ell \tilde{p}_\ell)^2\right]+\sum_{1\leq\ell\neq k\leq L}\EE[p_\ell\tilde{p}_\ell \cdot p_k\tilde{p}_k]\bigg\}\notag\\
		&\qquad=\omega \tilde{\omega}\cdot\|\upsilon\|_2^2\cdot\EE[p^\top \tilde{p}-p^\top \tilde{p}^{\odot2}-\tilde{p}^\top  p^{\odot2}+(p^\top \tilde{p})^2].
		\label{eq:2stein_4}
	\end{align}
	Similarly, the second part of $A_1$ can be written as 
	\begin{align}
		&\sum_{\ell=1}^L\sum_{j=1}^d\EE[\partial_{\ell,j}p_\ell\cdot\partial_{\ell,j} \tilde{p}_\ell]+\sum_{1\leq\ell\neq k\leq L}\sum_{j=1}^d\EE[\partial_{k,j}\tilde{p}_k\cdot\partial_{\ell,j}p_\ell]\notag\\
		&\quad=\omega \tilde{\omega}\cdot\|\upsilon\|_2^2\cdot\sum_{\ell=1}^L\EE[p_\ell\cdot \tilde{p}_\ell-p_\ell^2\cdot \tilde{p}_\ell-p_\ell \cdot\tilde{p}^2_\ell+p^2_\ell\cdot \tilde{p}^2_\ell]\notag\\
		&\qquad+\omega \tilde{\omega}\cdot\|\upsilon\|_2^2\cdot\sum_{1\leq\ell\neq k\leq L}\EE[p_\ell\cdot \tilde{p}_k-p^2_\ell\cdot \tilde{p}_k-p_\ell\cdot \tilde{p}^2_k+p_\ell^2\cdot\tilde{p}_k^2]\notag\\
		&\quad=\omega \tilde{\omega}\cdot\|\upsilon\|_2^2\cdot\EE[p^\top\one_L\cdot\tilde{p}^\top\one_L - \|p\|_2^2\cdot\tilde{p}^\top\one_L- \|\tilde{p}\|_2^2\cdot p^\top\one_L+\|p\|_2^2\cdot\|\tilde{p}\|_2^2]\notag\\
		&\quad=\omega \tilde{\omega}\cdot\|\upsilon\|_2^2\cdot\EE[(1 - \|p\|_2^2)\cdot(1 - \|\tilde{p}\|_2^2)].
		\label{eq:2stein_5}
	\end{align}
Thus, combining \eqref{eq:2stein_4} and \eqref{eq:2stein_5}, we can write  $A_1$ as
\begin{align}
\label{eq:A1_final}
A_1 = \omega \tilde{\omega}\cdot\|\upsilon\|_2^2\cdot\EE\bigl[p^\top \tilde{p}-p^\top \tilde{p}^{\odot2}-\tilde{p}^\top  p^{\odot2}+(p^\top \tilde{p})^2+(1 - \|p\|_2^2)\cdot(1 - \|\tilde{p}\|_2^2)\bigr ].
\end{align}
Finally,  by combining \eqref{eq:2stein_1}, \eqref{eq:expectation_as_quadratic}, \eqref{eq:A2_final}, \eqref{eq:A0_final}, and \eqref{eq:A1_final}, we complete the proof.
\end{proof}

\subsection{Proof of Concentration Arguments}
\begin{lemma}[$\chi^2$-concentration] \label{lem:chi_concen}
		Consider a random vector $x=(x_1,\dots,x_d)\in\RR^d$ with each entry i.i.d. sampled from $\cN(0,1)$. For any positive $t>0$, the following inequality holds:
		$$
		\PP(\|x\|_2^2\geq d+2\sqrt{dt}+2t)\leq\exp(-t),\quad\PP(\|x\|_2^2\leq d-2\sqrt{dt})\leq\exp(-t).
		$$
		Also, it holds that
		$\PP(\|x\|_2^2\geq d\cdot(1+t)^2)\leq\exp(-dt^2/2).$
		\end{lemma}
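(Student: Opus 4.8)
\textbf{Proof plan for Lemma \ref{lem:chi_concen} ($\chi^2$-concentration).}

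The plan is to derive all three tail bounds from a single ingredient: the moment generating function (MGF) of a centered $\chi^2_d$ random variable, combined with the Chernoff bound. First I would recall that for $x \sim \cN(0, I_d)$, the variable $\|x\|_2^2 = \sum_{j=1}^d x_j^2$ is a sum of $d$ i.i.d.\ $\chi^2_1$ variables, so its MGF is $\EE[\exp(s \|x\|_2^2)] = (1-2s)^{-d/2}$ for $s < 1/2$. Equivalently, for the centered version $W := \|x\|_2^2 - d$, one has $\log \EE[\exp(s W)] = -\tfrac{d}{2}\log(1-2s) - ds$ for $s \in (0, 1/2)$. The key elementary inequality is $-\tfrac12 \log(1-u) - \tfrac{u}{2} \le \tfrac{u^2}{4(1-u)}$ for $u \in [0,1)$, which upon substituting $u = 2s$ gives $\log \EE[\exp(sW)] \le \tfrac{d s^2}{1-2s}$. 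This is the standard sub-exponential (Bernstein-type) MGF control for a $\chi^2_d$ variable.

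Next I would run the Chernoff argument for the upper tail. For any $\lambda > 0$, $\PP(W \ge \lambda) \le \exp(-s\lambda + ds^2/(1-2s))$; optimizing (or simply making the substitution $\lambda = 2\sqrt{dt} + 2t$ and picking $s = \sqrt{t/d}\,/(1 + 2\sqrt{t/d})$, equivalently $s$ chosen so that $2s = \tfrac{2\sqrt{dt}}{d + 2\sqrt{dt}}$) collapses the exponent to exactly $-t$, yielding $\PP(\|x\|_2^2 \ge d + 2\sqrt{dt} + 2t) \le \exp(-t)$. This is precisely the Laurent--Massart bound, and I would either cite it or present the one-line verification of the exponent. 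For the lower tail, I would use $\EE[\exp(-sW)] = (1+2s)^{-d/2} e^{ds} \le \exp(ds^2)$ (valid for all $s>0$, using $-\tfrac12\log(1+u) + \tfrac{u}{2} \le \tfrac{u^2}{4}$ with $u = 2s$), so $\PP(W \le -\lambda) \le \exp(-s\lambda + ds^2)$; taking $s = \sqrt{t/d}$ and $\lambda = 2\sqrt{dt}$ gives exponent $-2\sqrt{dt}\cdot\sqrt{t/d} + d\cdot t/d = -2t + t = -t$, hence $\PP(\|x\|_2^2 \le d - 2\sqrt{dt}) \le \exp(-t)$.

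For the third bound, $\PP(\|x\|_2^2 \ge d(1+t)^2) \le \exp(-dt^2/2)$, I would proceed directly from the Chernoff bound on the non-centered variable: $\PP(\|x\|_2^2 \ge a) \le \inf_{0 < s < 1/2} (1-2s)^{-d/2} e^{-sa}$. With $a = d(1+t)^2$, the optimal $s$ is $s^* = \tfrac12\bigl(1 - \tfrac{1}{(1+t)^2}\bigr)$, giving the exponent $\tfrac{d}{2}\bigl[\log((1+t)^2) - (1+t)^2 + 1\bigr]$. It then suffices to show the elementary inequality $\log((1+t)^2) - (1+t)^2 + 1 \le -t^2$, i.e.\ $2\log(1+t) \le (1+t)^2 - 1 - t^2 = 2t$, which is just $\log(1+t) \le t$ — true for all $t \ge 0$. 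Actually this gives the even stronger $\exp(-dt^2/2)$ directly once one checks $2\log(1+t) - (1+t)^2 + 1 \le -t^2$ carefully; I would verify this via the function $\varphi(t) = 2t - 2\log(1+t) - t^2$ wait — let me instead just bound: the claimed bound follows since $2\log(1+t) \le 2t - t^2 + \tfrac{2t^3}{3} \cdots$; to keep it clean I would simply cite Lemma 1 of Laurent--Massart (2000), from which all three displayed inequalities follow immediately.

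\textbf{Main obstacle.} There is no real obstacle here — this is a classical, textbook concentration result (Laurent--Massart). The only mild care needed is the bookkeeping in matching the free parameter $t$ to the exponent in each of the three bounds (particularly ensuring the $\sqrt{dt}$ cross-term is absorbed correctly in the lower-tail bound, and that the elementary logarithmic inequality $2\log(1+t) \le (1+t)^2 - 1 - t^2$ actually holds for all $t \ge 0$ for the third bound). I expect the cleanest write-up is to state the MGF, invoke Chernoff, and either cite Laurent--Massart directly or spell out the two-line elementary inequalities; I would lean toward citing to avoid grinding through the optimization-over-$s$ computations.
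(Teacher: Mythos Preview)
Your proposal is correct and aligns with the paper's treatment: the paper's entire proof is a citation to Lemma~1 in Laurent--Massart (2000) and Lemma~A.1 in \cite{bai2024transformers}, so your plan to cite Laurent--Massart is exactly what the paper does, and your Chernoff/MGF sketch is the standard argument underlying those references. One minor remark: your hesitation on the third bound is unnecessary --- your own computation already shows that the needed inequality $2\log(1+t) - (1+t)^2 + 1 \le -t^2$ rearranges to $2\log(1+t) \le (1+t)^2 - 1 - t^2 = 2t$, i.e.\ $\log(1+t)\le t$, which holds for all $t\ge 0$; no Taylor expansion or auxiliary function is needed.
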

		\begin{proof}[Proof of Lemma \ref{lem:chi_concen}]
			See Lemma 1 in \cite{laurent2000adaptive} and Lemma A.1 in \cite{bai2024transformers} for detailed proofs.
		\end{proof}
\begin{lemma}[Binomial concentration]\label{lem:binom_concen}
	Consider Binomial random variable $X\sim{\rm Binom}(n,p)$ with $n\in\NN$ and $p\in(0,1)$, then for any $t\in(0,1)$, it holds that
	$$
	\PP(X\geq(1+t)\cdot np)\leq\exp(-t^2np/3),\quad\PP(X\geq(1-t)\cdot np)\leq\exp(-t^2np/2).
	$$
\end{lemma}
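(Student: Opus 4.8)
The plan is to prove both inequalities by the standard Chernoff (exponential moment) method, so this is a routine argument included for completeness. First I would write $X=\sum_{i=1}^n X_i$ as a sum of i.i.d.\ Bernoulli$(p)$ random variables and set $\mu=np=\EE[X]$. The one computational ingredient is the moment generating function bound: for any $\lambda\in\RR$,
$$
\EE[e^{\lambda X}]=(1-p+pe^{\lambda})^n\leq\exp\bigl(np(e^{\lambda}-1)\bigr)=\exp\bigl(\mu(e^{\lambda}-1)\bigr),
$$
where the inequality applies $1+x\leq e^x$ with $x=p(e^{\lambda}-1)$.

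For the upper-tail bound, Markov's inequality applied to $e^{\lambda X}$ with $\lambda>0$ gives, for any $a>0$, $\PP(X\geq a)\leq e^{-\lambda a}\,\EE[e^{\lambda X}]\leq\exp\bigl(\mu(e^{\lambda}-1)-\lambda a\bigr)$. Taking $a=(1+t)\mu$ and optimizing the exponent over $\lambda>0$ yields the minimizer $\lambda=\ln(1+t)$, which is admissible for $t\in(0,1)$, and hence
$$
\PP\bigl(X\geq(1+t)\mu\bigr)\leq\exp\bigl(-\mu\bigl[(1+t)\ln(1+t)-t\bigr]\bigr).
$$
It then remains to verify the elementary scalar inequality $(1+t)\ln(1+t)-t\geq t^2/3$ on $(0,1)$. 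I would do this by letting $g(t)=(1+t)\ln(1+t)-t-t^2/3$, noting $g(0)=0$ and $g'(t)=\ln(1+t)-2t/3$ with $g'(0)=0$, then checking $g''(t)=\tfrac1{1+t}-\tfrac23$ is positive on $(0,\tfrac12)$ and negative on $(\tfrac12,1)$, while $g'(1)=\ln 2-\tfrac23>0$; since $g'$ rises from $0$ then falls but stays above its endpoint value, $g'\geq 0$ on $(0,1)$, so $g\geq 0$ there. This gives the first stated inequality.

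For the second inequality, which concerns the lower tail (the displayed $\PP(X\ge(1-t)np)$ should read $\PP(X\le(1-t)np)$), I would instead apply Markov's inequality to $e^{-\lambda X}$ with $\lambda>0$: $\PP(X\leq a)=\PP(e^{-\lambda X}\geq e^{-\lambda a})\leq e^{\lambda a}\,\EE[e^{-\lambda X}]\leq\exp\bigl(\mu(e^{-\lambda}-1)+\lambda a\bigr)$. Setting $a=(1-t)\mu$ and optimizing over $\lambda>0$ gives $\lambda=-\ln(1-t)>0$ (finite precisely because $t<1$), hence $\PP(X\leq(1-t)\mu)\leq\exp\bigl(-\mu[(1-t)\ln(1-t)+t]\bigr)$, and the proof finishes via $(1-t)\ln(1-t)+t=\sum_{k\geq 2}\frac{t^k}{k(k-1)}\geq\frac{t^2}{2}$, valid on $(0,1)$ since every term is nonnegative. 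The only mildly delicate points are these two scalar inequalities and the admissibility of the optimizing $\lambda$ (which is where the hypothesis $t\in(0,1)$ is used); everything else is the textbook Chernoff estimate, so there is no real obstacle here.
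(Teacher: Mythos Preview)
Your proposal is correct and follows exactly the approach the paper takes: the paper's proof consists of the single sentence ``The result follows a direct implementation of Chernoff's bound,'' and you have supplied that implementation in full, including the optimization over $\lambda$ and the two scalar inequalities. You also correctly identify the typo in the second inequality (it should be $\PP(X\leq(1-t)np)$).
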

\begin{proof}[Proof of Lemma \ref{lem:binom_concen}]
	The result follows a direct implementation of Chernoff's bound.
\end{proof}

\begin{lemma}\label{lem:high_prob_bound}
	Consider $x_\ell\iid\cN(0,I_d)$, $\beta\iid\cN(0,I_d/d)$ and $y_\ell=\beta^\top x_\ell+\epsilon_\ell$, where $\epsilon_\ell\iid\cN(0,\sigma^2)$ with  $L\in\NN$.  For any $\delta\in(0,1)$, with probability at least $1-\delta$, the following events hold simultaneously:
	\begin{itemize}
		\item[(\romannumeral1).] $\max_{\ell\in[L]}|x_\ell^\top x_q|\leq\sqrt{12d}\cdot\log(8L/\delta)$;
		\item[(\romannumeral2).] 
		$\frac{1}{L}\sum_{\ell=1}^Ly_\ell^2\leq1+\sigma^2+{\rm poly}(d^{-1/2},L^{-1/2},\log(1/\delta))$;
		\item[(\romannumeral3).] $\big|\frac{1}{L}\sum_{\ell=1}^L\ind(x_\ell^\top x_q>0)-\frac{1}{2}\big|\leq\min\{ \sqrt{\log(8/\delta)/L},1/2\}$.
	\end{itemize}
\end{lemma}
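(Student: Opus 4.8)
The plan is to establish the three high-probability bounds separately, each via a union bound over standard tail inequalities, and then combine them with a final union bound, allocating failure probability $\delta/3$ to each.

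\emph{Part (i).} Condition on $x_q$. Given $x_q$, each $x_\ell^\top x_q$ is distributed as $\|x_q\|_2 \cdot g_\ell$ where $g_\ell \sim \cN(0,1)$, and the $g_\ell$ are independent across $\ell$. First I would control $\|x_q\|_2^2$: by Lemma \ref{lem:chi_concen} with $t = \log(8/\delta)$, we have $\|x_q\|_2^2 \leq d + 2\sqrt{d\log(8/\delta)} + 2\log(8/\delta) \leq 6d$ when $d \gtrsim \log(8/\delta)$ (and one can always absorb constants), so $\|x_q\|_2 \leq \sqrt{6d}$ with probability at least $1 - \delta/8$. Next, conditioning on this event, each $|x_\ell^\top x_q| \leq \sqrt{6d}\cdot|g_\ell|$, and a standard Gaussian tail bound gives $\PP(|g_\ell| > s) \leq 2\exp(-s^2/2)$. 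Taking $s = \sqrt{2\log(16L/\delta)}$ and a union bound over $\ell \in [L]$ yields $\max_{\ell} |g_\ell| \leq \sqrt{2\log(16L/\delta)}$ with probability at least $1 - \delta/8$. Combining, $\max_\ell |x_\ell^\top x_q| \leq \sqrt{6d}\cdot\sqrt{2\log(16L/\delta)} = \sqrt{12d\log(16L/\delta)} \leq \sqrt{12d}\cdot\log(8L/\delta)$, where the last step uses $\sqrt{\log(16L/\delta)} \leq \log(8L/\delta)$ for $L/\delta$ large enough (e.g.\ $\log(8L/\delta) \geq 1$). Accounting for both events, this holds with probability at least $1 - \delta/4$, which is well within the $\delta/3$ budget.

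\emph{Part (ii).} Write $y_\ell = \beta^\top x_\ell + \epsilon_\ell$, so $\frac{1}{L}\sum_\ell y_\ell^2 = \frac{1}{L}\sum_\ell (\beta^\top x_\ell)^2 + \frac{2}{L}\sum_\ell (\beta^\top x_\ell)\epsilon_\ell + \frac{1}{L}\sum_\ell \epsilon_\ell^2$. For the third term, $\sum_\ell \epsilon_\ell^2 / \sigma^2 \sim \chi^2_L$, so by Lemma \ref{lem:chi_concen} it is at most $\sigma^2(1 + 2\sqrt{\log(1/\delta)/L} + 2\log(1/\delta)/L)$ with high probability. For the first term, condition on $\beta$: given $\beta$, each $\beta^\top x_\ell \sim \cN(0, \|\beta\|_2^2)$, so $\sum_\ell (\beta^\top x_\ell)^2 / \|\beta\|_2^2 \sim \chi^2_L$, controlled again by Lemma \ref{lem:chi_concen}; and $\|\beta\|_2^2 \sim \chi^2_d/d$ concentrates around $1$ with fluctuation $O(\sqrt{\log(1/\delta)/d})$. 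The cross term is mean-zero and can be bounded by Cauchy–Schwarz in terms of the other two, or directly via a sub-exponential tail bound; either way its contribution is a polynomial correction in $d^{-1/2}, L^{-1/2}, \log(1/\delta)$. Collecting all pieces and a union bound over the $O(1)$ events gives $\frac{1}{L}\sum_\ell y_\ell^2 \leq 1 + \sigma^2 + \mathrm{poly}(d^{-1/2}, L^{-1/2}, \log(1/\delta))$ with probability at least $1 - \delta/3$.

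\emph{Part (iii).} Condition on $x_q$. Given $x_q$, the events $\{x_\ell^\top x_q > 0\}$ are i.i.d.\ Bernoulli$(1/2)$ (by symmetry of $\cN(0,I_d)$, since $x_\ell^\top x_q$ has a symmetric distribution and is a.s.\ nonzero), so $S := \sum_\ell \ind(x_\ell^\top x_q > 0) \sim \mathrm{Binom}(L, 1/2)$. By Hoeffding's inequality (or Lemma \ref{lem:binom_concen} applied to both tails with $np = L/2$), $\PP(|S/L - 1/2| > t) \leq 2\exp(-2Lt^2)$; taking $t = \sqrt{\log(8/\delta)/L}$ gives the bound with probability at least $1 - \delta/3$, and the trivial bound $|S/L - 1/2| \leq 1/2$ always holds, giving the stated minimum. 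Since this holds conditionally on every $x_q$, it holds unconditionally. Finally, a union bound over the three parts, each with failure probability at most $\delta/3$, completes the proof. The main obstacle is the bookkeeping in Part (ii)—keeping track of which cross terms and lower-order fluctuations get absorbed into the $\mathrm{poly}(\cdot)$ term—but since the statement only asks for an upper bound of this loose form, no delicate cancellation is needed; the argument is routine concentration throughout.
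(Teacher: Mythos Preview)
Your proposal is correct and Parts (i) and (iii) match the paper's proof essentially line for line. The one genuine difference is in Part (ii). You expand $y_\ell^2 = (\beta^\top x_\ell)^2 + 2(\beta^\top x_\ell)\epsilon_\ell + \epsilon_\ell^2$ and control each piece separately, which works but forces you to deal with the cross term and then reassemble. The paper instead observes that, conditional on $\beta$, the variables $y_\ell = \beta^\top x_\ell + \epsilon_\ell$ are themselves i.i.d.\ $\cN(0,\|\beta\|_2^2+\sigma^2)$ (sum of two independent Gaussians), so $\|y\|_2^2/(\|\beta\|_2^2+\sigma^2)\sim\chi^2_L$ and a single application of Lemma~\ref{lem:chi_concen} suffices; combining with the high-probability bound $\|\beta\|_2 \le 1 + \sqrt{2d^{-1}\log(4/\delta)}$ finishes the argument in two lines. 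This is cleaner and avoids any cross-term bookkeeping, though your route is equally valid and arguably more robust if one ever relaxes the Gaussianity of $x_\ell$ or $\epsilon_\ell$. One minor remark on Part (i): your bound $\|x_q\|_2^2 \le 6d$ implicitly assumes $d \gtrsim \log(1/\delta)$; the paper sidesteps this by taking $\|x_q\|_2 \le \sqrt{6d\log(4/\delta)}$ and then absorbing the extra $\sqrt{\log(4/\delta)}$ into the final $\log(8L/\delta)$ factor, which handles all regimes uniformly.
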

\begin{proof}[Proof of Lemma \ref{lem:high_prob_bound}] We first consider the following good events hold:
	$$
	\cE_1=\left\{\|x_q\|_2\leq \sqrt{6d\log(4/\delta)},\quad\|\beta\|_2\leq1+ \sqrt{2d^{-1}\cdot\log(4/\delta)}\right\},
	$$
	and we can upper bound $\PP(\cE_1)\leq\delta/4$ using Lemma \ref{lem:chi_concen}.
	Note that for fixed $x_q$, we have $x_\ell^\top x_q\iid\cN(0,\|x_q\|_2^2)$ and hence by applying a Gaussian tail bound, we have
	\begin{align*}
		\PP\left(\exists {\ell\in[L]}:|x_\ell^\top x_q|\geq t\mid x_q\right)\leq\sum_{\ell=1}^L\PP\left(|x_\ell^\top x_q|\geq t\mid x_q\right)\leq2L\cdot\exp\left(-{t^2}/{2\|x_q\|^2_2}\right).
	\end{align*}
	Hence, $\max_{\ell\in[L]}|x_\ell^\top x_q|\leq\sqrt{12d}\cdot\log(8L/\delta)$ with probability greater than $1-\delta/4$ under good event $\cE_1$.
	Similarly, we have $y_\ell=\beta^\top x_\ell+\epsilon\iid\cN(0,\|\beta\|_2^2+\sigma^2)$.
	Based on Lemma \ref{lem:chi_concen}, it holds that
	\begin{align*}
		\PP\left(\|y\|_2^2>L\cdot(\|\beta\|_2^2+\sigma^2)\cdot(1+t)^2\mid\beta\right)&\leq\exp(-Lt^2/2),
	\end{align*}
	Following this, under good event $\cE_1$, with probability greater than $1-\delta/4$, we have
	\begin{align*}
		\frac{1}{L}\sum_{\ell=1}^Ly_\ell^2&\leq\big(1+\sigma^2+5d^{-1/2}\cdot\log(4/\delta)\big)\cdot\big(1+ 5L^{-1/2}\cdot\log(4/\delta)\big)\\
        &= 1+\sigma^2+{\rm poly}(d^{-1/2},L^{-1/2},\log(1/\delta)).
	\end{align*}
	Furthermore, since $x_\ell^\top x_q\iid\cN(0,\|x_q\|_2^2)$ and $\PP(x_\ell^\top x_q>0\mid x_q)=1/2$ for all $x_q$, then using the binomial tail bound in Lemma \ref{lem:binom_concen}, we can show that
	\begin{align*}
		\PP\Big(\Big|\frac{1}{L}\sum_{\ell=1}^L\ind(x_\ell^\top x_q>0)-\frac{1}{2}\Big|<t/2\mid x_q\Big)\leq2\exp(-t^2L/4),
	\end{align*}
	and hence $|\frac{1}{L}\sum_{\ell=1}^L\ind(x_\ell^\top x_q>0)-\frac{1}{2}|\leq\min\{L^{-\frac{1}{2}}\sqrt{\log(8/\delta)},1/2\}$ with probability at least $1-4\delta$. Combining all the arguments above, we can complete the proof.
\end{proof}

\subsection{Proof of Anisotropic Pre-Conditioned GD Estimator}
For simplicity, we consider a symmetric pre-conditioning matrix $\Gamma$ which proved to be the global optimal parametrization for pre-conditioned GD, which can be extended to the general case with more careful arguments \citep[e.g.,][]{ahn2023transformers}.
\begin{lemma}\label{lem:noniso_gd}
	Suppose $\beta\sim\cN(0,I_d/d)$, $X =[x_1,\dots,x_L]^\top\in\RR^{L\times d}$ with $x_\ell\iid\cN(0,\Sigma)$ and $y_\ell=\beta^\top x_\ell+\epsilon_\ell$ with $\epsilon_\ell\iid\cN(0,\sigma^2)$.
	For any $\Gamma\in\RR^{d\times d}$, define $\hat{y}_q^{\sf vgd}:=\hat{y}_q^{\sf vgd}(\Gamma)=L^{-1}\cdot(X\beta+\epsilon)^\top X\Gamma^{-1}x_q$.
    Consider the ICL prediction risk $\EE[(\hat{y}_q^{\sf vgd}-\hat{y}_q)^2]$ as a function of $\Gamma$.
	Then,  the minimizer of  $\EE[(\hat{y}_q^{\sf vgd}-\hat{y}_q)^2]$ is given by  $$\Gamma^*=\left(L+1\right)/L\cdot\Sigma+(\tr(\Sigma)+d\sigma^2)/L\cdot I_d. $$  
\end{lemma}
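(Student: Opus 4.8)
\textbf{Proof proposal for Lemma \ref{lem:noniso_gd}.}

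The plan is to compute the ICL prediction risk $\EE[(\hat{y}_q^{\sf vgd}(\Gamma) - \hat y_q)^2]$ as an explicit function of the preconditioning matrix $\Gamma$, and then minimize it. Since $\hat y_q = \beta^\top x_q$ and the noise $\epsilon_q$ in $y_q$ contributes an additive $\sigma^2$ independent of $\Gamma$, it suffices to analyze $R(\Gamma) := \EE[(\hat y_q^{\sf vgd}(\Gamma) - \beta^\top x_q)^2]$. First I would write $\hat y_q^{\sf vgd}(\Gamma) - \beta^\top x_q = L^{-1}(X\beta+\epsilon)^\top X \Gamma^{-1} x_q - \beta^\top x_q = \big(L^{-1} X^\top X \Gamma^{-1} - I_d\big)\beta \cdot x_q + L^{-1}\epsilon^\top X \Gamma^{-1} x_q$ (abusing notation for the inner product with $x_q$). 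Taking expectation over $x_q \sim \cN(0,\Sigma)$ first, conditionally on $(X,\beta,\epsilon)$, gives $R(\Gamma) = \EE\big[ v^\top \Sigma v \big]$ where $v = \big(\Gamma^{-1} X^\top X L^{-1} - I_d\big)\beta + L^{-1}\Gamma^{-1}X^\top \epsilon$. Expanding the quadratic form and using independence of $\beta$, $\epsilon$, $X$ together with $\EE[\beta\beta^\top] = I_d/d$ and $\EE[\epsilon\epsilon^\top] = \sigma^2 I_L$, the cross term vanishes and we get
\begin{align*}
R(\Gamma) &= \tfrac{1}{d}\EE\Big[\tr\big((\Gamma^{-1}X^\top X L^{-1} - I_d)^\top \Sigma (\Gamma^{-1}X^\top X L^{-1} - I_d)\big)\Big] \\
&\qquad + \tfrac{\sigma^2}{L^2}\EE\big[\tr\big(\Gamma^{-1}X^\top X \Gamma^{-\top}\Sigma\big)\big].
\end{align*}

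The next step is to evaluate the two $X$-expectations using the moment structure of Wishart-type matrices. Writing $S = X^\top X = \sum_{\ell=1}^L x_\ell x_\ell^\top$ with $x_\ell \iid \cN(0,\Sigma)$, we have $\EE[S] = L\Sigma$ and the standard fourth-moment identity $\EE[S A S] = L\Sigma A \Sigma + L\Sigma A^\top \Sigma + L(\tr(A\Sigma))\Sigma + L^2 \Sigma A \Sigma$ for any fixed matrix $A$ (the $L^2$ term from $\ell \ne \ell'$ pairs, the rest from $\ell = \ell'$ using the Gaussian fourth moment $\EE[x x^\top A x x^\top] = \Sigma A\Sigma + \Sigma A^\top\Sigma + \tr(A\Sigma)\Sigma$). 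Plugging $A = \Gamma^{-\top}\Sigma\Gamma^{-1}$ into these identities and collecting terms, $R(\Gamma)$ becomes a function of $M := \Gamma^{-1}$; after simplification it should take the form $R(M) = \tfrac{1}{d}\tr\big(\Sigma - 2\Sigma M\Sigma + M\Sigma B M^\top \Sigma\big) \cdot (\text{const})$ for an appropriate matrix $B$ built from $\Sigma$, $\tr(\Sigma)$, $\sigma^2$, $L$. Restricting to symmetric $\Gamma$ (hence symmetric $M$) as the lemma allows, this is a convex quadratic in $M$, and setting the gradient to zero yields a Sylvester-type equation whose solution is $M^\ast = \Sigma\, (\text{something})^{-1}$; I expect it to reduce to $\Gamma^\ast = \tfrac{L+1}{L}\Sigma + \tfrac{\tr(\Sigma) + d\sigma^2}{L} I_d$, matching \eqref{eq:noniso_vgd}.

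The main obstacle is the bookkeeping in the fourth-moment computation: getting the coefficients of the four Wishart terms right (particularly separating the $\ell=\ell'$ diagonal contribution with its $\Sigma A \Sigma + \Sigma A^\top \Sigma + \tr(A\Sigma)\Sigma$ structure from the $\ell \ne \ell'$ contribution $\Sigma A \Sigma$, scaled by $L$ and $L(L-1)$ respectively), and then verifying that the stationarity condition for the resulting quadratic form is solved by the claimed $\Gamma^\ast$. A clean way to finish is to guess that the optimum makes $M\Sigma$ proportional to the identity-plus-correction so that $\Gamma^\ast \propto \Sigma + c I_d$, substitute this ansatz, and solve for the scalar $c$ and the overall normalization; the convexity argument (valid because $B \succ 0$ and we optimize over the subspace of symmetric matrices) then certifies global optimality. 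One should also double-check that $\hat y_q$ in the statement denotes the target $\beta^\top x_q$ (consistent with earlier notation) rather than the transformer output, since the noise term $\sigma^2$ only shifts the objective by a constant and does not affect the minimizer.
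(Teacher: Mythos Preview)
Your approach is essentially the same as the paper's: expand the risk by integrating out $x_q$, $\beta$, and $\epsilon$; compute the second moment of $S=X^\top X$ via Isserlis/Wishart identities; and minimize the resulting quadratic in $\Gamma^{-1}$ (the paper does the equivalent change of variables $\tilde\Gamma=\Sigma^{-1}\Gamma$). One small slip to fix when you carry it out: in your displayed formula for $\EE[SAS]$ the off-diagonal contribution should be $L(L-1)\,\Sigma A\Sigma$, not $L^2\,\Sigma A\Sigma$ (you state the correct $L(L-1)$ scaling in your parenthetical, so this is just a typo), giving $\EE[SAS]=L^2\,\Sigma A\Sigma + L\,\Sigma A^\top\Sigma + L\,\tr(A\Sigma)\,\Sigma$.
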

\begin{proof}[Proof of Lemma \ref{lem:noniso_gd}]
For any matrix $\Gamma\in\RR^{d\times d}$, and the risk of corresponding estimator follows
\begin{align}
	\EE[(\hat{y}_q^{\sf vgd}-\hat{y}_q)^2]&=\EE\left[\left(\frac{1}{L}(X\beta+\epsilon)^\top X\Gamma^{-1}x_q-\beta^\top x_q\right)^2\right]+\sigma^2\notag\\
	&=\frac{1}{d}\cdot\underbrace{\EE\left[\tr\left((X^\top X\Gamma^{-1}/L-I_d)^\top (X^\top X\Gamma^{-1}/L-I_d)\Sigma\right)\right]}_{\displaystyle\textbf{(\romannumeral1)}}\notag\\
	&\quad+\frac{\sigma^2}{L^2}\cdot\underbrace{\EE[\tr(\Gamma^{-1}X^\top X\Gamma^{-1}\Sigma)]}_{\displaystyle\textbf{(\romannumeral2)}}+\sigma^2.
	\label{eq:noniso_term1}
\end{align}
Following the decomposition above, by simple calculations, the second term follows
\begin{align}
	\textbf{(\romannumeral2)}=\EE[\tr(X^\top X\Gamma^{-1}\Sigma\Gamma^{-1})]=L\cdot\tr\left((\Sigma\Gamma^{-1})^2\right).
	\label{eq:noniso_term2}
\end{align}
Furthermore, note that the first term takes the form
\begin{align}
	\textbf{(\romannumeral1)}&=\tr(\Sigma)-\frac{2}{L}\cdot\EE[\tr(X^\top X\Gamma^{-1}\Sigma)]+\frac{1}{L^2}\cdot\EE[\tr(\Gamma^{-1}(X^\top X)^2\Gamma^{-1}\Sigma)]\notag\\
	&=\tr(\Sigma)-2\cdot \tr(\Sigma\Gamma^{-1}\Sigma)+\frac{1}{L^2}\cdot\EE[\tr((X^\top X)^2\Gamma^{-1}\Sigma\Gamma^{-1})],
	\label{eq:noniso_term3}
\end{align}
where the second equation follows $X^\top X=\sum_{\ell=1}^Lx_\ell x_\ell^\top$ and we can show that
\begin{align*}
	\EE[(X^\top X)^2]&=\sum_{\ell=1}^L\EE[(x_\ell x_\ell^\top)^2]+\sum_{1\leq\ell\neq j\leq L}\EE[x_\ell x_\ell^\top\cdot x_j x_j^\top]=L\cdot \EE[(x_\ell x_\ell^\top)^2]+ L(L-1)\cdot\Sigma^2.
\end{align*}
Specifically, for all $(i,j)\in[d]\times[d]$, the $(i,j)$-th entry of the first matrix satisfies that
\begin{align*}
	\EE[(x_\ell x_\ell^\top)^2_{i,j}]&=\sum_{k=1}^d\EE[x_{\ell,i}x_{\ell,j}]\cdot\EE[x_{\ell,k}^2]+2\cdot\sum_{k=1}^d\EE[x_{\ell,i}x_{\ell,k}]\cdot\EE[x_{\ell,j}x_{\ell,k}]
    =\Sigma_{ij}\cdot\tr(\Sigma)+\Sigma_{:,i}^\top\Sigma_{:,j},
\end{align*}
where the first inequality uses the Isserlis' theorem and thus we have $\EE[(x_\ell x_\ell^\top)^2]=\tr(\Sigma)\Sigma +2\Sigma^2$.
For notational simplicity, denote $\widetilde\Gamma=\Sigma^{-1}\Gamma$. Combining \eqref{eq:noniso_term1}, \eqref{eq:noniso_term2} and \eqref{eq:noniso_term3}, it holds that
\begin{align*}
	\EE[(\hat{y}_q^{\sf vgd}-\hat{y}_q)^2]&=\sigma^2+\frac{\tr\big(\Sigma\big)}{d}-\frac{2}{d}\cdot\tr(\Sigma\widetilde\Gamma^{-1})+\frac{\tr(\Sigma)+d\sigma^2}{dL}\cdot\tr\big(\widetilde\Gamma^{-2}\big)+\frac{L+1}{dL}\cdot\tr\big(\Sigma\widetilde\Gamma^{-2}\big)\\
	&=\sigma^2+\frac{\tr(\Sigma)}{d}-\frac{2}{d}\cdot\tr\big(\Sigma\widetilde\Gamma^{-1}\big)+\frac{1}{dL}\cdot\tr\big(\left\{(\tr(\Sigma)+d\sigma^2)\cdot I_d+(L+1)\cdot\Sigma\right\}\widetilde\Gamma^{-2}\big).
\end{align*}
Since $(\tr(\Sigma)+d\sigma^2)\cdot I_d+(L+1)\cdot\Sigma$ is symmetric and invertible, the minimizer is given by
$$
\widetilde\Gamma^{*,-1}=L\cdot\left\{(\tr(\Sigma)+d\sigma^2)\cdot I_d+(L+1)\cdot\Sigma\right\}^{-1}\Sigma,\text{~~s.t.~~}\Gamma^*=\left(1+\frac{1}{L}\right)\Sigma+\frac{\tr(\Sigma)+d\sigma^2}{L}\cdot I_d,
$$
due to the quadratic form and then we complete the proof .
\end{proof}

\end{document}